\theoremstyle{plain}
\newtheorem{theorem}{Theorem}[chapter]
\newtheorem{proposition}[theorem]{Proposition}
\newtheorem{lemma}[theorem]{Lemma}
\theoremstyle{definition}
\newtheorem{definition}[theorem]{Definition}
\theoremstyle{remark}
\newtheorem{remark}[theorem]{Remark}
\definecolor{tabblue}{RGB}{31, 119, 180}
\definecolor{taborange}{RGB}{255, 127, 14}
\newcommand{\data}{\mathcal{D}}
\newcommand{\dwarmup}{{\cal D}_\text{warmup}}
\newcommand*{\vertbar}{\rule[-1ex]{0.5pt}{2.5ex}}
\newcommand*{\horzbar}{\rule[.5ex]{2.5ex}{0.5pt}}
\newcommand{\cond}{\,\vert\,}
\newcommand{\KL}[2]{{{\bf D}_{\rm KL}}\left(#1\,\vert\vert\, #2\right)}
\DeclareMathOperator*{\argmax}{arg\,max}
\DeclareMathOperator*{\argmin}{arg\,min}
\DeclareMathOperator{\Tr}{Tr}
\renewcommand{\d}{\mathrm{d}}
\newcommand{\cov}{{\rm Cov}}
\newcommand{\var}{{\rm Var}}
\newcommand{\dimstate}{D}
\newcommand{\dimstatesub}{d}
\newcommand{\dimstatelast}{{d_{\rm last}}}
\newcommand{\dimstatehiddensub}{{d_{\rm hidden}}}
\newcommand{\dimobs}{o}
\newcommand{\dimin}{M}
\newcommand{\normdist}[3]{{\cal N}\left(#1\,\vert\, #2,\,#3\right)}
\newcommand{\real}{\mathbb{R}}
\newcommand{\myvec}[1]{\mathbf{#1}}
\newcommand{\myvecsym}[1]{\boldsymbol{#1}} 
\newcommand{\vzero}{\myvecsym{0}}
\newcommand{\namemethod}[1]{\texorpdfstring{\texttt{\color{black}{#1}}\xspace}{}}
\newcommand{\newmethod}[1]{\texorpdfstring{\texttt{\color{black}{#1}}\xspace}{}}
\newcommand{\auxv}{\psi}
\newcommand{\cModelraw}{(M.1)\xspace}
\newcommand{\cAuxraw}{(M.2)\xspace}
\newcommand{\cPriorraw}{(M.3)\xspace}
\newcommand{\cPosteriorraw}{(A.1)\xspace} 
\newcommand{\cWeightraw}{(A.2)\xspace}
\newcommand{\cModel}{(M.1: likelihood)\xspace}
\newcommand{\cAux}{(M.2: auxvar)\xspace}
\newcommand{\cPrior}{(M.3: prior)\xspace}
\newcommand{\cPosterior}{(A.1: posterior)\xspace} 
\newcommand{\cWeight}{(A.2: weighting)\xspace}
\newcommand{\cAuxname}{M.2: auxvar\xspace}
\newcommand{\cPriorname}{M.3: prior\xspace}
\newcommand{\cPosteriorname}{A.1: posterior\xspace} 
\newcommand{\cWeightname}{A.2: weight\xspace}
\newcommand{\RLSPR}[1][]{\newmethod{RL[1]-OUPR*#1}}
\newcommand{\RLPRWoLF}[1][inf]{\newmethod{WoLF+RL[#1]-PR*}}
\newcommand{\RLPRKF}[1][inf]{\namemethod{LG+RL[#1]-PR}}
\newcommand{\staticKF}{\namemethod{LG+C-Static}}
\newcommand{\CACI}[1][]{\namemethod{C-ACI#1}} 
\newcommand{\RLPR}[1][1]{\namemethod{RL[#1]-PR}}
\newcommand{\RLPRK}{\namemethod{RL[K]-PR}}
\newcommand{\CPPD}[1][]{\namemethod{CPP-OU#1}}
\newcommand{\plast}{{\bf w}}
\newcommand{\phidden}{{\boldsymbol\psi}}
\newcommand{\phiddensub}{\vz}
\newcommand{\vdlast}[1][t]{{\phi_{#1}\left(\plast\right)}}
\newcommand{\vdhidden}[1][t]{{\varphi_{#1}\left(\phiddensub\right)}}
\newcommand{\covlast}{{\boldsymbol{\Gamma}}}
\newcommand{\covhidden}{{\boldsymbol{\Sigma}}}
\newcommand{\mhidden}{{\boldsymbol{\mu}}}
\newcommand{\mlast}{{\boldsymbol{\nu}}}
\newcommand{\normdistlast}{{\normdist{\plast}{\mlast_t}{\covlast_t}}}
\newcommand{\normdisthidden}{{\normdist{\phiddensub}{\mhidden_t}{\covhidden_t}}}
\newcommand{\mWlfMd}{{\texttt{\textbf{WoLF-TMD}}}\xspace} 
\newcommand{\mWlfImq}{{\texttt{\textbf{WoLF-IMQ}}}\xspace}
\newcommand{\mAgamenoni}{{\texttt{\textbf{KF-IW}}}\xspace}
\newcommand{\mWang}{{\texttt{\textbf{KF-B}}}\xspace}
\newcommand{\mkf}{{\texttt{\textbf{KF}}}\xspace}
\newcommand{\mkfExtended}{{\texttt{\textbf{EKF}}}\xspace}
\newcommand{\ogd}{{\texttt{\textbf{OGD}}}\xspace}
\newcommand{\mAgamenoniExtended}{{\texttt{\textbf{EKF-IW}}}\xspace}
\newcommand{\mWangExtended}{{\texttt{\textbf{EKF-B}}}\xspace}
\newcommand{\valpha}{\myvecsym{\alpha}}
\newcommand{\vepsilon}{\myvecsym{\epsilon}}
\newcommand{\vzeta}{\myvecsym{\zeta}}
\newcommand{\vmu}{\myvecsym{\mu}}
\newcommand{\vkappa}{\myvecsym{\kappa}}
\newcommand{\vPsi}{\myvecsym{\Psi}}
\newcommand{\vtheta}{\myvecsym{\theta}}
\newcommand{\vTheta}{\myvecsym{\Theta}}
\newcommand{\vSigma}{\myvecsym{\Sigma}}
\newcommand{\vUpsilon}{\myvecsym{\Upsilon}}
\newcommand{\va}{\myvec{a}}
\newcommand{\vb}{\myvec{b}}
\newcommand{\ve}{\myvec{e}}
\newcommand{\vh}{\myvec{h}}
\newcommand{\vm}{\myvec{m}}
\newcommand{\vu}{\myvec{u}}
\newcommand{\vw}{\myvec{w}}
\newcommand{\vx}{\myvec{x}}
\newcommand{\vy}{\myvec{y}}
\newcommand{\vz}{\myvec{z}}
\def\vtheta{{\bm{\theta}}}
\def\va{{\bm{a}}}
\def\vb{{\bm{b}}}
\def\ve{{\bm{e}}}
\def\vh{{\bm{h}}}
\def\vm{{\bm{m}}}
\def\vu{{\bm{u}}}
\def\vw{{\bm{w}}}
\def\vx{{\bm{x}}}
\def\vy{{\bm{y}}}
\def\vz{{\bm{z}}}
\newcommand{\vA}{\myvec{A}}
\newcommand{\vB}{\myvec{B}}
\newcommand{\vC}{\myvec{C}}
\newcommand{\vD}{\myvec{D}}
\newcommand{\vF}{\myvec{F}}
\newcommand{\vH}{\myvec{H}}
\newcommand{\vI}{\myvec{I}}
\newcommand{\vK}{\myvec{K}}
\newcommand{\vP}{\myvec{P}}
\newcommand{\vQ}{\myvec{Q}}
\newcommand{\vR}{\myvec{R}}
\newcommand{\vS}{\myvec{S}}
\newcommand{\vU}{\myvec{U}}
\newcommand{\vV}{\myvec{V}}
\newcommand{\vW}{\myvec{W}}
\newcommand{\vX}{\myvec{X}}
\newcommand{\vY}{\myvec{Y}}
\newcommand{\vZ}{\myvec{Z}}
\begin{document}

\begin{titlepage} 
	\begin{center}
	{\LARGE Adaptive, Robust and Scalable Bayesian Filtering for Online Learning \par} 
	\vspace{2cm}
	{\Large Gerardo Duran-Martin \\[2cm]}
     {\large Supervised by Alexander Shestopaloff and Kevin Murphy \par}
	
    \vfill
	{\Large School of Mathematical Sciences \par}
    \vspace{1cm}
    {\includegraphics[width=0.5\linewidth]{./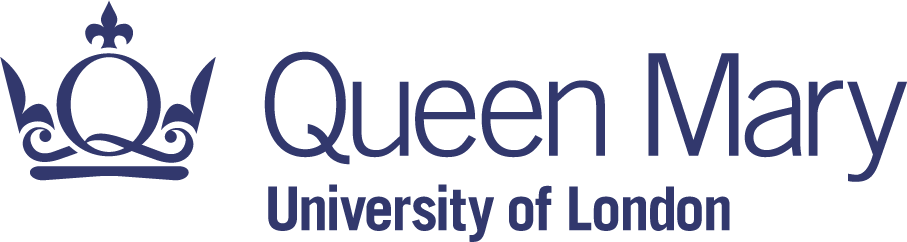} \par}
	\vspace{1cm}
    {\large April 2025 \par}
    \vspace{1cm}
    {Submitted in partial fulfillment of the requirements \\ of the Degree of Doctor of Philosophy \par}
    \vspace{1cm}
    {This work was supported by
    UKRI — Engineering and Physical Sciences Research Council}
    \thispagestyle{plain} 
    \end{center}
\end{titlepage}
\newpage
\chapter*{Statement of Originality}

I, Gerardo Duran Martin, confirm that the research included within this thesis is my own work or that where it has been carried out in collaboration with, or supported by other, that this is duly acknowledged below and my contribution indicated. Previously published material is also acknowledged below. \\ \\
\noindent I attest that I have exercised reasonable care to ensure that the work is original and does not to the best of my knowledge break any UK law, infringe any third party’s copyright or other Intellectual Property Right, or contain any confidential material. \\ \\    
\noindent  I accept that Queen Mary University of London has the right to use plagiarism detection software to check the electronic version of the thesis. \\ \\    
\noindent I confirm that this thesis has not been previously submitted for the award of a degree by this or any other university. \\ \\
\noindent The copyright of this thesis rests with the author and no quotation from it or information derived from it may be published without the prior written consent of the author. \\ \\
\noindent Signature: Gerardo Duran-Martin\\
\noindent Date: April 2025.\\ \\    
\noindent Details of collaboration and publications:\\
\begin{enumerate}
    \item The contents in Chapter 3 are based on the paper \cite{duranmartin2024-bocl}.
    \item The contents in Chapter 4 are based on the paper \cite{duranmartin2024-wlf}.
    \item The contents in Chapter 5 are based on the papers \cite{duranmartin2022-subspace-bandits,cartea2023sharpbayes,chang2023lofi}.
\end{enumerate}

\newpage
\chapter*{Abstract}

In this thesis, we introduce Bayesian filtering as a principled framework for  
tackling diverse sequential machine learning problems,
including 
online (continual) learning,
prequential (one-step-ahead) forecasting, and
contextual bandits.
To this end, this thesis addresses key challenges in applying Bayesian filtering to these problems:  
adaptivity to non-stationary environments,
robustness to model misspecification and outliers, and
scalability to the high-dimensional parameter space of deep neural networks.
We develop novel tools within the Bayesian filtering framework to address each of these challenges, including:
(i) a modular framework that enables the development adaptive approaches for online learning;
(ii) a novel, provably robust filter with similar computational cost to standard filters,
that employs Generalised Bayes; and
(iii) a set of tools for sequentially updating model parameters using approximate second-order optimisation methods that
exploit the overparametrisation of high-dimensional parametric models such as neural networks.
Theoretical analysis and empirical results demonstrate the improved performance of our methods in dynamic,
high-dimensional, and misspecified models.

\clearpage
\vspace*{\fill}
\thispagestyle{empty} 
\begin{quotation}
\em 
``You know, allowing awareness for something that's unlikely is not a disease,'' she said.
``If you're talking about a filter, you should understand how they work.
Optimal filters will still block a few things that you actually wanted to go through---and
will still allow for some things that you wanted to block to instead go through.
That's for an optimal filter.'' [...]
A species like ours, with survival so clearly based on intelligence and information,
should not block the risk of blocking and throwing away potentially valuable ideas.

\medskip
\raggedleft
--Professor Karl Deisseroth, ``Connections''
\end{quotation}
\vspace*{\fill}

\newpage
\chapter*{Acknowledgements}

As a good friend of mine once wrote, this thesis would not have been possible without a combination of luck and hard work. After arriving in the UK to pursue my MSc studies during a global pandemic, I was fortunate to meet Alex Shestopaloff, who, despite my lack of research experience at the time, kindly welcomed me as one of his PhD students. Around the same time, I had the pleasure of working with Kevin Murphy during the Google Summer of Code programme in 2021. It was there that Kevin posed a question which has since become central to this thesis: “Did you know you can train a neural network with a Kalman filter?”

To Alex and Kevin, I owe my deepest gratitude for your trust, guidance, and patience throughout this process.
Your mentorship has been invaluable, and this thesis would not exist without your support.

I thank Eftychia Solea and Arnaud Doucet
for their insightful comments and constructive feedback,
which have enhanced the quality and clarity of this thesis.

I extend my heartfelt thanks to Álvaro Cartea, the director of the Oxford-Man Institute of Quantitative Finance (OMI), for opening the institute's doors to me and for demonstrating the value of precision and care in academic research. 
I am equally grateful to members of the OMI, with whom I had the privilege of collaborating.
In particular, I would like to acknowledge Alvaro Arroyo, Patrick Chang, Sergio Calvo-Ordoñez, and Fayçal Drissi for their friendship and for sharing their knowledge, from which I have greatly benefited.

To Leandro Sánchez-Betancourt, from whom I learnt to appreciate the writing process and to take care of the smallest mathematical details.
It has been a true privilege to work alongside you and to call you a friend and a co-author.

I am also grateful to AHL Man Group for providing me with a summer home to learn and explore research ideas in the real-world.
During my time there, I learnt the importance of prioritising mathematical simplicity. I would like to give special thanks to Anthony Ledford for the fascinating and thought-provoking discussions on mathematics, statistics, and their applications in finance.

The Google TPU Research Cloud's computational resources were crucial for the execution of the Jax code, and I am thankful for their support.

To my parents, I owe so much. From my late father, I learned the value of taking pride in one’s work and the joy of craftsmanship. From my mother, I learned the importance of being consistent.

Finally, to Andi Flores---none of this would have been possible without your unwavering love, support, and trust.
You have reminded me to cherish life’s simple pleasures and to embrace the adventures we have shared across the world.
For that, and for so much more, I thank you.

\tableofcontents
\listoffigures
\listoftables

\newpage
\chapter{Introduction}
\label{ch:introduction}


Sequential problems are central to machine learning and arise in a variety of settings, including
test-time adaptation \citep{schirmer2024ssmtta},
prequential forecasting \citep{gama2008streamlearn},
neural bandits \citep{riquelme2018deepbanditshowdown},
online continual learning \citep{dohare2024continualbackprop}, and
deep reinforcement learning \citep{asadi2024td}.
These problems can be formulated as online learning tasks,
where an agent observes a time-indexed sequence of data and predicts the next unobserved value in the sequence \citep[Chapter 1]{zhang2023ltbook}.
Typically, predictions are informed by past observations and may depend on additional exogenous variables, referred to as \textit{features}.

Filtering methods
provide a principled framework for tackling such challenges in sequential decision-making.
These methods focus on inferring the unknown state of a dynamic system from noisy observations,
making them well-suited for parametric online learning tasks where the unknown state
are taken to be the model parameters.
Among filtering methods, the Kalman filter (KF) has been particularly influential, serving as a foundational algorithm that has inspired numerous extensions and applications \citep{leondes1970-kfapollo, grewal2010kfhist}.
Figure \ref{fig:dynamical-system-intro} presents an example of a two-dimensional
dynamical system (top panel), which seeks  to recover from a one-dimensional projection (bottom panel).
\begin{figure}[htb]
    \centering
    \includegraphics[scale=0.7]{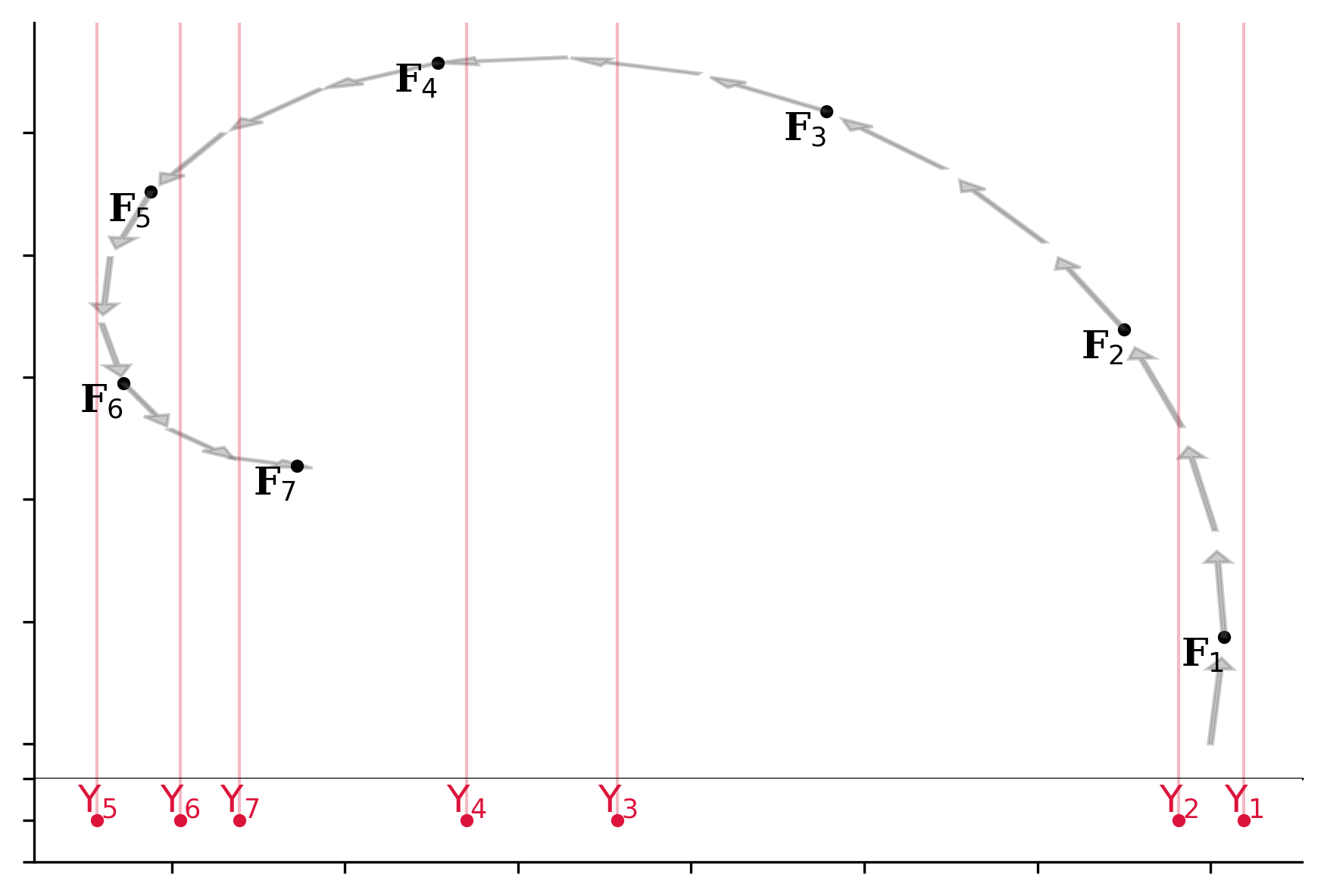}
    \caption{
    One-dimensional projection of a noisy-two dimensional dynamical system.
    In the top panel,
    the gray
    arrows represent the underlying evolution of the system and
    the black dots show the sampled (but unknown) locations of the system.
    In the bottom panel,
    the red dots show the observed measurements projected onto a one-dimensional plane.
    The red vertical lines denote the projection from latent space to observation space.
    }
    \label{fig:dynamical-system-intro}
\end{figure}

Arguably, the widespread appeal of filtering methods stems from three key properties:
extensibility, a Bayesian foundation, and broad applicability.
First, the extensibility of filtering algorithms has enabled researchers to develop numerous variants of the KF.
For instance, extensions have made the KF (i) \textit{robust} to outlier measurements \citep{west1981robustkf},
(ii) \textit{adaptive} to non-stationary environments \citep{mehra1972adaptivefiltering}, and
(iii) \textit{scalable} to high-dimensional state and observation spaces \citep{evensen1994enkf}.
Variants of the KF remain an active area of research, with efforts to further improve robustness, adaptability, and scalability
\citep[see e.g.,][]{tao2023robustkf, zhu2022slidingkf, vilmarest2024viking, chen2022enkfdnn, schmidt2023rankreducedkf, greenberg2024okf}.

Second, the Bayesian interpretation of filtering methods,
commonly referred to as Bayesian filters \citep[][Chapter 1]{sarkka2023filtering},
provides a probabilistic framework for sequential decision-making.
This perspective has inspired algorithms
such as particle filters \citep{doucet2009tutorialpf}
and variational-Bayes filters \citep{sarkka2009kfvb},
which extend the applicability of Bayesian filters to non-linear and non-Gaussian systems.

Third, filtering methods are widely applicable beyond their traditional use in physical systems.
Applications include financial modelling \citep{wells2013kalmanfinance},
signal processing \citep[Chapter 3]{basseville1993detectionchanges},
and other domains where dynamic systems must be monitored and predicted.

Motivated by these three properties, in this thesis, we advocate for a more prominent role of filtering methods in machine learning.
Specifically, we leverage filtering techniques to design novel online learning algorithms that are
(i) robust, (ii) adaptive, and (iii) scalable.
Our approach treats the Bayesian framework as an algorithmic tool for
``rationalising and formalising experience accumulation'' \citep{peterka1981bayesianidentification, breiman2001twocultures}.
In contrast to the classical filtering perspective, we do not assume expert knowledge of the data-generating process.
Instead, the resulting methods aim to maximise predictive performance by dynamically learning parametric
(and potentially non-linear) models based solely on available observations and features.

The methods and frameworks developed in this thesis build upon the foundational principles discussed above,
extending them to address challenges in robustness, adaptability, and scalability within online learning.
The rest of this thesis is organised as follows:

Chapter \ref{ch:recursive-bayes} introduces basic concepts used throughout the thesis,
including prequential forecasting and Bayes' rule as a mechanism for sequentially updating model parameters.
This chapter revisits classical results in recursive statistical learning,
extends these results to non-linear observation models such as neural networks, and
examines recent advances in recursive estimation.

Chapter \ref{ch:adaptivity} develops a framework for adaptive online learning,
building on hierarchical Bayesian models and filtering methods, as outlined in \cite{duranmartin2024-bocl}.

Chapter \ref{ch:robustness} focuses on the robustness of the methods introduced in Chapters \ref{ch:recursive-bayes} and \ref{ch:adaptivity} against outliers and misspecified observation models.
After reviewing prior work in this area, we present a lightweight approach proposed in \cite{duranmartin2024-wlf},
which leverages the generalised-Bayes principle by replacing the traditional log-likelihood in posterior computation with a more flexible loss function.

Chapter \ref{ch:scalability} addresses the scalability challenges of Bayesian filtering methods for online learning.
Drawing from the work in \cite{duranmartin2022-subspace-bandits, cartea2023sharpbayes, chang2023lofi},
we propose three novel strategies: training a linear subspace of model parameters, using low-rank posterior covariance matrices,
and employing last-layer methods to separate feature transformation from observation approximation.

Finally, Chapter \ref{ch:conclusion} summarises the key findings of this thesis and outlines directions for future research.

\chapter{Recursive Bayesian online learning}
\label{ch:recursive-bayes}

In this chapter, we approach the problem of parametric online learning from a Bayesian filtering perspective.
Here, the focus is the recursive estimation of the posterior density over model parameters.
We review how the Kalman filter can be viewed as a form of Bayesian online learning for linear models,
and explore methods to incorporate non-linear measurement models, such as those used in neural networks.

The remainder of this chapter establishes the foundations for both sequential closed-form and fixed-point methods
to compute or approximate the posterior density.

Section \ref{sec:prequential-inference} introduces the problem of prequential inference and
the recursive estimation of model parameters.  
Section \ref{sec:the-multivariate-gaussian} reviews the multivariate Gaussian and its properties, which we use
throughout the thesis.  
Section \ref{sec:linear-gaussian-recursive} presents a recursive approach for updating model parameters
assuming that both, the parameter dynamics model and the measurement model, are Gaussian and linear.
This includes a recursive variant of the Ridge regression algorithm.  
Section \ref{sec:nonlinear-nongaussian-recursive} relaxes the assumptions of linear and Gaussian measurement models and presents various methods to obtain closed-form approximations of the posterior density under non-linear measurement models.
Section \ref{sec:ssm} introduces state-space models in the context of sequential learning and demonstrates how filtering
generalises probabilistic sequential learning of model parameters.  
Finally, Section \ref{sec:filtering-as-learning} concludes the chapter with an outlook for the remainder of this thesis.

\section{Recursive Bayesian inference}
\label{sec:prequential-inference}
Consider a sequence of measurements $\vy_{1:t} = (\vy_1, \ldots, \vy_{t})$ with $ \vy_i \in {\cal Y} \subseteq \real^\dimobs$ and
features $\vx_{1:t} = (\vx_1, \ldots, \vx_t)$ with $ \vx_i \in\real^{\dimin}$,
where $\dimobs,\dimin\in\mathbb{N}$.
Let $\data_t = (\vx_t, \vy_t)$ be a  datapoint and 
$\data_{1:t} = (\data_1, \ldots, \data_t)$ be the dataset at time $t$.
We are interested in the one-step-ahead (prequential) forecast \citep{gama2008streamlearn} for $\vy_{t+1}$
conditioned on the feature $\vx_{t+1}$ and the data up to time $t$, i.e., $\data_{1:t}$.
In our setting,
one observes $\vx_{t+1}$ just before observing $\vy_{t+1}$;
thus, to make a prediction about $\vy_{t+1}$, we have both $\data_{1:t}$ and $\vx_{t+1}$ at our disposal.\footnote{
The features $\vx_{t+1}$ and measurements $\vy_{t+1}$ can span different time-frames.
For example, $\vx_{t+1}$ can be the state of the stock market at a fixed date and
$\vy_{t+1}$ is the return on a stock some days into the future.
}

To establish a link between the features $\vx_{t+1}$ and the measurement $\vy_{t+1}$,
we consider a probability density of the measurement $p(\vy_{t+1} \cond \vtheta, \vx_{t+1})$
such that
\begin{equation}
\label{eq:predictive-mean}
    \int \vy_t \, p(\vy_t \cond \vtheta_t, \vx_t)\d\vy_t = h(\vtheta, \vx_t).
\end{equation}
Here
$h: \real^\dimstate\times\real^\dimin \to \real^\dimobs$
is called the measurement function.
Following a probabilistic approach, an estimate for $\vy_{t+1}$, having
data $\data_{1:t}$,
features $\vx_{t+1}$, and
the measurement function $h$
is given by the
posterior predictive mean
\begin{equation}\label{eq:prequential-forecast-measure}
    \hat{\vy}_{t+1} :=
    \mathbb{E}_{p}[h(\vtheta, \vx_{t+1})\cond\data_{1:t}] =
    \int h(\vtheta, \vx_{t+1})\,p(\vtheta\cond\data_{1:t})\d\vtheta,
\end{equation}
where $p(\vtheta \cond \data_{1:t})$
is the posterior density over model parameters.

Throughout this work, the notation $p(\vy_t \cond \vtheta, \vx_t)$ represents the probability density
of the measurement $\vy_t$, given the latent (unknown) model parameters $\vtheta \in \real^\dimstate$ and the features $\vx_t$.
Similarly, $p(\vtheta \cond \data_{1:t})$ represents the probability density of the model parameters
$\vtheta$ given the data up to time $t$---it reflects the belief over the model parameters after having 
seen $t$ datapoints.

A natural approach to construct the posterior density $p(\vtheta \cond \data_{1:t})$,
whenever $\data_{1:T}$ arrives in a stream, i.e., one datapoint $\data_t$ at a time,
is through Bayes' rule---suppose
we have access to $p(\vtheta \cond \data_{1:t-1})$, with $t < T$, and we are presented with $\data_t = (\vx_t, \vy_t)$,
which we model through the likelihood $p(\vy_t \cond \vtheta, \vx_t)$
Then,
\begin{equation}\label{eq:recursive-bayes}
\begin{aligned}
    p(\vtheta \cond \data_{1:t})
    &\propto  p(\vtheta \cond \data_{1:t-1})\,p(\data_t \cond \vtheta)\\
    &=
    \underbrace{p(\vtheta \cond \data_{1:t-1})}_\text{prior density}\,
    \underbrace{p(\vy_{t} \cond \vtheta, \vx_t)}_\text{likelihood},
\end{aligned} 
\end{equation}
where the second line in \eqref{eq:recursive-bayes} follows from the assumption of an exogenous $\vx_t$.

Given the initial condition $p(\vtheta) = p(\vtheta \cond \data_{1:0})$, with $\data_{1:0} = \{\}$,
recursive and closed-form estimation of \eqref{eq:recursive-bayes}
is obtained whenever $p(\vtheta \cond \data_{1:t-1})$ and $p(\vy_t \cond \vtheta, \vx_t)$ are conjugate, i.e.,
the functional form of $p(\vtheta \cond \data_{1:t})$ is that of $p(\vtheta \cond \data_{1:t-1})$ \citep[Section 3.3]{robert2007bayesianbook}.
As a consequence, the parameterisation over model parameters is characterised in such a way
that it only depends on a set of parameters, which are recursively updated.

\subsection{Example tasks}
\label{sec:tasks}
Here,
we give
some examples of machine learning tasks which can be tackled using recursive inference.
We group these examples into unsupervised tasks and supervised tasks.

\subsubsection{Unsupervised tasks}

Unsupervised tasks 
involve estimating unobservable quantities of interest from the data $\data_{1:t}$.
Below, we present three common tasks in this category.

\paragraph{\underline{\texttt{Segmentation}}} 

Segmentation involves partitioning the data stream into contiguous subsequences or ``blocks'' $\{\data_{1:t_1},
\data_{t_1+1:t_2}, \ldots\}$, where  the DGP
for each block
is governed by a sequence of unknown functions \citep{barry1992ppm}.
The goal is to determine the points in time when a new block begins,
known as changepoints.
This is useful in many applications, such as finance,
where detecting changes in market trends is critical (see e.g., \cite{arroyo2022dynamic}).
In this setting, non-stationarity is assumed to be abrupt and occurring at unknown points in time.
We study an example in Section \ref{experiment:segmentation}.
For a survey of segmentation methods, see e.g,
\cite{aminikhanghahi2017changepointsurvey,gupta2024changepointsurvey}.

\paragraph{\underline{\texttt{Filtering using state-space models (SSM})}}
Filtering estimates an underlying latent state $\vtheta_t$
that evolves over time (often representing a meaningful concept). 
The posterior estimate of $\vtheta_t$
is computed by applying Bayesian
inference to the corresponding
state space model (SSM),
which determines the choice of \cModel,
and how the state changes over time,
through the choice of \cPrior.
Examples include estimating the state of the atmosphere \citep{evensen1994enkf},
tracking the position of a moving object \citep{battin1982apollo},
or recovering a signal from a noisy system \citep{basseville1993detectionchanges}.
In this setting, non-stationarity is usually
assumed to be continuous and occurring at possible time-varying rates.
For a survey of filtering methods, see e.g., \cite{chen2003bayesianfiltersurvey}.

\paragraph{\underline{\texttt{Segmentation using Switching state-space models (SSSM)}}}
In this task, the modeller extends the standard SSM with a set of discrete latent variables $\psi_t \in \{1,\ldots,K\}$,
which may change value at each time step according to a state transition matrix.
The parameters of the rest of the DGP
depend on the discrete state $\psi_t$.
The objective is to infer the sequence of underlying discrete states that best ``explains'' the observed data
\citep{ostendorf1996hmm,ghahramani2000vsssm,beal2001infinite,fox2007sticky,van2008beam,linderman2017rslds}.
In this context, non-stationarity arises from the switching behaviour of the underlying discrete process.

\subsubsection{Supervised tasks}

Supervised tasks involve predicting a measurable outcome $\vy_t$. 
Unlike unsupervised tasks,
this  allows the performance of the model to be assessed objectively, since we can compare the prediction
to the actual observation.
We present three common tasks in this category below.

\paragraph{\underline{\texttt{Prequential forecasting}}}  
Prequential (or one-step-ahead) forecasting \citep{gama2008streamlearn} seeks to predict the value $\vy_{t+1}$ given $\data_{1:t}$ and $\vx_{t+1}$. 
This is distinct from  time-series forecasting, which typically does not consider exogenous variables $\vx_t$,
and thus can forecast (or  ``roll out'')
many steps
into the future.
We study an example in Section \ref{experiment:prequential}.
For a survey on prequential forecasting under non-stationarity, see e.g., \cite{lu2018preqnonstationarysurvey}.

\paragraph{\underline{\texttt{Online continual learning (OCL)}}}  
OCL is 
a broad term used for learning regression or classification models online, typically with neural networks. These methods usually assume that the underlying data generating mechanism could shift.
The objective of OCL methods 
is to train a  model that performs consistently across both past and future data,
rather than just focusing on future forecasting \citep{cai2021ocl}.
The changepoints (corresponding to different ``tasks'') may or may not be known.
This setting addresses the stability-plasticity dilemma,
focusing on retaining previously learned knowledge while adapting to new tasks.
We study an example of OCL for classification, when the task boundaries are not known, in Section \ref{sec:periodic-drifts}.
For a survey on recent methods for OCL, see e.g., \cite{gunasekara2023surveyocl}.

\paragraph{\underline{\texttt{Contextual bandits}}}  
In contextual bandit problems, the agent is presented with features $\vx_{t+1}$,
and must choose an action (arm) that yields the highest expected reward \citep{li2010contextualbandits}. We let $\vy_{t+1} \in \mathbb{R}^A$ 
where $A > 2$ is the number of possible actions; this is a vector where the $a$-th entry contains the reward one would have obtained had one chosen arm $a$.
Let $\vy_{t}^{(a)}$ be the observed reward at time $t$ after choosing arm $a$, i.e., the $a$-th entry of $\vy_t$.
A popular approach for choosing the optimal action (while tackling the exploration-exploitation tradeoff) at each step
is  Thomson sampling (TS) \citep{thompson1933sampling}, which in our setting works as follows:
first, sample a parameter
vector from the posterior,
$\tilde\vtheta_{t}$ from $p(\vtheta_{t} \cond \data_{1:t})$; then, 
greedily choose the best arm
(the one with the highest expected payoff),
$a_{t+1} = \argmax_{a} \hat{\vy}_{t+1}^{(a)}$,
 where $\hat\vy_{t+1} = h(\tilde\vtheta_{t}; \vx_{t+1})$;
 and $\hat{\vy}_{t+1}^{(a)}$ is the $a$-th entry of $\hat{\vy}_{t+1}$;
finally,  receive a reward $\vy_{t+1}^{(a_{t+1})}$.
The goal is to select a sequence of arms $\{a_1, \ldots, a_T\}$ that maximises the cumulative reward $\sum_{t=1}^T \vy_{t}^{(a_t)}$.
TS for contextual bandits has been used in a number of papers, see e.g.,
\cite{mellor2013changepointthompsonsampling,duranmartin2022-subspace-bandits,cartea2023bandits,alami2023banditnonstationary, liu2023nonstationarybandits}.

\section{The multivariate Gaussian}
\label{sec:the-multivariate-gaussian}
An important concept that we use throughout this thesis is that of the multivariate Gaussian density.
\begin{definition}
    Let $\vx \in \real^\dimobs$, $\vmu \in \real^\dimobs$, and $\vSigma$ an $(\dimobs\times\dimobs)$ positive-definite matrix.
    The density function of a multivariate Gaussian with mean $\vmu$ and covariance matrix $\vSigma$ is
    \begin{equation}
        {\cal N}(\vx \cond \vmu, \vSigma) =
        (2\,\pi)^{-\dimobs/2}\,\vert\vSigma\vert^{-1/2}\,\exp\left(-\frac{1}{2}(\vx - \vmu)^\intercal \vSigma^{-1} (\vx - \vmu)\right).
    \end{equation}
\end{definition}

Gaussian densities are, in many cases, used for computational reasons.
This is because it has mathematical properties that allows us to work with equations of the form
\eqref{eq:prequential-forecast-measure}
and retain a Gaussian structure.
In particular, the following two propositions will be extensively used through this chapter to derive recursive updates.
\begin{proposition}\label{prop:joint-conditional-gauss}
    Let $\vx\in\real^\dimin$ and $\vy\in\real^\dimobs$ be two random vectors such that
    $p(\vx) = {\cal N}(\vx \cond \vm, \vP)$ and
    $p(\vy \cond \vx) = {\cal N}(\vy \cond \vH\,\vx + \vb, \vS)$. The joint density for $(\vx, \vy)$
    is also multivariate Gaussian $p(\vx, \vy) = {\cal N}\left((\vx, \vy) \cond \vmu_{\vx, \vy}, \vSigma_{\vx, \vy}\right)$
    with
    \begin{align}
        \vmu_{\vx, \vy} &=
        \begin{bmatrix}
            \vm & \vH\,\vm + \vb
        \end{bmatrix},\\
        \vSigma_{\vx, \vy} &=
        \begin{bmatrix}
            \vP & \vP\,\vH^\intercal\\
            \vH\,\vP & \vH\,\vP\vH^\intercal + \vS
        \end{bmatrix}.
    \end{align}
\end{proposition}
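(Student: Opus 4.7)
The plan is to prove this by the linear transformation characterisation of Gaussians, which is cleaner than multiplying densities and completing the square. First I would recast the conditional model as an affine-plus-noise identity: because $p(\vy\cond\vx) = \mathcal{N}(\vy\cond \vH\vx+\vb,\vS)$, there exists a random vector $\vepsilon \sim \mathcal{N}(\vzero,\vS)$, independent of $\vx$, such that $\vy = \vH\vx + \vb + \vepsilon$. Independence is legitimate here because the conditional density of $\vy$ given $\vx$ depends on $\vx$ only through the mean shift.

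Next I would observe that the stacked vector $(\vx,\vepsilon)$ is jointly Gaussian as a product of two independent Gaussians, and write
\begin{equation*}
\begin{bmatrix}\vx\\\vy\end{bmatrix}
= \begin{bmatrix}\vI & \vzero\\ \vH & \vI\end{bmatrix}\begin{bmatrix}\vx\\\vepsilon\end{bmatrix} + \begin{bmatrix}\vzero\\\vb\end{bmatrix}.
\end{equation*}
Since affine maps preserve Gaussianity, $(\vx,\vy)$ is jointly Gaussian, and it only remains to identify its mean and covariance. The mean follows from linearity of expectation: $\mathbb{E}[\vx]=\vm$ and $\mathbb{E}[\vy]=\vH\vm+\vb$, giving the stated $\vmu_{\vx,\vy}$.

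For the covariance, I would compute the three blocks using bilinearity of covariance and the independence $\vx\perp\vepsilon$: the $(\vx,\vx)$ block is $\vP$; the cross term satisfies $\cov(\vx,\vy)=\cov(\vx,\vH\vx+\vb+\vepsilon)=\vP\vH^\intercal$; and the $(\vy,\vy)$ block equals $\vH\vP\vH^\intercal+\vS$, where the second summand comes from $\cov(\vepsilon)=\vS$ and the cross terms vanish. This gives exactly the block matrix $\vSigma_{\vx,\vy}$ in the statement.

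The step that requires the most care is the very first one, namely justifying that we may realise $\vy$ as $\vH\vx+\vb+\vepsilon$ with $\vepsilon$ independent of $\vx$; this is not immediate from the factorisation $p(\vx,\vy)=p(\vx)\,p(\vy\cond\vx)$ unless one explicitly constructs $\vepsilon$ on a suitable probability space (for example by taking $\vepsilon := \vy - \vH\vx - \vb$ and verifying, using the Gaussian conditional, that its law is $\mathcal{N}(\vzero,\vS)$ and does not depend on $\vx$). Once this representation is in place, the rest of the argument is a mechanical calculation that no longer relies on the structure of Gaussian densities beyond closure under affine maps. An alternative, equally valid route would be to multiply the two densities, expand the quadratic form $(\vx-\vm)^\intercal\vP^{-1}(\vx-\vm) + (\vy-\vH\vx-\vb)^\intercal\vS^{-1}(\vy-\vH\vx-\vb)$, and complete the square in $(\vx,\vy)$ jointly; I would mention this as a check but prefer the transformation argument for transparency.
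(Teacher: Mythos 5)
Your proof is correct. The paper itself gives no argument for this proposition --- it simply cites Appendix~A of \citet{sarkka2023filtering} --- so there is nothing in the text to compare against; your affine-transformation route (realising $\vy = \vH\vx + \vb + \vepsilon$ with $\vepsilon$ independent of $\vx$, then invoking closure of Gaussians under affine maps and computing the moment blocks) is a standard and complete way to establish the result, and you correctly flag and resolve the one delicate point, namely that $\vepsilon := \vy - \vH\vx - \vb$ has conditional law $\mathcal{N}(\vzero,\vS)$ not depending on $\vx$ and is therefore independent of $\vx$. The density-multiplication-and-complete-the-square argument you mention as an alternative is the one more commonly found in filtering textbooks such as the cited reference; it avoids constructing $\vepsilon$ but requires $\vP$ and $\vS$ to be invertible and more matrix bookkeeping, whereas your argument works for merely positive semidefinite covariances, so if anything it is slightly more general.
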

\begin{proof}
    See Appendix A in \citet{sarkka2023filtering}.
\end{proof}

\begin{proposition}\label{prop:marginals-cond-gauss}
    Let $\vx\in\real^\dimin$ and $\vy\in\real^\dimobs$ be two random vectors with joint density function
    \begin{equation}
        p(\vx, \vy) = {\cal N}\left(
        \begin{bmatrix}
            \vx \\ \vy
        \end{bmatrix}
        \,\Big\vert\,
        \begin{bmatrix}
            \va \\
            \vb
        \end{bmatrix},
        \begin{bmatrix}
            \vA & \vC^\intercal \\
            \vC & \vB
        \end{bmatrix}
        \right).
    \end{equation}
    Then
    \begin{align}
        p(\vx) &= {\cal N}(\vx \cond \va, \vA),\\
        p(\vy) &= {\cal N}(\vy \cond \vb, \vB),\\
        p(\vx \cond \vy) &= {\cal N}(\vx \cond \va + \vC\vB^{-1}(\vy - \vb), \vA - \vC\vB^{-1}\vC^\intercal),
        \label{eq:mvn-x-cond-y}\\
        p(\vy \cond \vx) &= {\cal N}(\vy \cond \vb + \vC^\intercal\vA^{-1}(\vx - \va), \vB - \vC^\intercal\vA\vC).
        \label{eq:mvn-y-cond-x}
    \end{align}
\end{proposition}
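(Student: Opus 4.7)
The plan is to establish the marginals first and then obtain the conditionals by two routes: direct manipulation of the joint density, and a reverse application of Proposition \ref{prop:joint-conditional-gauss}. For the marginals, I would observe that $\vx = \begin{bmatrix}\vI & \vzero\end{bmatrix}\begin{bmatrix}\vx \\ \vy\end{bmatrix}$ is an affine image of a Gaussian vector, so its distribution is Gaussian with mean $\begin{bmatrix}\vI & \vzero\end{bmatrix}\begin{bmatrix}\va \\ \vb\end{bmatrix} = \va$ and covariance $\begin{bmatrix}\vI & \vzero\end{bmatrix}\vSigma\begin{bmatrix}\vI \\ \vzero\end{bmatrix} = \vA$. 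The same argument with the projector $\begin{bmatrix}\vzero & \vI\end{bmatrix}$ yields $p(\vy) = \mathcal{N}(\vy \cond \vb, \vB)$. For the reader who prefers a direct verification, the same two identities also follow by completing the square in the unused variable inside $\int p(\vx,\vy)\,\d\vy$ (or $\d\vx$) and integrating out a standardised Gaussian.

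For the conditional $p(\vx \cond \vy)$, the cleanest route is to invoke Proposition \ref{prop:joint-conditional-gauss} in reverse. I would postulate the form $p(\vx \cond \vy) = \mathcal{N}(\vx \cond \va + \vK(\vy - \vb),\,\vLambda)$ for unknown $\vK$ and $\vLambda$, combine with the already-derived marginal $p(\vy) = \mathcal{N}(\vy \cond \vb, \vB)$, and apply Proposition \ref{prop:joint-conditional-gauss}. The resulting joint is Gaussian with mean $[\va;\,\vb]$ and covariance $\begin{bmatrix}\vLambda + \vK\vB\vK^\intercal & \vK\vB \\ \vB\vK^\intercal & \vB\end{bmatrix}$. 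Matching this block-by-block to the given covariance yields $\vK\vB = \vC^\intercal$, so that $\vK = \vC^\intercal\vB^{-1}$, together with $\vLambda = \vA - \vK\vB\vK^\intercal = \vA - \vC^\intercal\vB^{-1}\vC$. This recovers \eqref{eq:mvn-x-cond-y}, and the same argument with $\vx$ and $\vy$ swapped (using that $\text{Cov}(\vy,\vx) = \vC$) produces \eqref{eq:mvn-y-cond-x}. Equivalently, one may compute $p(\vx \cond \vy) = p(\vx,\vy)/p(\vy)$ by writing the joint exponent as a quadratic form, using the Schur-complement identity for $\vSigma^{-1}$, and completing the square in $\vx$; the $(1,1)$-block of $\vSigma^{-1}$ produces the precision $(\vA - \vC^\intercal\vB^{-1}\vC)^{-1}$, while the cross block yields the affine shift.

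The main obstacle is bookkeeping with the transposes. Because the off-diagonal blocks of $\vSigma$ are $\vC^\intercal$ and $\vC$ rather than a single symmetric symbol, each invocation of the Schur complement produces a mixture of $\vC$ and $\vC^\intercal$ that must be tracked carefully, and the stated form of the conditional mean must be interpreted consistently with this convention. The underlying algebra—block matrix inversion via the Schur complement and completion of the square—is standard and is carried out in full detail in Appendix A of \citet{sarkka2023filtering}, which we may cite for the routine verification.
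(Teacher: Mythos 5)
Your proposal is correct and, in fact, supplies an argument where the paper gives none: the paper's ``proof'' of this proposition is only a pointer to Appendix A of \citet{sarkka2023filtering}. Both of your routes are sound --- the affine-projection argument for the marginals, and the reverse application of Proposition \ref{prop:joint-conditional-gauss} (postulate $p(\vx \cond \vy) = {\cal N}(\vx \cond \va + \vK(\vy - \vb), \vLambda)$, form the joint, and match blocks) for the conditionals; the block-matching step $\vK\vB = \vC^\intercal$, $\vLambda = \vA - \vK\vB\vK^\intercal$ is exactly right. The one point you should press harder on is the transpose issue you flag at the end: it is not merely a matter of ``interpreting the convention consistently'' --- the proposition as printed is wrong given its own block layout. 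With ${\rm Cov}(\vx,\vy) = \vC^\intercal$ (so $\vC$ is $\dimobs\times\dimin$), the correct conditional is $p(\vx\cond\vy) = {\cal N}(\vx \cond \va + \vC^\intercal\vB^{-1}(\vy-\vb),\, \vA - \vC^\intercal\vB^{-1}\vC)$, which is what your matching argument actually produces; the printed $\vC\vB^{-1}(\vy-\vb)$ and $\vA - \vC\vB^{-1}\vC^\intercal$ are not even dimensionally conformable unless $\dimobs = \dimin$. Likewise \eqref{eq:mvn-y-cond-x} should read $\vb + \vC\vA^{-1}(\vx-\va)$ with covariance $\vB - \vC\vA^{-1}\vC^\intercal$; the printed version swaps the transpose and omits the inverse on $\vA$. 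Your derivation is the correct one --- state the corrected formulas explicitly rather than deferring to convention, since your proof is precisely what exposes the typos.
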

\begin{proof}
    See Appendix A in \citet{sarkka2023filtering}.
\end{proof}

\section{Linear and Gaussian measurement models}
\label{sec:linear-gaussian-recursive}
In this section, we present an algorithm to compute recursive updates whenever
the prior density at time $t$ is Gaussian, i.e., $p(\vtheta \cond \data_{1:t-1}) = {\cal N}(\vtheta \cond \vmu_{t-1}, \vSigma_{t-1})$,
and the measurement model is linear and univariate Gaussian, i.e.,
$p(y_t \cond \vtheta, \vx_t) = {\cal N}(y_t \cond \vx_t^\intercal\,\vtheta, \beta^2)$
with known variance $\beta^2$.
Here $h(\vtheta, \vx_t) = \vx_t^\intercal~\vtheta$ and, as we will show,
$\mathbb{E}_p[h(\vtheta, \vx_t) \cond \data_{1:t}] = h(\vmu_t, \vx_t) = \vx_t^\intercal\,\vmu_t$.
The assumption of Gaussianity is convenient because it
(i) provides a prior over each model parameter that spans the real line,
(ii) preserves closed-form updates to the posterior $p(\vtheta \cond \data_{1:t})$, and
(iii) its first and second moments fully characterise the density.

A classical statistical model that can be interpreted 
having a Gaussian prior for the model parameters $p(\vtheta)$ and
Gaussian measurement model $p(y_t \cond \vtheta, \vx_t)$ is Ridge regression.
We recall the Ridge regression in the proposition below.
\begin{proposition}[Ridge regression]\label{prop:ridge-regression}
    Consider the dataset $\data_{1:T}$ such that
    $\data_t = (\vx_t \in \real^\dimin, y_t\in\real)$,
    with measurement model
    $p(y_t \cond \vtheta, \vx_t) = {\cal N}(y_t \cond \vx_t^\intercal\,\vtheta, \beta^2)$,
    with $\beta >0$, and and prior $p(\vtheta) = \normdist{\vtheta}{{\bf 0}}{\alpha^2\,\vI}$, for $\alpha > 0$.
    Suppose $\cov\left(y_i, y_j \cond \vtheta, \vx_i, \vx_j\right) = 0$ for all $i \neq j$.
    Then,
    \begin{equation}\label{eq:ridge-regression}
        \hat{\vmu}_T := \mathbb{E}[\vtheta \cond \data_{1:T}] = (\vX^\intercal\,\vX + \lambda\,\vI)^{-1}\vX^\intercal\,\vY
    \end{equation}
    with
    $\vX = [\vx_1^\intercal, \ldots, \vx_T^\intercal]^\intercal$,
    $\vY = [y_1, \ldots, y_T]^\intercal$, and
    $\lambda = \beta^2 / \alpha^2$.
\end{proposition}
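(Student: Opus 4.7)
The plan is to reduce the posterior computation to a direct application of Propositions \ref{prop:joint-conditional-gauss} and \ref{prop:marginals-cond-gauss}. First, I would stack the scalar measurements into $\vY = (y_1,\ldots,y_T)^\intercal \in \real^T$ and write the design matrix $\vX$ with rows $\vx_t^\intercal$. Because the observations are conditionally uncorrelated (and hence independent, since they are jointly Gaussian given $\vtheta$), the joint likelihood factorises and the individual Gaussian factors assemble into $p(\vY\cond\vtheta,\vX) = \normdist{\vY}{\vX\,\vtheta}{\beta^2\,\vI_T}$.

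I would then invoke Proposition \ref{prop:joint-conditional-gauss} with the substitutions $\vx\leftarrow\vtheta$, $\vy\leftarrow\vY$, $\vm=\vzero$, $\vP=\alpha^2\,\vI_\dimin$, $\vH=\vX$, $\vb=\vzero$, $\vS=\beta^2\,\vI_T$. This yields a jointly Gaussian $(\vtheta,\vY)$ with zero mean and block covariance whose diagonal blocks are $\alpha^2\,\vI_\dimin$ and $\alpha^2\,\vX\vX^\intercal+\beta^2\,\vI_T$, and whose cross-block is $\alpha^2\,\vX^\intercal$. Applying Proposition \ref{prop:marginals-cond-gauss} to extract the conditional $p(\vtheta\cond\vY,\vX)$ gives
\begin{equation*}
    \mathbb{E}[\vtheta\cond\data_{1:T}] = \alpha^2\,\vX^\intercal\,(\alpha^2\,\vX\vX^\intercal+\beta^2\,\vI_T)^{-1}\,\vY.
\end{equation*}

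The final step, which is the only place requiring genuine algebraic work, is to rewrite this expression in the claimed form \eqref{eq:ridge-regression}. For this I would verify the push-through identity
\begin{equation*}
    \alpha^2\,\vX^\intercal\,(\alpha^2\,\vX\vX^\intercal+\beta^2\,\vI_T)^{-1}
    = (\vX^\intercal\vX + \lambda\,\vI_\dimin)^{-1}\,\vX^\intercal,
    \qquad \lambda = \beta^2/\alpha^2,
\end{equation*}
by left-multiplying both sides by $(\vX^\intercal\vX+\lambda\,\vI_\dimin)$ and collapsing $\alpha^2\,\vX^\intercal\vX\vX^\intercal + \beta^2\,\vX^\intercal = \vX^\intercal(\alpha^2\,\vX\vX^\intercal+\beta^2\,\vI_T)$, which after cancellation leaves the tautology $\vX^\intercal = \vX^\intercal$. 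Substituting the identity into the conditional mean delivers \eqref{eq:ridge-regression}. The main obstacle is really dimensional bookkeeping: the $T\times T$ inversion delivered by Proposition \ref{prop:marginals-cond-gauss} must be converted into the claimed $\dimin\times\dimin$ inversion, so as a cleaner alternative I would mention a direct completion-of-the-square argument in $\log p(\vtheta\cond\data_{1:T})$, which reads off the posterior precision $\beta^{-2}\vX^\intercal\vX + \alpha^{-2}\vI_\dimin$ and mean $(\vX^\intercal\vX + \lambda\,\vI_\dimin)^{-1}\vX^\intercal\vY$ directly from the canonical quadratic form and bypasses the matrix identity entirely.
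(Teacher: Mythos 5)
Your proof is correct. Note, however, that the paper does not actually supply an argument for this proposition --- it defers entirely to Section 3.4.1 of \citet{hastie2009esl}, where the ridge estimator is obtained as the minimiser of the penalised least-squares criterion $\|\vY-\vX\vtheta\|_2^2+\lambda\|\vtheta\|_2^2$ via the normal equations, with the Bayesian posterior-mean reading added as an interpretation. Your route is genuinely different and arguably better suited to the thesis: you derive the posterior mean directly from the Gaussian conditioning machinery (Propositions \ref{prop:joint-conditional-gauss} and \ref{prop:marginals-cond-gauss}) that the chapter sets up for exactly this purpose, and your push-through identity correctly converts the $T\times T$ inversion into the claimed $\dimin\times\dimin$ one ($\alpha^2\lambda=\beta^2$ makes the cancellation work, and left-multiplying by the invertible matrix $\vX^\intercal\vX+\lambda\vI$ is a legitimate way to verify the identity). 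Your completion-of-the-square alternative, reading off the posterior precision $\beta^{-2}\vX^\intercal\vX+\alpha^{-2}\vI$, is cleaner still and is essentially the batch analogue of the precision-form update in Proposition \ref{prop:kf-update-step-precision}. Two small points to tighten: the hypothesis as stated gives only pairwise zero conditional covariance and the marginal Gaussian form of each $p(y_t\cond\vtheta,\vx_t)$, so the factorisation of the joint likelihood requires you to assume conditional joint Gaussianity (or, equivalently, conditional independence) --- you flag this parenthetically, and it is the intended reading, but it is an assumption rather than a consequence; and when you invoke Proposition \ref{prop:marginals-cond-gauss} you should use the cross-covariance $\cov(\vtheta,\vY)=\alpha^2\vX^\intercal$ times $\var(\vY)^{-1}$, which is what you in fact wrote, rather than the literal $\vC\vB^{-1}$ of the proposition as printed, whose blocks are dimensionally transposed.
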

\begin{proof}
    See Section 3.4.1 in \cite{hastie2009esl}. 
\end{proof}

If the data $\data_{1:T}$ arrives in a stream,
we can estimate the posterior density $p(\vtheta \cond \data_{1:t})$ for $t=1,\ldots, T$ using Bayes' rule \eqref{eq:recursive-bayes}.


\begin{proposition}[Multivariate recursive Bayesian linear regression]\label{prop:recursive-linear-regression}
    Consider the dataset $\data_{1:T}$
    with $\data_t = (\vx_t \in \real^\dimin, \vy_t\in\real^\dimobs)$,
    and the measurement model
    $p(\vy_t \cond \vtheta, \vx_t) = {\cal N}(\vy_t \cond \vx_t^\intercal\,\vtheta, \vR_t)$ with known covariance matrix $\vR_t$.
    Suppose $\cov\left(\vy_i, \vy_j \cond \vtheta\right) = 0$ for $i \neq q$.
    Let $p(\vtheta) = {\cal N}(\vtheta \cond \vmu_0, \vSigma_0)$ be the initial prior density over the model parameters.
    Then, the posterior density at time $t$ takes the form $p(\vtheta \cond \data_{1:t}) = {\cal N}(\vtheta \cond \vmu_t, \vSigma_t)$ with
    \begin{equation}
    \begin{aligned}\label{eq:recursive-linreg}
        \vS_t &= \vx_t^\intercal\vSigma_{t-1}\vx_t + \vR_t,\\
        \vK_t &= \vSigma_{t-1}\vx_t\vS_t^{-1},\\
        \vmu_t &= \vmu_{t-1} + \vK_t\,(\vy_t - \vx_t^\intercal\,\vmu_{t-1}),\\
        \vSigma_t &= \left(\vI - \vK_t\vx_t\right)\vSigma_{t-1}.
    \end{aligned}
    \end{equation}
\end{proposition}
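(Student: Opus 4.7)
The plan is to prove Proposition \ref{prop:recursive-linear-regression} by induction on $t$, using Bayes' rule \eqref{eq:recursive-bayes} together with the joint/conditional Gaussian identities from Propositions \ref{prop:joint-conditional-gauss} and \ref{prop:marginals-cond-gauss}. The base case $t = 0$ is supplied by the stated prior $p(\vtheta) = {\cal N}(\vtheta \cond \vmu_0, \vSigma_0)$, so the real work is the inductive step: assuming $p(\vtheta \cond \data_{1:t-1}) = {\cal N}(\vtheta \cond \vmu_{t-1}, \vSigma_{t-1})$, derive the Gaussian form of $p(\vtheta \cond \data_{1:t})$ with the claimed moments.

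For the inductive step, I would first identify the ingredients needed to invoke Proposition \ref{prop:joint-conditional-gauss}: take the role of $\vx$ to be $\vtheta$ with $\vm = \vmu_{t-1}$ and $\vP = \vSigma_{t-1}$, and the role of the conditional mean map $\vH \vx + \vb$ to be $\vx_t^\intercal \vtheta$ (so $\vH = \vx_t^\intercal$, $\vb = \vzero$) with $\vS = \vR_t$. The uncorrelatedness assumption $\cov(\vy_i,\vy_j\cond\vtheta)=0$ lets me treat the current likelihood in isolation from past measurements, so Bayes' rule reduces conditioning on $\data_{1:t}$ to conditioning on $\vy_t$ under the prior $p(\vtheta\cond\data_{1:t-1})$. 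Proposition \ref{prop:joint-conditional-gauss} then yields the joint density of $(\vtheta,\vy_t)$ given $\data_{1:t-1}$, with cross-covariance $\vSigma_{t-1}\vx_t$ and marginal covariance of $\vy_t$ equal to $\vx_t^\intercal\vSigma_{t-1}\vx_t + \vR_t = \vS_t$.

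Next, I would apply the conditional formula \eqref{eq:mvn-x-cond-y} in Proposition \ref{prop:marginals-cond-gauss} with $\va = \vmu_{t-1}$, $\vb = \vx_t^\intercal\vmu_{t-1}$, $\vA = \vSigma_{t-1}$, $\vC^\intercal = \vSigma_{t-1}\vx_t$, $\vB = \vS_t$. Substituting directly gives
\begin{align*}
    \vmu_t &= \vmu_{t-1} + \vSigma_{t-1}\vx_t\vS_t^{-1}(\vy_t - \vx_t^\intercal\vmu_{t-1}),\\
    \vSigma_t &= \vSigma_{t-1} - \vSigma_{t-1}\vx_t\vS_t^{-1}\vx_t^\intercal\vSigma_{t-1}.
\end{align*}
Recognising $\vK_t = \vSigma_{t-1}\vx_t\vS_t^{-1}$ then gives the stated mean update immediately, and factoring $\vSigma_{t-1}$ on the right of the covariance expression yields $\vSigma_t = (\vI - \vK_t\vx_t^\intercal)\vSigma_{t-1}$, matching \eqref{eq:recursive-linreg}.

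The argument is essentially routine once Propositions \ref{prop:joint-conditional-gauss} and \ref{prop:marginals-cond-gauss} are available; no fixed-point or variational argument is required since conjugacy is exact. The only genuinely non-mechanical step is keeping the roles of conditioning variables straight so that the recursion is well-posed: one must justify that Bayes' rule \eqref{eq:recursive-bayes} reduces $p(\vtheta\cond\data_{1:t})$ to the conditional of $\vtheta$ given $\vy_t$ under the measure induced by the previous posterior (together with the exogeneity of $\vx_t$ already invoked in \eqref{eq:recursive-bayes} and the conditional-uncorrelatedness assumption on the $\vy_i$'s). Once this reduction is in place, the rest is a direct substitution into \eqref{eq:mvn-x-cond-y}, and closure of the Gaussian family under conditioning guarantees the posterior form, completing the induction.
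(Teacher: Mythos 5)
Your proposal is correct and follows essentially the same route as the paper's proof: induction on $t$, invoking Proposition \ref{prop:joint-conditional-gauss} to form the joint Gaussian of $(\vtheta,\vy_t)$ given $\data_{1:t-1}$ and then Proposition \ref{prop:marginals-cond-gauss} to read off the conditional mean and covariance, which are exactly \eqref{eq:recursive-linreg}. If anything, your write-up is slightly more careful than the paper's (which contains minor typographical slips in the covariance line), so no changes are needed.
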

\begin{proof}
    Following an induction argument,
    suppose $p(\vtheta \cond \data_{1:t-1}) = {\cal N}(\vtheta \cond \vmu_{t-1}, \vSigma_{t-1})$.
    From proposition \ref{prop:joint-conditional-gauss}, the joint density for $(\vtheta, \vy_t)$,
    conditioned on $\data_{1:t-1}$ and $\vx_t$ takes the form of a Gaussian density with mean
    \begin{equation}
    \begin{bmatrix}
        \vmu_{t-1}^\intercal & \vx_t^\intercal\,\vmu_{t-1}
    \end{bmatrix}^\intercal
    \end{equation}
    and covariance matrix
    \begin{equation}
        \begin{bmatrix}
            \vSigma_{t-1} & \vSigma_{t-1}\,\vx_t\\
            \vx_t^\intercal\,\vSigma_{t-1} & \vx_t^\intercal\,\vSigma_{t-1}\vx_t + \vR_t
        \end{bmatrix}.
    \end{equation}
    Then, by Proposition \ref{prop:marginals-cond-gauss}, the density for $\vtheta$ conditioned on
    $\data_{1:t-1}$, $\vx_t$ and $\vy_t$ is Gaussian with mean and covariance matrix given by
    \begin{equation}\label{eq:recursive-linreg-mu-sigma-update}
    \begin{aligned}
        \vS_t &= \vx_t^\intercal\,\vSigma_{t-1}\vx_t + \vR_t,\\
        \vmu_t &= \vmu_{t-1} + \vSigma_{t-1}\vx_t\vS_{t}^{-1}(\vy_t - \vmu_{t-1}^\intercal\vx_t),\\
        \vSigma_{t-1} &= \vSigma_{t-1} - \vSigma_{t-1}\vx^\intercal\vSigma^{-1}_t\vx^\intercal\vSigma_{t-1}.
    \end{aligned}
    \end{equation}
\end{proof}
\begin{proposition}\label{prop:recursive-and-static-ridge}
    The estimate of the mean $\vmu_T$ in \eqref{eq:recursive-linreg}
    is the \textit{exact} posterior mean, i.e., $\vmu_T = {\mathbb E}[\vtheta \cond \data_{1:T}]$,
    and matches the Ridge regression estimate
    \eqref{eq:ridge-regression} whenever
    $\dimobs = 1$, 
    $\vmu_0 = {\bf 0}$, 
    $\vSigma_0 = \alpha^2\,\vI$, and
    $\vR_t = \beta^2$ for all $t \in \{1,\ldots, T\}$.
\end{proposition}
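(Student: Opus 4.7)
The plan is to decompose the claim into its two assertions and handle each in turn. First, I would argue by induction on $t$ that the recursive mean $\vmu_t$ produced by \eqref{eq:recursive-linreg} equals the exact posterior mean $\mathbb{E}[\vtheta \mid \data_{1:t}]$. The base case $t=0$ holds by the choice of prior. For the inductive step, I appeal directly to the derivation within Proposition \ref{prop:recursive-linear-regression}: given that the inductive hypothesis makes $p(\vtheta \mid \data_{1:t-1})$ exactly Gaussian with parameters $(\vmu_{t-1}, \vSigma_{t-1})$, Proposition \ref{prop:joint-conditional-gauss} gives the exact joint density of $(\vtheta, \vy_t)$ conditioned on $\data_{1:t-1}$ and $\vx_t$, and Proposition \ref{prop:marginals-cond-gauss} then delivers the exact conditional $p(\vtheta \mid \data_{1:t})$ as Gaussian with the mean and covariance in \eqref{eq:recursive-linreg-mu-sigma-update}. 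Because each update preserves Gaussianity and no approximation is introduced, the recursion tracks the true posterior for all $t$, and in particular $\vmu_T = \mathbb{E}[\vtheta \mid \data_{1:T}]$.

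Second, I would compute the full posterior mean under the Ridge specification ($\dimobs = 1$, $\vmu_0 = \vzero$, $\vSigma_0 = \alpha^2 \vI$, $\vR_t = \beta^2$) and show it matches \eqref{eq:ridge-regression}. By the conditional independence assumption $\cov(y_i, y_j \mid \vtheta) = 0$ for $i \neq j$, the joint likelihood factorises as $\prod_{t=1}^T \normdist{y_t}{\vx_t^\intercal \vtheta}{\beta^2}$, so the log-posterior is a quadratic form in $\vtheta$:
\begin{equation}
\log p(\vtheta \mid \data_{1:T}) = -\tfrac{1}{2\alpha^2}\|\vtheta\|^2 - \tfrac{1}{2\beta^2}\sum_{t=1}^T (y_t - \vx_t^\intercal \vtheta)^2 + \text{const}.
\end{equation}
Since the posterior is Gaussian, its mean coincides with the mode; setting the gradient to zero gives the normal equations $\left(\vX^\intercal \vX + \lambda \vI\right)\vmu = \vX^\intercal \vY$ with $\lambda = \beta^2/\alpha^2$, which inverts to \eqref{eq:ridge-regression}. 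Combining with the first part yields $\vmu_T = (\vX^\intercal \vX + \lambda \vI)^{-1} \vX^\intercal \vY$, as required.

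The argument is essentially a sanity check rather than a deep result, so the main obstacle is bookkeeping rather than insight. In particular, I would take some care to record where the conditional-independence hypothesis is used (namely, in factorising the batch likelihood) and to confirm that the scalar output case $\dimobs = 1$ reduces the matrix identity $\vR_t = \beta^2 \vI$ to the scalar $\beta^2$ that appears in Proposition \ref{prop:ridge-regression}. No delicate matrix inversion lemma is needed, because the batch side is computed in closed form rather than by unrolling the recursion; the equality between recursive and batch estimates is obtained indirectly, via the common identification with the exact posterior mean.
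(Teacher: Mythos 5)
Your proposal is correct. Note, however, that the paper does not actually supply a proof of this proposition: it simply defers to the cited references \citep{kelly1990recursiveridge, ismail1996equivalence}, which establish the recursive/batch equivalence in the classical recursive-least-squares literature (typically by unrolling the recursion with the matrix inversion lemma). Your route is different and arguably cleaner: you identify both the recursive estimate and the Ridge estimate with the exact posterior mean of a single Gaussian model, so the equality falls out without any algebraic unrolling. The first half of your argument (induction on $t$, with each step justified by Propositions \ref{prop:joint-conditional-gauss} and \ref{prop:marginals-cond-gauss}) is exactly the content already proved in Proposition \ref{prop:recursive-linear-regression}, so the only genuinely new work is the batch computation, which you do correctly by completing the square in the log-posterior and using mean-equals-mode for a Gaussian. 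You are also right to flag the two hypotheses that make this work: the conditional independence $\cov(y_i, y_j \mid \vtheta) = 0$ is what lets the batch likelihood factorise (and, equivalently, what licenses the recursive Bayes update \eqref{eq:recursive-bayes} at each step), and the specialisation $\dimobs = 1$, $\vmu_0 = \vzero$, $\vSigma_0 = \alpha^2 \vI$, $\vR_t = \beta^2$ is what reduces the Gaussian posterior mode to the Ridge normal equations with $\lambda = \beta^2/\alpha^2$. One small point worth making explicit if you write this up: the prior mean must be $\vzero$ for the penalty term to be $\|\vtheta\|^2$ rather than $\|\vtheta - \vmu_0\|^2$, which is why that hypothesis appears in the statement.
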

\begin{proof}
    See \cite{kelly1990recursiveridge, ismail1996equivalence}.
\end{proof}
Proposition \ref{prop:recursive-and-static-ridge} shows that, in the linear and Gaussian case,
the final estimate of the recursive update of model parameters 
matches the \textit{offline} batch estimate of model parameters.
We summarise the recursive Bayesian linear regression method in Algorithm \ref{algo:recursive-bayesian-linreg}.
\begin{algorithm}[htb]
\begin{algorithmic}[1]
    \REQUIRE $\data_{1:T}$, $p(\vtheta) = {\cal N}(\vtheta \cond \vmu_{0}, \vSigma_{0})$
    \FOR{$t=1,\ldots,T$}
        \STATE $\vS_t = \vx_t^\intercal\,\vSigma_{t-1}\vx_t + \vR_t$
        \STATE $\vmu_t \gets \vmu_{t-1} + \vSigma_{t-1}\vx_t\vS_{t}^{-1}(\vy_t - \vmu_{t-1}^\intercal\vx_t)$
        \STATE $\vSigma_t \gets \vSigma_{t-1} - \vSigma_{t-1}\vx_t^\intercal\vSigma^{-1}_t\vx_t^\intercal\vSigma_{t-1}$
        \STATE $p(\vtheta \cond \data_{1:t}) = \normdist{\vtheta}{\vmu_t}{\vSigma_t}$
    \ENDFOR
\end{algorithmic}
\caption{
    Pseudocode for the recursive Bayesian linear regression.
}
\label{algo:recursive-bayesian-linreg}
\end{algorithm}

The following example shows a numerical analysis of the results
in Proposition \ref{prop:ridge-regression} and Proposition \ref{prop:recursive-linear-regression}
\subsection{Example: recursive and batch Ridge regression}
Consider the dataset ${\cal D}_{1:T}$ with ${\cal D}_t = (\vx_t \in \real^2, y_t\in\real)$
such that $y_t = \vx_t^\intercal\,\vtheta + \ve_t$,
with $\vtheta \in \real^2$ the true model parameters and $p(\ve_t) = {\cal N}(\ve_t \cond 0, 1)$.
The left panel in Figure \ref{fig:recursive-ridge-params-and-rmse} shows the estimate for $\vmu_t \in \real^2$ along with the
Ridge estimate $\hat{\vmu}_t$.
The right panel in Figure \ref{fig:recursive-ridge-params-and-rmse}
shows the root mean squared error (RMSE) evaluated over a
\textit{held-out} test set $\hat{\data}_{1:t}$ for the Ridge estimate and the recursive-Bayes estimate $\vmu_t$ as a
function of the number of datapoints processed.
\begin{figure}[htb]
    \centering
    \includegraphics[width=0.9\linewidth]{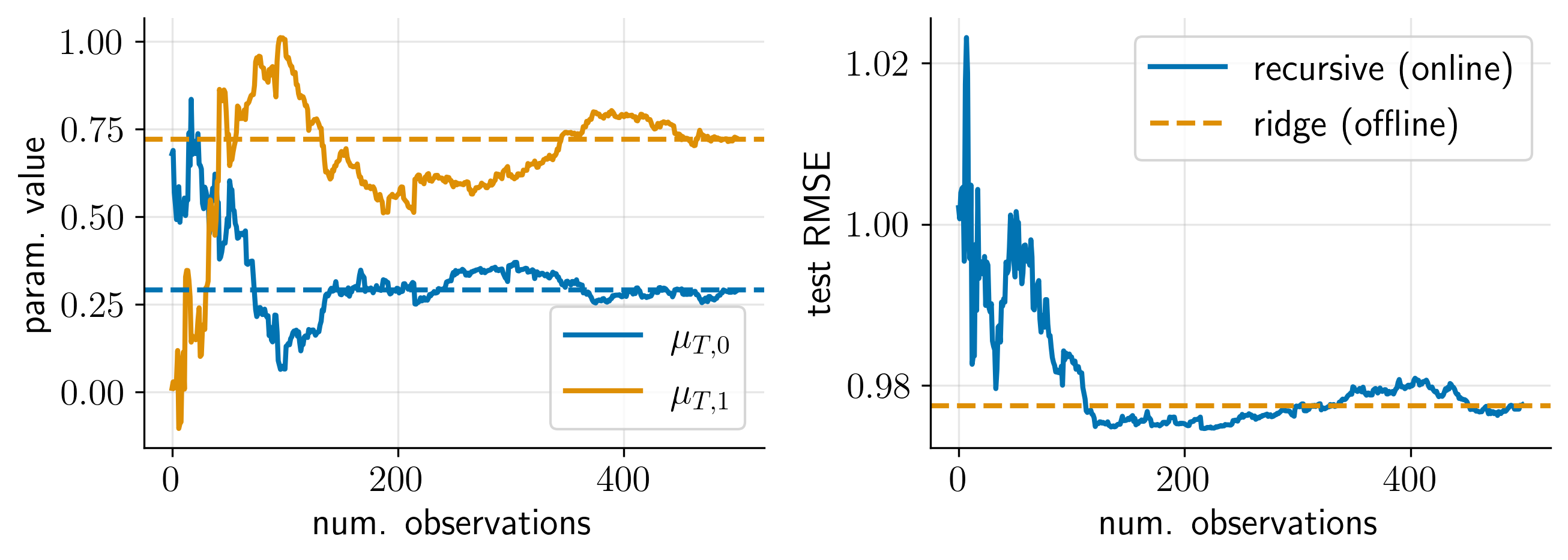}
    \caption{
    \textbf{(Left panel)}
    Mean estimate of the parameters.
    The solid lines correspond to the recursive-Bayes estimate of the mean.
    The dashed lines correspond to the offline estimate of the mean using Ridge regression.
    \textbf{(Right panel)}
    RMSE on a held-out test set.
    }
    \label{fig:recursive-ridge-params-and-rmse}
\end{figure}
We observe that the parameters of the recursive-Bayes estimate $\vmu_t$ tend to, and eventually match, the Ridge estimate of model parameters.
As a consequence, the RMSE on the held-out set for the recursive-Bayes estimate match that of the Ridge estimate
at $T$.

\section{Non-Gaussian and non-linear measurement models}
\label{sec:nonlinear-nongaussian-recursive}

In some scenarios, the assumption of a linear Gaussian measurement model can be overly restrictive.
For instance, when the measurement model $h_t$ is parametrised by a neural network, $h_t$ is a non-linear function of $\vtheta$.
Alternatively, in linear classification problems, the measurement model is best represented by a Bernoulli mass function,
with mean $\sigma(\vtheta^\intercal\,\vx_t)$, where $\sigma: \real \to [0,1]$ is the sigmoid function.

In these scenarios--- when the measurement model is either non-Gaussian or non-linear---the
likelihood is no longer conjugate to a Gaussian prior over the model parameters.
As a result, the posterior density, such as those that use \eqref{eq:recursive-linreg},
are no longer available.
To address this challenge, we rely on approximations to the posterior density or Monte Carlo (MC) methods.

MC methods are commonly used to sample from \eqref{eq:recursive-bayes} when an analytical form is either unknown or intractable.
In these cases, \eqref{eq:prequential-forecast-measure} is approximated by the samples from the posterior.
A wealth of literature exists on sample-based approaches for posterior inference.
For detailed discussions, see \cite{barbu2020montecarlo} for Markov Chain Monte Carlo methods,
\cite{doucet2009tutorialpf} for online particle filtering,
and \cite{naesseth2019elementssmc} for batch sequential Monte Carlo (SMC) techniques.

Functional approximations to the posterior, which is the primary focus in this thesis,
include techniques such as variational Bayes or linearisation schemes.
These computations are more tractable in many practical scenarios.
We will nonetheless provide examples where sample-based methods are employed.

\subsection{Variational Bayes}\label{sec:variational-bayes}
A popular approach to approximate the posterior density is through the use of variational Bayes (VB).
In VB, the posterior density is typically written as an optimisation problem
over a set of \textit{candidate} densities living in a set $\cal{Q}$ that most
closely resembles the posterior density $p(\vtheta \cond \data_{1:t})$
up to a normalisation constant.
In this text, our notion of closeness between the density $q\in{\cal Q}$
and the posterior density
is based on the Kullback-Leibler (KL) divergence \citep{kullback1951kld} which we recall below.
\begin{definition}
    \textbf{(Kullback-Leibler divergence)}
    Let $p$ and $q$ be probability densities defined over the same domain.
    Then, the KL divergence of $q$ from $p$ is defined as
    \begin{equation}\label{eq:def-kl-divergence}
       \KL{q(\vtheta)}{p(\vtheta)}
       := \mathbb{E}_{q}\left[\log\left(\frac{q(\vtheta)}{p(\vtheta)}\right)\right]
       = \int q(\vtheta)\,\left[\log\left(\frac{q(\vtheta)}{p(\vtheta)}\right)\right]\d\vtheta.
    \end{equation}
\end{definition}
See \cite{soch2020statsbook}.

A useful result that we will use throughout the thesis is that of the form
of the KL divergence between two multivariate Gaussians.
\begin{proposition}
    \label{prop:KL-divergence-gaussians}
    Let $p_1(\vx) = \normdist{\vx}{{\vm}_1}{{\vS}_1}$
    and
    $p_2(\vx) = \normdist{\vx}{{\vm}_2}{{\vS}_2}$
    be two $M$-dimensional multivariate Gaussian densities.
    Then, the KL divergence between $p_1$ and $p_2$ takes the form
    \begin{equation}
    \begin{aligned}
        &\KL{p_1(\vx)}{p_2(\vx)}\\
        &= \KL{\normdist{\vx}{{\vm}_1}{{\vS}_1}}{\normdist{\vx}{{\vm}_2}{{\vS}_2}}\\
        &= \frac{1}{2}\left[
            {\rm Tr}\left( {\vS}_2^{-1}{\vS}_1 \right)
            + ({\vm}_2 - {\vm}_1)^\intercal {\vS}_2^{-1} ({\vm}_2 - {\vm}_1)
            - M
            + \log\left( |{\vS}_2|/|{\vS}_1|\right)
        \right].
    \end{aligned}
    \end{equation}
\end{proposition}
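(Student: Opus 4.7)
The plan is a direct calculation from the definition of the KL divergence in \eqref{eq:def-kl-divergence}, exploiting the log-quadratic form of the Gaussian density and the known first and second moments of $p_1$.

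First, I would substitute the densities into the definition and use the fact that $\log\mathcal{N}(\vx \cond \vm, \vS) = -\frac{M}{2}\log(2\pi) - \frac{1}{2}\log|\vS| - \frac{1}{2}(\vx-\vm)^\intercal \vS^{-1}(\vx-\vm)$. The $-\frac{M}{2}\log(2\pi)$ terms cancel between the numerator and the denominator inside the logarithm, so after moving the expectation inside I would obtain
\begin{equation}
\KL{p_1}{p_2} = \frac{1}{2}\log\frac{|\vS_2|}{|\vS_1|}
+ \frac{1}{2}\,\mathbb{E}_{p_1}\!\left[(\vx-\vm_2)^\intercal \vS_2^{-1}(\vx-\vm_2)\right]
- \frac{1}{2}\,\mathbb{E}_{p_1}\!\left[(\vx-\vm_1)^\intercal \vS_1^{-1}(\vx-\vm_1)\right].
\end{equation}

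Next, I would evaluate the two expectations using the standard trace identity $\mathbb{E}_{p_1}[(\vx-\va)^\intercal \vA (\vx-\va)] = \Tr(\vA\,\vS_1) + (\vm_1-\va)^\intercal \vA (\vm_1-\va)$, which follows from writing $\vx - \va = (\vx - \vm_1) + (\vm_1 - \va)$, expanding the quadratic form, using linearity of expectation, noting that $\mathbb{E}_{p_1}[\vx - \vm_1] = \vzero$, and applying the cyclic property of the trace to $\mathbb{E}_{p_1}[(\vx - \vm_1)^\intercal \vA (\vx - \vm_1)] = \Tr(\vA\,\mathbb{E}_{p_1}[(\vx-\vm_1)(\vx-\vm_1)^\intercal]) = \Tr(\vA\,\vS_1)$. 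Applying this with $\vA = \vS_2^{-1}$, $\va = \vm_2$ gives $\Tr(\vS_2^{-1}\vS_1) + (\vm_1 - \vm_2)^\intercal \vS_2^{-1}(\vm_1 - \vm_2)$, while applying it with $\vA = \vS_1^{-1}$, $\va = \vm_1$ gives $\Tr(\vS_1^{-1}\vS_1) + 0 = M$.

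Finally, I would combine the three pieces. The log-determinant term is already in the target form, the first expectation contributes the trace and Mahalanobis-distance terms (using that $(\vm_1-\vm_2)^\intercal \vS_2^{-1}(\vm_1-\vm_2) = (\vm_2-\vm_1)^\intercal \vS_2^{-1}(\vm_2-\vm_1)$), and the second expectation contributes $-M$, yielding exactly the stated identity.

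There is no real obstacle here; the calculation is routine once the trace identity is in hand. The only point worth care is keeping track of signs and of the two directions of the Mahalanobis term, and noting that the result is coordinate-free so no diagonalisation of $\vS_1$ or $\vS_2$ is needed.
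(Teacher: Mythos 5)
Your proof is correct. The paper does not actually write out an argument for this proposition—it simply cites Section 6.2.3 of \citet{muprhy2022-pml1Book}—and your direct computation (expand the log-densities, cancel the $(2\pi)^{M/2}$ factors, and evaluate the two quadratic-form expectations via $\mathbb{E}_{p_1}[(\vx-\va)^\intercal\vA(\vx-\va)]=\Tr(\vA\,\vS_1)+(\vm_1-\va)^\intercal\vA(\vm_1-\va)$) is exactly the standard derivation given in that reference, with all signs and terms matching the stated identity.
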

\begin{proof}
 See Section 6.2.3 in \cite{muprhy2022-pml1Book}.
\end{proof}

Armed with a notion of closeness, we now define the objective of a VB method.
\begin{definition}
    \textbf{(variational approximation)}
    Let $p$ be a density with support $\rm{dom}(p)$ and
    ${\cal Q}$ a collection of candidate densities with ${\rm dom}(q) = \rm{dom}(p)$ for all $q \in {\cal Q}$.
    The variational approximation of $p$, under a KL divergence, selects $q_* \in {\cal Q}$ according to the criterion
    \begin{equation}\label{eq:variational-bayes-target}
        q_*(\vtheta) = \arg\min_{q \in {\cal Q}} \KL{q(\vtheta)}{p(\vtheta)}.
    \end{equation}
\end{definition}
Criterion \ref{eq:variational-bayes-target} recovers exact Bayesian updating \eqref{eq:recursive-bayes}
if ${\cal Q}$ includes the Bayesian posterior within its family, i.e.,
$q_*(\vtheta) = p(\vtheta)$ if $p \in {\cal Q}$ \citep{knoblauch2022gvi}.

Examples of VB for estimation of model parameters include
the Bayes-by-backpropagation method (BBB) of \cite{blundell2015bbb},
which assumes a diagonal posterior covariance
(more expressive forms are also possible).
 \cite{nguyen2017vcl} extended BBB to
 non-stationary settings.
More recent approaches involve recursive estimation,
such as
the recursive variational Gaussian approximation (R-VGA) method of \cite{lambert2022rvga}
which uses a full rank Gaussian variational approximation (see Section \ref{sec:rvga});
the limited-memory RVGA (L-RVGA) method of  \cite{lambert2023lrvga},
which uses a diagonal plus low-rank  (DLR) Gaussian variational approximation;
the Bayesian online natural gradient (BONG) method of  \cite{jones2024bong},
which combines the DLR approximation with EKF-style linearisation for additional speedups; and
the natural gradient Gaussian approximation (NANO) method of \cite{Cao2024}, which uses a diagonal Gaussian approximation
similar to VD-EKF in \cite{chang2022diagonal}.

The following section outlines an important VB method for recursive estimation of a Gaussian density
under non-linear measurement functions and non-Gaussian measurement models.

\subsection{The recursive variational Gaussian approximation}\label{sec:rvga}
The recursive variational Gaussian approximation (R-VGA), introduced in \cite{lambert2022rvga},
constructs a sequence of variational Gaussian approximations.
We outline this method below
\begin{definition}[R-VGA]
    Consider the dataset $\data_{1:T}$ with $\data_t = (\vx_t\in\real^\dimin, \vy_t\in{\cal B}\subseteq \real^\dimobs)$.
    Let $q_0(\vtheta) = {\cal N}(\vtheta \cond \vmu_0, \vSigma_0)$ with prior mean $\vmu_0\in\real^\dimstate$
    and $(\dimstate\times\dimstate)$ covariance matrix $\vSigma_0$.
    The R-VGA method estimates a VB density for the $\vy_t$ measurement according to
    \begin{equation}\label{eq:rvga-target-orignal}
        q_t(\vtheta) = \arg\min_{q \in {\cal Q}} \KL{q(\vtheta)}{p(\vy_t \cond \vtheta)\,q_{t-1}(\vtheta)\,/\,Z_t},
    \end{equation}
    where ${\cal Q}$ is the family of multivariate Gaussian densities and
    $Z_t = \int p(\vy_t \cond \vtheta)\,q_{t-1}(\vtheta)\d\vtheta$.
    Because the Gaussian density is characterised by its mean and covariance matrix,
    the VB criteria \eqref{eq:rvga-target-orignal} can be written as
\begin{equation}\label{eq:rvga-target-gauss}
    \vmu_t, \vSigma_t =
    \arg\min_{\vmu, \vSigma}\,
    \KL{{\cal N}(\vtheta \cond \vmu, \vSigma)}{p(\vy_t \cond \vtheta)\,{\cal N}(\vtheta \cond \vmu_{t-1}, \vSigma_{t-1})\,/\,Z_t}.
\end{equation}
\end{definition}
If the measurement model is absolutely continuous with respect to the model parameters $\vtheta$
and the observations $\vy_{1:T}$ are independent conditionally on $\vtheta$,
 then the R-VGA is show to have updates for $q_t$ (and hence $\vmu_t$ and $\vSigma_t$)
as
\begin{equation}\label{eq:rvga-mu-sigma-update}
\begin{aligned}
    \vmu_t &= \vmu_{t-1} + \vSigma_{t-1}\,\mathbb{E}_{q_{t}}[\nabla_\vtheta \log p(\vy_t \cond \vtheta, \vx_t)],\\
    \vSigma_t^{-1} &= \vSigma_t^{-1} - \mathbb{E}_{q_{t}}\left[\nabla^2_\vtheta \log p(\vy_t \cond \vtheta, \vx_t)\right].
\end{aligned}
\end{equation}
See Theorem 1 in \cite{lambert2022rvga} for a proof.

The right hand side of \eqref{eq:rvga-mu-sigma-update} requires the computation of the expected
gradient and hessian of the log-likelihood according to the variational approximation at time $t$.
In this sense,  \eqref{eq:rvga-mu-sigma-update} correspond to an implicit update, i.e.,
the estimate of $q_t$ is obtained after multiple inner iterations of \eqref{eq:rvga-mu-sigma-update}.
We summarise the R-VGA method in Algorithm \ref{algo:rvga-update}.
\begin{algorithm}[htb]
\begin{algorithmic}[1]
    \REQUIRE dataset $\data_{1:T}$ with $\data_t = (\vx_t, \vy_t)$
    \REQUIRE initial state of model parameters $q_{0}(\vtheta) = {\cal N}(\vtheta \cond \vmu_{0}, \vSigma_{0})$
    \REQUIRE number of iterations $I\geq 1$
    \FOR{$t=1,\ldots,T$}
    \STATE // Initialise $\vmu_t$ and $\vSigma_t$
    \STATE $\vmu_t \gets \vmu_{t-1}$
    \STATE $\vSigma_t \gets \vSigma_{t-1}$
    \STATE $q_t(\vtheta) = {\cal N}(\vtheta \cond \vmu_t, \vSigma_t)$
    \FOR{$i=1,\ldots,I$}
        \STATE $\vmu_t \gets \vmu_{t-1} + \vSigma_{t-1}\,\mathbb{E}_{q_{t}}[\nabla_\vtheta \log p(\vy_t \cond \vtheta, \vx_t)]$
        \STATE $\vSigma_t^{-1} \gets \vSigma_t^{-1} - \mathbb{E}_{q_{t}}[\nabla^2_\vtheta \log p(\vy_t \cond \vtheta, \vx_t)]$
        \STATE $q_t(\vtheta) = {\cal N}(\vtheta \cond \vmu_t, \vSigma_t)$
    \ENDFOR
    \ENDFOR
\end{algorithmic}
\caption{
    Pseudocode for the R-VGA.
} \label{algo:rvga-update}
\end{algorithm}
In Algorithm \ref{algo:rvga-update}, the terms
$\mathbb{E}_{q_{t}}[\nabla_\vtheta \log p(\vy_t \cond \vtheta, \vx_t)]$
and
$\mathbb{E}_{q_{t}}[\nabla^2_\vtheta \log p(\vy_t \cond \vtheta, \vx_t)]$
can be estimated through sampling---because $q_t(\vtheta) = \normdist{\vtheta}{\mu_t}{\vSigma_t}$,
then
\begin{equation}\label{eq:sample-expectation}
    \mathbb{E}_{q_{t}}[g(\vtheta)]
    \approx \frac{1}{S}\sum_{s=1}^S g\left(\vtheta^{(s)}\right),
\end{equation}
where $g: \real^\dimstate \to \real^K$, $K\geq 1$,
$S \geq 1$ is the number of samples, and
$\vtheta^{(s)}$ is a sample from a multivariate Gaussian with mean $\vmu_t$ and covariance matrix $\vSigma_t$.

The terms in \eqref{eq:rvga-mu-sigma-update} resemble those in \eqref{eq:recursive-linreg-mu-sigma-update}.
In fact, the next proposition shows that R-VGA has the Bayesian recursive linear regression as a special case.
\begin{proposition}\label{prop:rvga-matches-linreg}
    Consider a linear Gaussian model for $\vy_t\in\real^\dimobs$, so that
    $p(\vy_t \cond \vtheta, \vx_t) = {\cal N}(\vy_t \cond \vx_t^\intercal\,\vtheta, \vR_t)$.
    Then the R-VGA update \eqref{eq:rvga-mu-sigma-update} matches that of \eqref{eq:recursive-linreg}.
    As a consequence, it also matches the \textit{true} posterior density
    $p(\vtheta_t \cond \data_{1:t})$.
\end{proposition}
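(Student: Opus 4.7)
The plan is to specialise the two R-VGA recursions in \eqref{eq:rvga-mu-sigma-update} to the linear Gaussian likelihood and show they collapse to the closed-form updates in \eqref{eq:recursive-linreg}. First, I would compute the log-likelihood's gradient and Hessian. For $p(\vy_t \cond \vtheta, \vx_t) = \normdist{\vy_t}{\vx_t^\intercal \vtheta}{\vR_t}$ we have
\begin{align}
\nabla_\vtheta \log p(\vy_t \cond \vtheta, \vx_t) &= \vx_t\,\vR_t^{-1}(\vy_t - \vx_t^\intercal \vtheta),\\
\nabla_\vtheta^2 \log p(\vy_t \cond \vtheta, \vx_t) &= -\vx_t\,\vR_t^{-1}\vx_t^\intercal.
\end{align}
The key observation, which does most of the work, is that the Hessian is \emph{deterministic} (independent of $\vtheta$) and the gradient is affine in $\vtheta$. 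Consequently the expectations under any $q_t(\vtheta) = \normdist{\vtheta}{\vmu_t}{\vSigma_t}$ reduce to substitution of the mean:
\begin{align}
\expectation{q_t}{\nabla_\vtheta \log p(\vy_t \cond \vtheta, \vx_t)} &= \vx_t\,\vR_t^{-1}(\vy_t - \vx_t^\intercal \vmu_t),\\
\expectation{q_t}{\nabla_\vtheta^2 \log p(\vy_t \cond \vtheta, \vx_t)} &= -\vx_t\,\vR_t^{-1}\vx_t^\intercal.
\end{align}

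Substituting these into \eqref{eq:rvga-mu-sigma-update} yields the \emph{information-form} update
\begin{equation*}
\vSigma_t^{-1} = \vSigma_{t-1}^{-1} + \vx_t\,\vR_t^{-1}\vx_t^\intercal,
\qquad
\vmu_t = \vmu_{t-1} + \vSigma_{t-1}\,\vx_t\,\vR_t^{-1}(\vy_t - \vx_t^\intercal \vmu_t).
\end{equation*}
Note that the mean update is implicit in $\vmu_t$; however, because the Hessian is constant, the fixed-point iteration in Algorithm~\ref{algo:rvga-update} converges after a single pass, so I would solve it exactly in closed form. Left-multiplying the rearranged mean equation $(\vSigma_{t-1}^{-1} + \vx_t\vR_t^{-1}\vx_t^\intercal)\vmu_t = \vSigma_{t-1}^{-1}\vmu_{t-1} + \vx_t\vR_t^{-1}\vy_t$ by $\vSigma_t$ reduces it to the update of Proposition~\ref{prop:recursive-linear-regression} once I identify $\vSigma_t^{-1}$ with the covariance update.

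The remaining step, which I expect to be the main (but still routine) obstacle, is reconciling the information-form covariance update with the gain-based form $\vSigma_t = (\vI - \vK_t \vx_t^\intercal)\vSigma_{t-1}$ in \eqref{eq:recursive-linreg}. For this I would apply the Woodbury matrix identity to
\begin{equation*}
(\vSigma_{t-1}^{-1} + \vx_t\vR_t^{-1}\vx_t^\intercal)^{-1} = \vSigma_{t-1} - \vSigma_{t-1}\vx_t\bigl(\vx_t^\intercal\vSigma_{t-1}\vx_t + \vR_t\bigr)^{-1}\vx_t^\intercal \vSigma_{t-1},
\end{equation*}
and recognise the right-hand side as $(\vI - \vK_t \vx_t^\intercal)\vSigma_{t-1}$ with $\vS_t$ and $\vK_t$ as defined in \eqref{eq:recursive-linreg}. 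Substituting this equality into the rearranged mean update then produces exactly $\vmu_t = \vmu_{t-1} + \vK_t(\vy_t - \vx_t^\intercal \vmu_{t-1})$ after minor algebra. This establishes the equivalence of the two recursions; the claim that R-VGA recovers the true posterior then follows immediately from Proposition~\ref{prop:recursive-linear-regression}, since those updates were shown to be the exact posterior in the linear Gaussian setting.
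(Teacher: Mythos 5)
Your proof is correct. The paper itself gives no derivation here --- it simply defers to Theorem~2 of \citet{lambert2022rvga} --- so there is no in-paper argument to compare against; your computation is the standard one and fills that gap soundly. The key steps all check out: the Hessian $-\vx_t\vR_t^{-1}\vx_t^\intercal$ is constant and the gradient is affine in $\vtheta$, so the expectations under $q_t$ reduce to mean substitution; the resulting information-form recursion is then converted to the gain form of \eqref{eq:recursive-linreg} via Woodbury, and the identity $\vK_t = \vSigma_t\vx_t\vR_t^{-1}$ (which the paper itself establishes in Proposition~\ref{prop:kf-update-step-precision}) closes the mean update. One small imprecision: the claim that the fixed-point iteration ``converges after a single pass'' is not quite right --- the covariance update is indeed exact after one pass, but the mean equation remains implicit ($\vmu_t$ appears on both sides) and the naive iteration would not terminate in one step. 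This does not affect your argument, since you solve the (linear) fixed-point equation exactly rather than iterating, but the justification should be that the equation is affine in $\vmu_t$ and hence admits a unique closed-form solution, not that the iteration converges in one step.
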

\begin{proof}
    See Theorem 2 in \cite{lambert2022rvga}.
\end{proof}

As we have seen, the R-VGA method provides a recursive approach to approximate
the posterior density whenever the first- and second-order derivates of the
log-measurement density are defined.
In the following two experiments, we evaluate the R-VGA methods.
Example \ref{example:rvga-logistic-regression} sequentially estimates the parameters of a logistic regression model and
Example \ref{example:rvga-neural-network-training} sequentially estimates the parameters of a neural network
to tackle a non-linear classification problem.

\subsection{Experiment: R-VGA for logistic regression}\label{example:rvga-logistic-regression}
In this experiment, we consider the problem of sequential estimation of model parameters for a binary classification problem.
Assume that we observe a sequence of datapoints $\data_{1:T}$
with $\data_t = (\vx_t \in \real^\dimin, \vy_t \in \{0,1\})$.
We model the measurements $\vy_t$ as Bernoulli with mean $\sigma(h(\vtheta, \vx_t))$, i.e.,
\begin{equation}\label{eq:likelihood-logistic-regression-measurement-model}
    p(\vy \cond \vtheta, \vx) = {\rm Bern}\left(\vy \cond \sigma(h(\vtheta, \vx)\right).
\end{equation}
Here, $\sigma(z) = (1 + \exp(-z))^{-1}$ is the sigmoid function and
${\rm Bern}(y \cond p) = p^y\,(1 - p)^{1 - y}$ is the Bernoulli probability mass function.
For the logistic regression, we have $h(\vtheta, \vx) = \vtheta^\intercal\,\vx$.
As a consequence, the log-likelihood is given by
\begin{equation}\label{eq:log-likelihood-logistic-regression-model}
    \log\,p(\vy_t \cond \vtheta, \vx_t) = \vy_t\,\log \sigma(\vtheta^\intercal\vx_t) + (1 - \vy_t)\,\log(1 - \sigma(\vtheta^\intercal\,\vx_t)).
\end{equation}
To apply the R-VGA to the logistic regression model \eqref{eq:likelihood-logistic-regression-measurement-model},
we require to compute the first and second-order derivatives of \eqref{eq:log-likelihood-logistic-regression-model},
which can be readily done either explicitly or implicitly with the use of autodifferentiation computer libraries
such as Jax \citep{jax2018github},
as well as sampling from the posterior density, as shown in \eqref{eq:sample-expectation}.
For this experiment, we take $p(\vtheta_0) = \normdist{\vtheta}{{\bf 0}}{\vI}$.

Figure \ref{fig:recursive-logreg} shows the decision boundaries for the logistic regression model
as a function of the number of processed measurements.
\begin{figure}[htb]
    \centering
    \includegraphics[width=0.9\linewidth]{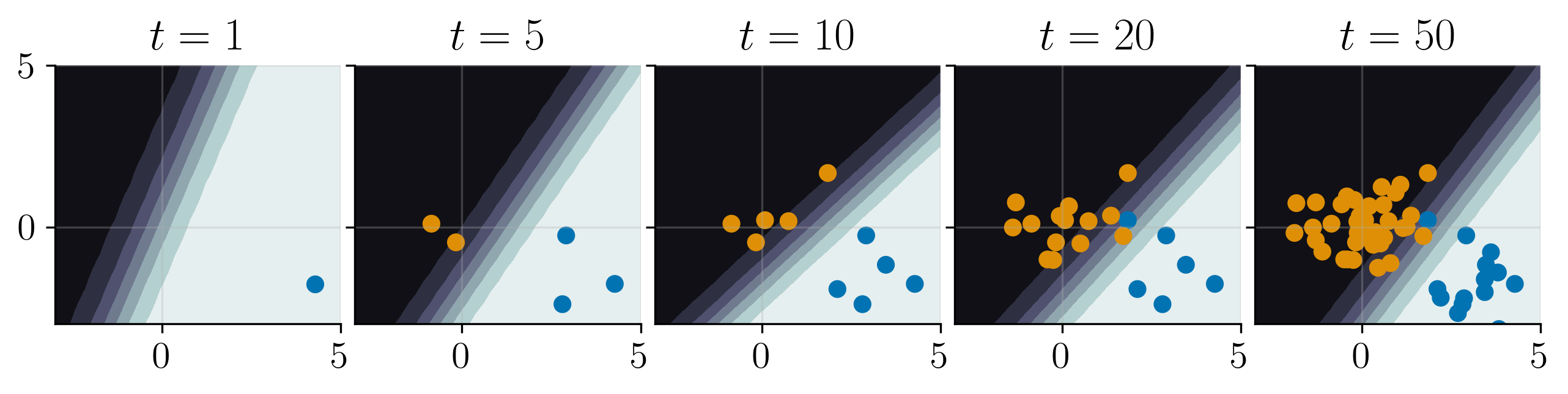}
    \caption{
    Decision boundaries for the logistic regression model as
    a function of the number of processed measurements.
    }
    \label{fig:recursive-logreg}
\end{figure}
Next, Figure \ref{fig:recursive-logreg-params} shows the evolution of the mean estimate of the model parameters
for the logistic regression and two standard deviations.
\begin{figure}[htb]
    \centering
    \includegraphics[width=0.9\linewidth]{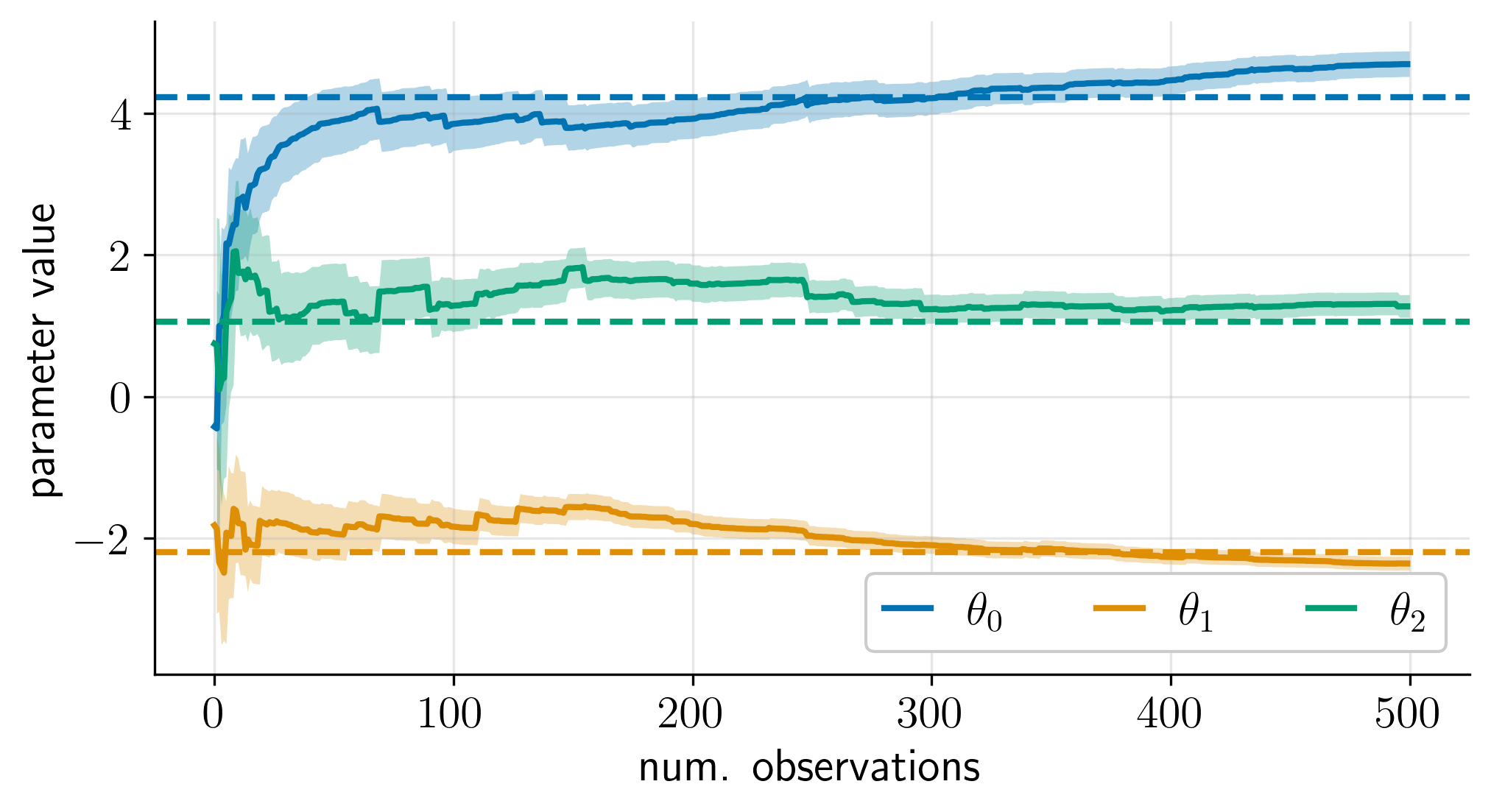}
    \caption{
    The solid lines show
    the posterior mean estimate of model parameters and two standard deviations.
    The dashed lines show the batch estimate of the logistic regression parameters.
    }
    \label{fig:recursive-logreg-params}
\end{figure}
We observe that the model parameters gradually convergence to the batch (offline) version of
logistic regression.
For details on the offline version, see Section 10.2.7 in \cite{muprhy2022-pml1Book}.

\subsection{Experiment: recursive learning of neural networks}\label{example:rvga-neural-network-training}
Because the R-VGA update equations \eqref{eq:rvga-mu-sigma-update}
require only the computation of the Jacobian and the hessian of the log-likelihood,
we can apply the R-VGA methodology to any probabilistic model
whose Jacobian and hessian of the log-likelihood are defined.

In this example, we consider the problem of non-linear binary classification problem.
We consider a Bernoulli measurement model, whose log-likelihood is given by
\begin{equation}
    \log p(\vy_t\cond \vtheta, \vx) = \vy_t\,\log\sigma(h(\vtheta;\vx)) + (1 - \vy_t)\,\log\left(1 - \sigma(h(\vtheta;\vx))\right),
\end{equation}
with $h: \real^\dimstate\times\real^\dimin \to \real$ a neural network.
In particular, we consider a three-hidden-layer multi-layered perceptron with $5$ units per layer and ${\rm leakyReLU}$ activation function.
We run the R-VGA algorithm with $I=4$ inner iterations and $S=1,000$ samples following \eqref{eq:sample-expectation}.
To make a prediction at time $t$, we take the previous mean, i.e.,
$\vmu_{t-1}$ and use this value to make a forecast with the features $\vx_t$, i.e.,
$\hat{\vy}_t = h(\vmu_{t-1}, \vx_t)$.

Figure \ref{fig:recursive-nnet-clf} shows the predictions made by the decision boundary
$\sigma(h(\vmu_{t}, \vx))$ for $\vx \in [-2,2]^2$ as a function of $t$.
The orange and blue dots represent the past datapoints and the cyan-coloured datapoints represents the datapoint observed at $t$.
\begin{figure}[htb]
    \centering
    \includegraphics[width=0.9\linewidth]{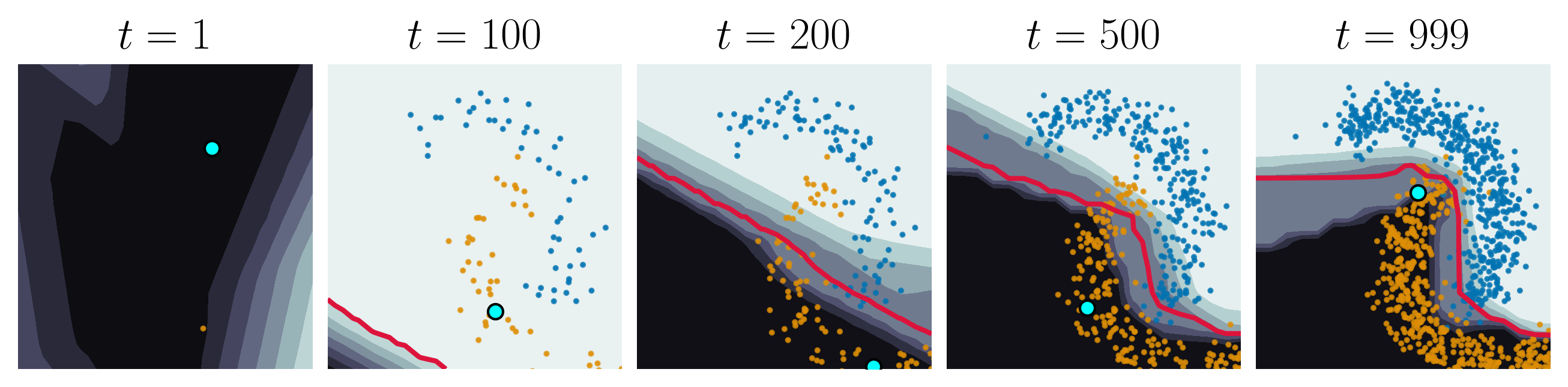}
    \caption{
    Decision boundaries for the classification problem using a neural network
    trained using the R-VGA.
    The crimson line shows the decision boundary at $0.5$.
    The dot coloured in cyan shows the observation seen at time $t$.
    }
    \label{fig:recursive-nnet-clf}
\end{figure}

Next, Figure \ref{fig:recursive-nnet-clf-accuracy} shows the expanding prequential accuracy as a function of
the number of processed observations.
We define the prequential accuracy as $1.0$ if $\hat{\vy}_t = \vy_t$ and $0.0$ otherwise.
\begin{figure}[htb]
    \centering
    \includegraphics[width=0.9\linewidth]{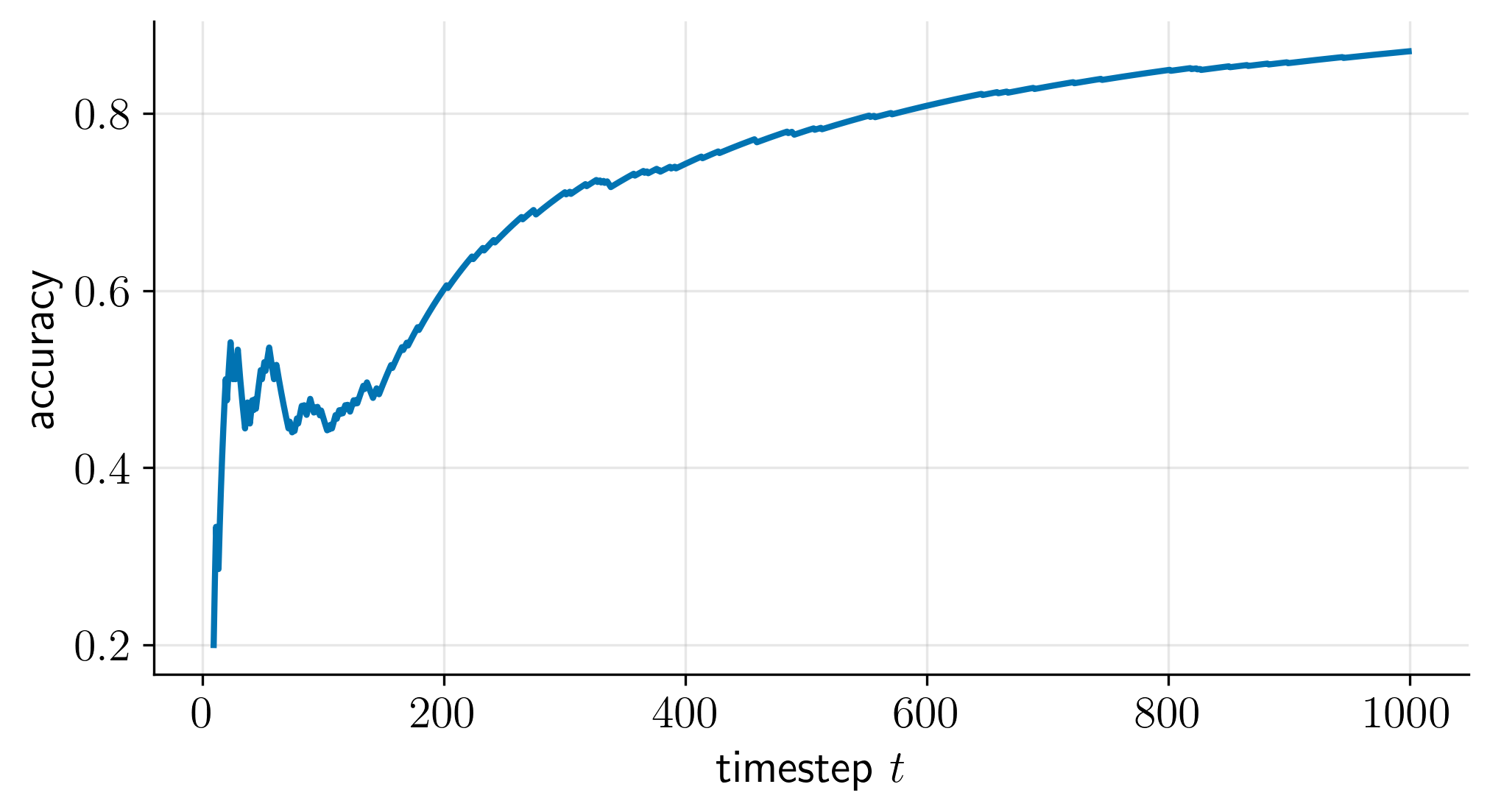}
    \caption{
    Cumulative prequential accuracy for the non-linear classification problem trained using the R-VGA.
    }
    \label{fig:recursive-nnet-clf-accuracy}
\end{figure}
For this experiment, we observe that accuracy of the R-VGA rapidly increases,
and then plateaus for $100$ steps before its predictive power starts to increase.

\section{State-space-models for sequential supervised learning}
\label{sec:ssm}
The methods introduced in Sections \ref{sec:linear-gaussian-recursive} and \ref{sec:nonlinear-nongaussian-recursive} assume that the measurement model accurately represents the true data-generating process.
This assumption allows us, given access to $\vx$ and $p(\vtheta \cond \data_{1:t})$,
to sample random variables $\hat{\vy}$ whose density is $p(\vy \cond \vtheta, \vx)$.
As a consequence,
and under certain conditions,
the Bayesian posterior density can be shown to converge to a point estimate \cite[Sec. 10.2]{van2000asymptoticstats}. 

However, the assumption of a well-specified likelihood may not always hold in practice.
For instance, in neural network training (as discussed in Example \ref{example:rvga-neural-network-training}),
the choice of measurement model $h$ is often driven by practical considerations rather than precise knowledge of the true data-generating process.
In these scenarios, we must adapt to the changing environment or adjust our misspecified model to make accurate predictions.

One such adaptation scheme involves the assumption of time-varying parameters.
Instead of assuming a random vector $\vtheta$, whose posterior density we wish to estimate,
we now assume that the model parameters follow a stochastic process
that evolve over time according to a \textit{state-transition} function
$f:\real^\dimstate\to\real^\dimstate$,
perturbed by zero-mean dynamic noise $\vu_t \in \real^\dimstate$ with ${\rm Var}(\vu_t) = \vQ_t$.
This is commonly referred to as the state-space model assumption.
We define a state-space model below.

\begin{definition}[state-space model]\label{def:ssm}
    A state-space model (SSM) is a signal plus noise model of the form
    \begin{equation}\label{eq:ssm}
    \begin{aligned}
        \vtheta_t &= f(\vtheta_{t-1}) + \vu_t,\\
        \vy_t &= h(\vtheta_t, \vx_t) + \ve_t,
    \end{aligned}
    \end{equation}
    with
    $f:\real^\dimstate \to \real^\dimstate$ the state-transition function,
    $h:\real^\dimstate \times \real^\dimin \to \real^\dimobs$ the measurement function,
    $\var(\vu_t) = \vQ_t$ a $\dimstate\times\dimstate$ positive semidefinite matrix,
    $\var(\ve_t) = \vR_t$ a $\dimobs\times\dimobs$ positive definite matrix, and
    $\vx_t \in \real^\dimin$ exogenous features.
    
    If one assumes zero-mean Gaussian priors with known covariance matrices for $\vu_t$ and $\ve_t$,
    the terms \eqref{eq:ssm} can be represented as
    \begin{equation}
    \begin{aligned}
        p(\vtheta_t \cond \vtheta_{t-1}) &= {\cal N}(\vtheta_t \cond f(\vtheta_{t-1}), \vQ_t),\\
        p(\vy_t \cond \vtheta_t) &= {\cal N}(\vy_t \cond h(\vtheta_t, \vx_t), \vR_t).
    \end{aligned}
    \end{equation}
\end{definition}

\subsection{Linear SSMs and the Kalman filter}
A well-known choice of SSM is that of linear SSMs with known
dynamics covariance $\vQ_t$,
measurement covariance $\vR_t$.
In this scenario, the SSM takes the form
\begin{equation}\label{eq:ssm-linear}
\begin{aligned}
    \vtheta_t &= \vF_t\,\vtheta_{t-1} + \vu_t,\\
    \vy_t &= \vH_t\,\vtheta_t + \ve_t,
\end{aligned}
\end{equation}
where $\vF_t$ is the $(\dimstate\times\dimstate)$ transition matrix and
$\vH_t$ is the $(\dimobs\times\dimstate)$ projection matrix.

The next proposition shows that estimation of the posterior density $p(\vtheta_t \cond \data_{1:t})$
assuming \eqref{eq:ssm-linear} can be done recursively following the so-called Kalman filter (KF) equations.
\begin{proposition}\label{prop:kf-equations}
    Assume an initial Gaussian prior density for model parameters $p(\vtheta_0) = \normdist{\vtheta_0}{\vmu_0}{\vSigma_0}$
    and known $\vQ_t$, $\vR_t$ for all $t$.
    Then, the posterior predictive density for model parameters conditioned on $\data_{1:t-1}$ is Gaussian of the form
    \begin{equation}
        p(\vtheta_t \cond \data_{1:t-1}) = \normdist{\vtheta_t}{\vmu_{t|t-1}}{\vSigma_{t|t-1}},
    \end{equation}
    with predictive mean and predictive covariance
    \begin{equation}\label{eq:kf-predict-step}
    \begin{aligned}
        \vmu_{t|t-1} &= \vF_t\,\vmu_{t-1},\\
        \vSigma_{t|t-1} &= \vF_t\,\vSigma_{t-1}\,\vF_{t}^\intercal + \vQ_t.
    \end{aligned} 
    \end{equation}
    Furthermore, the density for model parameters, conditioned on $\data_{1:t}$ is Gaussian of the form
    \begin{align}
        p(\vtheta_t \cond \data_{1:t}) &= \normdist{\vtheta_t}{\vmu_t}{\vSigma_t},
    \end{align}
    with
    \begin{equation}
    \begin{aligned}\label{eq:kf-update-step}
        \vS_t &= \vH_t\,\vSigma_{t|t-1}\,\vH_t^\intercal + \vR_t,\\
        \vK_t &= \vSigma_{t|t-1}\vH_t^\intercal\,\vS_t^{-1},\\
        \vmu_t &= \vmu_{t|t-1} + \vK_t\,(\vy_t - \vH_t\,\vmu_{t|t-1}),\\
        \vSigma_t &= \left(\vI - \vK_t\vH_t^\intercal\right)\vSigma_{t|t-1}.
    \end{aligned}
    \end{equation}
    Here, $\vmu_t$ and $\vSigma_t$ are the posterior mean and covariance matrix respectively
    and $\vK_t$ is the \textit{gain} matrix.
\end{proposition}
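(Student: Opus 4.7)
The plan is to prove both equations by induction on $t$, with the base case covered by the initial prior $p(\vtheta_0) = \normdist{\vtheta_0}{\vmu_0}{\vSigma_0}$. For the inductive step, I would decompose the derivation into the predict step (equation \eqref{eq:kf-predict-step}) and the update step (equation \eqref{eq:kf-update-step}), each being a direct application of Proposition \ref{prop:joint-conditional-gauss} and Proposition \ref{prop:marginals-cond-gauss}.

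For the predict step, assume the inductive hypothesis $p(\vtheta_{t-1} \cond \data_{1:t-1}) = \normdist{\vtheta_{t-1}}{\vmu_{t-1}}{\vSigma_{t-1}}$. The linear Gaussian transition gives $p(\vtheta_t \cond \vtheta_{t-1}) = \normdist{\vtheta_t}{\vF_t\,\vtheta_{t-1}}{\vQ_t}$. I would then invoke Proposition \ref{prop:joke-not-applicable}—more precisely, Proposition \ref{prop:joint-conditional-gauss} with the identifications $\vx \leftrightarrow \vtheta_{t-1}$, $\vy \leftrightarrow \vtheta_t$, $\vH \leftrightarrow \vF_t$, $\vb \leftrightarrow {\bf 0}$, $\vS \leftrightarrow \vQ_t$, and $(\vm,\vP)\leftrightarrow(\vmu_{t-1},\vSigma_{t-1})$—to obtain the joint $p(\vtheta_{t-1},\vtheta_t \cond \data_{1:t-1})$. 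Reading off the marginal $p(\vtheta_t \cond \data_{1:t-1})$ via Proposition \ref{prop:marginals-cond-gauss} yields $\vmu_{t|t-1} = \vF_t\,\vmu_{t-1}$ and $\vSigma_{t|t-1} = \vF_t\,\vSigma_{t-1}\,\vF_t^\intercal + \vQ_t$, matching \eqref{eq:kf-predict-step}.

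For the update step, I would repeat the same construction using the predictive density as the prior over $\vtheta_t$ and the linear Gaussian measurement $p(\vy_t \cond \vtheta_t) = \normdist{\vy_t}{\vH_t\,\vtheta_t}{\vR_t}$. Proposition \ref{prop:joint-conditional-gauss} (with $\vx \leftrightarrow \vtheta_t$, $\vy \leftrightarrow \vy_t$) gives a joint Gaussian whose cross-covariance block is $\vSigma_{t|t-1}\vH_t^\intercal$ and whose $\vy_t$-marginal covariance is $\vS_t = \vH_t\,\vSigma_{t|t-1}\,\vH_t^\intercal + \vR_t$. Conditioning on the observed $\vy_t$ via the formula \eqref{eq:mvn-x-cond-y} of Proposition \ref{prop:marginals-cond-gauss} delivers
\begin{align*}
\vmu_t &= \vmu_{t|t-1} + \vSigma_{t|t-1}\vH_t^\intercal\,\vS_t^{-1}(\vy_t - \vH_t\,\vmu_{t|t-1}),\\
\vSigma_t &= \vSigma_{t|t-1} - \vSigma_{t|t-1}\vH_t^\intercal\,\vS_t^{-1}\vH_t\,\vSigma_{t|t-1}.
\end{align*}
Defining $\vK_t := \vSigma_{t|t-1}\vH_t^\intercal\vS_t^{-1}$ and factoring $\vSigma_{t|t-1}$ from the right in the covariance expression reproduces \eqref{eq:kf-update-step} verbatim.

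There is no serious obstacle: the result is essentially a bookkeeping exercise once Propositions \ref{prop:joint-conditional-gauss} and \ref{prop:marginals-cond-gauss} are in hand, together with conditional independence of $\vy_t$ from $\data_{1:t-1}$ given $\vtheta_t$ (which follows from the SSM structure of Definition \ref{def:ssm}). The only mildly delicate point is the substitution $\vK_t = \vSigma_{t|t-1}\vH_t^\intercal\vS_t^{-1}$ in the covariance update, which requires recognising $\vSigma_{t|t-1}\vH_t^\intercal\vS_t^{-1}\vH_t\,\vSigma_{t|t-1} = \vK_t\vH_t\,\vSigma_{t|t-1}$ and hence $\vSigma_t = (\vI - \vK_t\vH_t)\,\vSigma_{t|t-1}$; this is algebraic rearrangement rather than new content. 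Induction then closes the argument for all $t \ge 1$.
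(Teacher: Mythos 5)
Your proof is correct and follows essentially the same route as the paper, which likewise reduces the result to Proposition \ref{prop:joint-conditional-gauss} and Proposition \ref{prop:marginals-cond-gauss} (deferring the details to Theorem 6.6 of \citet{sarkka2023filtering}); your write-up simply makes the induction and the two applications of those propositions explicit. The only blemishes are cosmetic: a stray reference to a non-existent proposition label in your predict step, and the observation that your (dimensionally correct) covariance update $\vSigma_t = (\vI - \vK_t\vH_t)\,\vSigma_{t|t-1}$ matches the intended statement rather than the literal $\vH_t^\intercal$ appearing in the paper's display.
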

\begin{proof}
    The result follows as a direct consequence of Proposition \ref{prop:joint-conditional-gauss}
    and Proposition \ref{prop:marginals-cond-gauss}.
    For details, see Theorem 6.6 in \cite{sarkka2023filtering}.
\end{proof}

\begin{remark}
Proposition \ref{prop:kf-equations} with $\vH_t = \vx_t^\intercal$ and $\vQ_t = 0\,\vI$
recovers the recursive Bayesian linear regression shown in Algorithm \ref{algo:recursive-bayesian-linreg}.
\end{remark}

\begin{algorithm}[htb]
\begin{algorithmic}[1]
    \REQUIRE $\data_t = (\vx_t, \vy_t)$ // datapoint
    \REQUIRE $(\vmu_{t-1}, \vSigma_{t-1})$ // previous mean and covariance
    \REQUIRE $\vH_t$ // projection matrix
    \REQUIRE $\vQ_t, \vR_t$ // dynamics covariance and measurement-noise covariance
    \STATE  // predict step
    \STATE $\vmu_{t|t-1} \gets \vF_t\vmu_{t-1}$
    \STATE $\vSigma_{t|t-1} \gets \vF_t\vSigma_{t-1}\vF_{t}^\intercal + \vQ_t$
    \STATE //update step
    \STATE $\vS_t = \vH_t\,\vSigma_{t-1}\vH_t^\intercal + \vR_t$
    \STATE $\vK_t = \vSigma_{t|t-1}\vH_t^\intercal\,\vS_t^{-1}$
    \STATE $\vmu_t \gets \vmu_{t-1} + \vK_t(\vy_t - \vH_t\vmu_{t|t-1})$
    \STATE $\vSigma_t \gets \vSigma_{t|t-1} - \vK_t\vH_t^\intercal\vSigma_{t|t-1}$
    \RETURN $(\vmu_t, \vSigma_t)$
\end{algorithmic}
\caption{
    Predict and update steps for the Kalman filter
    for $t \geq 1$ and given prior mean $\vmu_{t-1}$ and covariance $\vSigma_{t-1}$.
}
\label{algo:linear-kalman-filter}
\end{algorithm}

An alternative formulation of the KF update equations is expressed through a rule
that updates the posterior prediction matrix $\vSigma_t^{-1}$.
We formalise this result below.

\begin{proposition}[Kalman filter with precision matrix updates]
    \label{prop:kf-update-step-precision}
    Assume an initial Gaussian prior density for model parameters $p(\vtheta_0) = \normdist{\vtheta_0}{\vmu_0}{\vSigma_0}$
    and known $\vQ_t$, $\vR_t$ for all $t$.
    Then, the density for model parameters, conditioned on $\data_{1:t}$ is Gaussian of the form
    \begin{align}
        p(\vtheta_t \cond \data_{1:t}) &= \normdist{\vtheta_t}{\vmu_t}{\vSigma_t},
    \end{align}
    with
    \begin{equation}
    \begin{aligned}\label{eq:kf-prec-update-step}
        \vSigma_t^{-1} &= \vSigma_{t|t-1}^{-1} + \vH_t^\intercal\,\vR_t^{-1}\,\vH_t,\\
        \vK_t &= \vSigma_t\,\vH_t^\intercal\,\vR_t^{-1},\\
        \vmu_t &= \vmu_{t|t-1} + \vK_t\,(\vy_t - \vH_t\,\vmu_{t|t-1}),\\
    \end{aligned}
    \end{equation}
\end{proposition}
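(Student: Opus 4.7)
The plan is to show that the precision-form updates in \eqref{eq:kf-prec-update-step} are algebraically equivalent to the covariance-form updates \eqref{eq:kf-update-step} already established in Proposition \ref{prop:kf-equations}. Since the Gaussianity of $p(\vtheta_t\cond\data_{1:t})$ and the expressions for its first two moments are already established there, it suffices to verify two matrix identities: (i) $\vSigma_t^{-1} = \vSigma_{t|t-1}^{-1} + \vH_t^\intercal\vR_t^{-1}\vH_t$, and (ii) the two forms of the gain coincide, that is $\vSigma_{t|t-1}\vH_t^\intercal\vS_t^{-1} = \vSigma_t\vH_t^\intercal\vR_t^{-1}$. Once these are in hand, the mean update is identical to that in \eqref{eq:kf-update-step} and the proposition follows.

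For identity (i), I would start from $\vSigma_t = (\vI - \vK_t\vH_t)\vSigma_{t|t-1}$ and substitute $\vK_t = \vSigma_{t|t-1}\vH_t^\intercal\vS_t^{-1}$ with $\vS_t = \vH_t\vSigma_{t|t-1}\vH_t^\intercal + \vR_t$, giving
\begin{equation*}
\vSigma_t = \vSigma_{t|t-1} - \vSigma_{t|t-1}\vH_t^\intercal\bigl(\vR_t + \vH_t\vSigma_{t|t-1}\vH_t^\intercal\bigr)^{-1}\vH_t\vSigma_{t|t-1}.
\end{equation*}
This is precisely the right-hand side of the Woodbury matrix identity $(A + UCV)^{-1} = A^{-1} - A^{-1}U(C^{-1} + VA^{-1}U)^{-1}VA^{-1}$ applied with $A = \vSigma_{t|t-1}^{-1}$, $C = \vR_t^{-1}$, $U = \vH_t^\intercal$, and $V = \vH_t$. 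Inverting both sides yields $\vSigma_t^{-1} = \vSigma_{t|t-1}^{-1} + \vH_t^\intercal\vR_t^{-1}\vH_t$.

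For identity (ii), I would use the push-through identity, multiplying $\vSigma_t\vH_t^\intercal\vR_t^{-1}$ on the right by $\vS_t\vS_t^{-1}$ and rearranging, or equivalently verifying that $\vSigma_{t|t-1}\vH_t^\intercal\vS_t^{-1}\vS_t = \vSigma_t\vH_t^\intercal\vR_t^{-1}\vS_t$ by expanding both sides using (i). Concretely, $\vSigma_t\vH_t^\intercal\vR_t^{-1}\vS_t = \vSigma_t\vH_t^\intercal\vR_t^{-1}\vR_t + \vSigma_t\vH_t^\intercal\vR_t^{-1}\vH_t\vSigma_{t|t-1}\vH_t^\intercal = \vSigma_t\bigl(\vSigma_{t|t-1}^{-1} + \vH_t^\intercal\vR_t^{-1}\vH_t\bigr)\vSigma_{t|t-1}\vH_t^\intercal = \vSigma_{t|t-1}\vH_t^\intercal$, which matches the left-hand side after multiplying by $\vS_t^{-1}$.

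The steps are routine manipulations; the only subtle point is ensuring well-posedness of the inverses throughout. This is guaranteed by the assumption that $\vR_t$ is positive definite (stated in Definition \ref{def:ssm}) and the fact that $\vSigma_{t|t-1}$ inherits positive definiteness from $\vSigma_{t-1}$ and $\vQ_t$ via the predict step \eqref{eq:kf-predict-step}. I expect this careful bookkeeping of invertibility --- rather than the matrix algebra itself --- to be the only potential obstacle.
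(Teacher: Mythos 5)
Your proposal is correct, and the overall strategy --- reducing the proposition to two matrix identities that show the precision-form updates coincide with the covariance-form updates of Proposition \ref{prop:kf-equations} --- is the same as the paper's. For identity (i), your argument is essentially identical to the paper's: both recognise the covariance-form posterior as one side of the Woodbury identity applied with $A = \vSigma_{t|t-1}^{-1}$, $U = \vH_t^\intercal$, $C = \vR_t^{-1}$, $V = \vH_t$, and invert. Where you genuinely diverge is the gain identity (ii). The paper first derives an auxiliary precision-form expression for $\vS_t^{-1}$, then expands $\vSigma_{t|t-1}$ via Woodbury in terms of $\vSigma_t$, substitutes into $\vK_t = \vSigma_{t|t-1}\vH_t^\intercal\vS_t^{-1}$, and closes with a fairly long computation showing that the residual term is the zero matrix. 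Your push-through argument --- computing $\vSigma_t\vH_t^\intercal\vR_t^{-1}\vS_t = \vSigma_t\bigl(\vSigma_{t|t-1}^{-1} + \vH_t^\intercal\vR_t^{-1}\vH_t\bigr)\vSigma_{t|t-1}\vH_t^\intercal = \vSigma_{t|t-1}\vH_t^\intercal$ and then right-multiplying by $\vS_t^{-1}$ --- reaches the same conclusion in three lines, reuses identity (i) directly, and avoids the auxiliary lemma entirely. It is the cleaner route; the paper's version does yield the precision form of $\vS_t^{-1}$ as a by-product, which is occasionally useful elsewhere, but it is not needed for the statement. Your remark on invertibility is also a point the paper leaves implicit; note only that positive definiteness of $\vSigma_{t|t-1}$ requires either $\vQ_t \succ 0$ or $\vF_t$ nonsingular, since $\vQ_t$ is only assumed positive semidefinite.
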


\begin{proof}

Let $\vSigma_{t|t-1}^{-1}$ be a precision matrix, $\vH_t \in \real^{\dimobs\times\dimstate}$, and
$\vR_t$ the covariance of the measurement process.
First, we show that
$ \vSigma_t^{-1} = \vSigma_{t|t-1}^{-1} + \vH_t^\intercal\,\vR_t^{-1}\,\vH_t$.

By algebraic manipulation of the
of the posterior covariance \eqref{eq:kf-update-step}, the Woodbury identity, and the definition of $\vS_t$,
it follows that
\begin{equation*}
\begin{aligned}
    \vSigma_t^{-1}
    &= \left((\vI_\dimstate - \vK_t\,\vH_t^\intercal)\,\vSigma_{t|t-1}\right)^{-1}\\
    &= \vSigma_{t|t-1}^{-1}\left(\vI_\dimstate - \vSigma_{t|t-1}\,\vH_t^\intercal\vS_{t}^{-1}\,\vH_t\right)^{-1}\\
    &= \vSigma_{t|t-1}^{-1}\left[\vI_\dimstate  + \vSigma_{t|t-1}\,\vH_t^\intercal(\vS_t - \vH_t\,\vSigma_{t|t-1}\,\vH_t^\intercal)^{-1}\vH_t\right]\\
    &= \vSigma_{t|t-1}^{-1}\left[\vI_\dimstate  + \vSigma_{t|t-1}\,\vH_t^\intercal(\vH_t\,\vSigma_{t|t-1}^{-1}\,\vH_t^\intercal + \vR_t - \vH_t\,\vSigma_{t|t-1}\,\vH_t^\intercal)^{-1}\vH_t\right]\\
    &= \vSigma_{t|t-1}^{-1}\left[\vI_\dimstate + \vSigma_{t|t-1}\,\vH_t^\intercal\vR_t^{-1}\vH_t\right]\\
    &= \vSigma_{t|t-1}^{-1} + \vH_t^\intercal\vR_t^{-1}\vH_t.
\end{aligned}
\end{equation*}

Next, let $\vSigma_t^{-1}$ be the posterior precision matrix and $\vR_t^{-1}$ be
the precision matrix of the observation at time $t$. 
The precision matrix of the innovations take the form
\begin{equation}
    \vS_t^{-1} = \vR_t^{-1} - \vR_t^{-1}\,\vH_t\,\vSigma_{t}^{-1}\,\vH_t^\intercal\,\vR_t^{-1}.
\end{equation}
To see this, observe that
$\vSigma_{t|t-1}^{-1} = \vSigma_t^{-1} - \vH_t^\intercal\,\vR_t^{-1}\,\vH_t$,
and thus
\begin{equation*}
\begin{aligned}
    \vS_t^{-1}
    &= \left(\vH_t\,\vSigma_{t|t-1}\,\vH_t^\intercal + \vR_t\right)^{-1}\\
    &= \vR_t^{-1} -
    \vR_t^{-1}\vH_t\left(\vSigma_{t|t-1}^{-1} + \vH_t^\intercal\,\vR_t^{-1}\,\vH_t \right)^{-1}\,\vH_t^\intercal\,\vR_t^{-1}\\
    &= \vR_t^{-1} -
    \vR_t^{-1}\vH_t\left(\vSigma_{t}^{-1} - \vH_t^\intercal\,\vR_t^{-1}\,\vH_t + \vH_t^\intercal\,\vR_t^{-1}\,\vH_t \right)^{-1}\,\vH_t^\intercal\,\vR_t^{-1}\\
    &= \vR_t^{-1} - \vR_t^{-1}\vH_t\vSigma_{t}^{-1}\,\vH_t^\intercal\,\vR_t^{-1}.
\end{aligned}
\end{equation*}

Finally, the Kalman gain matrix takes the form
\begin{equation}
    \vK_t = \vSigma_{t}\,\vH_t^\intercal\,\vR_t^{-1}.
\end{equation}
This is because
\begin{equation}\label{eq:part-posterior-covariance-woodbury}
    \begin{aligned}
    \vSigma_{t|t-1}
    &= \left(\vSigma_{t}^{-1} - \vH_t^\intercal\,\vR_t^{-1}\,\vH_t\right)^{-1}\\
    &= \vSigma_t + \vSigma_t\,\vH_t^\intercal\,\left(\vR_t - \vH_t\,\vSigma_t\,\vH_t^\intercal\right)^{-1}\vH_t\vSigma_t,
    \end{aligned}
\end{equation}
and, 
following Proposition \ref{prop:kf-equations}, the Kalman gain matrix takes the form
\begin{equation}\label{eq:part-kalman-gain-prec-raw}
\begin{aligned}
    &\vK_t\\
    &= \vSigma_{t|t-1}\,\vH_t^\intercal\,\vS_t^{-1}\\
    &= \left(\vSigma_t + \vSigma_t\,\vH_t^\intercal\,\left(
    \vR_t - \vH_t\,\vSigma_t\,\vH_t^\intercal
    \right)^{-1}\vH_t\vSigma_t\right)\,\vH_t^\intercal\,
    \vS_t^{-1}\\
    &= \vSigma_t\,\vH_t^\intercal\,\vR_t^{-1}
    + \vSigma_t\,\vH_t^{-1}\,\vR_t^{-1}\left[
    -\vH_t\,\vSigma_t\,\vH_t^\intercal\,\vR_t^{-1} +
    (\vI_\dimobs - \vH_t\vSigma_t\,\vH_t^\intercal\,\vR_t^{-1})^{-1}\,\vH_t\vSigma_t\,\vH_t^\intercal\vS_t^{-1}
    \right].
\end{aligned}
\end{equation}
We need to show that the second term is a zero-valued matrix.
This amounts to showing that
$-\vH_t\,\vSigma_t\,\vH_t^\intercal\,\vR_t^{-1} +
(\vI_\dimobs - \vH_t\vSigma_t\,\vH_t^\intercal\,\vR_t^{-1})^{-1}\,\vH_t\vSigma_t\vH_t^\intercal\vS_t^{-1} = {\bf 0}$.
To show this, note that
\begin{equation}
\begin{aligned}
&-\vH_t\,\vSigma_t\,\vH_t^\intercal\,\vR_t^{-1} + (\vI_\dimobs - \vH_t\vSigma_t\,\vH_t^\intercal\,\vR_t^{-1})^{-1}\,\vH_t\vSigma_t\,\vH_t^\intercal\,\vS_t^{-1}\\
&= -\vH_t\,\vSigma_t\,\vH_t^\intercal\,\vR_t^{-1} + (\vI_\dimobs - \vH_t\vSigma_t\,\vH_t^\intercal\,\vR_t^{-1})^{-1}\,\vH_t\vSigma_t\,\vH_t^\intercal\,\vR_t^{-1}\,\left(\vI_o - \vH_t\vSigma_{t}^{-1}\,\vH_t^\intercal\,\vR_t^{-1}\right)\\
&= -\vI_\dimobs + (\vI_\dimobs -\vH_t\,\vSigma_t\,\vH_t^\intercal\,\vR_t^{-1}) + (\vI_\dimobs - \vH_t\vSigma_t\,\vH_t^\intercal\,\vR_t^{-1})^{-1}\,\vH_t\vSigma_t\,\vH_t^\intercal\,\vR_t^{-1}\,\left(\vI_o - \vH_t\vSigma_{t}^{-1}\,\vH_t^\intercal\,\vR_t^{-1}\right)\\
&= -\vI_\dimobs + \left[\vI_o + (\vI_\dimobs - \vH_t\vSigma_t\,\vH_t^\intercal\,\vR_t^{-1})^{-1}\,\vH_t\vSigma_t\vH_t^\intercal\,\vR_t^{-1}\right]\,(\vI_\dimobs -\vH_t\,\vSigma_t\,\vH_t^\intercal\,\vR_t^{-1})\\
&= -\vI_\dimobs + (\vI_\dimobs - \vH_t\vSigma_t\,\vH_t^\intercal\,\vR_t^{-1})^{-1}\,\left[\vI_\dimobs - \vH_t\vSigma_t\,\vH_t^\intercal\,\vR_t^{-1} + \,\vH_t\vSigma_t\vH_t^\intercal\,\vR_t^{-1}\right]\,(\vI_\dimobs -\vH_t\,\vSigma_t\,\vH_t^\intercal\,\vR_t^{-1})\\
&= -\vI_\dimobs + (\vI_\dimobs - \vH_t\vSigma_t\,\vH_t^\intercal\,\vR_t^{-1})^{-1}\,(\vI_\dimobs -\vH_t\,\vSigma_t\,\vH_t^\intercal\,\vR_t^{-1})\\
&= {\bf 0},
\end{aligned}
\end{equation}
which concludes the proof.
\end{proof}

\begin{algorithm}[htb]
\begin{algorithmic}[1]
    \REQUIRE $\data_t = (\vx_t, \vy_t)$ // datapoint
    \REQUIRE $(\vmu_{t-1}, \vSigma_{t-1}^{-1})$ // previous mean and precision matrix
    \REQUIRE $\vH_t$ // projection matrix
    \REQUIRE $\vQ_t, \vR_t$ // dynamics covariance and measurement-noise covariance
    \STATE  // predict step
    \STATE $\vmu_{t|t-1} \gets \vF_t\vmu_{t-1}$
    \STATE $\vSigma_{t|t-1} \gets \vF_t\vSigma_{t-1}\vF_{t}^\intercal + \vQ_t$
    \STATE //update step
    \STATE $\vSigma_{t}^{-1} \gets \vSigma_{t|t-1}^{-1} + \vH_t^\intercal\,\vR_t^{-1}\,\vH_t$
    \STATE $\vK_t = \vSigma_{t}\vH_t^\intercal\,\vR_t^{-1}$
    \STATE $\vmu_t \gets \vmu_{t-1} + \vK_t(\vy_t - \vH_t\vmu_{t|t-1})$
    \RETURN $(\vmu_t, \vSigma_t^{-1})$
\end{algorithmic}
\caption{
    predict and update steps at time $t$ for the Kalman filter under precision updates.
}
\label{algo:linear-kalman-filter-precision}
\end{algorithm}

\subsection{Experiment: the Kalman filter for non-stationary linear regression}
In the  classical filtering literature, the term $\vQ_t$ is typically used to model system dynamics.
However, it has long been known that inflating $\vQ_t$ can compensate for unmodelled errors \citep{kelly1990recursiveridge,kuhl1990ridge}.
We study this result in the context of online learning in the following experiment.

We evaluate the performance of the Kalman Filter (KF) in a linear regression problem with varying levels
of $\vQ_t = q\,\vI$,
where $q \geq 0$ is the dynamics covariance \textit{inflation} factor.
This allows us to explore how inflating $\vQ_t$ can handle unmodelled errors in non-stationary data streams.
Recall that $\vQ_t = 0\,\vI$ corresponds to the static online regression problem discussed in Section \ref{sec:linear-gaussian-recursive},
while increasing $\vQ_t$ introduces dynamics in the model parameters.

We consider a piece-wise linear regression model with standard-Gaussian errors, i.e., $p(\ve_t) = {\cal N}(\ve_t \cond 0, 1)$.
The features are sampled according to
$\vx_t \sim {\cal U}[-2, 2]$, and the measurements are sampled according to
$\vy_t \sim {\cal N}\big( \phi(\vx_t)^\intercal\vtheta_t, 1\big)$ with
$\phi(x) = (1,\,x,\,x^2)$.
At every timestep, the parameters take the value
\begin{equation}
\vtheta_t =
\begin{cases}
\vtheta_{t-1} & \text{w.p. } 1 - p_\epsilon,\\
{\cal U}[-3, 3]^3 & \text{w.p. } p_\epsilon,
\end{cases}
\end{equation}
with $p_\epsilon = 0.001$, and $\vtheta_0 \sim {\cal U}[-3, 3]^3$.
Figure \ref{fig:segements-lr} shows a sample run of this process.
\begin{figure}[htb]
    \centering
    \includegraphics[width=0.9\linewidth]{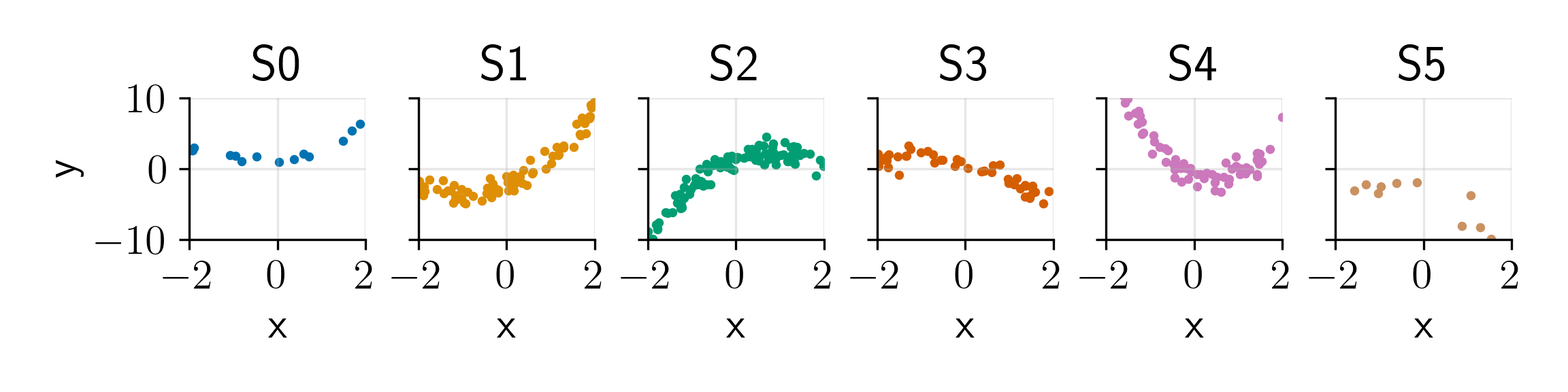}
    \caption{
        Sample run of the piecewise regression process.
        Each box titled $Si$ represents the samples that belong to the $i$-th regime.
    }
    \label{fig:segements-lr}
\end{figure}

Given a sample run of the process $\data_{1:T}$, with $T=300$,
we make use of Algorithm \ref{algo:linear-kalman-filter} with varying levels of a fixed $\vQ_t = q\,\vI$
with $q \geq 0$.
Figure \ref{fig:segments-lr-results} shows the rolling prequential RMSE and the total prequential RMSE.
\begin{figure}[htb]
    \centering
    \includegraphics[width=0.9\linewidth]{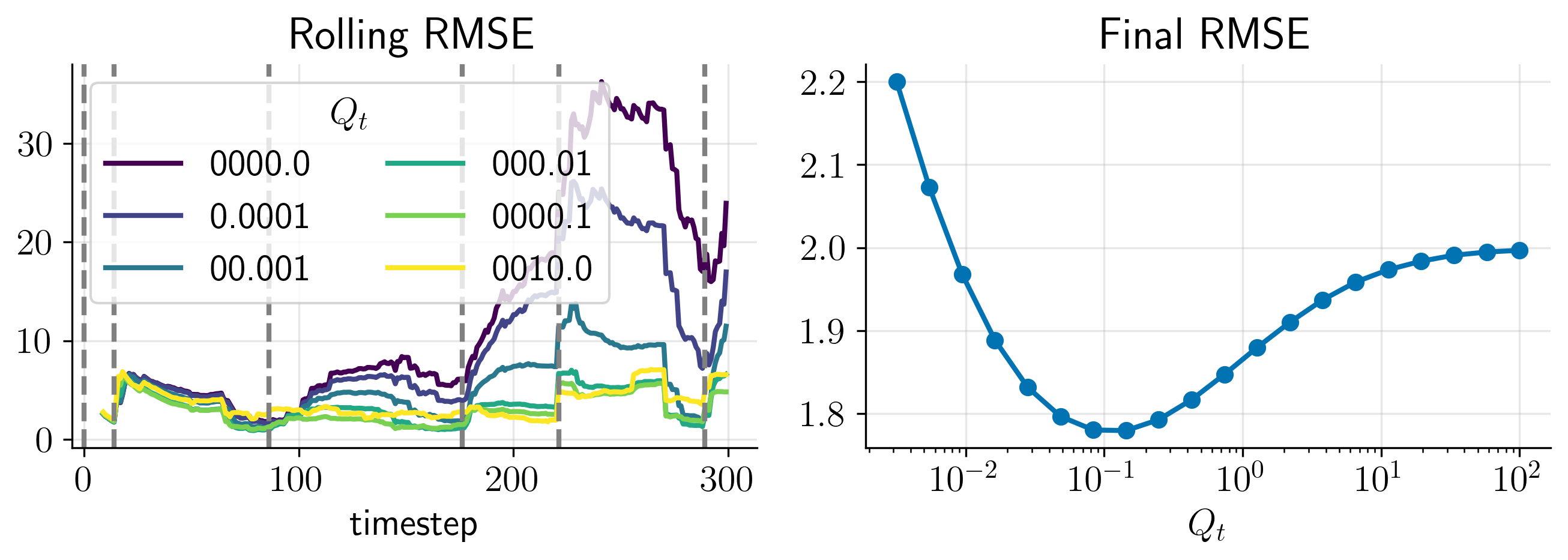}
    \caption{
    \textbf{(Left panel)} Rolling prequential RMSE of the linear model trained using various levels of $\vQ_t$.
    \textbf{(Right panel)} Total prequential RMSE of the linear model
    }
    \label{fig:segments-lr-results}
\end{figure}
We observe that different values of $\vQ_t$ lead to varying levels of prediction error.
When $\vQ_t$ is close to zero, the model exhibits limited adaptability, retaining more of the previous parameter estimates.
Conversely, when $\vQ_t$ is significantly larger,
the model quickly forgets past information, leading to high adaptability but potentially overreacting to noise.
An ideal adaptive method would be able to adjust $\vQ_t$ dynamically in an online fashion.
We revisit this idea in Chapter \ref{ch:adaptivity}.

\section{Extensions to the Kalman filter}
In this section, we introduce two variants of the KF that we use throughout the text;
namely, the extended Kalman filter (EKF) and
the ensemble Kalman filter (EnKF).

\subsection{The extended Kalman filter}\label{sec:extended-kalman-filter}
The extended Kalman filter (EKF) is a modification of the KF
whenever the measurement and state functions are non-linear but differentiable w.r.t. the model parameters $\vtheta$.
Broadly speaking, the EKF linearises the measurement function $h$ and
the state-transition function $f$ via a first-order Taylor approximation,
which yields KF-like update equations.

The EKF has its origins in the Apollo guidance computer program
to estimate the space trajectories of a spacecraft based on a
``sequence of measurements of angles between selected pairs of celestial bodies, together
with the measurement of the angular diameter of a nearby plane''  \citep{battin1982apollo, grewal2010kfhist}.
For details; see Chapter 3 in \citet{leondes1970-kfapollo}.
Because of the mild requirements of the EKF, it has been succesfully applied in multiple settings.
Of particular interest to this thesis is the use of the EKF to train neural networks,
which dates back to \cite{singhal1988ekfmlp}, where it was proposed as an alternative
to the backpropagation algorithm of \cite{rumelhart1986backprop}
used  to train multilayered perceptrons (MLPs).

The next proposition introduces the EKF algorithm.
\begin{proposition}\label{prop:ekf}
    Consider the SSM in definition \ref{def:ssm}
    with transition function $f_t$ and measurement function $h_t$,
    both differentiable w.r.t. $\vtheta$.
    Define the linearised SSM
    \begin{equation}\label{eq:ssm-linearised}
    \begin{aligned}
        \vtheta_t &= \bar{f}_t(\vtheta_{t-1}) + u_t,\\
        \vy_t &= \bar{h}_t(\vtheta_t) + e_t,
    \end{aligned}
    \end{equation}
    with
    $\bar{f}(\vtheta) = \vF_t\,(\vtheta - \vmu_{t-1}) + f_t(\vmu_{t-1})$,
    $\bar{h}_t(\vtheta_t) = \vH_t\,(\vtheta_t - \vmu_{t|t-1}) + h(\vmu_{t|t-1}, \vx_{t})$,
    $\vF_t = \nabla_\vtheta f(\vmu_{t-1})$, and
    $\vH_t = \nabla_\vtheta h(\vmu_{t | t-1}, \vx_t)$.
    Consider the priors
    $p(\vu_t) = \mathcal{N}(\vu_t \cond {\bf 0}, \vQ_t)$,
    $p(\ve_t) = \mathcal{N}(\ve_t \cond {\bf 0}, \vR_t)$.

    Then, the predict and update steps are Gaussian with form
    \begin{equation}
    \begin{aligned}
        p(\vtheta_t \cond \data_{1:t-1}) &= \normdist{\vtheta_t}{\vmu_{t|t-1}}{\vSigma_{t|t-1}},\\
        p(\vtheta_t \cond \data_{1:t}) &= \normdist{\vtheta_t}{\vmu_t}{\vSigma_t}.
    \end{aligned}
    \end{equation}
    With predictive mean and variance given by \eqref{eq:kf-predict-step} and
    posterior mean and variance given by \eqref{eq:kf-update-step}.
\end{proposition}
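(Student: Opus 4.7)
The plan is to reduce the EKF to the linear Kalman filter by observing that once $f_t$ and $h_t$ are replaced by their first-order Taylor expansions $\bar{f}_t$ and $\bar{h}_t$, the linearised system \eqref{eq:ssm-linearised} is an affine-Gaussian SSM. I would then invoke Proposition \ref{prop:joint-conditional-gauss} and Proposition \ref{prop:marginals-cond-gauss} inductively, exactly as in the proof of Proposition \ref{prop:kf-equations}, and verify that the resulting mean and covariance recursions coincide with \eqref{eq:kf-predict-step} and \eqref{eq:kf-update-step} (with the understanding that the Jacobians $\vF_t$ and $\vH_t$ play the role of the transition and projection matrices, and that the Taylor offsets appear only in the mean updates).

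Concretely, I would proceed by induction on $t$, assuming $p(\vtheta_{t-1}\cond\data_{1:t-1})=\normdist{\vtheta_{t-1}}{\vmu_{t-1}}{\vSigma_{t-1}}$. For the predict step, I would note that $\bar{f}_t(\vtheta_{t-1})=\vF_t\vtheta_{t-1}+\vb_t$ with constant offset $\vb_t=f_t(\vmu_{t-1})-\vF_t\vmu_{t-1}$, so $p(\vtheta_t\cond\vtheta_{t-1})$ is linear-Gaussian. Applying Proposition \ref{prop:joint-conditional-gauss} to $(\vtheta_{t-1},\vtheta_t)$ and marginalising out $\vtheta_{t-1}$ via Proposition \ref{prop:marginals-cond-gauss} yields a Gaussian predictive density with mean $\vmu_{t|t-1}=\vF_t\vmu_{t-1}+\vb_t=f_t(\vmu_{t-1})$ and covariance $\vSigma_{t|t-1}=\vF_t\vSigma_{t-1}\vF_t^\intercal+\vQ_t$, matching \eqref{eq:kf-predict-step} up to the absorbed offset in the mean.

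For the update step, the linearised measurement is $\bar{h}_t(\vtheta_t)=\vH_t\vtheta_t+\vc_t$ with $\vc_t=h_t(\vmu_{t|t-1},\vx_t)-\vH_t\vmu_{t|t-1}$. I would then apply Proposition \ref{prop:joint-conditional-gauss} with $\vx\leftarrow\vtheta_t$, $\vy\leftarrow\vy_t$, $\vH\leftarrow\vH_t$, $\vb\leftarrow\vc_t$, $\vS\leftarrow\vR_t$, and $(\vm,\vP)\leftarrow(\vmu_{t|t-1},\vSigma_{t|t-1})$, to obtain the joint Gaussian for $(\vtheta_t,\vy_t)$. Conditioning on $\vy_t$ via \eqref{eq:mvn-x-cond-y} yields a Gaussian posterior with covariance $\vSigma_t=(\vI-\vK_t\vH_t)\vSigma_{t|t-1}$ and mean $\vmu_t=\vmu_{t|t-1}+\vK_t(\vy_t-\vH_t\vmu_{t|t-1}-\vc_t)=\vmu_{t|t-1}+\vK_t(\vy_t-h_t(\vmu_{t|t-1},\vx_t))$, which is the standard EKF correction.

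The only delicate point, and the main obstacle, is the careful bookkeeping of the constant offsets $\vb_t$ and $\vc_t$ produced by the Taylor expansions: one must check that these offsets modify only the mean recursions (replacing $\vF_t\vmu_{t-1}$ by $f_t(\vmu_{t-1})$ and the innovation $\vy_t-\vH_t\vmu_{t|t-1}$ by $\vy_t-h_t(\vmu_{t|t-1},\vx_t)$), while leaving the covariance and gain formulas in \eqref{eq:kf-predict-step}--\eqref{eq:kf-update-step} unchanged, since affine shifts do not alter second moments. Beyond this, the result is a direct instance of Proposition \ref{prop:kf-equations} applied to the linearised SSM \eqref{eq:ssm-linearised}.
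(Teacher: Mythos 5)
Your proposal is correct and follows essentially the same route as the paper, which simply delegates the proof to Section 7.2 of \citet{sarkka2023filtering}: that reference derives the EKF exactly by treating the linearised system as an affine-Gaussian SSM and applying the joint/conditional Gaussian identities (Propositions \ref{prop:joint-conditional-gauss} and \ref{prop:marginals-cond-gauss}) as in Proposition \ref{prop:kf-equations}. Your bookkeeping of the Taylor offsets $\vb_t$ and $\vc_t$ is in fact slightly more careful than the paper's own statement, since it makes explicit that the predictive mean becomes $f_t(\vmu_{t-1})$ and the innovation becomes $\vy_t - h(\vmu_{t|t-1},\vx_t)$ while the covariance and gain recursions are unchanged.
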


\begin{proof}
    See Section 7.2 in \cite{sarkka2023filtering}.
\end{proof}

\begin{algorithm}[htb]
\begin{algorithmic}[1]
    \REQUIRE $\data_t = (\vx_t, \vy_t)$ // datapoint
    \REQUIRE $(\vmu_{t-1}, \vSigma_{t-1})$ // previous mean and covariance
    \STATE  // predict step
    \STATE $\vF_t \gets \nabla_\vtheta f(\vmu_{t-1})$
    \STATE $\vmu_{t|t-1} \gets \vF_t\vmu_{t-1}$
    \STATE $\vSigma_{t|t-1} \gets \vF_t\vSigma_{t-1}\vF_{t}^\intercal + \vQ_t$
    \STATE //update step
    \STATE $\vH_t \gets \nabla_\vtheta h(\vmu_{t|t-1}, \vx_t)$
    \STATE $\vS_t = \vH_t\,\vSigma_{t-1}\vH_t^\intercal + \vR_t$
    \STATE $\vK_t = \vSigma_{t|t-1}\vH_t^\intercal\,\vS_t^{-1}$
    \STATE $\vmu_t \gets \vmu_{t-1} + \vK_t(\vy_t - \vH_t\vmu_{t|t-1})$
    \STATE $\vSigma_t \gets \vSigma_{t|t-1} - \vK_t\vH_t^\intercal\vSigma_{t|t-1}$
    \RETURN $(\vmu_t, \vSigma_t)$
\end{algorithmic}
\caption{
    predict and update steps for the extended Kalman filter
    for $t \geq 1$ and given prior mean and covariance
    $(\vmu_0, \vSigma_0)$.
}
\label{algo:extended-kalman-filter}
\end{algorithm}

\subsection{The ensemble Kalman filter}
The ensemble Kalman filter (EnKF) is a popular alternative to the EKF,
whenever the state-space is high-dimensional.
The EnKF was originally  introduced  as a sample-based approximation to the equations that
define the Kalman filter \citep{evensen1994enkf}.

Recall that the Kalman filter update estimates the posterior mean $\vmu_{t}$ according to
\begin{equation}
    \vmu_t = \vmu_{t|t-1} + \vK_t\,(\vy - \hat{\vy}_t),
\end{equation}
with $\vK_t$ the Kalman gain matrix.
It can be shown that under the linear SSM assumptions \eqref{eq:ssm-linear},
the Kalman gain matrix $\vK_t$ takes the form
\begin{equation}
    \begin{aligned}
    \vK_t
    &= {\rm Cov}(\vtheta_t, \vy_t - \hat{\vy}_t)\,{\rm Var}(\vy_t - \hat{\vy}_t)^{-1}\\
    &= {\rm Cov}(\vtheta_t, \vy_t)\,{\rm Var}(\vy_t)^{-1}.\\
    \end{aligned}
\end{equation}
See Ch.4 in \cite{eubank2005kalmanprimer}.

The EnKF propagates a bank of candidate parameters $\left\{\vtheta_t^{(s)}\right\}_{s=1}^S$
following \eqref{eq:ssm} and then updates each
value in the ensemble according to a sample-based gain matrix.
The \textit{predict} terms are obtained by propagating
\begin{equation}
\begin{aligned}
    \vtheta_{t|t-1}^{(s)} &= f\left(\vtheta_{t-1}^{(s)}\right) + \vu_t^{(s)},\\
    \vy_{t|t-1}^{(s)} &= h\left(\vtheta_{t|t-1}^{(s)}\right) + \ve_t^{(s)}.
\end{aligned}
\end{equation}
The EnKF then updates each member in the ensemble according to
\begin{equation}\label{eq:ensemble-kf-update}
    \vtheta_t^{(s)} = \vtheta_{t|t-1}^{(s)} + \hat{\vK}_t\,\left(\vy_t - \hat{\vy}_{t|t-1}^{(s)}\right),
\end{equation}
with $\hat{\vK}_t$ a sample-based estimate of the Kalman gain matrix that takes the form
\begin{equation}
    \hat{\vK}_t = \vC_{t|t-1}\,\vV_{t|t-1}^{-1},
\end{equation}
with
\begin{equation}
    {\rm Cov}(\vtheta_t, \vy_t) \approx \vC_{t|t-1} =
    \frac{1}{S}\sum_{s=1}^S\left(\vtheta_{t|t-1}^{(s)} - \bar{\vtheta}_{t|t-1}\right)\left(\vy_{t|t-1}^{(s)} - \bar{\vy}_{t|t-1}\right)^\intercal,
\end{equation}
and
\begin{equation}
    {\rm Var}(\vy_t) \approx \vV_{t|t-1} = \frac{1}{S}\sum_{s=1}^S \left(\vy_{t|t-1}^{(s)} - \bar{\vy}_{t|t-1}\right)\,\left(\vy_{t|t-1}^{(s)} - \bar{\vy}_{t|t-1}\right)^\intercal
\end{equation}
Here,
$\bar{\vy}_{t|t-1} = \frac{1}{S}\sum_{s=1}^S \vy_{t|t-1}^{(s)}$ and
$\bar{\vtheta}_{t|t-1} = \frac{1}{S}\sum_{s=1}^S \vtheta_{t|t-1}^{(s)}$. Algorithm \ref{algo:ensemble-kalman-filter} shows the predict and update steps for the EnKF.

\begin{algorithm}[htb]
\begin{algorithmic}[1]
    \REQUIRE $\data_{1:T}$ with $\data_{t} = (\vx_t, \vy_t)$
    \REQUIRE $\{\vtheta_0^{(s)}\}_{s=1}^S$ with $\vtheta_0^{(s)} \sim {\cal N}(\cdot \cond \vmu_0, \vSigma_0)$
    \FOR{$t=1,\ldots,T$}
    \STATE // predict step
    \FOR{$s=1,\ldots, S$}
        \STATE $\vtheta_{t|t-1}^{(s)} \gets f\left(\vtheta_{t-1}^{(s)}\right) + \vu_t^{(s)}$
        \STATE $\vy_{t|t-1}^{(s)} \gets h\left(\vtheta_{t|t-1}^{(s)}, \vx_t\right) + \ve_t^{(s)}$
    \ENDFOR
        \STATE // build sample gain matrix
        \STATE $\vC_{t|t-1} = \frac{1}{S}\sum_{s=1}^S\left(\vtheta_{t|t-1}^{(s)} - \bar{\vtheta}_{t|t-1}\right)\left(\vy_{t|t-1}^{(s)} - \bar{\vy}_{t|t-1}\right)^\intercal$
        \STATE $\vV_{t|t-1} = \frac{1}{S}\sum_{s=1}^S \left(\vy_{t|t-1}^{(s)} - \bar{\vy}_{t|t-1}\right)\left(\vy_{t|t-1}^{(s)} - \bar{\vy}_{t|t-1}\right)^\intercal$
        \STATE $\bar{\vK}_t = \vC_{t|t-1}\vV_{t|t-1}^{-1}$
    \STATE // update step
    \FOR{$t=1,\ldots,T$}
        \STATE $\vtheta_{t}^{(s)} \gets \vtheta_{t|t-1}^{(s)} + \bar{\vK}_t(\vy_t - \vy_{t|t-1}^{(s)})$
    \ENDFOR
    \ENDFOR
\end{algorithmic}
\caption{
    Predict and update steps for the ensemble Kalman filter
}
\label{algo:ensemble-kalman-filter}
\end{algorithm}

It can be shown that under a linear SSM, the EnKF matches the KF whenever $S\to\infty$ \citep{evensen2003ensemble}.

\subsection{Exponential-family EKF}
\label{sec:exponential-family-ekf}
Here, we consider the modified EKF method introduced in \cite{ollivier2019extended}.
This method modifies the update equations in Algorithm \ref{algo:extended-kalman-filter}
to make use of any member of the exponential family.
This generalisation is helpful in scenarios where the target variables $\vy_t$ cannot be
reasonably modelled as Gaussians.
For example
when dealing with binary classification problems,
in which the observations are modelled as Bernoulli,
or in multi-class classification problems,
where the observations are modelled as Multinomial.

Given the measurement model $p(\vy_t \cond \vtheta, \vx_t)$
parametrised as an exponential family with mean $h(\vtheta, \vx_t)$,
the exponential-family extended Kalman filter (ExpfamEKF)
replaces the likelihood model at time $t$ with a Gaussian whose mean and covariance
are found by matching the first two moments of the linearised log-likelihood
with respect to the previous mean $\vmu_{t-1}$.

More precisely, the mean at time $t$ is approximated by a first-order approximation around
the predicted mean $\vmu_{t|t-1}$. This takes the form
\begin{equation}
    \bar{h}_t(\vtheta) := h(\vmu_{t|t-1}, \vx_t) + \vH_t\,(\vtheta_t - \vmu_{t|t-1}).
\end{equation}
Next, the measurement variance is taken as the covariance of the linearised model. 
For example, if $\vy_t$ is modelled as a Bernoulli with mean $\sigma(h(\vmu_{t|t-1}, \vx))$,
then $\bar{\vR}_t =\sigma(h(\vmu_{t|t-1}, \vx) (1 - \sigma(h(\vmu_{t|t-1}, \vx))$.
Conversely, if $\vy_t$ is modelled as a multivariate Gaussian with known observation variance,
then $\bar{\vR}_t = \vR_t$.

\subsection{Example: Recursive Learning of Neural Networks II}  
\label{example:rvga-vs-expfamekf-neural-network-training} 
In this experiment we compare the performance of the ExpfamEKF method
on the moons dataset presented in Section \ref{example:rvga-neural-network-training}.  
We evaluate two configurations of the ExpfamEKF over $100$ initialisations.
The first configuration assumes static dynamics ($Q_t = 0\,\vI$),  
which corresponds to the R-VGA under a linearised measurement model.  
The second configuration uses $Q_t = 10^{-5}\,\vI$, accounting for unmodelled errors.

Figure \ref{fig:recursive-nnet-rvga-vs-eekf} illustrates the median rolling prequential accuracy
and interquartile range using a window of $50$ steps and $100$ different initial states.
comparing the predicted and actual class labels over time.
\begin{figure}[htb]  
    \centering  
    \includegraphics[width=0.9\linewidth]{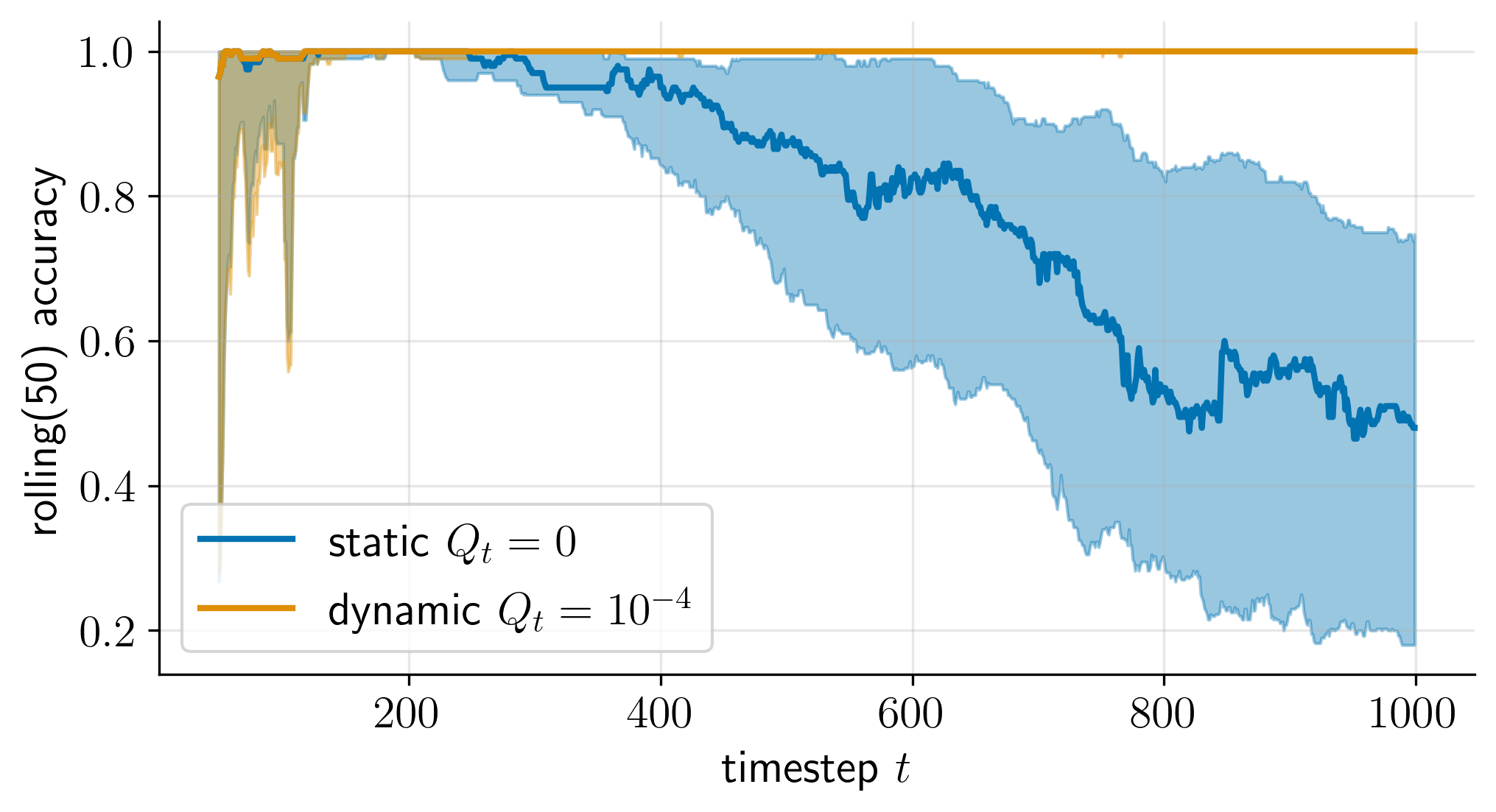}  
    \caption{  
    Rolling prequential accuracy for the non-linear classification problem,  
    trained using R-VGA and ExpfamEKF.  
    }  
    \label{fig:recursive-nnet-rvga-vs-eekf}  
\end{figure}  

We observe that the dynamic ExpfamEKF commences with high noise and then stabilises at a median accuracy of approximately $100\%$.
Similarly, the static configuration exhibits high variability initially and stabilises its accuracy around $100\%$ for $200$
steps before its predictive performance begins to decline and its variability increases.
This behaviour is attributed to numerical instability.
This experiment highlights the importance of small random-walk noise in parameter space for
maintaining numerical stability when deploying these methods online.

\section{Alternative update methods}
Alternative approaches for handling nonlinear 
or nonconjugate measurements have been proposed.

For instance sequential Monte Carlo (SMC) methods have been used to train neural networks \citep{freitas2000smcneuralnets}.
These sample-based methods are particularly advantageous when the state-transition function $f$ is highly non-linear
or when a more exact posterior approximation is required.
Next, Generalised Bayesian methods, such as \cite{mishkin2018slang,knoblauch2022gvi}, generalise the
VB target \eqref{eq:rvga-target-orignal}
to allow the likelihood to be a loss function;
see Chapter \ref{ch:robustness} for further details.
Alternatively,
online gradient descent  methods like \cite{bencomo2023implicit} emulate state-space modelling via gradient-based optimisation,
and gradient-free methods like \cite{goulet2021tagi} estimate the weights of the neural network assuming a diagonal posterior covariance matrix.

\section{Conclusion}
\label{sec:filtering-as-learning}

In this chapter, we introduced the problem of Bayesian sequential estimation of model parameters from a stream of data.
We laid the groundwork for the remainder of this thesis by establishing the necessary foundational concepts,
and presented an array of methods that address this problem by computing or approximating the posterior
density over model parameters recursively.
For computational efficiency, we focused on methods that maintain a
multivariate Gaussian posterior density.

We demonstrated that the problem of sequential Bayesian online linear regression
can be viewed as an \textit{online} version of the Ridge regression,
as a special case of the R-VGA algorithm, and
as a specific instance of the Kalman filter,
where there is an absence of noise in the system dynamics.

Furthermore,
we discussed how the linear assumption underlying both the Kalman filter and online linear regression can be extended to accommodate nonlinear measurement functions.
Specifically, we presented three different approaches: the Extended Kalman Filter (EKF), the Ensemble Kalman Filter (EnKF), and the Recursive Variational Gaussian Approximation (R-VGA).



\chapter{Adaptivity}
\label{ch:adaptivity}

In this chapter, we propose a unifying framework
for methods
that perform probabilistic online learning in non-stationary environments. 
We call the framework BONE, which stands for generalised (B)ayesian (O)nline learning in (N)on-stationary (E)nvironments. 
BONE provides a common structure to tackle a variety of problems,
including online continual learning, prequential forecasting, 
and contextual bandits.

The motivation for BONE arises from a key challenge in sequential online learning: non-stationarity in the data-generating process.
While a number of methods have been proposed to address this issue, the literature remains relatively sparse.
Different communities often tackle similar problems using overlapping ideas, but without a shared methodological foundation.
As a result insights from one area are not easily transferable across domains.

In this chapter, we show that many of these methods,
despite their apparent differences, can all be understood
within a common framework that can be seen as a form of generalised Bayes posterior predictive.
This insight enables a fair and principled comparison across method and lays the groundwork for developing new approaches.
The framework requires specifying
 three modelling choices:
(i) a model for measurements (e.g., a neural network),
(ii) an auxiliary process to model non-stationarity (e.g., the time since the last changepoint), and
(iii) a conditional prior over model parameters  (e.g., a multivariate Gaussian).
The framework also requires two algorithmic choices, which we use to carry out approximate inference under this framework:
(i)
an algorithm to estimate beliefs (posterior distribution) about the model parameters given the auxiliary variable,
and
(ii) an algorithm to estimate beliefs about the auxiliary variable.

A key insight provided by this framework is that most existing methods assume either abrupt changes or gradual drift
in the underlying process.
However, real-world scenarios often involve a combination of both.
To address this, we introduce a novel method that accounts for both types of non-stationarity. Abrupt changes are modelled via the time since the last parameter reset, while slow drift is captured through an Ornstein–Uhlenbeck process over the model parameters. We evaluate this method in a range of experiments and demonstrate its performance across diverse settings

\section{The framework}
In Chapter \ref{ch:recursive-bayes} we introduced the notion of the Bayesian posterior density
to estimate, or approximate, the model parameters of the measurement function $h$ recursively.
The methods presented work well when the data-generating process is well-specified,
the variational approximation family is big enough to accommodate the Bayesian posterior density, or
in the case of the Kalman filter, a \textit{good} choice of
dynamics covariance $\vQ_t$ is determined.

In practice, however, this is not often the case.
Thus, to adapt to regime changes and other forms of non-stationarity,
we introduce an auxiliary random variable $\auxv_t \in \vPsi_t$,
that evolves following the dynamics $p(\auxv_t \cond \auxv_{t-1})$ and encodes information about the non-stationarity of the sequence at time $t$.
Here, $\vPsi_t$ is the set of possible values of the auxiliary variable $\auxv_t$.
The purpose of this variable is, for instance,
to determine which past datapoints $\vy_{1:t-1}$ most closely align with the most recent measurement $\vy_t$.
We describe the auxiliary variable in detail in Section \ref{sec:auxiliary-variable}.
Finally, the model parameters $\vtheta_t$ evolve following the dynamics $p(\vtheta_t \cond \vtheta_{t-1}, \auxv_t)$.
This represents how much parameters change, given the state of the auxiliary variable.

Figure \ref{fig:diagram-bone-ssm}
shows the probabilistic graphical model that motivates our formulation; this resembles the one in \cite{Doucet2000}
with an additional optional dependence between the auxiliary variable and the measurements; in what follows we omit this dependence for brevity.
\begin{figure}[htb]
    \centering
        \begin{tikzpicture}[
        node distance=0.5cm and 1.0cm,
        every node/.style={draw, circle, minimum size=1.2cm,
        text width=0.5cm, align=center},
        every path/.style={thick, ->, >=stealth}
    ]
        	
        \node (theta1) {$ \vtheta_{t-1} $};
        \node[right=of theta1] (theta2) {$ \vtheta_t $};
        \node[right=of theta2] (theta3) {$ \vtheta_{t+1} $};

        \node[above=of theta1] (auxv1) {$\auxv_{t-1}$};
        \node[above=of theta2] (auxv2) {$\auxv_t$};
        \node[above=of theta3] (auxv3) {$\auxv_{t+1}$};

        \node[fill=lightgray, below=of theta1] (y1) {$ \vy_{t-1} $};
        \node[fill=lightgray, below=of theta2] (y2) {$ \vy_t $};
        \node[fill=lightgray, below=of theta3] (y3) {$ \vy_{t+1} $};

        \node[rectangle, fill=lightgray, below=of y1, xshift=-3mm] (x1) {$ \vx_{t-1} $};
        \node[rectangle, fill=lightgray, below=of y2, xshift=-3mm] (x2) {$ \vx_t $};
        \node[rectangle, fill=lightgray, below=of y3, xshift=-3mm] (x3) {$ \vx_{t+1} $};
                
        \path (theta1) edge (theta2);
        \path (theta2) edge (theta3);
        
        \path (theta1) edge (y1);
        \path (theta2) edge (y2);
        \path (theta3) edge (y3);
        
        \path (x1) edge (y1);
        \path (x2) edge (y2);
        \path (x3) edge (y3);
        
        \path (auxv1) edge (theta1);
        \path (auxv2) edge (theta2);
        \path (auxv3) edge (theta3);

        \path (auxv1) edge (auxv2);
        \path (auxv2) edge (auxv3);
        
        \path (auxv1) edge[bend right=40, dotted] (y1);
        \path (auxv2) edge[bend right=40, dotted] (y2);
        \path (auxv3) edge[bend right=40, dotted] (y3);
    \end{tikzpicture}
    \caption{
        Two-levelled hierarchical state-space model (SSM) with known dynamics, motivating our BONE framework
        Solid arrows indicate required dependencies, while dashed arrows represent optional dependencies.
        Rectangles denote exogenous variables, and circles represent random variables. Observed elements are shaded in gray.
        The left shift in $\vx_t$ represents that features are observed before observing $\vy_t$.
    }
    \label{fig:diagram-bone-ssm}
\end{figure}
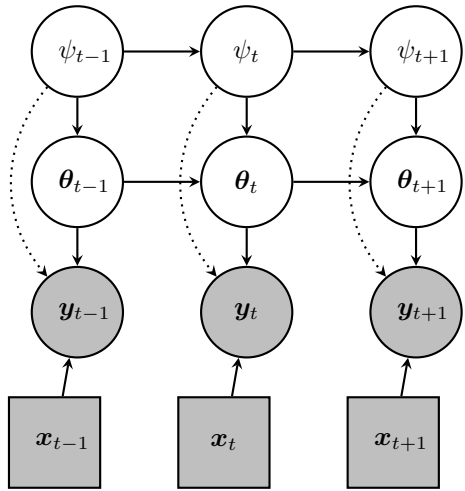

For an experiment of length $T\in\mathbb{N}$, the joint conditional density over the model parameters,
induced by the graphical model shown in Figure \ref{fig:diagram-bone-ssm}, is given by
\begin{equation}\label{eq:bone-log-joint}
    p(\vy_{1:T}, \vtheta_{0:T}, \auxv_{0:T}  \cond \vx_{1:T})
    = p(\vtheta_0)\,p(\auxv_0)\,\prod_{t=1}^T
    p(\vy_t \cond \vtheta_t, \vx_t)\, p(\vtheta_t \cond \vtheta_{t-1}, \auxv_t)\,p(\auxv_t \cond \auxv_{t-1}).
\end{equation}

In this chapter,
we are interested in methods
that efficiently compute the so-called \textit{expected} posterior predictive
$\hat{\vy}_{t+1} := \mathbb{E}_{p(\vtheta_t, \auxv_t \cond \data_{1:t})}[h(\vtheta_t,\vx_{t+1})]$ in an online and recursive manner.
In our setting, one observes $\vx_{t+1}$ just before observing $\vy_{t+1}$;
thus, to make a prediction about $\vy_{t+1}$, we have $\vx_{t+1}$ and $\data_{1:t}$ at our disposal.\footnote{
The  input features $\vx_{t+1}$ and
output measurements $\vy_{t+1}$ can correspond to different time steps.
For example, $\vx_{t+1}$ can be the state of the stock market at a fixed date and
$\vy_{t+1}$ is the return on a stock some days into the future.
}
For the case of a  discrete auxiliary variable $\auxv_t \in \Psi_t$,
the form of the expected posterior predictive for $\vy_{t+1}$, induced by \eqref{eq:bone-log-joint}, is
\begin{equation}\label{eq:bone-posterior-predictive}
\begin{aligned}
    \hat{\vy}_{t+1}
    &= \mathbb{E}_{p(\vtheta_t, \auxv_t \cond \data_{1:t})}[h(\vtheta_t,\vx_{t+1})]\\
    &= \sum_{\auxv_t \in \Psi_t}\int h(\vtheta_t, \vx_{t+1})\,p(\vtheta_t, \auxv_t \cond \data_{1:t}) \d\vtheta_t\\
    &= \sum_{\auxv_t \in \Psi_t}p(\auxv_t \cond \data_{1:t})
    \int h(\vtheta_t, \vx_{t+1}) p(\vtheta_t \cond \auxv_t, \data_{1:t})\d\vtheta_t,
\end{aligned}
\end{equation}

where
\begin{align}
    p(\vtheta_t \cond \auxv_t, \data_{1:t})
    &\propto p(\vy_t \cond \vtheta_t, \vx_t)\,p(\vtheta_t \cond \auxv_t, \data_{1:t-1}),\label{eq:bone-extra-i} \\
    p(\vtheta_t \cond \auxv_t, \data_{1:t-1})
   &=\int p(\vtheta_t \cond \vtheta_{t-1}, \auxv_t)\,p(\vtheta_{t-1} \cond\data_{1:t-1})\,\d\vtheta_{t-1}, \label{eq:bone-extra-ii}\\
    p(\auxv_t \cond \data_{1:t})
    &=  p(\vy_t \cond \vx_t, \auxv_t, \data_{1:t-1})\,
    \sum_{\auxv_{t-1}\in\vPsi_{t-1}}
    p(\auxv_{t-1} \cond \data_{1:t-1})\,
    p(\auxv_t \cond \auxv_{t-1}, \data_{1:t-1}) \label{eq:bone-extra-iii}.
\end{align}

From \eqref{eq:bone-posterior-predictive}, \eqref{eq:bone-extra-i}, \eqref{eq:bone-extra-ii}, and \eqref{eq:bone-extra-iii}
we argue that there are three key modelling choices and two algorithmic choices.  
Specifically, the three key modelling choices are:
\cModelraw the conditional mean $h(\vtheta, \vx)$ together with the likelihood $p(\vy \cond \vtheta, \vx)$;
\cAuxraw the auxiliary variable $\auxv_t$; and
\cPriorraw the conditional prior $p(\vtheta_t \cond \auxv_t, \data_{1:t-1})$.
Additionally, the two algorithmic choices are:
\cPosteriorraw the algorithm to compute (or approximate)  the conditional posterior over model parameters $p(\vtheta_t \cond \auxv_t, \data_{1:t})$, and
\cWeightraw the algorithm that computes (or approximates) the posterior over weights $p(\auxv_t, \cond \data_{1:t})$.

The BONE framework generalises these choices,
allowing for greater flexibility while maintaining the motivating probabilistic structure.
Instead of the likelihood model $p(\vy_t \cond \vtheta_t, \vx_t)$ with conditional mean $h(\vtheta_t, \vx_t)$,
we consider a general function $\exp(-\ell(\vy_t;\, \vtheta_t, \vx_t))$,
where $\ell(\vy_t; \,\vtheta_t,  \vx_t)$ could be either a loss function or a log-likelihood.
Next, instead of the conditional prior $p(\vtheta_t \cond \auxv_t, \data_{1:t-1})$,
we introduce a more general modelling function, $\pi(\vtheta_t; \auxv_t, \data_{1:t-1})$
that governs the prior over model parameters.\footnote{
This function adopts an ad hoc approach to parameter evolution instead of explicitly solving the integration step \eqref{eq:bone-extra-ii}.
}
Similarly,
instead of the posterior density $p(\vtheta_t \cond \auxv_t, \data_{1:t})$,
we employ the  function $q(\vtheta_t;\, \auxv_t, \data_{1:t})$; e.g., an approximation of the posterior,
or a generalised posterior \citep{bissiri2016generalbayes}.
Finally, instead of the posterior over weights $p(\auxv_t \cond \data_{1:t})$, we 
consider a weighting function $\nu(\auxv_t;\, \data_{1:t})$,
which can be the Bayesian posterior or an ad-hoc time-dependent weighting function.

This generalisation is important because it unifies a wide range of existing methods under a common framework.
Many well-known approaches in the literature can be written as elements of BONE by appropriately selecting the model for measurements, the conditional prior, and the posterior approximations.
BONE highlights connections between different methods,
it also enables systematic comparisons under a common umbrella, 
and it allows us to develop novel algorithms.
Table \ref{tab:modelling-choices-BONE} explicitly contrasts the choices in BONE with those in the
classical Bayesian formalism.
\begin{table}[htb]
    \centering
    \begin{tabular}{l|l|l}
         component & BONE & Bayes \\
         \hline
         \cModel & $h(\vtheta_t, \vx_t)$\,\,\&\,\,$\exp(-\ell(\vy_t; \vtheta_t, \vx_t))$ &  $h(\vtheta_t, \vx_t)$\,\,\&\,\,$p(\vy_t \cond \vtheta_t, \vx_t)$ \\
         \cAux & $\auxv_t$ & $\auxv_t$ \\
         \cPrior & $\pi_t(\vtheta_t;\, \auxv_t) := \pi(\vtheta_t;\, \auxv_t, \data_{1:t-1})$ & $p(\vtheta_t \cond \auxv_t, \data_{1:t-1})$\\
         \cPosterior & $q_t(\vtheta_t; \auxv_t) := q(\vtheta_t;\, \auxv_t, \data_{1:t})$ & $p(\vtheta_t \cond \auxv_t, \data_{1:t})$\\
         \cWeight & $\nu_t(\auxv_t) := \nu(\auxv_t;\, \data_{1:t})$ & $p(\auxv_t \cond \data_{1:t})$\\
    \end{tabular}
    \caption{
        Components of the BONE framework.
    }
    \label{tab:modelling-choices-BONE}
\end{table}

With these modifications, the expected posterior predictive under BONE is
\begin{equation}\label{eq:bone-prediction}
\begin{aligned}
    \hat{\vy}_t
    &:= \sum_{\auxv_t \in \Psi_t}
    \underbrace{\nu(\auxv_t \cond \data_{1:t})}_{\text{\cWeight}}\,
    \int
    \underbrace{h(\vtheta_t, \vx_{t+1})}_{\text{\cModel}}\,
    \underbrace{q(\vtheta_t;\, \auxv_t, \data_{1:t})}_{\text{\cPosterior}}
    \d\vtheta_t,
\end{aligned}
\end{equation}
where
\begin{equation}\label{eq:conditional-posterior}
   q(\vtheta_t;\,\auxv_t, \data_{1:t})
    \propto \underbrace{\pi(\vtheta_t;\, \auxv_t, \data_{1:t-1})}_\text{\cPrior}\,
    \underbrace{\exp(-\ell(\vy_t; \vtheta_t, \vx_t))}_\text{\cModel}
\end{equation}
takes the form of a generalised posterior \citep{bissiri2016generalbayes}.
In classical Bayesian setting, the loss function takes the form of the negative log-likelihood, i.e.,
\begin{equation}\label{eq:loss-function-neg-logl}
    \ell(\vy_t; \vtheta_t, \vx_t) = -\log p(\vy_t \cond \vtheta_t, \vx_t).
\end{equation}
Unless stated otherwise, we work with the negative log-likelihood in \eqref{eq:loss-function-neg-logl}
found in the classical Bayesian setting.

A prediction for $\vy_{t+1}$ given $\data_{1:t}$, $\vx_{t+1}$, and $\auxv_t$ is
\begin{equation}\label{eq:forecast-bone}
    \hat{\vy}_{t+1}^{(\auxv_t)}
    = \mathbb{E}_{q_t}[h(\vtheta_t;\, \vx_{t+1}) \cond \auxv_t]
    := \int h(\vtheta_t;\, \vx_{t+1})\,q(\vtheta_t;\,\auxv_t, \data_{1:t})\d \vtheta_t\,.
\end{equation}  
Here we use the shorthand notation $q_t = q(\vtheta_t;\,\auxv_t, \data_{1:t})$ and
$\mathbb{E}_{q_t}[\cdot \cond \auxv_t]$ to highlight dependence on $\auxv_t$.  

Algorithm \ref{algo:bone-step} provides pseudocode for the prediction and update steps in the BONE framework.
Notably, these components can be broadly divided into two categories: modelling and algorithmic.
The modelling components determine the inductive biases in the model, and correspond to $h$, $\ell$, $\auxv_t$, and $\pi$.
The algorithmic components dictate how operations are carried out to produce a final prediction --- this corresponds to $q_t$ and $\nu_t$.
\begin{algorithm}[H]
\small
\begin{algorithmic}[1]
    \REQUIRE $\data_{1:t}$  // past data
      \REQUIRE $\vx_{t+1}$ // optional inputs
    \REQUIRE $h(\vtheta, \vx_{t})$ // Choice of \cModel
   \REQUIRE $\vPsi_t$ // Choice of \cAux
    \FOR{$\auxv_t\in\vPsi_t$}
        \STATE $\pi_t(\vtheta_t;\, \auxv_t) \gets \pi(\vtheta_t ;\, \auxv_t, \data_{1:t-1})$ // choice of \cPrior
        \STATE $q_t(\vtheta_t;\,\auxv_t) \gets q(\vtheta_t ;\, \auxv_t, \data_{1:t}) \propto \pi_{t}(\vtheta_t\;\auxv_t)\,\exp(-\ell(\vy_t; \vtheta_t, \vx_t))$// choice of \cPosterior
        \STATE $\nu_t(\auxv_t) \gets \nu(\auxv_t ;\, \data_{1:t})$  // choice of \cWeight
        \STATE $\hat{\vy}_{t+1}^{(\auxv_t)} \gets
        \mathbb{E}_{q_t}[h(\vtheta_t, \vx_{t+1}) ;\, \auxv_t]$ // conditional prequential prediction
    \ENDFOR
    \STATE $\hat{\vy}_{t+1} \gets \sum_{\auxv_t}\nu_t(\auxv_t)\,\hat{\vy}_{t+1}^{(\auxv_t)}$ // weighted prequential prediction
\end{algorithmic}
\caption{
    Generic predict and update step of BONE with discrete $\auxv_t$ at time $t$.
}
\label{algo:bone-step}
\end{algorithm}

\subsection{Details of BONE}
\label{sec:details}

In the following subsections,
we describe each component of the BONE framework in detail,
provide illustrative examples,
and reference relevant literature for further reading.

\subsection{The  measurement model \cModelraw}
\label{sec:the-task}

Recall that $h(\vtheta, \vx)$ is a parametric model
that encodes the conditional mean for $\vy$, given $\vtheta$ and $\vx$.
For linear measurement models, $h(\vtheta,  \vx)$  is given by:
\begin{equation}
    h(\vtheta, \vx) =
    \begin{cases}
        \vtheta^\intercal \vx & \text{(regression)}, y \in \real \\
        \sigma(\vtheta^\intercal \vx) & \text{(binary classification)}, y \in \{0,1\} \\
        {\rm Softmax}(\vtheta^\intercal \vx) & \text{(multi-class classification)},
        \vy \in \{0,1\}^C
    \end{cases}
\end{equation}
where $\sigma(z) = (1 + \exp(-z))^{-1}$ is the sigmoid function, $C\in\mathbb{N}$ is the number of classes,
${\rm Softmax}(\vz)_k = {\exp(\vz_k)}/{\sum_i \exp(\vz_i)}$ represents the softmax function with
$\vz\in\real^\dimobs$
and $\vz_i$  the $i$-th element of $\vz$.
In the machine learning literature,
the vector $\vz$ is called the logits of
the classifier. 
For non-linear measurement models, such as neural networks, $h(\vtheta, \vx)$ represents the output of the network parameterised by $\vtheta$.
The best choice of $h$ will depend on the nature of the data,
as well as the nature of the task,
in particular, whether it is supervised or unsupervised.
We give some examples in 
Section \ref{section:experiments}.

\subsection{The auxiliary variable \cAuxraw }
\label{sec:auxiliary-variable}

The choice of auxiliary variable $\auxv_t$
is crucial to identify changes in the data-generating process,
allowing our framework to track non-stationarity.
Below, we give a list of the common auxiliary variables used in the literature.

\noindent \underline{\texttt{RL}} (runlength):  
    $\auxv_{t} = r_t \in \{0, \ldots, t\}$ is a scalar representing a \textit{lookback window}, defined as the number of steps since the last regime change.  
    The value $r_t = 0$ indicates the start of a new regime at time $t$,
    while $r_t \geq 1$ denotes the continuation of a regime with a lookback window of length $r_t$.  
    This choice of auxiliary variable is common in the changepoint detection literature.
    See e.g., \cite{adams2007bocd,
    knoblauch2018doublyrobust-bocd, alami2020restartedbocd, agudelo2020bocdprediction, altamirano2023robust, alami2023banditnonstationary}.
    This auxiliary variable is useful for non-stationary data with non-repeating temporal segments,
    provided we know the intensity with which new segments appear.

\noindent \underline{\texttt{RLCC}} (runlength and changepoint count):  
    $\auxv_{t} = (r_t, c_t) \in \{0, \ldots, t\} \times \{0, \ldots, t\}$ is a vector that represents both the runlength and the total number of changepoints,
    as proposed in \cite{wilson2010-bocd-hazard-rate}.
    When $r_t = t$, this implies $c_t = 0$, meaning no changepoints have occurred.  
    Conversely, $r_t = 0$ indicates the start of a new regime and implies $c_t \in \{1, \ldots, t\}$, accounting for at least one changepoint.  
    For a given $r_t\geq 0$, the changepoint count $c_t$ belongs to the range $\{1, \ldots, t - r_t\}$.  
    As with \texttt{RL}, this auxiliary variable assumes consecutive time blocks, but additionally allows us to estimate the likelihood of entering a new regime by tracking the number of changepoints seen so far. 
    This auxiliary variable  is useful for non-stationary data with non-repeating temporal segments when the intensity with which new segments appear is unknown.

\noindent \underline{\texttt{CPT}} (changepoint timestep):
    $\auxv_t = \vzeta_t$, with $\vzeta_t = \{\zeta_{1,t}, \ldots, \zeta_{\ell,t}\}$,
    is a set of size $\ell \in \{0, \ldots t\}$ containing the $\ell$ times at which there was a changepoint,
    with the convention that  $0 \leq \zeta_{1,t} < \zeta_{2,t} < \ldots < \zeta_{\ell,t} \leq t$.
    This choice of auxiliary variable was introduced in \cite{fearnhead2007line} and has been studied in
    \cite{fearnhead2011adaptivecp, fearnhead2019robustchangepoint}.
    Under mild assumptions, it can be shown that \texttt{CPT} is equivalent to \texttt{RL},
    see e.g., \cite{knoblauch2018varbocd}.
    This auxiliary variable  is useful for non-stationary data with non-repeating temporal segments
    when the probability of a new segment appearing is unknown and knowledge of the changepoint location is required.

\noindent \underline{\texttt{CPL}} (changepoint location):  
    $\auxv_t=s_{1:t}\in\{0,1\}^t$ is a binary vector. In one interpretation,
    $s_i=1$ indicates the occurrence of a changepoint at time $i$,
    as in \cite{li2021onlinelearning},
    while in another, it means that $\data_t$ belongs to the current regime, as in \cite{nassar2022bam}.
    This auxiliary variable  is useful for non-stationary data with repeating temporal segments.
    It is useful when the segments are formed of non-consecutive datapoints.

\noindent \underline{\texttt{CPV}} (changepoint probability vector):  
    $\auxv_{t}=v_{1:t}\in(0,1)^t$ is a $t$-dimensional random vector representing the probability of each element in the history belonging to the current regime.  
    This generalises \texttt{CPL} and was introduced in \cite{nassar2022bam} for online continual learning, allowing for a more fine-grained representation of changepoints over time.
    This auxiliary variable is useful for  non-stationary data with repeating temporal segments.
    Unlike \texttt{CPL},
    it takes a vector of weights in $(0,1)$
    which allows for higher flexibilty when compared to \texttt{CPL}.

\noindent \underline{\texttt{CPP}} (changepoint probability):  
    $\auxv_t = \upsilon_t\in(0,1)$ represents the probability of a changepoint.  
    This is a special case of \texttt{CPV} that tracks only the most recent changepoint probability;
    this choice
    was used in \cite{titsias2023kalman} for online continual learning.

\noindent \underline{\texttt{ME}} (mixture of experts):  
    $\auxv_{t} =\alpha_t \in \{1, \ldots, K\}$ represents one of  $K$ experts.
    Each expert corresponds to either a choice of model
    or one of $K$ possible hyperparameters.  
    This approach has been applied to filtering \citep{chaer1997mixturekf} and
    prequential forecasting \citep{liu2023bdemm, abeles2024adaptive}.
    This auxiliary variable facilitates the weighting of predictions made by models
    when one has a fixed number of competing models.

\noindent \underline{\texttt{C}}:  
    $\auxv_t = c$ represents a constant auxiliary variable, where $c$ is just a placeholder or dummy value.
    This is equivalent to not having an auxiliary variable,
    or alternatively, to having a single expert that encodes all available information.

\paragraph{Space-time complexity}
There is a tradeoff between the complexity that $\auxv$ is able to encode and the computation power needed to perform updates. Loosely speaking, this can be seen in the cardinality of the set of possible values of $\auxv$ through time.
Let  $\vPsi_t$ be the space of possible values for $\auxv_t$.
Depending on the choice of $\auxv_t$,
the cardinality of $\vPsi_t$  either stay constant or increase over time, i.e., 
 $\vPsi_{t-1} \subseteq \vPsi_t$ for all $t=1,\ldots,T$.
For instance,
the possible values for \texttt{RL} increase by one at each timestep;
the possible values of $\texttt{CPL}$ double at each  timestep; finally,
the possible values for $\texttt{ME}$ do not increase. 
Table \ref{tab:auxv-time-complexity}
shows the space of values and cardinality that $\vPsi_t$ takes as a function of the choice of auxiliary variable.

\begin{table}[htb]
    \scriptsize
    \centering
    \begin{tabular}{c|cccccccc}
    name & \texttt{C} & \texttt{CPT} & \texttt{CPP} & \texttt{CPL} & \texttt{CPV} & \texttt{ME} & \texttt{RL} & \texttt{RLCC} \\
    values & $\{c\}$ & $2^{\{0, 1, \ldots, t\}}$ & $[0,1]$ & $\{0,1\}^t$ & $(0,1)^t$ & $\{1, \ldots, K\}$ & $\{0, 1, \ldots, t\}$ & $\{\{0, t\}, \ldots, \{t, 0\}\}$\\
    cardinality & $1$ & $2^t$ & $\infty$ & $2^t$ & $\inf$ & $K$ & $t$ & $2+ t(t+1)/2$
    \end{tabular}
    \caption{
    Design space for the auxiliary random variables $\auxv_t$.
    Here,
    $T$ denotes the total number of timesteps and
    $K$ denotes a fixed number of candidates.
    }
    \label{tab:auxv-time-complexity}
\end{table}

\subsection{Conditional prior \cPriorraw}
\label{sec:conditional-prior}

This component defines the prior predictive distribution over model parameters
conditioned on the choice of \cAux $\auxv_t$ and the dataset $\data_{1:t-1}$.
In some cases, explicit access to past data is not needed.

For example, a common assumption  is to have a Gaussian conditional prior over model parameters.
In this case, we assume that, given data $\data_{1:t-1}$ and the auxiliary variable $\auxv_t$,
the conditional prior takes the form
\begin{equation}\label{eq:conditional-prior-gaussian}
    \pi(\vtheta_t;\, \auxv_t,\, \data_{1:t-1}) =
    {\cal N}\big(\vtheta_t\cond g_{t}(\auxv_t, \data_{1:t-1}), G_{t}(\auxv_t, \data_{1:t-1})\big),
\end{equation}
with
$g_{t}:\vPsi_t\times \real^{(\dimstate+\dimobs)(t-1)} \to \real^\dimstate$ a function that returns the mean vector of model parameters,
$\mathbb{E}[\vtheta_t \cond \auxv_t, \data_{1:t-1}]$,
and
$G_t:\vPsi_t\times \real^{(\dimstate+\dimobs)(t-1)}  \to \real^{\dimstate\times\dimstate}$ 
a function that returns a $\dimstate$-dimensional covariance matrix,
${\rm Cov}[\vtheta_t \cond \auxv_t, \data_{1:t-1}]$.
In what follows, we let $(\vmu_0, \vSigma_0)$ be the pre-defined initial prior mean and covariance.
Furthermore, we denote $(\vmu_{t-1}, \vSigma_{t-1})$ be the posterior mean and covariance found at time $t-1$,
which is used as a prior at time $t$.

Below, we provide a non-exhaustive list of possible combinations of
choices for \cAux and \cPrior of the form \eqref{eq:conditional-prior-gaussian}
that can be found in the literature, and we also introduce a new combination.

\noindent \underline{\texttt{C-LSSM}}  (constant linear with affine state-space model).
We assume the parameter dynamics can be modeled by a linear-Gaussian state space model (LSSM),
i.e., 
$\mathbb{E}[\vtheta_t \cond \vtheta_{t-1}] = \vF_t\,\vtheta_{t-1}
+ \vb_t$
and
${\rm Cov}[\vtheta_t \cond \vtheta_{t-1}] = \vQ_t$,
for given $(\dimstate\times\dimstate)$ dynamics matrix $\vF_t$,
$(\dimstate\times1)$ bias vector $\vb_t$,
and $(\dimstate\times\dimstate)$ positive semi-definite matrix $\vQ_t$.
We also assume
$\auxv_t = c$ is a fixed (dummy) constant,
which is equivalent to not having an auxiliary variable.
The characterisation of the conditional prior takes the form
\begin{equation}
\begin{aligned}
    g_t(c, \data_{1:t-1}) &= \vF_t\,\vmu_{t-1}
    + \vb_t,\\
    G_t(c, \data_{1:t-1}) &= \vF_t\,\vSigma_{t-1} \vF_t^\intercal + \vQ_t\,,
\end{aligned}
    \label{eqn:LSSM}
\end{equation}
This is a common baseline model that we will specialise below.

\noindent \underline{\texttt{C-OU}}  (constant with Ornstein-Uhlenbeck process).
This is a special case of the \texttt{C-LSSM} model
where
$\vF_t = \gamma \vI$, 
$\vb_t = (1-\gamma) \vmu_0$,
$\vQ_t = (1 - \gamma^2) \vSigma_0$,
$\vSigma_0=\sigma_0^2 \vI$,
$\gamma \in [0,1]$ is the fixed rate, and
$\sigma_0 \geq 0$.
The conditional prior mean
and covariance are
a convex combination of the form
\begin{equation}
\label{eq:cprior-c-ou}
\begin{aligned}
    g(c, \data_{1:t-1}) &=  \gamma \vmu_{t-1} + (1 - \gamma) \vmu_0, \\
    G(c, \data_{1:t-1}) &=  \gamma^2 \vSigma_{t-1} + (1 - \gamma^2) \vSigma_0.
\end{aligned}
\end{equation}
This combination is used in
\cite{kurle2019continual}.
Smaller values of the rate parameter $\gamma$
correspond to a faster resetting,
i.e., the distribution of model parameters
revert more quickly to the prior belief
$(\vmu_0,\vSigma_0)$,
which means the past data will be forgotten.

\noindent \underline{\texttt{CPP-OU}}  (changepoint probability with Ornstein-Uhlenbeck process).
Here 
$\auxv_t=\upsilon_t \in [0,1]$
is the changepoint probability that we use as
the rate of an Ornstein-Uhlenbeck (OU) process,
as proposed in
\cite{titsias2023kalman,Galashov2024}.
The characterisation of the conditional prior takes the form
\begin{equation}
\label{eq:cprior-cpp-d}
\begin{aligned}
    g(\upsilon_t, \data_{1:t-1}) &= \upsilon_t \vmu_{t-1} + (1 - \upsilon_t)  \vmu_0 \,,\\
    G(\upsilon_t, \data_{1:t-1}) &=  \upsilon_t^2 \vSigma_{t-1} + (1 - \upsilon_t^2)  \vSigma_0\,.
\end{aligned}
\end{equation}
An example on how to compute $\upsilon_t$
using an empirical Bayes procedure
is given in
\eqref{eqn:nut}.

\noindent \underline{\texttt{C-ACI}} (constant with additive covariance inflation).
This corresponds to a special case of \texttt{C-LSSM} in which $\vF=\vI$,
$\vb=\vzero$, and $\vQ=\alpha \vI$
for $\alpha>0$ is the amount of noise
added at each step.
This combination is used in
\cite{kuhl1990ridge,duranmartin2022-subspace-bandits,chang2022diagonal,chang2023lofi} .
The characterisation of the conditional prior takes the form
\begin{equation}\label{eq:cprior-c-aci}
\begin{aligned}
    g(c,  \data_{1:t-1}) &= \vmu_{t-1},\\
    G(c, \data_{1:t-1}) &= \vSigma_{t-1} + \vQ_t.
\end{aligned}
\end{equation}
This is similar to \texttt{C-OU} with $\gamma=1$,
however, here we inject new noise at each step. 
Another variant of this scheme,
known as \textit{shrink-and-perturb}
\citep{Ash2020},
takes 
$g(c,  \data_{1:t-1}) = q\,\vmu_{t-1}$
and
$G(c, \data_{1:t-1}) = \vSigma_{t-1} + \vQ_t$, where $0 < q < 1$
is the shrinkage parameters,
and $\vQ_t=\sigma_0^2\,\vI$.

\noindent \underline{\texttt{C-Static}}  (constant with static parameters).
Here  $\auxv_t = c$ (with $c$ a dummy variable).
This is a special case of the \texttt{C-ACI} configuration in which $\alpha=0$.
The conditional prior is characterised by
\begin{equation}\label{eq:cprior-c-f}
\begin{aligned}
    g_t(c, \data_{1:t-1}) &= \vmu_{t-1},\\
    G_t(c, \data_{1:t-1}) &= \vSigma_{t-1}.
\end{aligned}
\end{equation}

\noindent \underline{\texttt{ME-LSSM}}  (mixture of experts with LSSM).
Here $\auxv_t = \alpha_t \in \{1,\ldots, K\}$,
and we have a bank of $K$ independent
LSSM models; the auxiliary variable specifies which model to use at each step.
The characterisation of the conditional prior takes the form
\begin{equation}
\begin{aligned}
    g_t(\alpha_t, \data_{1:t-1}) &= \vF^{(\alpha_t)}_t\,\vmu_{t-1}^{(\alpha_t)}
    + \vb_t^{(\alpha_t)}\,,
    \\
    G_t(\alpha_t, \data_{1:t-1}) &= \vF^{(\alpha_t)}_t\,\vSigma_{t-1}^{(\alpha_t)}\vF_t^\intercal + \vQ_t^{(\alpha_t)}\,.
\end{aligned}
    \label{eqn:MELSSM}
\end{equation}
The superscript $(\alpha_t)$ denotes the conditional prior for the $k$-th expert. More precisely, $\vmu_{t-1}^{(\alpha_t)},\vSigma_{t-1}^{(\alpha_t)}$ are the posterior at time $t-1$ using $\vF^{(\alpha_t)}_{t-1}$ and $\vQ_{t-1}^{(\alpha_t)}$ from the $k$-th expert.
This combination was introduced in \cite{chaer1997mixturekf}.

\noindent \underline{\texttt{RL-PR}}  (runlength with prior reset):
for $\auxv_t = r_t$, this choice of auxiliary variable constructs a new mean and covariance
considering the past $t - r_t$ observations. We have
\begin{equation}\label{eq:cprior-rl-pr}
\begin{aligned}
    g_t(r_t, \data_{1:t-1}) &= \vmu_0\,\mathds{1}(r_t  = 0) + \vmu_{(r_{t-1})}\mathds{1}(r_t > 0),\\
    G_t(r_t, \data_{1:t-1}) &= \vSigma_0\,\mathds{1}(r_t  = 0) + \vSigma_{(r_{t-1})}\mathds{1}(r_t > 0),\\
\end{aligned}
\end{equation}
where  $\vmu_{(r_{t-1})}, \vSigma_{(r_{t-1})}$ denotes the posterior belief computed using observations
from indices $t - r_t$ to $t - 1$.
The case $r_t = 0$ corresponds to choosing the initial pre-defined prior mean and covariance $\vmu_0$ and $\vSigma_0$.
This combination assumes that data from a single regime arrives in sequential \textit{blocks} of time
of length $r_t$.
This choice of \cPrior was first studied in \cite{adams2007bocd}.

\noindent  \underline{\texttt{RL[1]-OUPR*}}  (greedy runlength with OU and prior reset):
\label{sec:SPR}
This is a new combination we consider in this paper,
which is designed to accommodate both gradual changes and sudden changes. More precisely, we assume
$\auxv_t = r_t$, and
we choose the conditional prior as 
either a hard
reset to the prior,
if $\nu_t(r_t) > \varepsilon$,
or a convex combination
of the prior and the previous belief state (using an \texttt{OU} process),
if $\nu_t(r_t) \leq \varepsilon$.
That is,
we define the conditional prior as
\begin{equation}
\label{eq:SPR-equation-gt}
    g_t(r_t, \data_{1:t-1}) =
    \begin{cases}
        \vmu_0\,(1 - \nu_t(r_t)) + \vmu_{(r_t)}\,\nu_t(r_t) & \nu_t(r_t) > \varepsilon,\\
        \vmu_0 & \nu_t(r_t) \leq \varepsilon,
    \end{cases}
\end{equation}
\begin{equation}
\label{eq:SPR-equation-Gt}
   G_t(r_t, \data_{1:t-1}) =
    \begin{cases}
        \vSigma_0\,(1 - \nu_t(r_t)^2) + \vSigma_{(r_t)}\,\nu_t(r_t)^2 & \nu_t(r_t) > \varepsilon,\\
        \vSigma_0 & \nu_t(r_t) \leq \varepsilon.
    \end{cases}
\end{equation}
Here 
$\nu_t(r_t)=p(r_t \cond \data_{1:t})$, with $r_t=r_{t-1}+1$,
is the probability we are continuing a segment, and $\nu_t(r_t)$
with $r_t = 0$ is the probability of a changepoint.
For details on how to compute $\nu_t(r_t)$,
see  \eqref{eq:rl-oupr-weight}.
The value of the threshold parameter $\varepsilon$ controls whether an abrupt change or a gradual change should take place.
In the limit when $\varepsilon =1$, this new combination does not learn, 
since it is always doing a hard
reset to the initial beliefs at time $t=0$.
Conversely, when $\varepsilon=0$, we obtain an OU-type update weighted by $\nu_t$.
When $\varepsilon=0.5$, we revert back to prior beliefs when the most likely hypothesis is that a changepoint has just occurred. 
Finally, we remark that the above combination allows us to make use of non-Markovian choices for \cModel, as we see in Section \ref{experiment:KPM}.
This is, to the best of our knowledge, a new combination that has not been proposed
in the previous literature;
for further details see Appendix \ref{sec:rl-spr-implementation}.

\noindent \underline{\texttt{CPL-Sub}} (changepoint location with subset of data):
for $\auxv_t = s_{1:t}$,
this conditional prior constructs the mean and covariance  as
\begin{equation}
\begin{aligned}
    g_t(s_{1:t}, \data_{1:t-1}) &= \vmu_{(s_{1:t-1})},\\
    G_t(s_{1:t}, \data_{1:t-1}) &= \vSigma_{(s_{1:t-1})},
\end{aligned}
\end{equation}
where  $\vmu_{(s_{1:t-1})}, \vSigma_{(s_{1:t-1})}$  denotes the posterior belief computed using
the observations for entries where $s_{1:t-1}$ have value of $1$.
This combination assumes that data from the current regime is scattered from the past history.
That is, it assumes that data from a past regime could become relevant again at a later date.
This combination has been studied in \cite{nguyen2017vcl}.

\noindent \underline{\texttt{CPL-MCI}}  (changepoint location with multiplicative covariance matrix):
for  $\auxv_t = s_{1:t}$, 
this choice of conditional prior maintains the prior mean, but increases
the norm of the prior covariance by a constant term $\beta \in (0,1)$. More precisely, we have that
\begin{equation}
\begin{aligned}
    g_t(s_{1:t}, \data_{1:t-1}) &= \vmu_{(s_{1:t-1)}},\\
    G_t(s_{1:t}, \data_{1:t-1}) &=  
    \begin{cases}
        \beta^{-1}\vSigma_{(s_{1:t-1})}  & s_t = 1\,,\\
        \vSigma_{(s_{1:t-1})}  & s_t = 0\,.
    \end{cases}
\end{aligned}
\end{equation}
This combination was first proposed in \cite{li2021onlinelearning}.

\noindent \underline{\texttt{CPT-MMPR}}  (changepoint timestep using moment-matched prior reset):
for  $\auxv_t = \vzeta_{t}$, with $\vzeta_t = \{\zeta_{1,t}, \ldots, \zeta_{\ell,t}\}$,
and $\zeta_{\ell,t}$ the position of the last changepoint,
the work of \cite{fearnhead2011adaptivecp}
assumes a dependence structure between changepoints.
That is, to build the conditional prior mean and covariance,  they consider the past $\data_{\zeta_{\ell,t}:t-1}$
datapoints whenever $\zeta_{\ell,t} \leq t-1$
and a moment-matched approximation to the mixture density over all possible subset densities since the last changepoint
whenever $\zeta_{\ell,t} = t$.
Mathematically, if 
$\zeta_{\ell,t} \leq t-1$, the prior mean and the prior covariance take the form
\begin{equation}
\begin{aligned}
    g(\vzeta_{t}, \data_{1:t-1}) &= \vmu_{(\zeta_{\ell,t}:t-1)},\\
    G(\vzeta_{t}, \data_{1:t-1}) &= \vSigma_{(\zeta_{\ell,t}:t-1)}.
\end{aligned}
\end{equation}
If $\zeta_{\ell,t} = t$, the conditional prior mean and conditional prior covariance are built using a moment-matching approach.
For the Gaussian case,
moment-matching is equivalent to minimising a KL divergence \citep{minka2013expectationpropagation}.
This takes the form
\begin{equation}\label{eq:cprior-cpt-mmpr}
\begin{aligned}
    g(\vzeta_t, \data_{1:t-1}) = \vmu_t\,,\\
    G(\vzeta_t, \data_{1:t-1}) = \vSigma_t,
\end{aligned}
\end{equation}
where
\begin{equation}
    \vmu_t, \vSigma_t = \argmin_{\vmu, \vSigma}
    {\boldsymbol{\rm D}}_{\rm KL}\left(
        \sum_{\zeta_{\ell,t-1}=1}^{t-1}\tilde{q}(\vtheta_t \cond \zeta_{\ell,t-1}, \zeta_{\ell,t}, \data_{1:t-1})
        \,||\,
        {\cal N}(\vtheta_t \cond \vmu, \vSigma)
    \right).
\end{equation}
Here, ${\boldsymbol{\rm D}}_{\rm KL}$ denotes the KL divergence
and $\tilde{q}(\vtheta_t \cond \zeta_{\ell,t-1}, \zeta_{\ell,t}, \data_{1:t-1} )$ is the unnormalised density
\begin{equation*}
\begin{aligned}
    &\tilde{q}(\vtheta_t \cond \zeta_{\ell,t-1}, \zeta_{\ell,t}, \data_{1:t-1})\\
    &= p(\zeta_{\ell,t-1} \cond \data_{1:t-1})\,p(\zeta_{\ell,t} \cond \zeta_{\ell,t-1}, \data_{1:t-1})\,{\cal N}(\vtheta_t \cond \vmu_{(\zeta_{\ell,t-1}:t-1)}, \vSigma_{(\zeta_{\ell,t-1}:t-1)}).
\end{aligned}
\end{equation*}
Choosing \texttt{MMPR} couples the choice of \cPrior with the algorithmic choice \cWeight
through $p(\zeta_{\ell,t-1} \cond \data_{1:t-1})$ and the choice $p(\zeta_{\ell,t} \cond \zeta_{\ell,t-1}, \data_{1:t-1})$.
For an example of \texttt{MMPR} with \texttt{RL} choice of \cAux, see Appendix \ref{sec:rl-mmpr-implementation}.

\subsection{Algorithm to compute the posterior over model parameters \cPosteriorraw}
\label{sec:A1}

This section presents algorithms for estimating the density $q(\vtheta_{t};\, \auxv_{t}, \data_{1:t})$;
we focus on methods that yield Gaussian posterior densities.
Specifically, we are interested in practical approaches for approximating the conditional Bayesian posterior,
as given in \eqref{eq:conditional-posterior}.

There is a vast body of literature on methods for estimating the posterior over model parameters,
many of which were introduced in Chapter \ref{ch:recursive-bayes}.
Here, we focus on three common approaches for computing the Gaussian posterior:
conjugate updates (\texttt{Cj}), 
linear-Gaussian approximations (\texttt{LG}),
and variational Bayes (\texttt{VB}).
For an overview of choices of \cPosterior
and a comparison in terms of their computational complexity, see
Table 3 in \cite{jones2024bong}.
 
\subsubsection{Conjugate updates (\texttt{Cj})}

Conjugate updates (\texttt{Cj}) provide a classical approach for computing the posterior by leveraging conjugate prior distributions.
Conjugate updates occur when the functional form of the conditional prior $\pi(\vtheta_t; \auxv_t, \data_{1:t-1})$
matches that of the measurement model $p(\vy_t \cond \vtheta, \vx_t)$ \citep[Section 3.3]{robert2007bayesianbook}.
This property allows the posterior to remain within the same family as the prior,
which leads to analytically tractable updates and facilitates efficient recursive estimation.

A common example is the conjugate Gaussian model, where the measurement model is Gaussian with known variance and the prior is a multivariate Gaussian. This results in closed-form updates for both the mean and covariance. Another example is the Beta-Bernoulli pair, where the measurement model follows a Bernoulli distribution with an unknown probability, and the prior is a Beta distribution.
See e.g., \cite{Bernardo94,West97} for details.

The recursive nature of conjugate updates makes them particularly useful for real-time or sequential learning scenarios, where fast and efficient updates are crucial.

\subsubsection{Linear-Gaussian approximation (\texttt{LG}) }

The linear-Gaussian (\texttt{LG}) method builds on the conjugate updates (\texttt{Cj}) above.
More precisely, the prior is Gaussian and the measurement model is approximated by a linear Gaussian model,
which simplifies computations.

The prior over model parameters is taken as:
\begin{equation}
\pi(\vtheta_t;\, \auxv_t, \data_{1:t-1}) = \mathcal{N}\left(\vtheta_t \cond \vmu_{t-1}^{(\auxv_t)}, \vSigma_{t-1}^{(\auxv_t)}\right),
\end{equation}
where $\vmu_{t-1}^{(\auxv_t)}$ and $\vSigma_{t-1}^{(\auxv_t)}$ are the mean and covariance, respectively.
We use the measurement function $h$ to define a first-order approximation $\bar{h}_t$ around the prior mean which is given by
\begin{equation}
    \bar{h}_t(\vtheta_t, \vx_t) = h\left(\vmu_{t-1}^{(\auxv_t)}, \vx_t\right) +
    \vH_t\,\left(\vtheta_t - \vmu_{t-1}^{(\auxv_t)}\right)\,.
\end{equation}
Here, $\vH_t$ is the Jacobian of $h(\vtheta,\vx_t)$ with respect to $\vtheta$, evaluated at $\vmu_{t-1}^{(\auxv_t)}$.
The approximate posterior measure is given by
\begin{equation}
\begin{aligned}
    q(\vtheta_t; \auxv_t, \data_{1:t})
    &\propto \mathcal{N}(\vy_t \cond \bar{h}_t(\vtheta_t, \vx_t), \vR_t)\,\pi(\vtheta_t;\, \auxv_t,\data_{1:t-1})
    \\
    &= \mathcal{N}(\vy_t \cond \bar{h}_t(\vtheta_t, \vx_t), \vR_t)\,\mathcal{N}\left(\vtheta_t \cond \vmu_{t-1}^{(\auxv_t)}, \vSigma_{t-1}^{(\auxv_t)}\right)
    \\
    &\propto \mathcal{N}(\vtheta_t \cond \vmu_t^{(\auxv_t)}, \vSigma_t^{(\auxv_t)}),
\end{aligned}
\end{equation}
where  $\vR_t$ is a known noise covariance matrix of the measurement $\vy_t$.
Under the \texttt{LG} algorithmic choice, the updated equations are
\begin{equation}\label{eq:ekf-update-step}
\begin{aligned}
    \hat{\vy}_t^{(\auxv_t)} &= h\left(\vmu_{t-1}^{(\auxv_t)}, \vx_t\right),\\
    \vS_t^{(\auxv_t)} &= \vH_t\,\vSigma_{t-1}^{(\auxv_t)}\,\vH_t^\intercal + \vR_t,\\
    \vK_t^{(\auxv_t)} &= \vSigma_{t-1}^{(\auxv_t)}\,\vH_t^\intercal\,\left(\vS_t^{(\auxv_t)}\right)^{-1},\\
    \vmu_t^{(\auxv_t)} &= \vmu_{t-1}^{(\auxv_t)} + \vK_t^{(\auxv_t)}\left(\vy_t - \hat{\vy}_t^{(\auxv_t)}\right),\\
    \vSigma_t^{(\auxv_t)} &=
    \vSigma_{t-1}^{(\auxv_t)} - \left(\vK_t^{(\auxv_t)}\right)\,\left(\vS_t^{(\auxv_t)}\right)\,\left(\vK_t^{(\auxv_t)}\right)^\intercal.
\end{aligned}
\end{equation}

\subsubsection{Variational Bayes (\texttt{VB})}
Here, we consider an extension of the variational Bayes method introduced in Section \ref{sec:variational-bayes},
where we condition on the auxiliary variable $\auxv_t$.
We have the following optimisation problem
 for the posterior variational parameters:
\begin{equation}\label{eq:variational-bayes-objective}
    (\vmu_t, \vSigma_t) =
    \argmin_{\vmu, \vSigma}
    {\boldsymbol{\rm D}}_{\rm KL}
    \left(\mathcal{N}(\vtheta_t \cond \vmu, \vSigma) \,\|\, p(\vy_t \cond \vtheta_t, \vx_t)\,\pi(\vtheta_t;\,\auxv_t, \data_{1:t-1})\right),
\end{equation}
where
$\pi_t(\vtheta_t;\,\auxv_t)$ is the chosen prior distribution \cPrior.

\subsubsection{Alternative methods}

Alternative approaches for handling nonlinear 
or nonconjugate measurements have been proposed,
such as sequential Monte Carlo (SMC) methods \citep{freitas2000smcneuralnets},
and ensemble Kalman filters (EnKF) \citep{Roth2017enkf}.
These sample-based methods are particularly advantageous when the dimensionality of $\vtheta$
is large,
or when a more exact posterior approximation is required,
providing greater flexibility in non-linear and non-Gaussian environments.

Generalised Bayesian methods, such as \cite{mishkin2018slang,knoblauch2022gvi}, generalise the VB update
of \eqref{eq:variational-bayes-objective}
by allowing the right-hand side to be a loss function.
Alternatively, online gradient descent  methods like \cite{bencomo2023implicit} emulate state-space modelling via gradient-based optimisation.


\subsection{Weighting function for auxiliary variable \cWeightraw}
\label{sec:choice-weighting-function}

The  term
$\nu_{t}(\auxv_{t})$
 defines the weights over possible values of the auxiliary variable
  \cAux.
We compute it as the marginal posterior distribution
$\nu_{t}(\auxv_{t})=p(\auxv_t\cond \data_{1:t})$ (see e.g.,  \cite{adams2007bocd, fearnhead2007line, fearnhead2011adaptivecp, li2021onlinelearning})
or with  \textit{ad-hoc} rules (see e.g., \cite{nassar2022bam, abeles2024adaptive,titsias2023kalman}).
In the former case,
the weighting function takes the form
\begin{equation}
\label{eq:weighting-marginal-posterior}
\begin{aligned}
    \nu_t(\auxv_t)
    &= p(\auxv_t \cond \data_{1:t})\\
    &= 
    p(\vy_t \cond \vx_t, \auxv_t, \data_{1:t-1})\,
    \int_{\auxv_{t-1}\in\vPsi_{t-1}}
    p(\auxv_{t-1} \cond \data_{1:t-1})\,
    p(\auxv_t \cond \auxv_{t-1}, \data_{1:t-1}) \d \auxv_{t-1},
\end{aligned}
\end{equation}
where one assumes that $\vy_t$ is conditionally independent of $\auxv_{t-1}$,
given $\auxv_t$, and $\vx_t$ is an exogenous vector.
The first term on the right hand side of
\eqref{eq:weighting-marginal-posterior}
is known as the conditional posterior predictive,
and is given by
\begin{equation}
    p(\vy_t \cond \vx_t, \auxv_t, \data_{1:t-1}) =
    \int p(\vy_t \cond \vtheta_t, \vx_t)\,\pi(\vtheta_t;\, \auxv_t, \data_{1:t-1}) \d\vtheta_t.
\end{equation}
This integral over $\vtheta_t$ 
may require approximations,
as we discussed in Section~\ref{sec:A1}.
Furthermore,
the integral over $ \auxv_{t-1}$
in \eqref{eq:weighting-marginal-posterior}
may also require approximations,
depending on the nature of the auxiliary variable $\auxv_t$, and the modelling assumptions for
$p(\auxv_t \cond \auxv_{t-1}, \data_{1:t-1})$.
We provide some examples below.


\subsubsection{Discrete auxiliary variable (DA)}

Here we assume the 
auxiliary variable takes values in a discrete space $\auxv_t \in \vPsi_t$.
The weights for the discrete auxiliary variable (\texttt{DA}) can be computed with a fixed number of hypotheses $K\geq 1$  or with a growing number of hypotheses
if the cardinality of $\vPsi_t$ increases through time;
we denote these cases by \texttt{DA[K]} and \texttt{DA[inf]} respectively.
Below, we provide three examples that estimate the weights under \texttt{DA[inf]} recursively.

\noindent \underline{\texttt{RL}}  (runlength with Markovian assumption): for $\auxv_t = r_t$, the work of \cite{adams2007bocd}
takes
\begin{equation}\label{eq:transition-rl-markov}
    p(r_t \cond r_{t-1}, \data_{1:t-1}) =
    \begin{cases}
        1 - H(r_{t-1}) & \text{if } r_t = r_{t-1} + 1,\\
        H(r_{t-1}) & \text{if } r_t = 0, \\
        0 & \text{otherwise},
    \end{cases}
\end{equation}
where $H: \mathbb{N}_0 \to (0,1)$ is the hazard function. A popular choice is to take $H(r)=\kappa\in(0,1)$ to be a fixed
constant hyperparameter known as the hazard rate.
The choice \RLPR[inf] 
is known as the Bayesian online changepoint detection model (BOCD).

\noindent \underline{\texttt{CPL}}  (changepoint location): for $\auxv_t=s_{1:t}$, the work of \cite{li2021onlinelearning} takes
\begin{equation}\label{eq:transition-cpl-independent}
    p(\tilde{s}_{1:t} \cond s_{1:t-1}, \data_{1:t-1}) = 
    \begin{cases}
        \kappa & \text{if } \left([\tilde{s}_{1:t} \setminus \tilde{s}_t] = s_{1:t-1}\right) \text{ and } \tilde{s}_t = 1,\\
        1 - \kappa & \text{if } \left([\tilde{s}_{1:t} \setminus \tilde{s}_t] = s_{1:t-1}\right) \text{ and } \tilde{s}_t = 0,\\
        0 & \text{otherwise},
    \end{cases}
\end{equation}
i.e., the sequence of changepoints at time $t$ correspond to the sequence of changepoints up to time $t-1$,
plus a newly sampled value for $t$.
See Appendix \ref{c-aux:CPL} for details
 on how to compute $\nu_t(s_{1:t})$.

\noindent \underline{\texttt{CPT}} (changepoint timestep with Markovian assumption): for $\auxv_t = \vzeta_t$, the work of \cite{fearnhead2007line}
takes
\begin{equation}
    p(\vzeta_t \cond \vzeta_{t-1}, \data_{1:t-1}) = p(\zeta_{\ell,t} \cond \zeta_{\ell,t-1}) = J(\zeta_{\ell,t} - \zeta_{\ell,t-1}),
\end{equation}
with $J:\mathbb{N}_0 \to (0,1)$ a probability mass function.
Note that $\zeta_{\ell,t} - \zeta_{\ell,t-1}$ is the distance between two changepoints, i.e., a runlength.
In this sense, $\zeta_{\ell,t} - \zeta_{\ell,t-1} = r_t$,
which relates \texttt{CPT} to \texttt{RL}.
See their paper for details
on how to compute $\nu_t(\vzeta_{t})$.

\paragraph{Low-memory variants --- from \texttt{DA[inf]} to \texttt{DA[K]}} 
In the examples above, the number of computations to obtain 
$\sum_{\auxv_{t}} \nu_{t}(\auxv_{t})$ grows in time.
To fix the computational cost, one can restrict the sum
to be over a subset ${\cal A}_t$ of the space of $\auxv_t$ with cardinality $|{\cal A}_t| = K \geq 1$.
Each element in the set ${\cal A}_t$ is called a hypothesis and given $K\geq 1$, we 
keep the $K$ most likely elements
---according to $\nu_{t}(\auxv_{t})$--- in ${\cal A}_t$.
We then define the normalised weighting function
\begin{equation}
    \hat{\nu}_{t}(\auxv_{t}) = 
    \frac{\nu_{t}(\auxv_{t})}{\sum_{\auxv'_{t} \in {\cal A}_t} \nu_{t}(\auxv'_{t})},
\end{equation}
which we use instead of $\nu_{t}(\auxv_{t})$.
For example, in \texttt{RL} above, ${\cal A}_{t-1} = \{r_{t-1}^{(k)} : k = 1, \ldots, K\}$
are the unique $K$ most likely runlengths where the superscript represents the ranking according to $\nu_{t-1}(\cdot)$.
Then, at time $t$,
the augmented set $\bar{\cal A}_t$ becomes $({\cal A}_{t-1}+1)\cup \{0\}$, where the sum is element-wise,
and we then compute the $K$ most likely elements of $\bar{\cal A}_t$ to define ${\cal A}_t$.
In \texttt{CPL}, ${\cal A}_{t-1} $
contains the $K$ most likely sequences of changepoints,    $\bar{\cal A}_{t} $ is defined as the collection of the $2\,K$ sequences where each sequence of ${\cal A}_{t-1}$ has a zero or one concatenated at the end.
Finally, the $K$ most likely elements in $\bar{\cal A}_{t}$ define ${\cal A}_{t}$.
This style of pruning is common in segmentation methods; see, e.g., \cite{saatcci2010GPBOCD},
but other pruning are also possible, such as those proposed by \cite{li2021onlinelearning},
or sampling-based approaches; see e.g. \cite{Doucet2000}.

\paragraph{Other choices for \texttt{DA[K]}}
Finally, some choices of weighting functions are derived using ad-hoc rules,
meaning that explicit or approximate solutions to the Bayesian posterior are not needed.
One of the most popular choices of ad-hoc weighting functions are mixture of experts,
which weight different models according to a given criterion.

\noindent \underline{\texttt{ME}}  (mixture of experts with algorithmic weighting):
Consider $\auxv_t = \valpha_t$.
Let $\valpha_{t,k}=k$ denote the $k$-th \textit{configuration} over \cPrior.
Next, denote by $\vw_t = \{\vw_{t,1}, \ldots, \vw_{t,K}\}$ a set of weights,
where $\vw_{t,k}$ corresponds to the weight for the $k$-th expert at time $t$.
The work of \cite{chaer1997mixturekf}
considers the weighting function
\begin{equation}
    \nu_t(\vw_{t})_k = \frac{\exp(\vw_{t,k}^\intercal\,\vy_t)}{\sum_{j=1}^K \exp(\vw_{t,j}^\intercal\,\vy_t)},
\end{equation}
for $k=1,\ldots,K$.
The set of weights $\vw_t$ are determined by maximising the surrogate gain function
\begin{equation}
    {\cal G}_t(\vw_t)
    = p(\vy_t \cond \vx_t, \data_{1:t-1})
    = \sum_{k=1}^K p(\vy_t \cond \vx_t, \valpha_{t,k}, \data_{1:t-1})\,\nu_t(\vw_{t})_k,
\end{equation}
with respect to $\vw_{t,k}$ for all $k = 1, \ldots, K$ at every timestep $t$.

We write \texttt{DA[K]}, where \texttt{K} is the number hypothesis, for methods that use \texttt{K} hypotheses at most. 
On the other hand, we write \texttt{DA[inf]} when we do not 
impose a bound on the number of hypotheses used.
Note that even when the choice of \cWeight is built using \texttt{DA[inf]},
one can modify it to make it \texttt{DA[K]}.

\paragraph{Discrete auxiliary variable with greedy hypothesis selection (\texttt{DA[1]})}
A special case of the above is \texttt{DA[1]}, where we employ a single hypothesis.
In these scenarios, 
we set $\nu(\auxv_t) = 1$ where $\auxv_t$ is the most likely hypothesis.

\noindent \underline{\texttt{RL}} (Greedy runlength):
For $\auxv_t = r_t$ and $\texttt{DA[1]}$,
we take
\begin{equation}\label{eq:transition-rl-spr}
    p(r_t \cond r_{t-1}, \data_{1:t-1}) =
    \begin{cases}
        1 - \kappa & \text{if } r_t = r_{t-1} + 1,\\
        \kappa & \text{if } r_t = 0, \\
        0 & \text{otherwise}.
    \end{cases}
\end{equation}
Our choice of \cWeight 
is based on the marginal predictive likelihood ratio,
which is derived from the computation of $p(r_t \cond \data_{1:t})$
under either either
an increase in the runlength ($r_t^{(1)} = r_{t-1} + 1)$
or a reset of the runlength ($r_t^{(0)} = 0$).
Under these assumptions, the form of $\nu_t(r_t^{(1)})$
is
\begin{equation}\label{eq:rl-oupr-weight}
    \nu_t(r_t^{(1)}) = \frac{p(\vy_t \cond r_t^{(1)}, \vx_t, \data_{1:t-1})\,(1 - \kappa)}{p(\vy_t \cond r_t^{(0)}, \vx_t, \data_{1:t-1})\,\kappa + p(\vy_t \cond r_t^{(1)}, \vx_t, \data_{1:t-1})\,(1-\kappa)}.
\end{equation}
For details on the computation of \cWeight, see Appendix \ref{sec:rl-spr-implementation}.
For a detailed implementation of
\cAux \texttt{RL},
\cPrior \texttt{OUPR},
\cWeight \texttt{DA[1]}, and
\cPosterior \texttt{LG}, 
see Algorithm \ref{algo:rl-spr-step} in the Appendix.

For example, \texttt{RL[1]} is a runlength $r_t$ with a single hypothesis.
We provide another example next.

\noindent \underline{\texttt{CPL}} (changepoint location with retrospective membership):
for $\auxv_t=s_{1:t}$, the work of \cite{nassar2022bam}
evaluates the probability of past datapoints belonging in the current regime.
In this scenario,
\begin{equation}
    p(s_{1:t} \cond s_{1:t-1}, \data_{1:t-1}) = p(s_{1:t} \cond \data_{1:t-1}),
\end{equation}
so that
\begin{equation}
    p(s_{1:t} \cond \data_{1:t}) \propto
    p(s_{1:t} \cond \data_{1:t-1})\,p(\vy_t \cond \vx_t, s_{1:t}, \data_{1:t-1}).
\end{equation}

This method allows for exact computation by summing over all possible $2^t$ elements.
However, to reduce the computational cost, they propose a discrete optimisation over possible values
$\{\nu_t(s_{1:t})\,:\, s_{1:t} \in \{0,1\}^t\}$, where
$\nu_t(s_{1:t}) = p(s_{1:t} \cond \data_{1:t})$.
Then, the hypothesis with highest probability is stored and gets assigned a weight of one.

\subsubsection{Continuous auxiliary variable (\texttt{CA})}
\label{sec:CA}

Here, we briefly discuss continuous auxiliary variables \texttt{(CA)}.
For some choices of $\auxv_t$ and transition densities $p(\auxv_t \cond \auxv_{t-1}, \data_{1:t-1})$,
computation of \eqref{eq:weighting-marginal-posterior} becomes infeasible.
In these scenarios, 
we use simpler approximations.
We give an example below.

\noindent \underline{\texttt{CPP}} (Changepoint probability with empirical Bayes estimate):
for $\auxv_t=\upsilon_t$, consider
\begin{equation}
    p(\upsilon_t \cond \upsilon_{t-1}, \data_{1:t-1}) = p(\upsilon_t),
\end{equation}
so that
\begin{equation}
    p(\upsilon_t \cond \data_{1:t}) \propto
    p(\upsilon_t)\,p(\vy_t \cond \vx_t, \upsilon_t).
\end{equation}
The work of \cite{titsias2023kalman} 
takes $\nu_t(\upsilon_t)=\delta(\upsilon_t - \upsilon_t^*)$, where $\delta$ is the Dirac delta function
and $\upsilon_t^*$ is a point estimate centred at
the maximum of the marginal
posterior predictive likelihood:
\begin{equation}
    \upsilon_t^* = \argmax_{\upsilon \in [0,1]} p(\vy_t \cond \vx_t, \upsilon, \data_{1:t-1}).
    \label{eqn:nut}
\end{equation}
In practice, \eqref{eqn:nut} is approximated by taking gradient steps towards the minimum.
This is a form of empirical Bayes approximation,
since we compute the most likely value of the prior 
after marginalizing out $\vtheta_t$.
The work of \cite{Galashov2024} considers a modified configuration with choice of \cAux
${\bm \upsilon}_t \in (0,1)^\dimstate$.

\section{Unified view of examples in the literature}
\label{sec:literature}

Table \ref{tab:related-bone-methods} shows that many existing methods can be written as instances
of BONE.
Rather than specifying the choice of 
(M.1), we instead
write the task for which it was designed,
as discussed in Section \ref{sec:tasks}.
We will experimentally compare a subset of these methods in Section \ref{section:experiments}.

The methods presented in Table \ref{tab:related-bone-methods} can be directly applied
to tackle any of the problems mentioned in Section \ref{sec:tasks}.
However, as choice of \cModel, we specify the task under which the configuration was introduced.\footnote{
In general, the components of BONE can be thought as 
the building blocks for new methods. Some of these combinations 
would not be useful, but they can be employed nonetheless.
}
\begin{table}[htb]
    \centering
    \scriptsize
    \renewcommand{\arraystretch}{1} 
    \begin{tabular}{l|c|c|c|c|c|c}
         \hline
         \textbf{Reference}  & \textbf{Task} & \textbf{\cAuxname} & \textbf{\cPriorname} & \textbf{\cPosteriorname} & \textbf{\cWeightname} \\
         \hline
         \cite{kalman1960} &  filtering & \texttt{C} & \texttt{LSSM} & \texttt{LG} & \texttt{DA[1]} \\
         \cite{magill1965optimaladaptivefilter} &  filtering & \texttt{ME} & \texttt{LSSM} & \texttt{LG} & \texttt{DA[K]}\\
         \cite{chang1978switchingkf} &  filtering & \texttt{ME} & \texttt{LSSM} & \texttt{LG} & \texttt{CA} \\
         \cite{chaer1997mixturekf} &  filtering & \texttt{ME} & \texttt{LSSM} & \texttt{LG} & \texttt{DA[K]}\\
         \cite{ghahramani2000vsssm} &  SSSM & \texttt{ME} & \texttt{Static} & \texttt{VB} & \texttt{CA} \\
         \cite{adams2007bocd} &  seg. & \texttt{RL} & \texttt{PR} & \texttt{Cj} & \texttt{DA[inf]}\\
         \cite{fearnhead2007line} &  seg. \& preq. & \texttt{CPT}/\texttt{ME} & \texttt{PR} & \texttt{Any} & \texttt{DA[inf]}\\
         \cite{wilson2010-bocd-hazard-rate} &  seg. & \texttt{RLCC} & \texttt{PR} & \texttt{Cj} & \texttt{DA[inf]}\\
         \cite{fearnhead2011adaptivecp} &  seg. & \texttt{CPT}/\texttt{ME} & \texttt{MMPR} & \texttt{Any} & \texttt{DA[inf]}\\
         \cite{mellor2013changepointthompsonsampling} &  bandits & \texttt{RL}  & \texttt{PR} & \texttt{Cj} & \texttt{DA[inf]}\\ 
         \cite{nguyen2017vcl} &  OCL & \texttt{CPL} & \texttt{Sub} & \texttt{VB} & \texttt{DA[1]}\\
         \cite{knoblauch2018varbocd} &  seg. & \texttt{RL}/\texttt{ME} & \texttt{PR} & \texttt{Cj} & \texttt{DA[inf]}\\
         \cite{kurle2019continual} &  OCL & \texttt{CPV} & \texttt{Sub} & \texttt{VB} & \texttt{DA[1]} \\
         \cite{li2021onlinelearning} &  OCL & \texttt{CPL} & \texttt{MCI} & \texttt{VB} & \texttt{DA[inf]} \\
         \cite{nassar2022bam} &  bandits \& OCL & \texttt{CPV} & \texttt{Sub} & \texttt{LG} & \texttt{DA[1]} \\
         \cite{liu2023bdemm} &  preq. & \texttt{ME} & \texttt{C},\texttt{LSSM} & \texttt{Any} & \texttt{DA[K]} \\
         \cite{chang2023lofi} & OCL & \texttt{C} & \texttt{ACI} & \texttt{LG} & \texttt{DA[1]} \\
         \cite{titsias2023kalman} &  OCL & \texttt{CPP} & \texttt{OU} & \texttt{LG} & \texttt{CA} \\
         \cite{Galashov2024} & CL & \texttt{CPP} & \texttt{OU} & \texttt{VB} & \texttt{CA} \\
         \cite{abeles2024adaptive} &  preq. & \texttt{ME} & \texttt{LSSM} & \texttt{LG} & \texttt{DA[K]}\\
         \RLSPR (ours) & any & \texttt{RL} & \texttt{SPR} & \texttt{Any} & \texttt{DA[1]}
    \end{tabular}
    \caption{
    List of methods ordered by publication date.
    The tasks 
    are discussed in Section \ref{sec:tasks}.
    We use the following abbreviations:
    SSSM means switching state space model;
    (O)CL means (online) continual learning;
    seg.~means segmentation;
    preq.~means prequential.
    Methods that consider two choices of \cAux are denoted by `\texttt{X}/\texttt{Y}'.
    This corresponds to a double expectation in \eqref{eq:bone-prediction}---one for each choice of auxiliary variable.
    }
    \label{tab:related-bone-methods}
\end{table}

\section{Worked examples}

\label{sec:caux-weights}
In this section, we provide a detailed calculation of $\nu_{t}(\auxv_t)$ for some choices of $\auxv_t$.
We consider a choice of \cModel to be linear Gaussian with known observation variance $\vR_t$, i.e.,
\begin{equation}\label{eq:linear-gaussian-measurement-model}
    p(\vy_t \cond \vtheta, \vx_t) = {\cal N}(\vy_t \cond \vx_t^\intercal\,\vtheta_t, \vR_t).
\end{equation}

\subsection{Runlength with prior reset (\texttt{RL-PR})}
\label{sec:rl-pr-implementation}

\subsubsection{Unbounded number of hypotheses \texttt{RL[inf]-PR}}
The work in \cite{adams2007bocd} takes
$\auxv_t = r_t$ to be the runlength, with $r_t \in \{0, 1, \ldots, t\}$, that
that counts the number of steps since the last changepoint.
Assume the runlength follows the dynamics \eqref{eq:transition-rl-markov}.
We consider $\nu_{t}(r_{t}) = p(r_t | \data_{1:t})$ such that
\begin{equation}
    p(r_t \cond \data_{1:t}) =
    \frac{p(r_t, \data_{1:t})}{\sum_{\hat{r}_t=0}^t p(\hat{r}_t, \data_{1:t})},
\end{equation}
for $r_t \in \{0, \ldots, t\}$.
The \texttt{RL-PR} method estimates $p(r_t, \data_{1:t})$ for all $r_t \in \{0, \ldots, t\}$ at every timestep.
To estimate this value recursively, we  sum over all possible previous runlengths as follows
\begin{equation}\label{eq:prrl-lj}
\begin{aligned}
    &p(r_t,  \data_{1:t})\\
    &= \sum_{r_{t-1}=0}^{t-1} p(r_t,  r_{t-1}, \data_{1:t-1}, \data_t)\\
    &= \sum_{r_{t-1}=0}^{t-1} p(r_{t-1} , \data_{1:t-1})
    \,p(r_t \cond r_{t-1}, \data_{1:t-1})\, p(\vy_t \cond r_t, r_{t-1},\vx_t, \data_{1:t-1})\\
    &= p(\vy_t \cond r_t , \vx_t, \data_{1:t-1}) \sum_{r_{t-1}=0}^{t-1}
    p(r_{t-1} , \data_{1:t-1})
    p(r_t\cond r_{t-1}).
\end{aligned}
\end{equation}
In the last equality, there are two implicit assumptions,
(i) the runlength at time $t$ is conditionally independent of the data $\data_{1:t-1}$ given the runlength at time $t-1$, and
(ii) the model is Markovian in the runlength, that is, conditioned on $r_t$, the value of $r_{t-1}$ bears no information.
Mathematically, this means that (i)  $p(r_t \cond r_{t-1}, \data_{1:t-1}) = p(r_t \cond r_{t-1})$
and (ii) $p(\vy_t \cond r_t, r_{t-1}, \data_{1:t-1}) = p(\vy_t \cond r_t, \data_{1:t-1})$. 
From \eqref{eq:prrl-lj}, we observe there are only two possible scenarios for the value of $r_t$. Either 
$r_t = 0$ or $r_t = r_{t-1} + 1$
with $r_{t-1} \in \{0, \ldots, t-1\}$.
Thus, 
$p(r_t, \data_{1:t})$ becomes
\begin{equation}\label{eq:bocd-joint}
    \begin{aligned}
        p(r_t, \data_{1:t}) &= p(\vy_t \cond r_t, \vx_t, \data_{1:t-1})\, p(r_{t-1}, \data_{1:t-1})\, p(r_t \cond r_{t-1})
        & \text{ if }r_t \geq 1\\
        p(r_t, \data_{1:t}) &= p(\vy_t \cond  r_t, \vx_t, \data_{1:t-1}) \sum_{r_{t-1}=0}^{t-1} p(r_{t-1}, \data_{1:t-1}) \, p(r_t \cond r_{t-1}) & \text{ if } r_t = 0\,.
    \end{aligned}
\end{equation}
The joint density \eqref{eq:bocd-joint} considers two possible scenarios:
either we stay in a regime considering the past $r_t \geq 1$ observations,
or we are in a new regime, in which $r_t = 0$.
Finally, note that \eqref{eq:bocd-joint} depends on three terms:
(i) the transition probability $p(r_t \cond r_{t-1})$, which it is assumed to be known,
(ii) the previous log-joint $p(r_{t-1}, \data_{1:t-1})$, with $r_{t-1} \in \{0, 1, \ldots, t-1\}$,
which is estimated at the previous timestep, and
(iii) the prior predictive density
\begin{equation}\label{eq:rl-predictive}
    p(\vy_t \cond r_t, \vx_t, \data_{1:t-1}) = \int
    p(\vy_t \cond \vtheta_{t}, \vx_t)
    \, p(\vtheta_{t} \cond r_t, \data_{1:t-1}) \d\vtheta_t.
\end{equation}
For a choice of \cModel given by \eqref{eq:linear-gaussian-measurement-model} and
a choice of \cPrior given by \eqref{eq:cprior-rl-pr},
the posterior predictive \eqref{eq:rl-predictive} takes the form.
\begin{equation}
\begin{aligned}
    p(\vy_t \cond r_t, \vx_t, \data_{1:t-1})
    &= \int {\cal N}\left(\vy_t \cond \vx_t^\intercal\,\vtheta_t, \vR_t\right)\,{\cal N}\left(\vtheta_t \cond \vmu_{t-1}^{(r_t)}, \vSigma_{t-1}^{(r_t)}\right) \d\vtheta_t\\
    &= {\cal N}\left(\vy_t \cond \vx_t^\intercal \vmu_{t-1}^{(r_{t})},\,\vx_t^\intercal\,\vSigma_{t-1}^{(r_{t})}\,\vx_t + \vR_t\right),
\end{aligned}
\end{equation}
with $r_t \in \{0, \ldots, t-1\}$.
Here,
$\left(\vmu_{t-1}^{(r_t)}, \vSigma_{t-1}^{(r_t)}\right)$ are the posterior mean and covariance
at time $t-1$ built using the last $r_t \geq 1$ observations.
If $r_t = 0$, then $(\vmu_{t-1}^{(r_t)}, \vSigma_{t-1}^{(r_{t})}) = \left(\vmu_0, \vSigma_0\right)$.

\subsubsection{Bounded number of hypotheses \texttt{RL[K]-PR}}
If we maintain a set of $K$ possible hypotheses, then $\vPsi_t = \{r_{t-1}^{(1)}, \ldots, r_{t-1}^{(K)}\} \in \{0, \ldots, t-1\}^K$
is a collection of $K$ unique runlengths obtained at time $t-1$.
Next, \eqref{eq:prrl-lj} takes the form
\begin{align}
        p(r_t, \data_{1:t}) &= p(\vy_t \cond r_t, \vx_t, \data_{1:t-1})\, p(r_{t-1}, \data_{1:t-1})\, p(r_t \cond r_{t-1})
        & \text{ if }r_t \geq 1, \label{eq:bocd-joint-fixed-K-up}\\
        p(r_t, \data_{1:t}) &= p(\vy_t \cond  r_t, \vx_t, \data_{1:t-1}) \sum_{r_{t-1} \in \vPsi_{t-1}} p(r_{t-1}, \data_{1:t-1}) \, p(r_t \cond r_{t-1}) & \text{ if } r_t = 0 \label{eq:bocd-joint-fixed-K-down}\,.
\end{align}
Here, we have that either $r_t = r_{t-1} + 1$ when $r_{t-1} \in \vPsi_{t-1}$ or $r_{t} = 0$. 
After computing $p(r_t, \data_{1:t})$ for all $K+1$ possibles values of $r_t$, a choice is made to keep $K$ hypotheses.
For timesteps $t \leq K$, we evaluate all possible hypotheses until $t > K$.

Algorithm \ref{algo:rl-pr-step} shows an update step under this process when we maintain a set of $K$ possible
hypotheses.

\subsection{Runlength with moment-matched prior reset (\texttt{RL-MMPR})}
\label{sec:rl-mmpr-implementation}
Here, we consider a modified version of the method introduced in \cite{fearnhead2011adaptivecp}.
We consider the choice of \texttt{RL} and
adjust the choice of \cPrior for \texttt{RL-PR} introduced in Appendix \ref{sec:rl-pr-implementation} whenever $r_t =0$.
In this combination, for $r_t = 0$, we take
$\tau(\vtheta_t \cond r_t, \data_{1:t-1}) = p(\vtheta_t \cond r_t, \data_{1:t-1})$.
Next
\begin{equation}\label{eq:mmpr-conditional-prior}
\begin{aligned}
    p(\vtheta_t \cond r_t, \data_{1:t-1})
    &= \sum_{r_{t-1}=0}^{t-1} p(\vtheta_t, r_{t-1} \cond r_t, \data_{1:t-1})\\
    &= \sum_{r_{t-1}=0}^{t-1} p(r_{t-1} \cond \data_{1:t-1})\,p(r_t \cond r_{t-1})\,p(\vtheta_t \cond r_t, r_{t-1}, \vy_{1:t-1})\\
    &= \sum_{r_{t-1}=0}^{t-1} p(r_{t-1} \cond \data_{1:t-1})\,p(r_t \cond r_{t-1})\,{\cal N}(\vtheta_t \cond \vmu_{t-1}^{(r_{t-1})}, \vSigma_{t-1}^{(r_{t-1})}).
\end{aligned}
\end{equation}
Because \eqref{eq:mmpr-conditional-prior} is a mixture model, we choose a conditional prior to be Gaussian that approximates the
first two moments.
We obtain
\begin{equation}\label{eq:rl-mmpr-first-moment}
    \mathbb{E}[\vtheta_t \cond r_t, \vy_{1:t-1}] =
    \sum_{r_{t-1}=0}^{t-1} p(r_{t-1} \cond \data_{1:t-1})\,p(r_t \cond r_{t-1})\,\vmu_{t-1}^{(r_{t-1})}
\end{equation}
for the first moment, and
\begin{equation}\label{eq:rl-mmpr-second-moment}
    \mathbb{E}[\vtheta_t\,\vtheta_t^\intercal \cond r_t, \vy_{1:t-1}]
    \sum_{r_{t-1}=0}^{t-1} p(r_{t-1} \cond \data_{1:t-1})\,p(r_t \cond r_{t-1})
    \left(\vSigma_{t-1}^{(r_{t-1})} + \vmu_{t-1}^{(r_{t-1})}\,\vmu_{t-1}^{(r_{t-1})\intercal}\right)
\end{equation}
for the second moment.
The conditional prior mean and prior covariance under $r_t = 0$ take the form
\begin{equation}\label{eq:rl-mmpr-prior-reset}
\begin{aligned}
    \vmu_t^{(0)} &= \mathbb{E}[\vtheta_t \cond r_t, \vy_{1:t-1}],\\
    \vSigma_t^{(0)} &= \mathbb{E}[\vtheta_t\,\vtheta_t^\intercal \cond r_t, \vy_{1:t-1}] - \left(\mathbb{E}[\vtheta_t \cond r_t, \vy_{1:t-1}]\right)\left(\mathbb{E}[\vtheta_t \cond r_t, \vy_{1:t-1}]\right)^\intercal.
\end{aligned}
\end{equation}
Algorithm \ref{algo:rl-mmpr-step} shows an update step under this process when we maintain a set of $K$ possible
hypotheses.

\subsection{Runlength with OU dynamics and prior reset (\texttt{RL[1]-OUPR*})}
\label{sec:rl-spr-implementation}
In this section, we provide pseudocode for the new hybrid method we propose.
Specifically, our choices in BONE are:
\RLSPR for \cAux and \cPrior,
\texttt{LG} for \cPosterior, and
\texttt{DA[1]} for \cWeight.
Because of our choice of \cWeight, \RLSPR considers a single hypothesis (or runlength) which,
at every timestep, is either increased by one or set back to zero, according to the probability of a changepoint
and a threshold $\epsilon \in (0,1)$.

In essence, \RLSPR follows the logic behind \RLPR introduced in Section \ref{sec:rl-pr-implementation}
with $K=1$ hypothesis and different choice of \cPrior.
To derive the algorithm for \RLSPR at time $t > 1$, suppose $r_{t-1}$ is available
(the only hypothesis we track).
Denote by $r_t^{(1)}$ the hypothesis of a runlength increase, i.e., $r_t = r_{t-1} + 1$ and
denote by $r_t^{(0)}$ the hypothesis of a runlenght reset, i.e., $r_t = 0$.
The probability of a runlength increase under a single hypothesis takes the 
form:
\begin{equation}
\begin{aligned}
    \nu_t(r_t^{(1)})
    &= p(r_{t}^{(1)} \cond \data_{1:t})\\
    &= \frac{p(r_t^{(1)}, \data_{1:t})}{p(r_t^{(1)}, \data_{1:t}) + p(r_t^{(0)}, \data_{1:t})}\\
    &=
    \frac
    {p(\vy_t \cond r_t^{(1)}, \vx_t, \data_{1:t-1})\,p(r_{t-1}, \data_{1:t-1})\,(1 - \kappa)}
    {p(\vy_t \cond r_t^{(0)}, \vx_t, \data_{1:t-1})\,p(r_{t-1}, \data_{1:t-1})\,\kappa
    + p(\vy_t \cond r_t^{(1)}, \vx_t, \data_{1:t-1})\,p(r_{t-1}, \data_{1:t-1})\,(1 - \kappa)}
    \\
    &= 
    \frac{p(\vy_t \cond r_t^{(1)}, \vx_t, \data_{1:t-1})\,(1 - \kappa)}{p(\vy_t \cond r_t^{(0)}, \vx_t, \data_{1:t-1})\,\kappa + p(\vy_t \cond r_t^{(1)}, \vx_t, \data_{1:t-1})\,(1-\kappa)}.
    \label{eq:rl-spr-eq-posterior}
\end{aligned}
\end{equation}
where $\kappa=p(r_t \cond r_{t-1})$ with $r_t = 0$ is the prior probability of a changepoint and
and $1-\kappa=p(r_t \cond r_{t-1}$ with $r_t=r_{t-1}+1$
is the probability of continuation of the current segment.

Next, we use $\nu_t(r_t)$ to decide whether to update our parameters 
or reset them according to a prior belief according to some threshold $\epsilon$.
This implements our choice of \cPrior given in \eqref{eq:SPR-equation-gt}  and \eqref{eq:SPR-equation-Gt}.
Because we maintain a single hypothesis, the weight at the end of the update step is set to $1$.
Algorithm \ref{algo:rl-spr-step} shows an update step for \RLSPR under the choice of \cModel given by \eqref{eq:linear-gaussian-measurement-model}.

\subsection{Changepoint location with multiplicative covariance inflation \texttt{CPL-MCI}}
\label{c-aux:CPL}
The work in \cite{li2021onlinelearning}
takes $\auxv_t = s_{1:t}$ to be a $t$-dimensional vector where the
$i$-th element is a binary vector that determines a changepoint at time $t$.
Then, the sum of the entries of $s_{1:t}$ represents the total number of changepoints up to, and including, time $t$.

We take $\nu_{t}(s_{1:t}) = p(s_{1:t} | \data_{1:t})$,
which is recursively expressed as
\begin{equation}\label{eq:bcm-recursive-changepoint}
\begin{aligned}
    p(s_{1:t} \cond \data_{1:t})
    &= p(s_t, s_{1:t-1} \cond \vy_{t}, \vx_t, \data_{1:t-1})\\
    &= p(s_{1:t-1} \cond \data_{1:t-1}) p(s_t \cond s_{1:t-1}, \vx_t, \vy_t, \data_{1:t-1}).
\end{aligned}
\end{equation}
Here, $p(s_{1:t-1} \cond \data_{1:t-1})$ is inferred at the previous timestep $t-1$.
The estimate of a changepoint conditioned on the past changes and the  measurements  is
\begin{equation}
\begin{aligned}
    &p(s_t = 1 \cond s_{1:t-1}, \data_{1:t})\\
    &\qquad = \frac{p(s_t = 1)p(\vy_t \cond \vx_t, s_{1:t-1}, s_t=1, \data_{1:t-1})}
    {p(s_t = 1)p(\vy_t \cond s_t=1, \vx_t, s_{1:t-1}, \vy_{1:t-1}) + p(s_t = 0)p(\vy_t \cond s_t=0, \vx_t, s_{1:t-1}, \data_{1:t-1})}\\
    &\qquad =
    \left(1+ {\exp\left(-\log\left(
    \frac{p(s_t = 1)p(\vy_t \cond s_t=1, \vx_t, s_{1:t-1}, \vy_{1:t-1})}{p(s_t = 0)p(\vy_t \cond s_t=0, \vx_t, s_{1:t-1}, \data_{1:t-1})}
    \right)\right)}\right)^{-1} = \sigma(m_t),
\end{aligned}
\end{equation}
where $\sigma(x)= 1/(1+\exp(-x))$ and 
\begin{equation}
    m_t
    = \log\left(\frac{p(\vy_t \cond s_t=1, \vx_t, s_{1:t-1}, \data_{1:t-1})}{p(\vy_t \cond s_t=0, \vx_t, s_{1:t-1}, \data_{1:t-1})}\right)
    + \log\left(\frac{p(s_t=1)}{p(s_t=0)}\right),
\end{equation}
and similarly,
\begin{equation}
    p(s_t = 0 \cond s_{1:t-1}, \data_{1:t}) = 1 - \sigma(m_t).
\end{equation}
Finally, the transition between states is given by $p(s_{1:t} \cond s_{1:t-1}) = p(s_t)$.

\section{Algorithms}

\begin{algorithm}[H]
    \small
    \begin{algorithmic}[1]
        \REQUIRE $(\vmu_0, \vSigma_0)$ // default prior beliefs
        \REQUIRE $\data_{t} = (\vx_t, \vy_t)$  // current observation
        \REQUIRE $\{r_{t-1}^{(k)}\}_{k=1}^K \in \{0, \ldots, t-1\}^K$ // bank of runlengths at time $t-1$
        \REQUIRE $\{p(r_{t-1}^{(k)}, \data_{1:t})\}_{k=1}^K$ // joint from past hypotheses
        \REQUIRE $\left\{(\vmu_{t-1}^{(k)}, \vSigma_{t-1}^{(k)})\right\}_{k=1}^K$ // beliefs from past hypotheses
        \REQUIRE $\vx_{t+1}$ // next-step observation
        \REQUIRE $p(\vy \cond \vtheta, \vx) = {\cal N}(\vy \cond \vtheta^\intercal\vx, \vR_t)$ // Choice of \cModel
        \STATE // Evaluate hypotheses if there is no changepoint
        \FOR{$k=1,\ldots,K$}
            \STATE $r_t^{(k)} \gets r_{t-1}^{(k)} + 1$
            \STATE $p(\vy_t \cond r_t^{(k)}, \vx_t, \data_{1:t-1}) \gets {\cal N}(\vy_t  \cond \vx_t^\intercal\,\vmu_{t-1}^{(k)},\,\vx_t^\intercal\,\vSigma_{t-1}^{(k)}\,\vx_t + \vR_t)$ // posterior predictive for $k$-th hypothesis
            \STATE $p(r_t^{(k)},\,\data_{1:t}) \gets p(\vy_t \cond r_t^{(k)}, \vx_t, \data_{1:t-1})\,p(r_{t-1}^{(k)}, \data_{1:t-1})\,p(r_t^{(k)} \cond r_{t-1}^{(k)})$ // update joint density
            \STATE $(\bar{\vmu}_t^{(k)}, \bar{\vSigma}_t^{(k)}) \gets (\vmu_{t-1}^{(k)}, \vSigma_{t-1}^{(k)})$
           \STATE $\tau_t(\vtheta_t; r_t^{(k)}) \gets {\cal N}(\vtheta_t \cond \bar{\vmu}_t, \bar{\vSigma}_t)$ // choice of \cPrior
           \STATE $q_t(\vtheta_t;\, r_t^{(k)}) \propto \tau_t(\vtheta_t; r_t^{(k)})\,p(\vy_t \cond \vtheta^\intercal\vx_t, \vR_t) \propto {\cal N}(\vtheta_t \cond \vmu_t^{(k)}, \vSigma_t^{(k)})$ // following \eqref{eq:ekf-update-step}
        \ENDFOR
        \STATE // Evaluate hypothesis under a changepoint
        \STATE $r_t^{(k+1)} \gets 0$
        \STATE $p(\vy_t \cond r_t^{(k+1)}, \vx_t, \data_{1:t-1}) \gets {\cal N}(\vy_t  \cond \vx_t^\intercal\,\vmu_0,\,\vx_t^\intercal\,\vSigma_0\,\vx_t + \vR_t)$ // posterior predictive for $k$-th hypothesis
        \STATE $p(r_t^{(k+1)}, \data_{1:t}) \gets p(\vy_t \cond r_t^{(k+1)},\,\vx_t,\,\data_{1:t-1})\sum_{k=1}^K p(r_t^{(k)}, \data_{1:t})\,p(r_t^{(t+1)} \cond r_t^{(k)} - 1)$
        \STATE // Extend number of hypotheses to $K+1$ and keep top $K$ hypotheses
        \STATE $I_{1:k} = {\rm top.k}(\{p(r_t^{(1)},\,\data_{1:t}), \ldots, p(r_{t}^{(k+1)}, \data_{1:t})\},\,K)$
        \STATE $\{p(r_t^{(k)}, \data_{1:t})\}_{k=1}^K \gets {\rm slice.at}(\{p(r_t^{(k)},\,\data_{1:t})\}_{k=1}^{K+1},\,I_{1:K})$
        \STATE $\{(\vmu_t^{(k)}, \vSigma_t^{(k)})\}_{k=1}^K \gets {\rm slice.at}(\{(\vmu_t^{(k)},\,\vSigma_t^{(k)})\}_{k=1}^{K+1},\,I_{1:K})$
        \STATE // build weight and make prequential prediction
        \STATE $\nu_t(r_t^{(k)}) \gets \frac{p(r_t^{(k)}, \data_{1:t})}{\sum_{j=1}^K p(r_t^{(j)}, \data_{1:t})}$ for $k=1,\ldots, K$
        \STATE $\hat{\vy}_{t+1} \gets \vx_{t+1}^\intercal\,\left(\sum_{k=1}^K \nu_t(r_t^{(k)})\vmu_t^{(k)}\right)$ // prequential prediction under a linear-Gaussian model
       \RETURN $\{(\vmu_{t}^{(k)}, \vSigma_{t}^{(k)}, r_t^{(k)})\}_{k=1}^K$, $\hat{\vy}_{t+1}$
        \end{algorithmic}
    \caption{
        Implementation of \RLPR[K].
        We consider an update at time $t$ and one-step ahead forecasting at time $t+1$
        under a Gaussian linear model with known observation variance.
    }
    \label{algo:rl-pr-step}
\end{algorithm}
In Algorithm \ref{algo:rl-pr-step},
the function ${\rm top.k}(A,K)$ returns the indices of the top $K \geq 1$ elements of $A$ with highest value.
The function  ${\rm slice.at}(A, B)$ returns the elements in $A$ according to the list of indices $B$.
If $|A| \leq |B|$, we return all elements in $A$.

\begin{algorithm}[H]
    \small
    \begin{algorithmic}[1]
        \REQUIRE $\data_{t} = (\vx_t, \vy_t)$  // current observation
        \REQUIRE $\{r_{t-1}^{(k)}\}_{k=1}^K \in \{0, \ldots, t-1\}^K$ // bank of runlengths at time $t-1$
        \REQUIRE $\{p(r_{t-1}^{(k)}, \data_{1:t})\}_{k=1}^K$ // joint from past hypotheses
        \REQUIRE $\left\{(\vmu_{t-1}^{(k)}, \vSigma_{t-1}^{(k)})\right\}_{k=1}^K$ // beliefs from past hypotheses
        \REQUIRE $\vx_{t+1}$ // next-step observation
        \REQUIRE $p(\vy \cond \vtheta, \vx) = {\cal N}(\vy \cond \vtheta^\intercal\vx, \vR_t)$ // Choice of \cModel
        \STATE // Evaluate hypotheses if there is no changepoint
        \FOR{$k=1,\ldots,K$}
            \STATE $r_t^{(k)} \gets r_{t-1}^{(k)} + 1$
            \STATE $p(\vy_t \cond r_t^{(k)}, \vx_t, \data_{1:t-1}) \gets {\cal N}(\vy_t  \cond \vx_t^\intercal\,\vmu_{t-1}^{(k)},\,\vx_t^\intercal\,\vSigma_{t-1}^{(k)}\,\vx_t + \vR_t)$ // posterior predictive for $k$-th hypothesis
            \STATE $p(r_t^{(k)},\,\data_{1:t}) \gets p(\vy_t \cond r_t^{(k)}, \vx_t, \data_{1:t-1})\,p(r_{t-1}^{(k)}, \data_{1:t-1})\,p(r_t^{(k)} \cond r_{t-1}^{(k)})$ // update joint density
            \STATE $(\bar{\vmu}_t^{(k)}, \bar{\vSigma}_t^{(k)}) \gets (\vmu_{t-1}^{(k)}, \vSigma_{t-1}^{(k)})$
           \STATE $\tau_t(\vtheta_t; r_t^{(k)}) \gets {\cal N}(\vtheta_t \cond \bar{\vmu}_t, \bar{\vSigma}_t)$ // choice of \cPrior
           \STATE $q_t(\vtheta_t;\, r_t^{(k)}) \propto \tau_t(\vtheta_t; r_t^{(k)})\,p(\vy_t \cond \vtheta^\intercal\vx_t, \vR_t) \propto {\cal N}(\vtheta_t \cond \vmu_t^{(k)}, \vSigma_t^{(k)})$ // following \eqref{eq:ekf-update-step}
        \ENDFOR
        \STATE // Evaluate hypothesis under a changepoint
        \STATE $r_t^{(k+1)} \gets 0$
        \STATE $\vmu_0 \gets \mathbb{E}[\vtheta_t \cond r_t, \vy_{1:t-1}]$ // following \eqref{eq:rl-mmpr-first-moment}
        \STATE $\vSigma_0 \gets \mathbb{E}[\vtheta_t\,\vtheta_t^\intercal \cond r_t, \vy_{1:t-1}] - \left(\mathbb{E}[\vtheta_t \cond r_t, \vy_{1:t-1}]\right)\left(\mathbb{E}[\vtheta_t \cond r_t, \vy_{1:t-1}]\right)^\intercal$ // following \eqref{eq:rl-mmpr-first-moment} and \eqref{eq:rl-mmpr-second-moment}
        \STATE $p(\vy_t \cond r_t^{(k+1)}, \vx_t, \data_{1:t-1}) \gets {\cal N}(\vy_t  \cond \vx_t^\intercal\,\vmu_0,\,\vx_t^\intercal\,\vSigma_0\,\vx_t + \vR_t)$ // posterior predictive for $k$-th hypothesis
        \STATE $p(r_t^{(k+1)}, \data_{1:t}) \gets p(\vy_t \cond r_t^{(k+1)},\,\vx_t,\,\data_{1:t-1})\sum_{k=1}^K p(r_t^{(k)}, \data_{1:t})\,p(r_t^{(t+1)} \cond r_t^{(k)} - 1)$
        \STATE // Extend number of hypotheses to $K+1$ and keep top $K$ hypotheses
        \STATE $I_{1:k} = {\rm top.k}(\{p(r_t^{(1)},\,\data_{1:t}), \ldots, p(r_{t}^{(k+1)}, \data_{1:t})\},\,K)$
        \STATE $\{p(r_t^{(k)}, \data_{1:t})\}_{k=1}^K \gets {\rm slice.at}(\{p(r_t^{(k)},\,\data_{1:t})\}_{k=1}^{K+1},\,I_{1:K})$
        \STATE $\{(\vmu_t^{(k)}, \vSigma_t^{(k)})\}_{k=1}^K \gets {\rm slice.at}(\{(\vmu_t^{(k)},\,\vSigma_t^{(k)})\}_{k=1}^{K+1},\,I_{1:K})$
        \STATE // build weight and make prequential prediction
        \STATE $\nu_t(r_t^{(k)}) \gets \frac{p(r_t^{(k)}, \data_{1:t})}{\sum_{j=1}^K p(r_t^{(j)}, \data_{1:t})}$ for $k=1,\ldots, K$
        \STATE $\hat{\vy}_{t+1} \gets \vx_{t+1}^\intercal\,\left(\sum_{k=1}^K \nu_t(r_t^{(k)})\vmu_t^{(k)}\right)$ // prequential prediction under a linear-Gaussian model
       \RETURN $\{(\vmu_{t}^{(k)}, \vSigma_{t}^{(k)}, r_t^{(k)})\}_{k=1}^K$, $\hat{\vy}_{t+1}$
        \end{algorithmic}
    \caption{
        Implementation of \texttt{RL[K]-MMPR}.
        We consider an update at time $t$ and one-step ahead forecasting at time $t+1$
        under a Gaussian linear model with known observation variance.
    }
    \label{algo:rl-mmpr-step}
\end{algorithm}

\begin{algorithm}[H]
\small
\begin{algorithmic}[1]
    \REQUIRE $\data_{t} = (\vx_t, \vy_t)$  // current observation
    \REQUIRE $\vx_{t+1}$ // next-step observation
    \REQUIRE $\epsilon \in (0,1)$ // restart threshold
    \REQUIRE $r_{t-1} \in \{0, \ldots, t-1\}$ // runlength at time $t-1$
    \REQUIRE $(\vmu_0, \vSigma_0)$ // default prior beliefs
    \REQUIRE $(\vmu_{t-1}, \vSigma_{t-1})$ // beliefs from prior step
    \REQUIRE $p(\vy \cond \vtheta, \vx) = {\cal N}(\vy \cond \vtheta^\intercal\vx, \vR_t)$ // Choice of \cModel
   \STATE $(r_t^{(0)}, r_t^{(1)}) \gets (0, r_{t-1} + 1)$ // choice of \cAux
   \STATE $p(\vy_t \cond r_t^{(0)}, \vx_t, \data_{1:t-1}) \gets {\cal N}(\vy_t  \cond \vx_t^\intercal\,\vmu_0,\,\vx_t^\intercal\,\vSigma_0\,\vx_t + \vR_t)$ // posterior predictive at changepoint
   \STATE $p(\vy_t \cond r_t^{(1)}, \vx_t, \data_{1:t-1}) \gets {\cal N}(\vy_t  \cond \vx_t^\intercal\,\vmu_{t-1},\,\vx_t^\intercal\,\vSigma_{t-1}\,\vx_t + \vR_t)$ // posterior predictive if no changepoint
   \STATE $\nu_t(r^{(1)}) \gets \frac{p(\vy_t \cond r_t^{(1)}, \vx_t, \data_{1:t-1})(1 - \pi)}
   {p(\vy_t \cond r_t^{(1)}, \vx_t, \data_{1:t-1})\,(1 - \pi) + p(\vy_t \cond r_t^{(0)}, \vx_t, \data_{1:t-1})\,\pi}$ // probability of no-changepoint at timestep $t$
   \STATE 
   \IF{$\nu(r_t^{(1)}) > \epsilon$}
       \STATE $r_t \gets r_t^{(1)}$
       \STATE $\bar\vmu_t^{(r_t)} \gets \vmu_{t-1}^{(r_{t-1})}\,\nu(r_t^{(1)}) + \vmu_0 \, \left(1 - \nu(r_t^{(1)})\right)$
       \STATE $\bar\vSigma_t^{(r_t)} \gets \vSigma_{t-1}^{(r_{t-1})}\,\nu(r_t^{(1)})^2 + \vSigma_0 \, \left(1 - \nu(r_t^{(1)})^2\right)$
    \ELSIF{$\nu(r_t^{(1)}) \leq \epsilon$}
        \STATE $r_t \gets r_t^{(0)}$
        \STATE $\bar\vmu_t^{(r_t)} \gets \vmu_0$
        \STATE $\bar\vSigma_t^{(r_t)} \gets \vSigma_0$
   \ENDIF
   \STATE $\tau_t(\vtheta_t; r_t) \gets {\cal N}(\vtheta_t \cond \bar{\vmu}_t, \bar{\vSigma}_t)$ // choice of \cPrior
   \STATE $q_t(\vtheta_t;\, r_t) \propto {\cal N}(\vtheta_t \cond \bar{\vmu}_t, \bar{\vSigma}_t)\,p(\vy_t \cond \vtheta^\intercal\vx_t, \vR_t) \propto {\cal N}(\vtheta_t \cond \vmu_t, \vSigma_t)$ // choice of \cPosterior --- via \eqref{eq:ekf-update-step}
    \STATE $\hat{\vy}_{t+1} \gets \vx_{t+1}^\intercal\,\vmu_t$ // prequential prediction (given linear-Gaussian model)
   \RETURN $(\vmu_{t}, \vSigma_{t}, r_t)$, $\hat{\vy}_{t+1}$
\end{algorithmic}
\caption{
    Implementation of \RLSPR, with update at time $t$ and for one-step ahead forecasting at time $t+1$,
    under a Gaussian linear model with known observation variance.
}
\label{algo:rl-spr-step}
\end{algorithm}

\section{Experiments}
\label{section:experiments}

In this section we experimentally evaluate 
different algorithms 
within the BONE framework
on a number of tasks.
 
Each experiment consists of a \textit{warmup} period where the hyperparameters are chosen,
and a \textit{deploy} period where sequential predictions and updates are performed.
In each experiment, we fix the choice of measurement model $h$ \cModel and posterior inference method \cPosterior,
and then compare different methods with respect to their choice of \cAux, \cPrior, and \cWeight. 
For \texttt{DA} methods, we append the number of hypotheses in brackets to determine
how many hypotheses are being considered.
For example,
\RLPR[1] denotes one hypothesis,
\RLPR[K] denotes $K$ hypotheses,
and \RLPR[inf] denotes all possible hypotheses.
In all experiments, unless otherwise stated, we consider a single hypothesis for choices of \texttt{DA}.
See Table \ref{tab:rosetta-methods} for the methods we compare.

\begin{table}[htb]
    \centering
    \footnotesize
    \begin{tabular}{c|c|c|p{8cm}|p{1.5cm}}
        \textbf{M.2-M.3} & \textbf{Eq.} & \textbf{A.2} & \textbf{Description} & \textbf{Sections}\\
         \hline
         \hline
        \multicolumn{4}{c}{static}\\
         \hline
         \namemethod{C-Static} & \eqref{eq:cprior-c-f} & - &   
         {\scriptsize
         This corresponds to the static case with a classical Bayesian update.
         This method does not assume changes in the environment.
         }
         & {\scriptsize
         \ref{experiment:KPM}, \ref{experiment:heavy-tail-regression}
         }
         \\
         \hline
        \multicolumn{4}{c}{abrupt changes}\\
         \hline
         \texttt{RL-PR} & \eqref{eq:cprior-rl-pr} & \texttt{DA[inf]} & 
         {\scriptsize
         This approach, commonly referred to as Bayesian online changepoint detection (\textbf{BOCD}),
         assumes that non-stationarity arises from independent  blocks of time, each with stationary data. Estimates are made using data from the current block.
         See Appendix \ref{sec:rl-pr-implementation} for more details.
         }
         & 
         {\scriptsize
         \ref{experiment:hour-ahead-forecasting}, \ref{exp:logistic-reg}, \ref{experiment:bandits}, \ref{experiment:KPM}, \ref{experiment:heavy-tail-regression}, 
         }
         \\
          \newmethod{WoLF+RL-PR*} & \eqref{eq:cprior-rl-pr} & \texttt{DA[inf]} &
         {\scriptsize
         Special case of \namemethod{RL-PR} with explicit choice of \cModel which makes it robust to outliers.
         }
         & {\scriptsize \ref{experiment:heavy-tail-regression} }
         \\
         \hline
        \multicolumn{4}{c}{gradual changes}\\
         \hline
         \CPPD & \eqref{eq:cprior-cpp-d} & \texttt{CA}  & 
         {\scriptsize
         Updates are done using a discounted mean and covariance according to the probability estimate that a change has occurred.
         }
         & {\scriptsize
         \ref{experiment:hour-ahead-forecasting}, \ref{exp:logistic-reg}, \ref{experiment:bandits}
         }
         \\
         \namemethod{C-ACI} & \eqref{eq:cprior-c-aci} &  - &  
         {\scriptsize
         At each timestep,
         this method assumes that the parameters
         evolve according to a linear map $\vF_t$,
         at a rate given by a known positive semidefinite covariance matrix $\vQ_t$.
         } 
         & 
         {\scriptsize
        \ref{experiment:hour-ahead-forecasting}, \ref{exp:logistic-reg}, \ref{experiment:bandits}, 
         }
         \\
        \hline
        \multicolumn{4}{c}{abrupt \& gradual changes}\\
         \hline
          \namemethod{RL-MMPR} & \eqref{eq:rl-mmpr-prior-reset} & \texttt{DA[inf]} & 
         {\scriptsize 
            Modification of \namemethod{CPT-MMPR} that assumes dependence between any two consecutive blocks of time
            and with choice of \texttt{RL}.
            This combination employs a moment-matching approach when evaluating the prior mean and covariance under a changepoint.
            See Appendix \ref{sec:rl-mmpr-implementation} for more details.
         }
         &
         {\scriptsize 
        \ref{experiment:KPM}
         }
         \\
         \texttt{RL-OUPR} & \eqref{eq:SPR-equation-gt}& \texttt{DA[1]} & 
         {\scriptsize
        Depending on the threshold parameter, updates involve either (i) a convex combination of the prior belief with the previous mean and covariance based on the estimated probability of a change (given the run length), or (ii) a hard reset of the mean and covariance, reverting them to prior beliefs.
        See Appendix \ref{sec:rl-spr-implementation} for more details.
        } &
        {\scriptsize 
        \ref{experiment:hour-ahead-forecasting}, \ref{exp:logistic-reg}, 
        \ref{experiment:bandits}, 
        \ref{experiment:KPM}
        }
    \end{tabular}
    \caption{
    List of methods we compare in our experiments.
    The first column, \textbf{M.2--M.3}, is defined by the choices of \cAux and \cPrior.
    The second column, \textbf{Eq.}, references the equation that define M.2--M.3.
    The third column, \textbf{A.2}, determines the  choice of \cWeight.
    The fourth column, \textbf{Description}, provides a brief summary of the method.
    The fifth column, \textbf{Sections}, shows the sections where the method is evaluated.
    The choice of \cModel and \cPosterior are defined on a per-experiment basis. 
    (The only exception being \newmethod{WolF+RL-PR}).
    For \cAux the acronyms are as follows: \texttt{RL} means runlength, \texttt{CPP} means changepoint probability,
    \texttt{C} means constant, and \texttt{CPT} means changepoint timestep.
    For \cPrior the acronyms are as follows:
    \texttt{PR} means prior reset,
    \texttt{OU} means Ornstein–Uhlenbeck,
    \texttt{LSSM} means linear state-space model,
    \texttt{Static} means full Bayesian update,
    \texttt{MMR} means moment-matched prior reset,
    and \texttt{OUPR} means Ornstein–Uhlenbeck and prior reset.
    We use the convention in \cite{huvskova1999gradualabruptchange} for the terminology abrupt/gradual changes.
    }
    \label{tab:rosetta-methods}
\end{table}

\subsection{Prequential prediction}
\label{experiment:prequential}

In this section, we give several examples of non-stationary prequential prediction problems.

\subsubsection{Online regression for hour-ahead electricity forecasting}
\label{experiment:hour-ahead-forecasting}

In this experiment,  we consider the task of predicting the hour-ahead electricity load
before and after the Covid pandemic.
We use the dataset presented in \cite{farrokhabadi2020electricitycovid}, which has 31,912 observations;
each observation
contains 7 features $\vx_t$ and a single target variable $\vy_t$.
The 7 features correspond to
pressure (kPa), cloud cover (\%), humidity (\%), temperature (C) , wind direction (deg), and wind speed (KmH).
The target variable is the hour-ahead electricity load (kW).
To preprocess the data,
we normalise the target variable $\vy_t$ by subtracting an exponentially weighted moving average (EWMA) mean with a half-life of 20 hours,
then dividing the resulting series by an EWMA standard deviation with the same half-life.
To normalise the features $\vx_t$, we divide each by a 20-hour half-life EWMA.
The features are lagged by one hour.

Our choice of measurement model $h$  is a two-hidden layer multilayered perceptron (MLP)
with four units per layer and a ReLU activation function.

For this experiment, we consider
\RLSPR  (our proposed method),
\RLPR (a classical method),
\CACI (a simple benchmark),
and \CPPD (a modern method).
For computational convenience, we plug in a point-estimate (MAP estimate)
of the neural network parameters when making predictions using $h$.
More precisely, given $\auxv_t$, we use $h(\vtheta_t^*, \vx_{t+1})$ to make a (conditional) prediction,
where $\vtheta_t^* = \argmax_\vtheta q(\vtheta;\, \auxv_t, \data_{1:t})$.
For  a fully Bayesian treatment of neural network predictions, see \cite{immer2021improving};
we leave the implementation of these approaches for future work.

The hyperparameters of each method are found using the first 300 observations (around 13 days)
and deployed on the remainder of the dataset.
Specifically, during the warmup period we tune the value of the probability of a changepoint for \RLSPR and \RLPR. For \CACI we tune  $\vQ_t$, and  for \CPPD we tune the learning rate. See the open-source notebooks for more details.

In the top panel of Figure \ref{fig:day-ahead-plot} 
we show the evolution of the target variable $\vy_t$ between March 3 2020 and March 10 2020.
The bottom panel of Figure \ref{fig:day-ahead-plot}
shows the 12-hour rolling mean absolute error (MAE) of predictions made by the methods.
We see that there is a changepoint around March 7 2020 as pointed out in \cite{farrokhabadi2020electricitycovid}. 
This is likely due to the introduction of Covid lockdown rules.
Among the methods considered, \CACI and \RLSPR adapt the quickest after the changepoint and maintain
a low rolling MAE compared to \RLPR and \CPPD.

\begin{figure}[htb]
    \centering
    \includegraphics[width=0.80\linewidth]{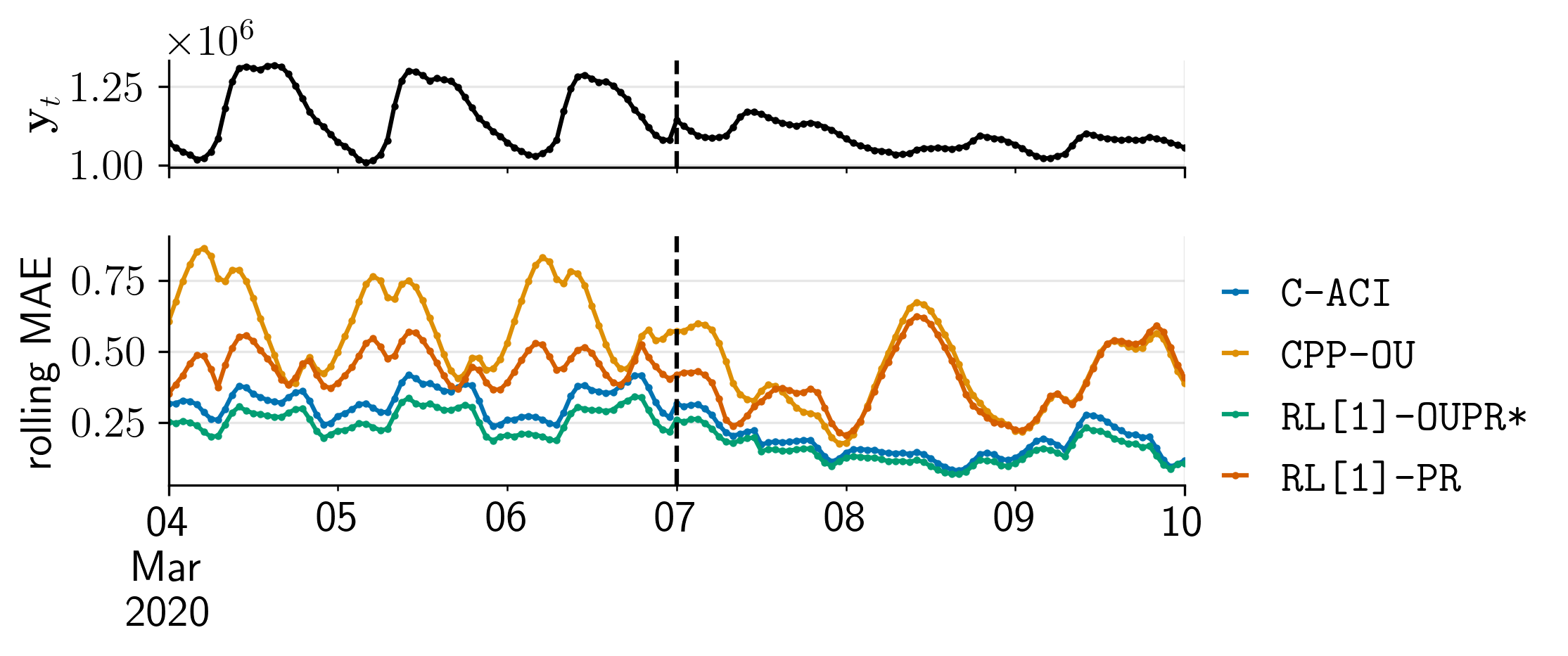}
    \caption{
    The \textbf{top panel} shows the target variable (electricity consumption) 
    from March 1 2020 to March 12 2020.
    The \textbf{bottom panel} shows the twelve-hour rolling relative absolute error of predictions
        for the same time window.
    The dotted black line corresponds to March 7 2020, when Covid lockdown began.
    }
    \label{fig:day-ahead-plot}
\end{figure}

Next, Figure \ref{fig:day-ahead-results-zoomed},
shows the forecasts made by each method between March 4 2020 and March March 8 2020.
We observe a clear cyclical pattern before March 7 2020 but less so afterwards,
indicating a change in daily electricity usage from diurnal to constant.

\begin{figure}[htb]
    \centering
    \includegraphics[width=0.80\linewidth]{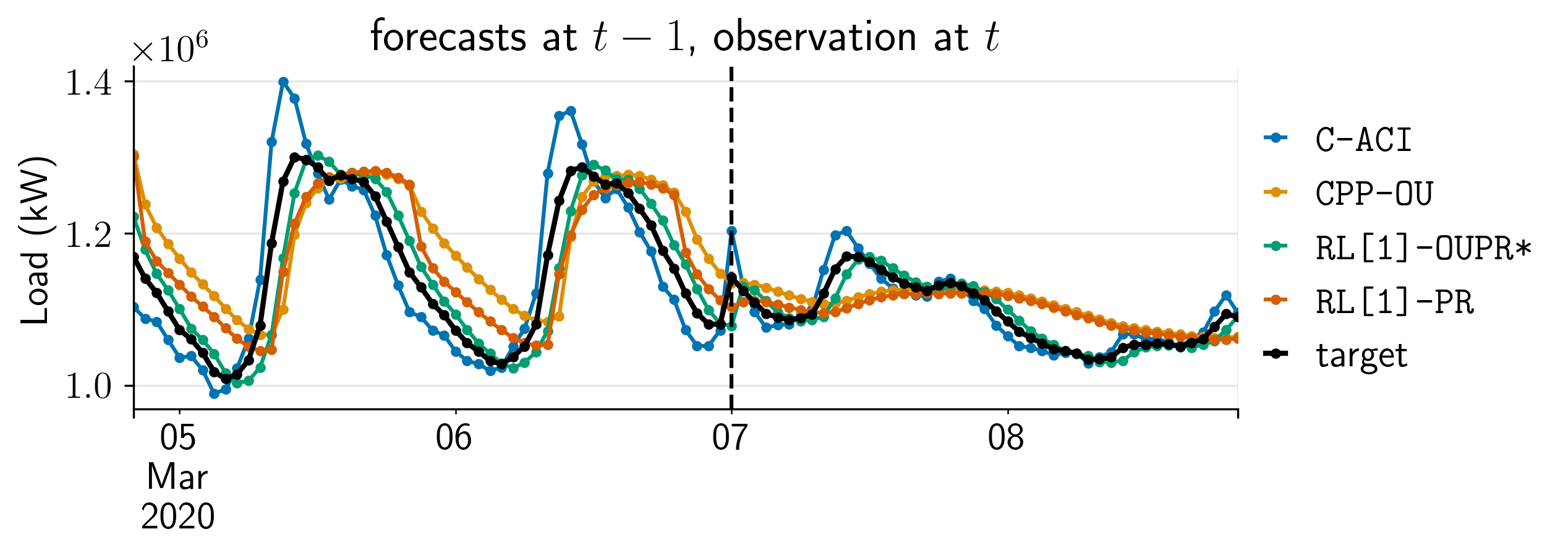}
    \caption{
    One day ahead electricity forecasting results
    for Figure \ref{fig:day-ahead-plot}.
    The dotted black line corresponds to  March 7 2020.
    }
    \label{fig:day-ahead-results-zoomed}
\end{figure}

We also observe that \RLPR and \CPPD slow-down their rate of adaptation.
One possible explanation of this behaviour is that the changes are
not abrupt enough to be captured by the algorithms.
To provide evidence for this hypothesis, Figure \ref{fig:day-ahead-rlpr-predictions}
shows, on the left $y$-axis, the predictions for \RLPR and the target variable $\vy_t$.
On the right $y$-axis, we show the estimated runlength.

\begin{figure}[htb]
    \centering
    \includegraphics[width=0.65\linewidth]{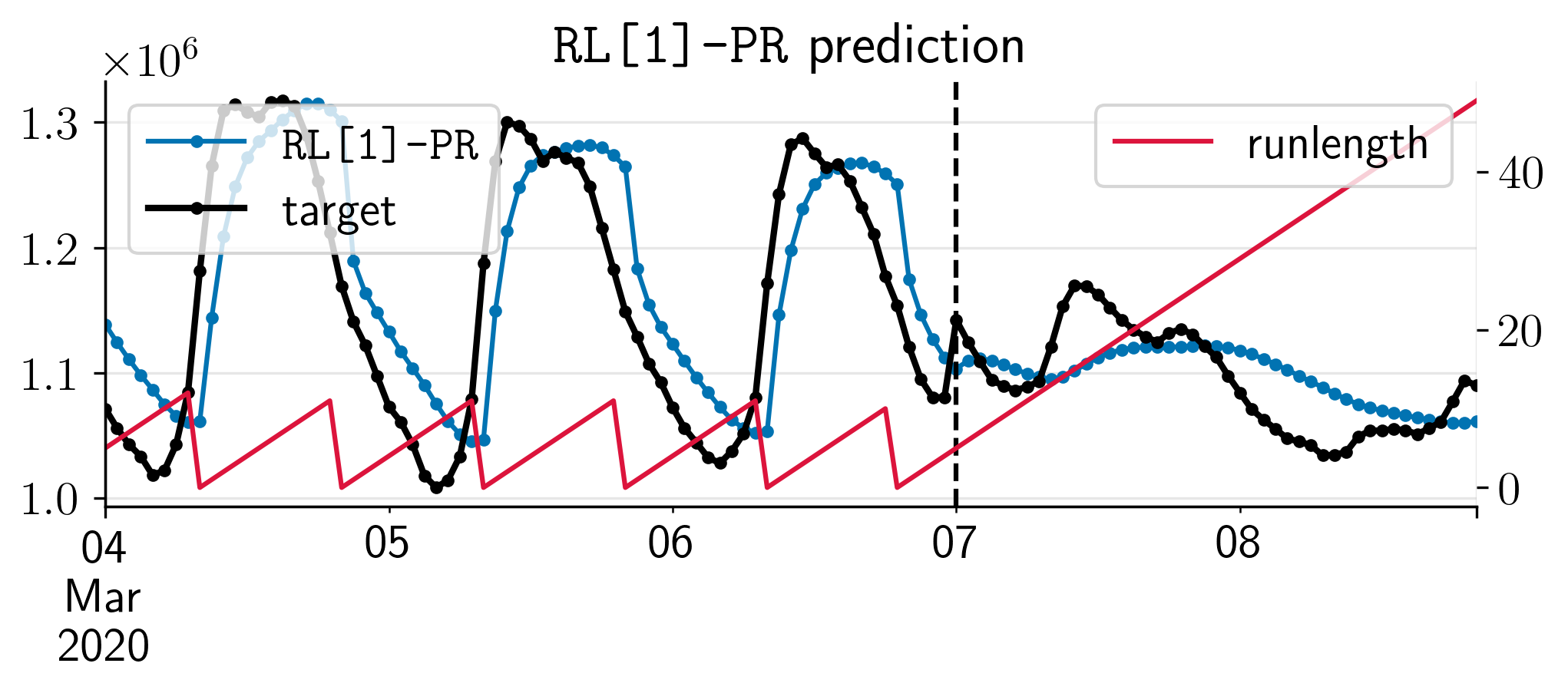}
    \caption{
    One day ahead electricity forecasting results for \RLPR together with the target variable
    on the left y-axis, and the value for runlength (\texttt{RL}) on the right y-axis.
    We see that after the 7 March changepoint, the runlength monotonically increases,
    indicating a stationary regime.
    }
    \label{fig:day-ahead-rlpr-predictions}
\end{figure}

We see that \RLPR resets approximately twice every day until the time of the changepoint.
After that, there is no evidence of a changepoint (as provided by the hyperparameters and the modelling choices),
so \RLPR does not reset which translates to less adaptation for the period to the right of the changepoint.

Finally, we compare the error of predictions made by the competing methods.
This is quantified in 
Figure \ref{fig:day-ahead-results},
which shows a box-plot of the five-day MAE for each of the competing methods over the whole dataset,
from March 2017 to November 2020.
Our new \RLSPR method has the lowest MAE.

\begin{figure}[htb]
    \centering
    \includegraphics[width=0.65\linewidth]{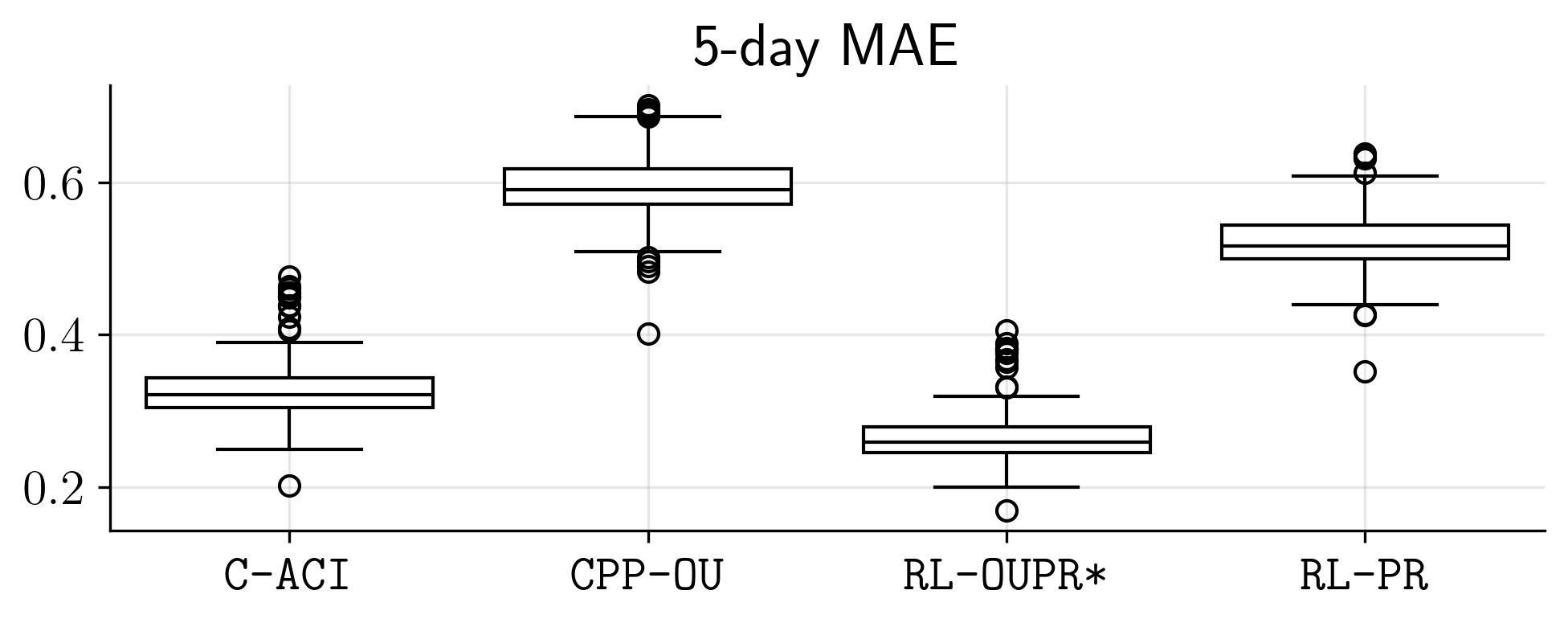}
    \caption{
    Distribution of the 5-day mean absolute error (MAE) for each of the competing methods on electricity forecasting over the entire period. For this calculation we split the  dataset into consecutive buckets containing  five days of data each, and for a given bucket we compute  the average absolute error of the predictions and observations that fall within the bucket.
    }
    \label{fig:day-ahead-results}
\end{figure}

\subsubsection{Online classification with periodic drift}
\label{exp:logistic-reg}
\label{sec:periodic-drifts}

In this section we study the performance of \CACI, \CPPD, 
\RLPR, and \RLSPR for the classification experiment of Section 6.2 in \cite{kurle2019continual}.
More precisely, in this experiment $x_{t,i}\sim \mathrm{Unif}[-3,3]$ for $i \in \{1,2\}$,
$\vx_t = (x_{t,1}, x_{t,2}) \in\mathbb{R}^2$, 
$y_t \sim \mathrm{Bernoulli}(\sigma(\vtheta_t^\intercal\,\vx_t) )$
with $\vtheta^{(1)}_t = 10\,\sin(5^\circ\,t)$ and $\vtheta^{(2)}_t = 10\,\cos(5^\circ\,t)$.
Thus the unknown values of model parameters are slowly drifting deterministically according to sine and cosine functions.
The timesteps go from 0 to 720.


\begin{figure}[H]
    \centering
    \includegraphics[width=0.65\linewidth]{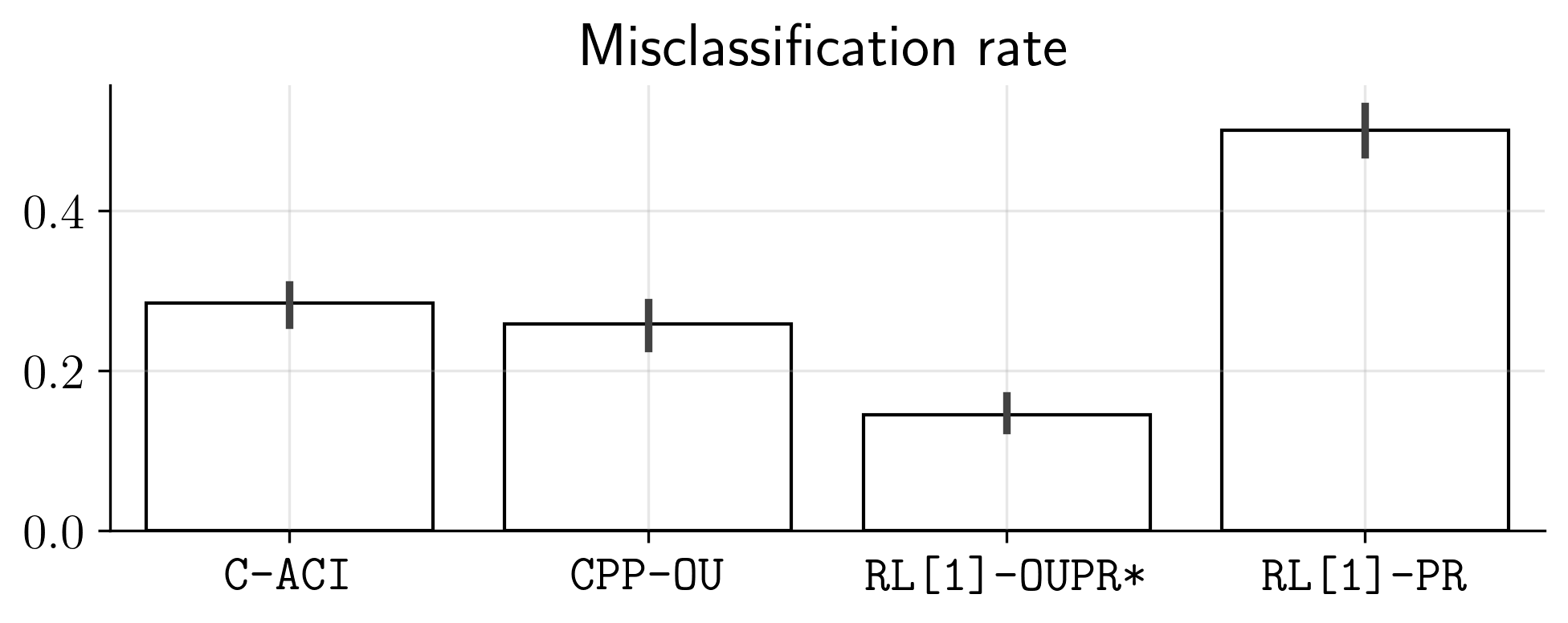}
    \caption{
    Misclassification rate of various methods on the online classification with periodic drift task.
    }
    \label{fig:clf-results}
\end{figure}

Figure \ref{fig:clf-results} summarises the results of the experiment where we show the misclassification rate (which is one minus the accuracy) for the competing methods.
%
Our \RLSPR method works the best, and signifcantly outperforms \RLPR,
since we use an OU drift process with a  soft prior reset
rather than assuming constant parameter
with a hard prior rset.

We can improve the performance of \RLPRK  if the number of hypotheses $K$ increases,
 and if we vary the changepoint probability threshold $\kappa$,
as shown in Figure \ref{fig:clf-rlpr-comparison}.
However, even then the performance of this method  still does not match our method.

\begin{figure}[H]
    \centering
    \includegraphics[width=0.60\linewidth]{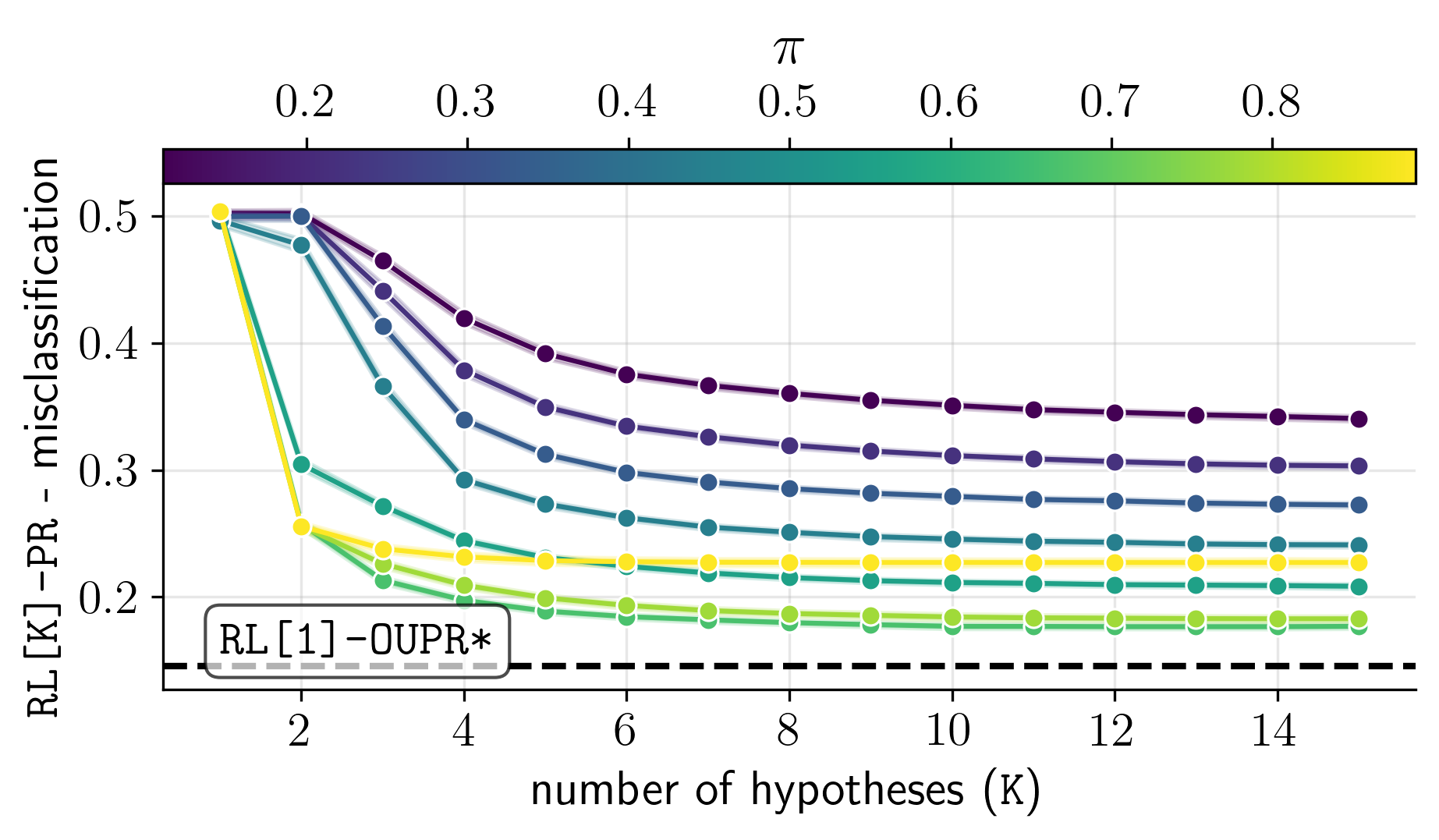}
    \caption{
    Accuracy of predictions for \RLPR as a function of the number of hypothesis and
    the prior probability of a changepoint $\kappa$.
    The black dotted line is the performance of \RLSPR reported in Figure \ref{fig:clf-results}.
    }
    \label{fig:clf-rlpr-comparison}
\end{figure}

\subsubsection{Online classification with drift and jumps}
\label{exp:classification-jumps}

In this section we study the performance of \CACI, \CPPD, 
\RLPR, and \RLSPR for an experiment with drift and sudden changes.
More precisely, we assume that the parameters of a logistic regression problem evolve according to
\begin{equation}
\vtheta_t =
\begin{cases}
\vtheta_{t-1} + \vepsilon_t & \text{w.p. } 1 - p_\epsilon,\\
{\cal U}[-2, 2]^2 & \text{w.p. } p_\epsilon,
\end{cases}
\end{equation}
with $p_\epsilon = 0.01$,
$\vtheta_0 \sim {\cal U}[-2, 2]^2$, and
$\vepsilon_t$ is  a zero-mean distributed random vector with isotropic covariance matrix $(0.01)^2\,\vI_2$
(where $\vI_2$ is a $2\times 2$ identity matrix).
Intuitively, this experiment has model parameters that drift slowly with occasional abrupt changes (at a rate of $0.01$).

\begin{figure}[H]
    \centering
    \includegraphics[width=0.6\linewidth]{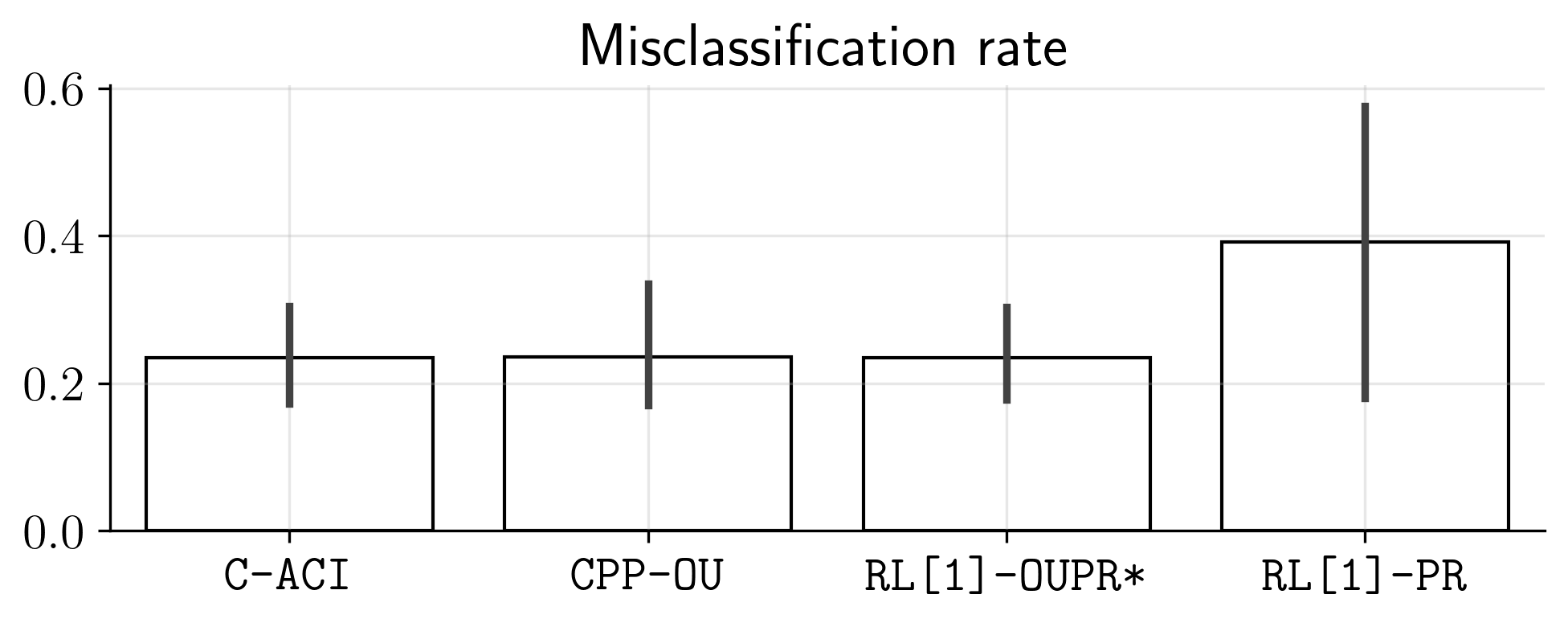}
    \caption{
        Misclassification rate of various methods on the online classification with drift and jumps task. 
    }
    \label{fig:clf-results-abrupt}
\end{figure}

Figure \ref{fig:clf-results-abrupt} shows the
misclassification
rate among the competing methods.
We observe that \CACI, \CPPD, and \RLSPR have comparable performance, whereas \RLPR  is the method with highest misclassification rate. 

\begin{figure}[H]
    \centering
    \includegraphics[width=0.60\linewidth]{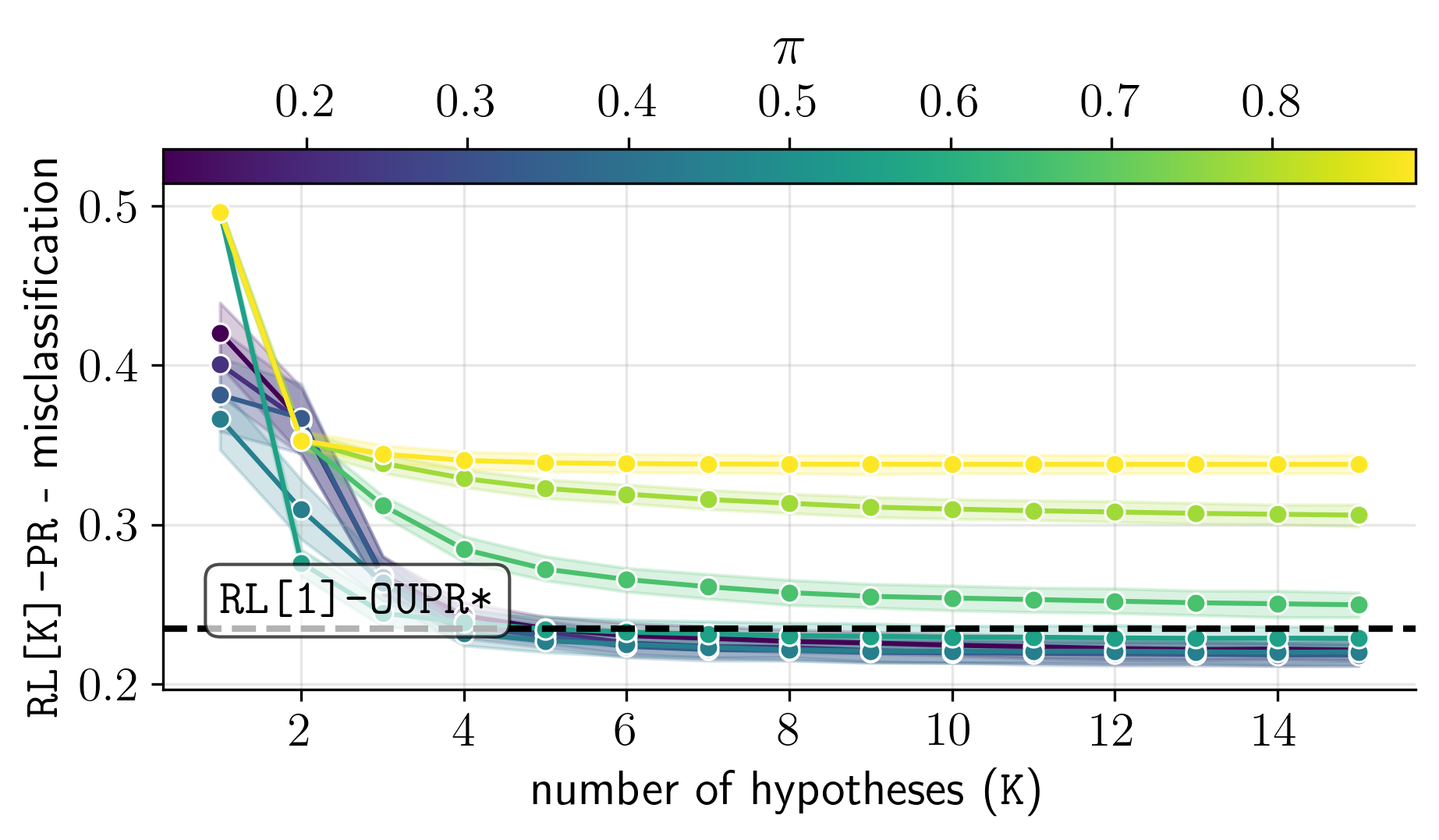}
    \caption{
    Accuracy of predictions for \RLPR[K] as a function of the number of hypotheses ($\texttt{K}$)
    and the probability of a changepoint $\kappa$.
    The black dotted line is the performance of \RLSPR reported in Figure \ref{fig:clf-results-abrupt}.
    }
    \label{fig:clf-rlpr-comparison-abrupt}
\end{figure}
To explain this behaviour, Figure \ref{fig:clf-rlpr-comparison-abrupt}
shows the performance of \RLPR[K] as a function of number of hypotheses and prior
probability of a changepoint $\kappa$.
We observe that up to three hypotheses, the lowest misclassification error of \RLPR[K] is higher than that of \RLSPR,
which only considers one hypothesis.
However, as we increase the number of hypotheses, the best performance for \RLPR[K]  obtains a lower misclassification rate than \RLSPR.
This is in contrast to the results in Figure 5.
Here, we see that with more hypotheses \RLPR[K] outperforms our new method at the expense of being  more memory intensive.

\subsection{Contextual bandits}
\label{experiment:bandits}

In this section, we study the performance of \CACI, \CPPD, \RLPR, and \RLSPR for the 
simple  Bernoulli bandit
from Section 7.3 of \cite{mellor2013changepointthompsonsampling}. 
More precisely, we consider a multi-armed bandit problem with 10 arms, 10,000 steps per simulation, and 100 simulations. The payoff of a given arm is the outcome of a Bernoulli random variable with unknown probability $\vtheta_t = \min\{\max\{\vtheta_{t-1} + 0.03\,Z_t,0\},1\}$ for $\{Z_t\}_{t\in\{1,2,\dots,10,000\}}$ independent and identically distributed standard normal random variables. We take $\vtheta_0\sim \mathrm{Unif}[0,1]$ and use the same formulation for all ten arms with independence across arms. The observations are the rewards and there are no features (non-contextual).

The idea of using \RLPR in multi-armed bandits problems was introduced in \cite{mellor2013changepointthompsonsampling}. With this experiment, we extend the concept to other members of the BONE framework. We use Thompson sampling for each of the competing methods. Figure \ref{fig:bandit-results} shows the regret of using \CACI, \CPPD, \RLPR, and \RLSPR for the above multi-armed bandits  problem.
The results we obtain are similar to those of Section \ref{sec:periodic-drifts}. This is because both problems have a similar drift structure. 

\begin{figure}[H]
    \centering
    \includegraphics[width=0.6\linewidth]{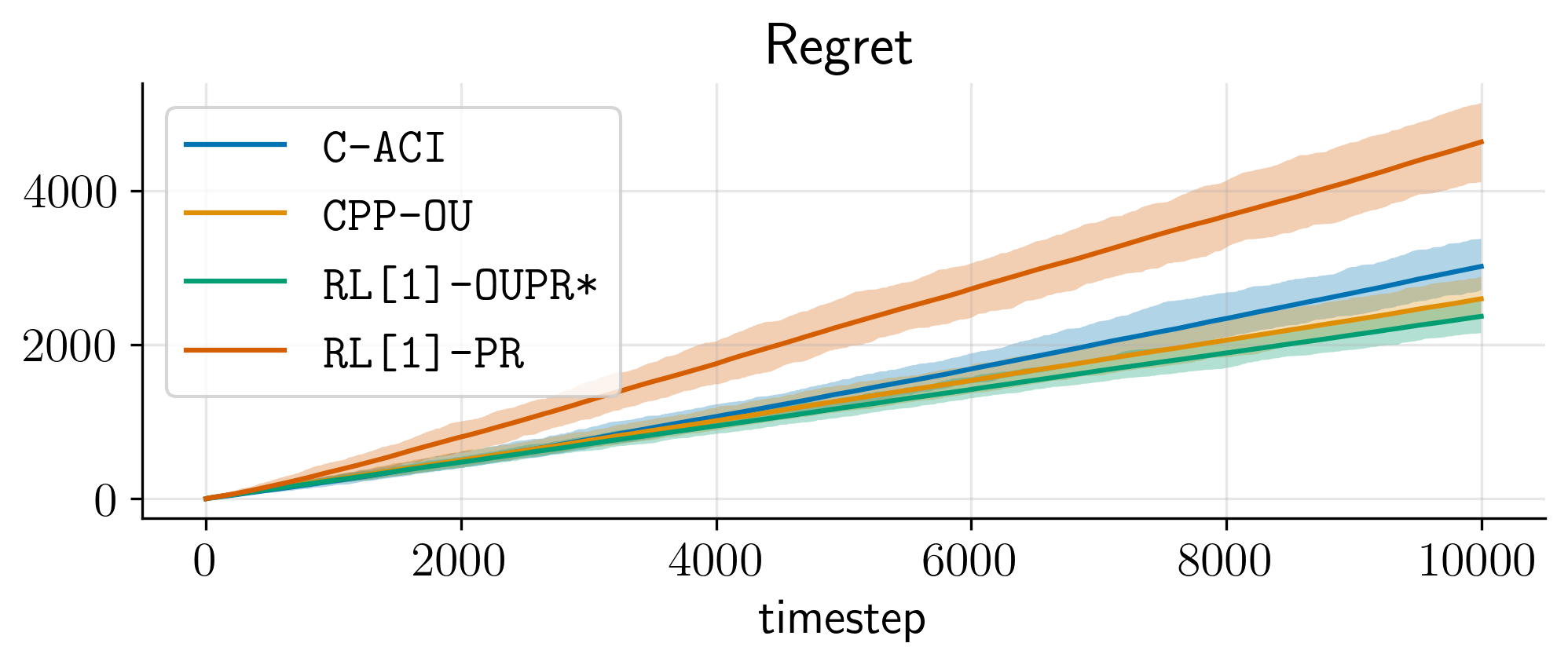}
    \caption{
    Regret of competing methods on the contextual bandits task. Confidence bands are computed with one hundred simulations.
    }
    \label{fig:bandit-results}
\end{figure}

\subsection{Segmentation and prediction}

In this section, we evaluate methods both in terms of their ability to ``correctly'' segment the observed output signal,
and to do one-step-ahead predictions.
Note that by ``correct segmentation'',
we mean one that matches the ground truth data generating process.
This metric can only be applied to synthetic data.

\subsubsection{Autoregression with dependence across the segments}
\label{experiment:KPM}
\label{experiment:segmentation}


In this experiment,
we consider the synthetic autoregressive dataset introduced in Section 2 of
\cite{fearnhead2011adaptivecp},
consisting of a set of one dimensional polynomial curves that are constrained to match up at segmentation boundaries,
as shown in the top left of Figure \ref{fig:sements-dependency-lr}.

We compare the performance of the three methods in the previous subsection.
For this experiment, we employ a probability of a changepoint $\kappa = 0.01$. 
Since this dataset has dependence of the parameters across segments,
 we allow  for the choice of \cModel
to be influenced by the choice of \cAux, i.e., our choice of model is given by $h(\vtheta_t; \auxv_t, \vx_t)$. 
For this experiment, we take \cAux to be \texttt{RL} and our choice of \cModel becomes
\begin{equation}\label{eq:h_fct_aux}
    h(\vtheta_t; r_t, \vx_{1:t}) = \vtheta_t^\intercal\,\vh(\vx_{1:t}, r_t),
\end{equation}
with $\vh(\vx_{1:t}, r_t) = [1, \Delta, \Delta^2]$, $\Delta = (x_t - x_{r_t})$, and $x_{r_t} \geq x_t$.
Intuitively this represents a quadratic curve fit to the beginning $x_{r_t}$ and end points $x_t$  of the current segment.
Given the form of \cModel in  \eqref{eq:h_fct_aux}, here we do not consider \CACI nor \CPPD.
Instead, we use 
runlength with moment-matching prior reset, i.e.,
\namemethod{RL-MMPR} (see Table \ref{tab:rosetta-methods}) which was designed for segmentation with dependence.


\begin{figure}[htb]
    \centering
    \includegraphics[width=0.45\linewidth]{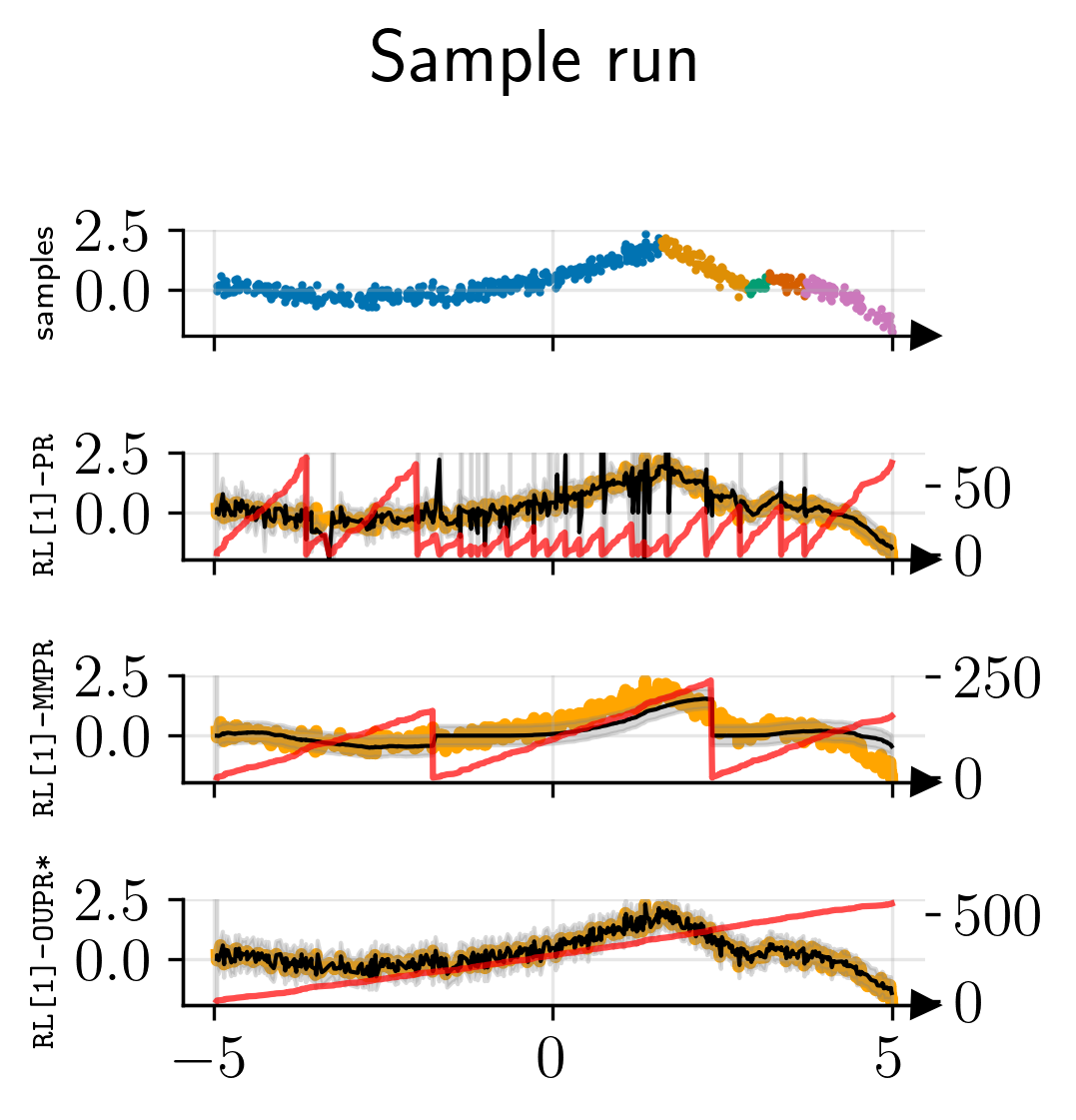}
    \includegraphics[width=0.45\linewidth]{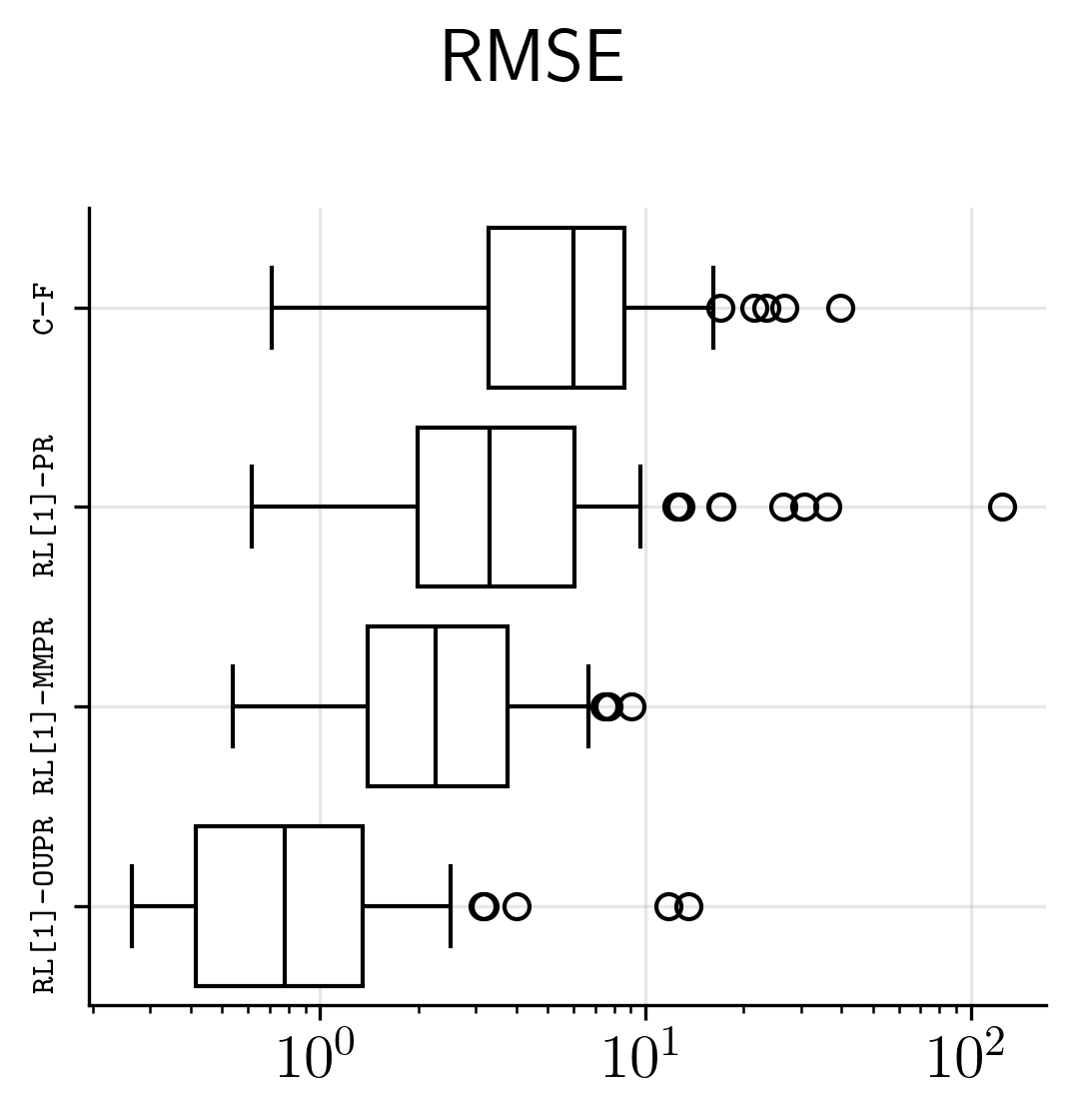}
    \caption{
        The \textbf{left panel} shows
        a sample run of the piecewise polynomial regression with dependence across segments.
        The $x$-axis is for the features,
        the (left) $y$-axis is for measurements together with the  estimations made by \RLPR,
        \namemethod{RL-MMPR}, and \RLSPR,
        the (right) $y$-axis is for the value of $r_t$ under each model.
        The orange line 
        denotes the true data-generating process and the red line denotes the
        value of the hypothesis \texttt{RL}.
        The \textbf{right panel} shows the 
        RMSE of predictions over 100 trials.
    }
    \label{fig:sements-dependency-lr}
\end{figure}

Figure \ref{fig:sements-dependency-lr} shows the results.
On the right, 
we observe that \RLSPR has the lowest RMSE.
On the left, we plot the predictions of each method, so we can visualise the nature of their errors.
For  \RLPR, the spikes occur
because the method has many false positive beliefs in a changepoint occurring,
and this causes breaks in the predictions
due  the explicit dependence of $h$ on $r_t$ and the hard parameter reset upon changepoints.
For  \namemethod{RL-MMPR},
the slow adaptation 
(especially when $x_t\in[1,5]$)
is because
 the method does not adjust beliefs as quickly as it should.
Our \RLSPR  method strikes a good compromise.

\begin{figure}[H]
    \centering
    \includegraphics[width=0.60\linewidth]{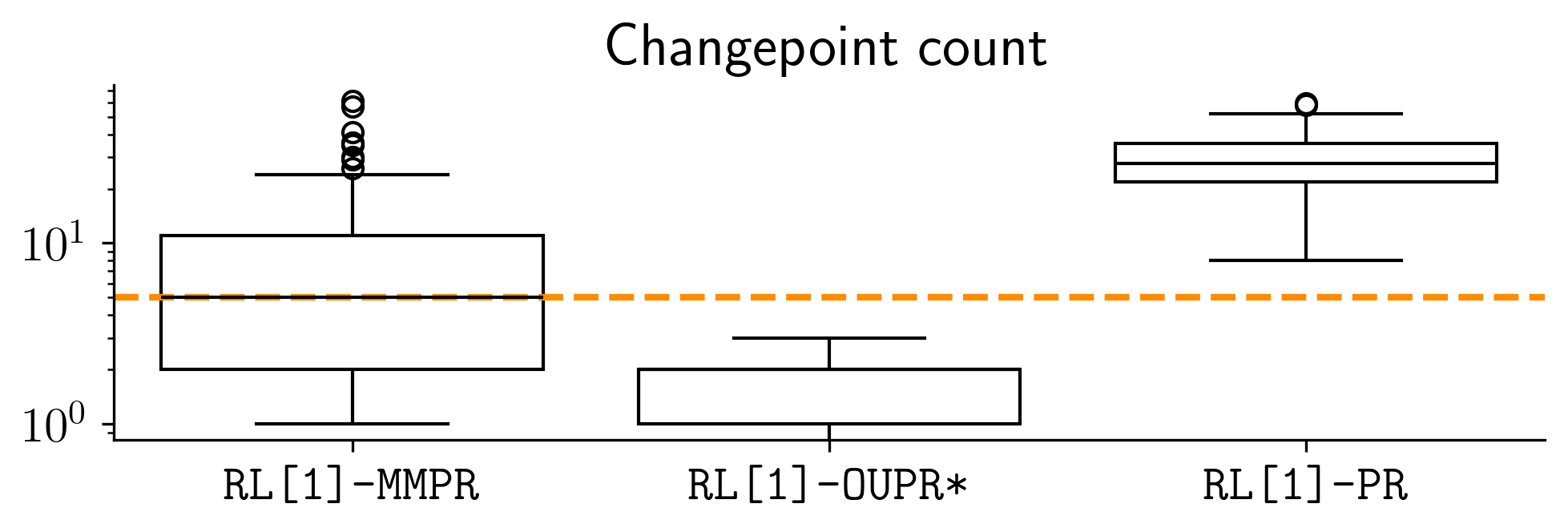}
    \caption{
    Count of changepoints over an experiment for 100 trials.
    The orange line shows the true number of changepoints for all trials.
    }
    \label{fig:segments-count}
\end{figure}

Figure \ref{fig:segments-count} shows the distribution (over 100 simulations) of the number of detected changepoints,
i.e., instances where $\nu_t(r_t)$ with $r_t = 0$ is the highest.
We observe that superior predictive performance 
in Figure \ref{fig:sements-dependency-lr}
does not necessarily translate to a better segmentation capability. For example,  the distribution produced by \namemethod{RL-MMPR} sits around the actual number of changepoints (better at segmenting) whereas \RLSPR, which is detecting far fewer changepoints, is the best performing prediction method.
This reflects the discrepancy between the objectives of segmentation and prediction.
For a more thorough analysis and evaluation of changepoint detection methods on time-series data,
see \cite{van2020evaluation}.

\section{Conclusion}
We introduced a unified Bayesian framework to  perform online predictions in non-stationary environments,
and showed how it covers many prior works.
We also used our framework
to design a new method,  \RLSPR, which is suited to tackle prediction problems
when the observations exhibit both abrupt and gradual changes.
We hope to explore other novel variants and applications in future work.

\chapter{Robustness}
\label{ch:robustness}

In this chapter,
we derive a novel, provably robust, and closed-form Bayesian update rule 
for online learning under a state-space model
in the presence of outliers and misspecified measurement models.
Our method combines generalised Bayesian (GB) inference
with filtering methods such as the extended and ensemble Kalman filter.
We use the former to show robustness and the latter to ensure 
computational efficiency in the case of nonlinear models.

Robustness to outliers is a critical requirement in sequential online learning,
where noisy or corrupted observations can easily degrade performance.
Recent (closed-form) methods addressing this challenge often rely on variational Bayes (VB), which, while effective,
tend to be more computationally expensive than standard Kalman filter (KF) approaches
due to the overhead of inner-loop computations.
(see e.g., \cite{wang2018,tao2023robustkf}).
Although more classical (low-cost) alternatives exist, they are typically restricted to linear systems and the results
may not extend to nonlinear or high-dimensional cases.

The method we propose is simple to implement, computationally efficient,
and robust to both outliers and model misspecification.
It offers performance comparable to, or better than, existing robust filtering approaches, including VB-based methods,
but at a much lower computational cost.
We demonstrate the effectiveness of our approach across a range of online learning tasks with outlier-contaminated observations.

\section{The method}
Our method is based on the GB  approach where one modifies the update equation in \eqref{eq:recursive-bayes} to use a loss function
$\ell_t: \real^\dimstate \to \real$ in place of the likelihood of the measurement process.
This gives the choice of \cPosterior of the form
\begin{equation}\label{eq:generalised-posterior}
    q(\vtheta_t \cond \data_{1:t}) \propto \exp(-\ell_t(\vtheta_t))\,q(\vtheta_t \cond \data_{1:t-1}),
\end{equation}
where $\auxv_t = \{\}$.
We propose to gain robustness to outliers 
in observation space by taking the loss function to be the model's  negative log-likelihood scaled by a data-dependent
weighting term
\begin{equation}
\label{eq:weighted-loglikelihood}
\ell_t(\vtheta_t)
= -W^2(\vy_t, \hat{\vy}_t)\,\log p(\vy_t| \vtheta_t),
\end{equation}
with 
$W: \real^{\dimobs}\times\real^\dimobs\to\real_{++}$ the weighting function---see Section \ref{sec:choice-weighting-function}
for examples--- and 
$p(\vy_t \vert \vtheta_t)$ the choice of likelihood.
We call our method the \emph{weighted observation likelihood filter (WoLF)}.
To specify an instance of our method, ones needs to define
the  likelihood $p(\vy_t \cond \vtheta_t)$
and the weighting function $W$.
In the next subsections, we show the flexibility of  WoLF
and derive weighted-likelihood-based KF and EKF algorithms.
Setting $W(\vy_t, \bar{\vy}_t)=1$ trivially recovers existing methods, but we will instead use non-constant weighting functions inspired by the work of \cite{Barp2019,matsubara2021robust,Altamirano2023_bocpd,Altamirano2023_gp}.

\section{Linear weighted observation likelihood filter}
In this section, we present the WoLF method under a linear SSM \eqref{eq:ssm-linear}.
In particular, the following proposition provides a closed-form solution
for the update step of WoLF under a linear measurement function and a Gaussian likelihood.
\begin{proposition}\label{prop:weighted-kf}
    Consider the linear-Gaussian SSM \eqref{eq:ssm-linear}
    with weighting function $W:\real^\dimobs\times\real^\dimobs \to \real$.
    Then, the update step of WoLF with loss function \eqref{eq:weighted-loglikelihood} is given
    by Proposition \ref{prop:kf-update-step-precision}
    with $\vR_t^{-1}$ replaced by $\bar{\vR}_t^{-1} = W^2(\vy_t, \hat{\vy}_t)\,\vR_t^{-1}$.
\end{proposition}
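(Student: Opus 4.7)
The plan is to reduce the weighted update to a standard Kalman filter update with a rescaled measurement-noise covariance, and then invoke Proposition \ref{prop:kf-update-step-precision} directly. Throughout, I will treat $W^2(\vy_t, \hat{\vy}_t)$ as a scalar that is constant in $\vtheta_t$: the measurement $\vy_t$ is observed, and the predicted output $\hat{\vy}_t = \vH_t \vmu_{t|t-1}$ depends only on the previous belief state, not on the argument $\vtheta_t$ of the loss. This is the key fact that makes the argument go through in closed form.

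First, I would write out the generalised posterior \eqref{eq:generalised-posterior} with the weighted loss \eqref{eq:weighted-loglikelihood}, obtaining
\begin{equation*}
q(\vtheta_t \mid \data_{1:t})
\;\propto\; \exp\bigl(W^2(\vy_t, \hat{\vy}_t)\,\log p(\vy_t \mid \vtheta_t)\bigr)\, q(\vtheta_t \mid \data_{1:t-1})
\;=\; p(\vy_t \mid \vtheta_t)^{W^2(\vy_t, \hat{\vy}_t)}\, q(\vtheta_t \mid \data_{1:t-1}).
\end{equation*}
Since $p(\vy_t \mid \vtheta_t) = \mathcal{N}(\vy_t \mid \vH_t\vtheta_t, \vR_t)$ under the linear-Gaussian SSM \eqref{eq:ssm-linear}, the next step is to show that raising this Gaussian density to the scalar power $w := W^2(\vy_t, \hat{\vy}_t)$ yields, up to a $\vtheta_t$-independent constant, another Gaussian whose precision is $w\,\vR_t^{-1}$. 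Explicitly, the quadratic form in the exponent,
\begin{equation*}
-\tfrac{w}{2}(\vy_t - \vH_t\vtheta_t)^\intercal \vR_t^{-1}(\vy_t - \vH_t\vtheta_t),
\end{equation*}
is exactly $-\tfrac{1}{2}(\vy_t - \vH_t\vtheta_t)^\intercal (w\,\vR_t^{-1})(\vy_t - \vH_t\vtheta_t)$, so that $p(\vy_t\mid\vtheta_t)^w \propto \mathcal{N}(\vy_t \mid \vH_t\vtheta_t, \bar{\vR}_t)$ with $\bar{\vR}_t^{-1} = w\,\vR_t^{-1}$.

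Substituting this back, the weighted posterior factorises as a prior (given by the predictive density in \eqref{eq:kf-predict-step}) times a Gaussian likelihood in $\vtheta_t$ with projection matrix $\vH_t$ and noise covariance $\bar{\vR}_t$. Because the $\vtheta_t$-independent normalising factor coming from raising a Gaussian to a power does not affect the posterior as a function of $\vtheta_t$, this is precisely the setting of Proposition \ref{prop:kf-update-step-precision} with $\vR_t^{-1}$ replaced by $\bar{\vR}_t^{-1}$. Applying that proposition verbatim yields the claimed update equations and completes the argument.

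I do not expect a serious obstacle here; the only subtlety to flag carefully is that $\hat{\vy}_t$ in the weight is a predicted quantity (e.g., $\vH_t\vmu_{t|t-1}$) rather than a function of the free variable $\vtheta_t$, so $w$ genuinely commutes past the Gaussian exponent and the normalisation constant is harmless. In the nonlinear case treated later this same argument will have to be revisited after linearising $h$, but for the present linear-Gaussian proposition the reduction to Proposition \ref{prop:kf-update-step-precision} is immediate.
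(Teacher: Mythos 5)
Your proposal is correct and follows essentially the same route as the paper's proof: both absorb the scalar weight $w_t^2 = W^2(\vy_t,\hat{\vy}_t)$ into the quadratic form of the Gaussian log-likelihood, observe that the leftover terms are constant in $\vtheta_t$, and then invoke the standard (precision-form) Kalman update with $\bar{\vR}_t^{-1} = w_t^2\,\vR_t^{-1}$. Your explicit remark that $\hat{\vy}_t = \vH_t\vmu_{t|t-1}$ does not depend on the free variable $\vtheta_t$ makes precise a point the paper leaves implicit, but it is the same argument.
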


\label{proof:weighted-kf}
\begin{proof}
    Let $w_t^2 := W^2(\vy_t, \hat{\vy}_t)$.
    The loss function takes the form
    \begin{equation}\label{eq:weighted-gaussian-loglikelihood}
    \begin{aligned}
        \ell_t(\vtheta_t)
        &= -w_t^2 \log \normdist{\vy_t}{\vH_t \vtheta_t}{\vR_t}\\
        &= \frac{1}{2}\left(\vy_t - \vH_t\vtheta_t\right)^\intercal(\vR_t / w_t^2)^{-1}\left(\vy_t - \vH_t\vtheta_t\right)
        -\frac{w_t^2\,\dimobs}{2}\log\pi - \frac{w_t^2}{2}\log|\vR_t|\\
        &= \frac{1}{2}\left(\vy_t - \vH_t\vtheta_t\right)^\intercal\bar\vR_t^{-1}\left(\vy_t - \vH_t\vtheta_t\right)
        + C,
    \end{aligned}
    \end{equation}
    with $\bar\vR_t = \vR_t / w_t^2$, and
    where $C = -\frac{w_t^2\,\dimobs}{2}\log\pi - \frac{w_t^2}{2}\log|\vR_t|$ is a term that does not depend on $\vtheta_t$.
    The remaining follows from the standard KF derivation. 
    Note that the loss function does not correspond to the log-likelihood for a homoskedastic Gaussian model since $\bar{\vR}_t$
    may depend on all data, including $\vy_t$.
\end{proof}
Figure \ref{fig:weighted-log-likelihood-gaussian} shows the weighted log-likelihood \eqref{eq:weighted-gaussian-loglikelihood}
for a univariate ${\cal N}(0, 1)$ Gaussian density
as a function of the weighting term $w_t^2\in(0, 1]$. 
\begin{figure}[htb]
    \centering
    \includegraphics[width=0.5\linewidth]{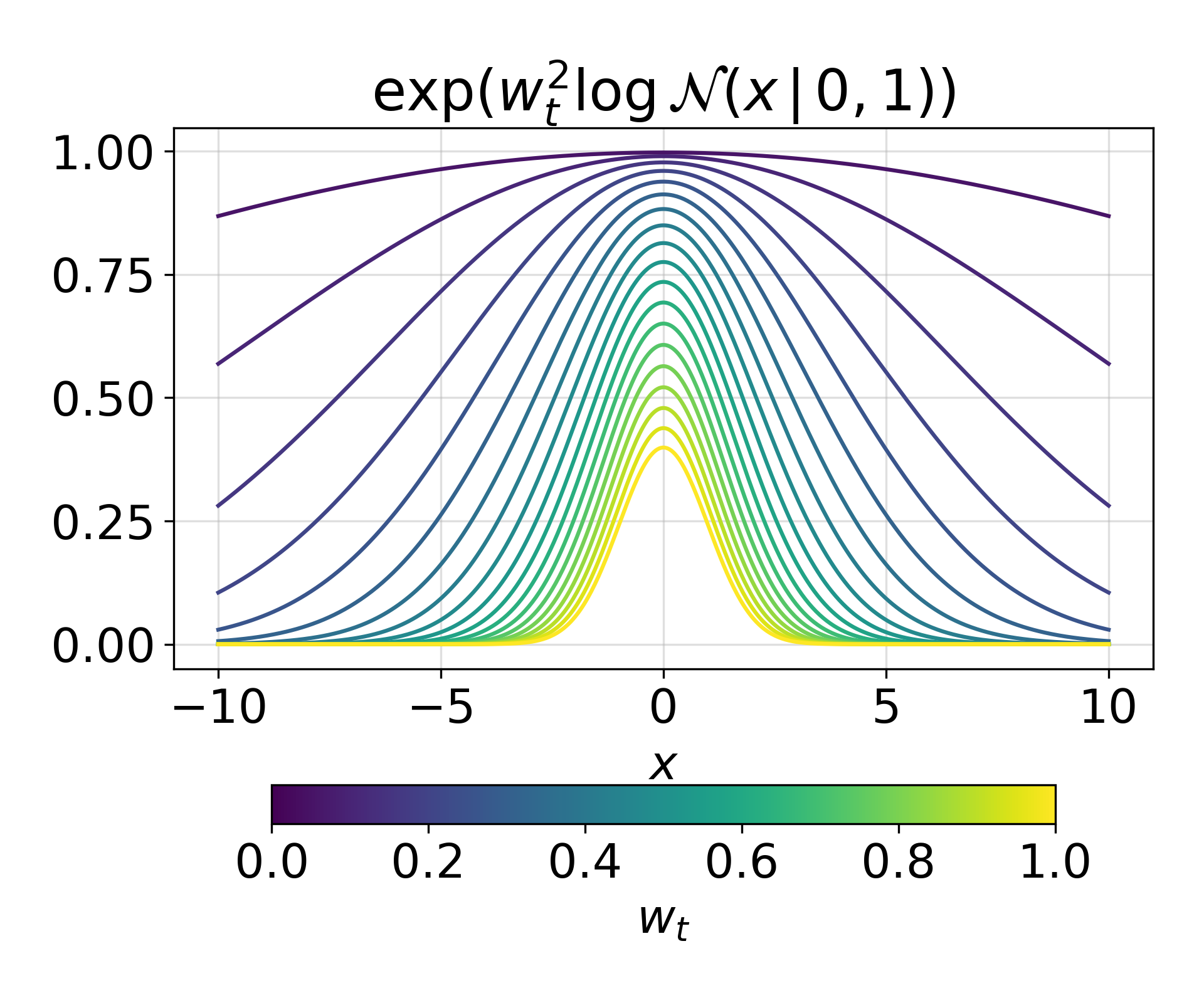}
    \caption{Weighted likelihood (unnormalised) for a standard Gaussian.}
    \label{fig:weighted-log-likelihood-gaussian}
\end{figure}
We observe that a weighted log-likelihood resembles a heavy-tailed likelihood for $w_t < 1$.

The resulting predict and update steps for WoLF under linear dynamics and
zero-mean Gaussians for the state and measurement process
are shown in Algorithm \ref{algo:wlf-step}.
\begin{algorithm}[ht]
\begin{algorithmic}
    \REQUIRE $\vF_t$, $\vQ_t$ // predict step
    \STATE $\vmu_{t|t-1} \gets \vF_t\,\vmu_{t-1}$
    \STATE $\vSigma_{t|t-1} \gets \vF_t\,\vSigma_{t-1}\, \vF_t^\intercal + \vQ_t$
    \REQUIRE $\vy_t$, $\vH_t$, $\vR_t$  // update step
    \STATE $\hat{\vy}_t \gets \vH_t \,\vmu_{t|t-1}$
    \STATE $w_t \gets W(\vy_t, \hat{\vy}_t)$
    \STATE $\vSigma_{t}^{-1}  \gets \vSigma_{t\vert t-1}^{-1} +
    w_t^2\,\vH_{t}^{\intercal}\, \vR_{t}^{-1}\, \vH_{t}$
    \STATE $\vK_t \gets w_t^2\,\vSigma_t\,\vH_t^\intercal \,\vR_t^{-1}$
    \STATE $\vmu_{t}  \gets \vmu_{t\vert t-1}+ \vK_t \left(\vy_{t}-\hat{\vy}_{t}\right)$
\end{algorithmic}
\caption{
    WoLF predict and update step
}
\label{algo:wlf-step}
\end{algorithm}

The computational complexity of WoLF under linear 
dynamics matches that of the KF, i.e., 
$O(\dimstate^3)$. Alternative robust filtering algorithms require
multiple iterations per measurement to achieve robustness and stability, making them significantly slower;
see Table \ref{tab:complexity-linear-model} for the computational complexity for the methods we consider, and
Figure \ref{fig:uci-per-step-time} for empirical comparisons.

\section{Nonlinear weighted observation likelihood filter}
\label{subsec:wlf-nonlinear-extensions}
The WoLF method readily extends to other learning algorithms.
For example, a WoLF version of the EKF, which is obtained by introducing a weighting function to $\eqref{eq:ssm-linearised}$
yielding the approximate choice of \cModel:
\begin{equation}
    \log p(\vy_t \vert \vtheta_t) = W^2(\vy_t, \bar{\vy}_t)\,\log\normdist{\vy_t}{\bar{\vy}_t}{\vR_t}.
\end{equation}
We can also derive a novel outlier-robust exponentially-weighted moving average algorithm
(see Section \ref{experiment:outlier-robust-ewma}).

\section{The choice of weighting function}
\label{subsec:choice-weighting-function}

Weighted likelihoods have a well-established history in Bayesian inference
and have demonstrated their efficacy in improving robustness
\citep{grunwald2012safe,holmes2017assigning,grunwald2017inconsistency,miller2018robust,bhattacharya2019bayesian,alquier2020concentration,dewaskar2023robustifying}. 
In this context, the corresponding posteriors are often referred to as fractional, tempered, or power posteriors. 
In most existing work, the determination of weights relies on heuristics 
and the assigned weights remain constant across all data points
so that $W(\vy_t, \hat{\vy}_t) = w \in \real$ for all $t$. 
In contrast, we dynamically incorporate information from the most recent observations without incurring additional computational costs
by defining the weight as a function of the current observation
$\vy_t$
and its prediction
$\hat{\vy}_t =  h_t(\vmu_{t|t-1})$,
which is based on all of the past observations.

To define the weighting function, 
we take inspiration from previous work 
for dealing with outliers.
In particular,  \citet{wang2018}
proposed classifying robust filtering algorithms
into two main types:
\textit{compensation-based}
algorithms,
which incorporate information from tail events into the model
in a robust way
\citep[see, e.g.,][]{Huang2016, Agamennoni2012},
and
\textit{detect-and-reject} algorithms,
which 
assume that outlier observations bear no useful information 
and thus are ignored
\citep[see, e.g.,][]{wang2018, mu2015}.
Below we show how both of these strategies can be implemented
using our WoLF method by merely changing the weighting function.

\paragraph{Inverse multi-quadratic weighting function:}
As an example of a compensation-based method,
we follow \citet{Altamirano2023_gp}
and use the Inverse Multi-Quadratic (IMQ) weighting,
which in our SSM setting is
\begin{align} \label{eq:w_t}
    W(\vy_t, \hat{\vy}_t) & =\left(1+\frac{||\vy_{t}-\hat{\vy}_{t}||_2^2}{c^{2}}\right)^{-1/2},
\end{align}
where $c > 0$ is the soft threshold and $\|\cdot\|_2$ denotes the $l_2$ norm.
We call WoLF with IMQ weighting ``WoLF-IMQ''.

\paragraph{Mahalanobis-based weighting function:}
The $l_2$ norm in the IMQ can be modified to account for the covariance structure of the measurement
process by replacing it with the Mahalanobis distance between $\vy_t$ and $\hat{\vy}_t$:
\begin{align}\label{eq:mahalanobis-weight}
    W(\vy_t, \hat{\vy}_t) & =\left(1+\frac{\|\vR_t^{-1/2}(\vy_t - \hat{\vy}_t)\|_2^2}{c^{2}}\right)^{-1/2}.
\end{align}
We call  WoLF with this weighting function the WoLF-MD method.
This type of weighted IMQ function has been used extensively in the kernel literature
\citep[see e.g.][]{chen2019stein, detommaso2018stein, riabiz2022optimal}.

\paragraph{Threshold Mahalanobis-based weighting function:}
As an example of a detect-and-reject method,
 we consider
\begin{equation}\label{eq:thresholded-mahalanobis-weight}
    W(\vy_t, \hat{\vy}_t) =
    \begin{cases}
    1 & \text{if } \|\vR_t^{-1/2}(\vy_t - \hat{\vy}_t)\|_2^2 \leq c,\\
    0 & \text{otherwise}.
    \end{cases}
\end{equation}
with $c > 0$ the fixed threshold.
The weighting function \eqref{eq:thresholded-mahalanobis-weight} corresponds
to ignoring information from estimated measurements whose Mahalanobis distance
to the true measurement is larger than
some predefined threshold $c$.
In the  linear setting, this weighting function is related to the benchmark method employed in \citet{ting2007}.
We refer to WoLF with this weighting function as ``WoLF-TMD''.


The proposed weighting functions --- the IMQ, the MD, and the TMD ---
are defined such that $W: \real^\dimobs\times\real^\dimobs \to [0, 1]$
and therefore can only down-weight observations.
This means that our updates are always conservative,
i.e., our posteriors will be wider in the presence of outliers
(see Figure \ref{fig:intro-image} for an example).

\begin{figure}
    \centering
    \includegraphics[width=0.6\columnwidth]{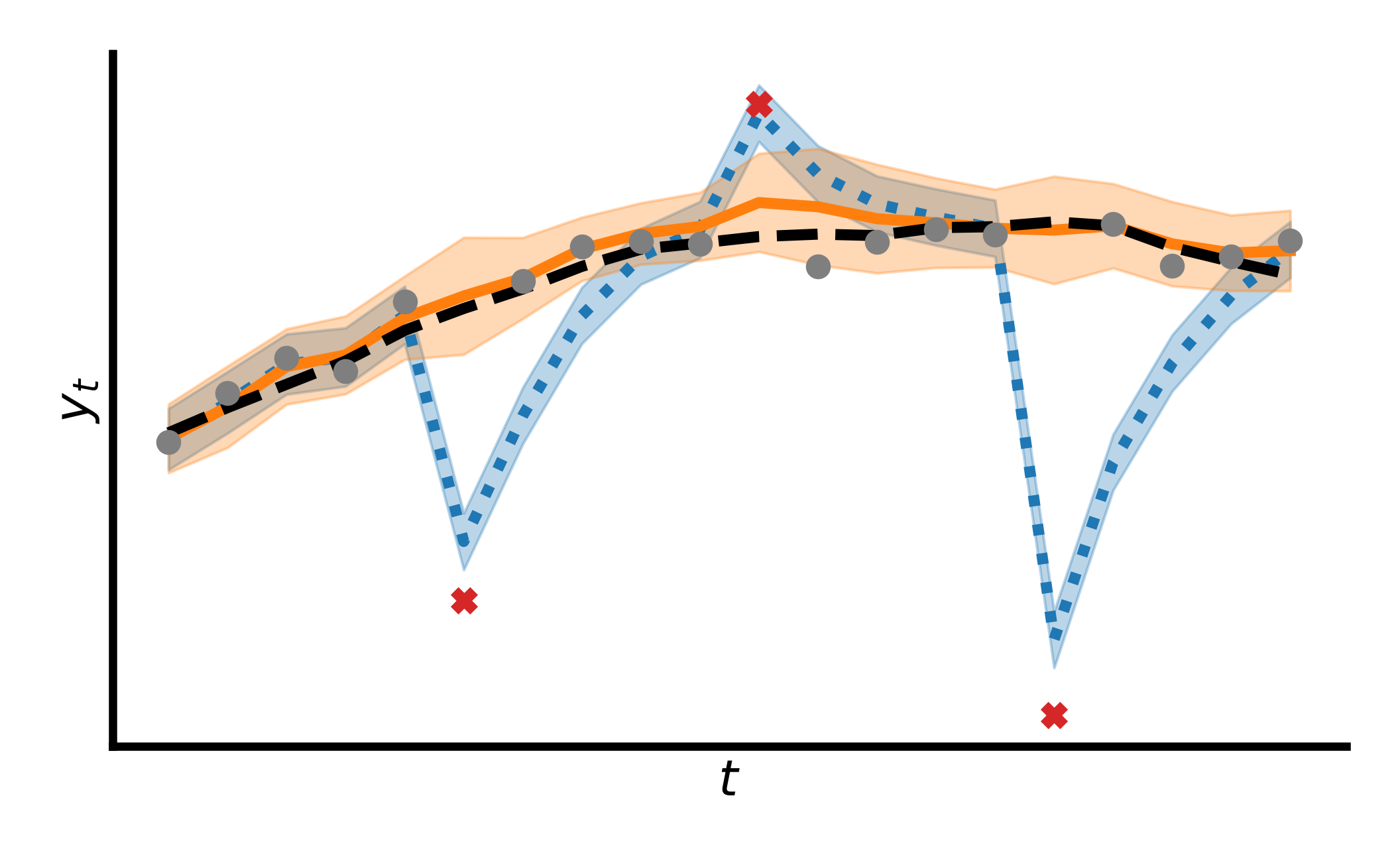}
    \caption{
        First state component of the SSM \eqref{eq:noisy-2d-ssm}.
        The grey dots are measurements sampled from \eqref{eq:noisy-2d-ssm} and
        the red crosses are measurements sampled from an outlier measurement process.
        The \textcolor{tabblue}{dotted blue line} shows the KF posterior mean estimate and
        the \textcolor{taborange}{solid orange line} shows our proposed WoLF posterior mean estimate.
        The regions around the posterior mean cover two standard deviations.
        For comparison, the dashed black line shows the true sampled state process.
    }
    \label{fig:intro-image}
\end{figure}

\section{Robustness properties}
\label{sec:theory}

In this section, we prove
the outlier-robustness for WoLF-type methods.
We use the classical framework of \citet{huber2011robust}.
Consider measurements $\vy_{1:t}$.
We measure the influence of a contamination $\vy^{c}_t$ by examining the divergence
between the posterior with the original observation $\vy_t$
and the posterior with the contamination $\vy_t^{c}$, which is allowed to be arbitrarily large. 
As a function of $\vy_t^{c}$, this divergence is called the \emph{posterior influence function} (PIF) and
was studied in
\citet{matsubara2021robust, Altamirano2023_bocpd, Altamirano2023_gp}.
Following \citet{Altamirano2023_gp}, we consider the Kullback-Leibler (KL) divergence.

\begin{definition}[posterior influence function]
    Let
    $\data_{1:t} = \{(\vx_1, \vy_1), \ldots, (\vx_t, \vy_t)\}$ and
    $\data_{1:t}^c = \{(\vx_1, \vy_1), \ldots, (\vx_t, \vy_t^c)\}$
    be two datasets.
    Consider the choice of \cPosterior $q$.
    The posterior influence function (PIF) of the measurement $\vy_t^c$ under $q$,
    given the dataset $\data_{1:t}$,
    is defined by
    \begin{equation}
        \operatorname{PIF}_q(\vy_t^{c},\data_{1:t}) = 
        \KL
        {q(\vtheta_t \cond \data_{1:t}^c)}
        {q(\vtheta_t \cond \data_{1:t})}.
    \end{equation}
\end{definition}

\begin{definition}[outlier robust posterior]
We denote the choice of \cPosterior $q$ to be outlier robust to measurements (or simply outlier robust)
if, for any given $\vx_t\in\real^{\dimin}$,
the effect of the contamination $\vy_t^c$ is bounded, i.e.,
\begin{equation}
    \sup_{\vy_t^c\in\mathbb{R}^{\dimobs}} \operatorname{PIF}_q(\vy_t^{c}, \data_{1:t}) <\infty.
\end{equation}
\end{definition}

\begin{theorem}\label{theorem:kf-unbounded}
    Let $q_{\rm LG}$ be the linear Gaussian update function shown in
    Section \ref{sec:extended-kalman-filter},
    then, $q_{\rm LG}$ has an unbounded PIF and is not outlier robust.
\end{theorem}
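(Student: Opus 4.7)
The plan is to exploit the fact that under the linear Gaussian update $q_{\rm LG}$, both the clean posterior $q_{\rm LG}(\vtheta_t\cond \data_{1:t})$ and the contaminated posterior $q_{\rm LG}(\vtheta_t\cond \data_{1:t}^c)$ are Gaussians that share the same covariance $\vSigma_t$. This is because \eqref{eq:kf-update-step} shows that $\vSigma_t$ depends only on $\vSigma_{t|t-1}$, $\vH_t$, and $\vR_t$, none of which see the value of $\vy_t$ (or its contaminated version $\vy_t^c$). Only the posterior means differ, and they differ affinely in the observation through the Kalman gain.

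First, I would write $\vmu_t = \vmu_{t|t-1} + \vK_t(\vy_t - \vH_t\,\vmu_{t|t-1})$ for the clean posterior mean and $\vmu_t^c = \vmu_{t|t-1} + \vK_t(\vy_t^c - \vH_t\,\vmu_{t|t-1})$ for the contaminated one, yielding the clean identity
\begin{equation}
\vmu_t^c - \vmu_t = \vK_t\,(\vy_t^c - \vy_t).
\end{equation}
Next, I would invoke Proposition \ref{prop:KL-divergence-gaussians} with $\vS_1 = \vS_2 = \vSigma_t$. Since the trace term reduces to $\dimstate$ and the log-determinant term vanishes, the KL divergence collapses to the quadratic
\begin{equation}
\operatorname{PIF}_{q_{\rm LG}}(\vy_t^c, \data_{1:t})
= \frac{1}{2}(\vy_t^c - \vy_t)^\intercal\,\vK_t^\intercal\,\vSigma_t^{-1}\,\vK_t\,(\vy_t^c - \vy_t).
\end{equation}

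Finally, I would conclude by showing this quantity is unbounded in $\vy_t^c$. The matrix $\vM := \vK_t^\intercal\,\vSigma_t^{-1}\,\vK_t$ is positive semidefinite by construction, and under the standard non-degeneracy assumptions on the filter (namely $\vSigma_{t|t-1}$ and $\vR_t$ positive definite and $\vH_t$ nonzero), $\vK_t = \vSigma_{t|t-1}\vH_t^\intercal \vS_t^{-1}$ is nonzero, so $\vM$ has at least one strictly positive eigenvalue $\lambda_{\max} > 0$ with unit eigenvector $\vv$. Taking the path $\vy_t^c = \vy_t + \alpha\,\vv$ and letting $\alpha \to \infty$ gives $\operatorname{PIF}_{q_{\rm LG}}(\vy_t^c, \data_{1:t}) = \tfrac{1}{2}\lambda_{\max}\,\alpha^2 \to \infty$, establishing $\sup_{\vy_t^c} \operatorname{PIF}_{q_{\rm LG}}(\vy_t^c, \data_{1:t}) = \infty$ and hence non-robustness.

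The argument is essentially routine once the KL reduction is noted; the only subtlety is stating cleanly the non-degeneracy hypothesis under which $\vK_t \neq 0$ (so that $\vM$ admits a positive eigenvalue). This is not really an obstacle, just a condition worth flagging explicitly so the theorem statement is unambiguous; the degenerate case $\vK_t = 0$ would correspond to the filter ignoring $\vy_t$ entirely, which is not the regime of interest.
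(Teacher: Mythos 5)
Your proposal is correct and follows essentially the same route as the paper: both exploit that the clean and contaminated posteriors share the covariance $\vSigma_t$ (since the KF covariance update never sees $\vy_t$), so the KL divergence collapses to the quadratic form $\tfrac{1}{2}(\vy_t-\vy_t^c)^\intercal\vK_t^\intercal\vSigma_t^{-1}\vK_t(\vy_t-\vy_t^c)$; this is exactly the paper's Lemma on the PIF form for $q_{\rm LG}$. The one substantive difference is in the concluding step, and there your version is actually the sounder one. The paper's written proof \emph{upper}-bounds the PIF by $C_1\|\vy_t-\vy_t^c\|_2^2$ and then takes the supremum of that bound, which only yields $\sup \operatorname{PIF} \leq \infty$ --- an inequality in the wrong direction for establishing unboundedness. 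You instead produce a \emph{lower} bound that diverges, by picking a unit eigenvector $\vv$ of the positive semidefinite matrix $\vK_t^\intercal\vSigma_t^{-1}\vK_t$ associated with a strictly positive eigenvalue (which exists precisely because $\vK_t\neq 0$ and $\vSigma_t^{-1}\succ 0$) and sending $\vy_t^c = \vy_t + \alpha\vv$ with $\alpha\to\infty$. Your explicit flagging of the non-degeneracy condition $\vK_t\neq 0$ matches the paper's assumption $\|\vK_t\|_F\neq 0 \iff \vH_t\neq\mathbf{0}$, so nothing is missing; your write-up is the version the paper should have given.
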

\begin{theorem}\label{theorem:wolf-bounded}
    Let $q_{\rm W-LG}$ be the generalised posterior \eqref{eq:generalised-posterior}
    with loss function \eqref{eq:weighted-gaussian-loglikelihood} and weight $W$ such that
    $\sup_{\vy_t\in\real^d}W(\vy_t, \hat{\vy}_t)<\infty$ and
    $\sup_{\vy_t\in\real^d}W(\vy_t, \hat{\vy}_t)^k\,\|\vy_t\|_{2}<\infty$
    for $k \geq 2$.
    Then,  $q_{\rm W-LG}$ has a bounded PIF and is, therefore, outlier robust.
\end{theorem}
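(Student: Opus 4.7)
The plan is to directly compute the PIF using the closed-form expressions we already have. By Proposition \ref{prop:weighted-kf}, both $q_{\rm W\text{-}LG}(\vtheta_t \cond \data_{1:t})$ and $q_{\rm W\text{-}LG}(\vtheta_t \cond \data_{1:t}^c)$ are Gaussian with means and precisions given explicitly in terms of the predictive mean $\vmu_{t|t-1}$, predictive covariance $\vSigma_{t|t-1}$, the projection $\vH_t$, the measurement covariance $\vR_t$, and the weights $w := W(\vy_t, \hat{\vy}_t)$ and $w^c := W(\vy_t^c, \hat{\vy}_t)$. Denote the resulting posterior parameters by $(\vmu, \vSigma)$ and $(\vmu^c, \vSigma^c)$. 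Applying the Gaussian KL formula of Proposition \ref{prop:KL-divergence-gaussians} gives
\begin{equation*}
    2\,\operatorname{PIF}_{q_{\rm W\text{-}LG}}(\vy_t^c, \data_{1:t})
    = \Tr\left(\vSigma^{-1}\vSigma^c\right)
    + (\vmu - \vmu^c)^\intercal \vSigma^{-1} (\vmu - \vmu^c)
    - \dimstate + \log\left(|\vSigma|/|\vSigma^c|\right),
\end{equation*}
so it suffices to bound each $\vy_t^c$-dependent term uniformly.

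The first task is to control $\vSigma^c$. From the precision-form update, $(\vSigma^c)^{-1} = \vSigma_{t|t-1}^{-1} + (w^c)^2\,\vH_t^\intercal \vR_t^{-1} \vH_t$. The first hypothesis $\sup_{\vy_t^c} W(\vy_t^c, \hat{\vy}_t) < \infty$ gives a uniform upper bound on $(w^c)^2$, so in the Loewner order
\begin{equation*}
    \vSigma_{t|t-1}^{-1} \preceq (\vSigma^c)^{-1} \preceq \vSigma_{t|t-1}^{-1} + (\sup W)^2\,\vH_t^\intercal \vR_t^{-1} \vH_t\,.
\end{equation*}
Both bounds are independent of $\vy_t^c$ and strictly positive definite (the lower one because $\vSigma_{t|t-1}^{-1} \succ 0$). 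Inverting yields uniform upper and lower bounds on $\vSigma^c$, which immediately makes $\Tr(\vSigma^{-1}\vSigma^c)$ and $\log(|\vSigma|/|\vSigma^c|)$ bounded in $\vy_t^c$.

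The second task is to control the Mahalanobis term. Using the update equation,
\begin{equation*}
    \vmu - \vmu^c = \left(\vmu - \vmu_{t|t-1}\right) - (w^c)^2\,\vSigma^c\,\vH_t^\intercal\,\vR_t^{-1}\,(\vy_t^c - \hat{\vy}_t),
\end{equation*}
where the first summand is independent of $\vy_t^c$. For the second summand, $\vSigma^c$, $\vH_t^\intercal \vR_t^{-1}$, and $\hat{\vy}_t$ are all bounded independently of $\vy_t^c$, so by the triangle inequality the only potentially divergent factor is $(w^c)^2\|\vy_t^c\|_2$. The second hypothesis (specialised to $k=2$) guarantees $\sup_{\vy_t^c} W(\vy_t^c, \hat{\vy}_t)^2\,\|\vy_t^c\|_2 < \infty$, which is precisely what is needed to make $\|\vmu - \vmu^c\|_2$ uniformly bounded, hence also the quadratic form $(\vmu - \vmu^c)^\intercal \vSigma^{-1}(\vmu - \vmu^c)$.

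Putting these pieces together yields a uniform bound on the PIF in $\vy_t^c$, proving robustness. The main subtle point I anticipate is the uniform lower bound on $\vSigma^c$ (needed so that $\log(|\vSigma|/|\vSigma^c|)$ does not blow up when $w^c \to 0$): this rests on $\vSigma_{t|t-1}^{-1}$ being strictly positive definite, which holds by induction provided the initial prior covariance is non-degenerate. Everything else is a matter of carefully threading the two moment-style hypotheses on $W$ through the KF update formulas; no new analytic input beyond Propositions \ref{prop:KL-divergence-gaussians} and \ref{prop:weighted-kf} is required.
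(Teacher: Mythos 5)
Your proposal is correct and follows the same overall strategy as the paper: write the PIF in closed form via the Gaussian KL formula and bound each of the three $\vy_t^c$-dependent terms (trace, Mahalanobis, log-determinant) using the two hypotheses on $W$. Where you differ is in the technical execution for the covariance-dependent terms. The paper bounds the corrupted covariance through a chain of Frobenius-norm estimates, Weyl's inequality for the smallest eigenvalue of a sum, and a determinant super-additivity inequality with a case split for the log-determinant term (Lemma \ref{lemma:norm-covariance-bound} and Propositions \ref{prop:wolf-bound-t1}--\ref{prop:wolf-bound-t3}). You instead sandwich the corrupted precision in the Loewner order, $\vSigma_{t|t-1}^{-1} \preceq (\vSigma^c)^{-1} \preceq \vSigma_{t|t-1}^{-1} + (\sup W)^2\,\vH_t^\intercal\vR_t^{-1}\vH_t$, and invert; this dispatches the trace and log-determinant terms in one stroke and avoids the paper's case analysis. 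A further consequence of keeping the uncorrupted precision $\vSigma^{-1}$ in the quadratic form (which is the correct matrix for the KL direction in the PIF definition) is that your argument only invokes the hypothesis at $k=2$, whereas the paper's expansion of $\bar{\vSigma}_t^{-1}$ inside the Mahalanobis term forces it to use $k=3$ in \eqref{eq:part-pif-t2-term-expand-ii}. Your flagged caveat --- that the uniform lower bound on $\vSigma^c$ requires $\vSigma_{t|t-1}^{-1} \succ 0$ --- is the same implicit assumption the paper makes, so nothing is lost. Both arguments are sound; yours is somewhat more economical.
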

In particular, the conditions are satisfied when $W$ is
\eqref{eq:w_t}, \eqref{eq:mahalanobis-weight}, or \eqref{eq:thresholded-mahalanobis-weight},
which are the focus of this work.


\section{Proof of robustness}
In this section, we prove Theorems \ref{theorem:kf-unbounded} and \ref{theorem:wolf-bounded} stated above.
We begin by stating some remarks that we use to prove the theorems.

\begin{remark}\label{remark:hermitian-matrix-eigvals}
    Let $\vA$ be a $\dimstate \times \dimstate$ Hermitian matrix.  
    By the spectral theorem, the eigenvalues of $\vA$ are real \citep{tao2010eigenvalues}.
    We denote these eigenvalues as $\sigma_{\rm max}(\vA) = \sigma_1(\vA) \geq \ldots \geq \sigma_\dimstate(\vA) = \sigma_{\rm min}(\vA)$,  
    where $\sigma_{\rm max}(\vA)$ and $\sigma_{\rm min}(\vA)$ represent the largest and smallest eigenvalues, respectively.  
\end{remark}

\begin{remark}\label{remark:upper-bound-xAx}
    Let $\vA$ be a $\dimstate\times\dimstate$ positive semidefinite matrix.
    Then, for every $\vx \in \real^\dimstate$,
    \begin{equation}
        \sqrt{\vx^\intercal\,\vA\,\vx} = \|\vA^{1/2}\,\vx\|_2 \leq \|\vA^{1/2}\|_2\,\|\vx\|_2 = \sqrt{\sigma_{\rm max}(\vA)}\,\|\vx\|_2.
    \end{equation}
    Here, $\|\vA\|_2 = \sqrt{\sigma_{\rm max}(\vA\,\vA)}$ is the spectral norm.
    See \cite{coppersmith1997rayleighinequalities} for details.
\end{remark}

\begin{remark}\label{remark:trace-bound}
    Let $\vA$ be a $\dimstate\times\dimstate$ positive semidefinite matrix.
    Then,
    \begin{equation}
        \|\vA\|_F^2 = {\rm Tr}(\vA\,\vA) \leq {\rm Tr}(\vA)^2.
    \end{equation}
    See Lemma 2.3 in \cite{yang2002traceineq} for details.
\end{remark}

\begin{remark}\label{remark:fro-properties}
    The Frobenius norm is a
    submutiplicative matrix norm and compatible with the L2 vector norm,
    i.e., for any two $\dimstate\times\dimstate$ Hermitian matrices $\vA$ and $\vB$, and
    a vector $\vz\in\real^\dimstate$,
    $\|\vA\,\vB\|_F \leq \|\vA\|_F\,\|\vB\|_F$ and
    $\|\vA\vx\|_2 \leq \|\vA\|_F\,\|\vx\|_2$.

    See Section 5.6 in \cite{horn2012matrixanalysis} for a proof on submultiplicativity.
    Next, compatibility follows from Remark \ref{remark:upper-bound-xAx} and
    properties of the trace. In particular:
    $\|\vA\,\vx\|_2 \leq \|\vA\|_2\,\|\vx\|_2
    = \sqrt{\sigma_{\rm max}(\vA\,\vA)}\,\|\vx\|_2
    \leq \sqrt{\sum_{d=1}^\dimstate \sigma_d(\vA\vA)}
    = \|\vA\|_F\,\|\vx\|_2
    $.
\end{remark}

\begin{remark}\label{remark:weyls-inverse-inequality}
    Let $\vA$ and $\vB$ be $\dimstate\times\dimstate$ Hermitian matrices.
    From Weyl's inequality, it follows that 
    \begin{equation}\label{eq:part-weyl-inverse}
        \sigma_{\rm min}(\vA + \vB)^{-1} \leq \left(\sigma_{\rm min}(\vA) + \sigma_{\rm min}(\vB)\right)^{-1}.
    \end{equation}
    See \cite{tao2010eigenvalues} for details.
\end{remark}

\begin{remark}\label{remark:lower-bound-determinants}
    For any two positive semi-definite matrices
    $\vA$ and $\vB$,
    \begin{equation}
    	|\vA| + |\vB| \leq |\vA + \vB|.
    \end{equation}
    The result follows
    from $|\vA + \vB| = |\vA|\,|\vI_\dimstate + \vA^{-1}\vB| \geq |\vA|\,(1 + |\vA^{-1}\vB|) = |\vA| + |\vB|$,
    where we used the fact that, for a matrix $\vC$, $\sigma_d(\vC + c) = c + \sigma_d(\vC)$ for $d \in \{1, \ldots, D\}$.
    So that $|\vA^{-1}\,\vB + 1| = \prod_{d=1}^\dimstate(1 + \sigma_d(\vA^{-1}\,\vB)) \geq 1 + \prod_{d=1}^\dimstate \sigma_d(\vA^{-1}\,\vB)$ = $1 + |\vA^{-1}\,\vB|$.
\end{remark}

\subsection{Proof of Theorem \ref{theorem:kf-unbounded} --- KF is not outlier-robust}
\label{section:proof-kf-unbounded}
Let  
$q_{\rm LG}(\vtheta_t \cond \data_{1:t}) = \mathcal{N}(\vtheta_t \cond \vmu_t, \vSigma_t)$ and  
$q_{\rm LG}(\vtheta_t \cond \data_{1:t}^c) = \mathcal{N}(\vtheta_t \cond \bar{\vmu}_t, \bar{\vSigma}_t)$  
be the posterior densities under $q_{\rm LG}$,  
conditioned on the uncorrupted dataset $\data_{1:t}$ and the  
corrupted dataset $\data_{1:t}^c$, respectively.
Furthermore, to avoid cases where the PIF is bounded through the choice of $\vH_t$,
suppose that $\|\vK\|_F \neq 0 \iff \vH_t \neq {\bf 0}$.

We show that $q_{\rm LG}$ is not outlier robust, i.e.,  
\begin{equation}
    \sup_{\vy_t^c} {\rm PIF}_{q_{\rm LG}}(\vy_t^c, \data_{1:t}) = \infty.  
\end{equation}  
To establish this, we first derive the explicit form of the PIF under the choice of $q_{\rm LG}$.  
The result is presented in the lemma below.  

\begin{lemma}\label{lemma:pif-form-kf}
The PIF between the posterior density $q_{\rm LG}$ under the corrupted and uncorrupted dataset
takes the form
\begin{equation}\label{eq:PIF-kf}
    {\rm PIF}_{q_{\rm LG}}(\vy_t^c,\data_{1:t})
    =
    \tfrac{1}{2}\left[\vK_t\,(\vy_{t} - \vy_{t}^{c})\right]^\intercal\,
    \vSigma_{t}^{-1}
    \left[\vK_t\,(\vy_{t} - \vy_{t}^{c})\right].
\end{equation}
where
$\vK_t = \vSigma_t\,\vH_t^\intercal\,\vR_t^{-1}$
\end{lemma}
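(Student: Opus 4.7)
The plan is to apply the closed form of the KL divergence between two multivariate Gaussians (Proposition \ref{prop:KL-divergence-gaussians}) after establishing that the two posterior densities, $q_{\rm LG}(\vtheta_t \cond \data_{1:t})$ and $q_{\rm LG}(\vtheta_t \cond \data_{1:t}^c)$, share the same covariance matrix $\vSigma_t = \bar\vSigma_t$ but differ in their means.

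First, I would invoke the update equations \eqref{eq:kf-update-step} from Proposition \ref{prop:kf-equations} (equivalently, the precision form of Proposition \ref{prop:kf-update-step-precision}). The key observation is that, under the linear Gaussian model, the posterior covariance update depends only on $\vSigma_{t|t-1}$, $\vH_t$, and $\vR_t$, but \emph{not} on the value of the measurement $\vy_t$ itself. Therefore, replacing $\vy_t$ by $\vy_t^c$ leaves $\vSigma_t$ unchanged, i.e., $\bar\vSigma_t = \vSigma_t$, and similarly $\bar\vK_t = \vK_t$. Only the posterior mean changes: $\vmu_t = \vmu_{t|t-1} + \vK_t(\vy_t - \vH_t\,\vmu_{t|t-1})$ versus $\bar\vmu_t = \vmu_{t|t-1} + \vK_t(\vy_t^c - \vH_t\,\vmu_{t|t-1})$, so that
\begin{equation*}
    \bar\vmu_t - \vmu_t = \vK_t\,(\vy_t^c - \vy_t).
\end{equation*}

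Next, I would apply Proposition \ref{prop:KL-divergence-gaussians} to compute $\KL{q_{\rm LG}(\vtheta_t\cond\data_{1:t}^c)}{q_{\rm LG}(\vtheta_t\cond\data_{1:t})}$. With equal covariances $\bar\vSigma_t = \vSigma_t$, the trace term reduces to $\mathrm{Tr}(\vSigma_t^{-1}\vSigma_t) = \dimstate$, the log-determinant term vanishes since $|\vSigma_t|/|\bar\vSigma_t| = 1$, and the $-\dimstate$ term cancels the trace, leaving only the Mahalanobis quadratic form
\begin{equation*}
    \tfrac{1}{2}(\bar\vmu_t - \vmu_t)^\intercal\,\vSigma_t^{-1}\,(\bar\vmu_t - \vmu_t).
\end{equation*}
Substituting $\bar\vmu_t - \vmu_t = \vK_t\,(\vy_t^c - \vy_t)$ and noting that the quadratic form is invariant under the sign of the difference yields exactly \eqref{eq:PIF-kf}.

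There is no real obstacle here: both steps are direct applications of results already established in the paper. The only subtlety worth flagging explicitly is the data-independence of the covariance update, which is what makes the KL divergence collapse to the simple quadratic form in the mean discrepancy; this is precisely the structural feature that will later be exploited (in Theorem \ref{theorem:kf-unbounded}) to show that taking $\|\vy_t^c\|_2 \to \infty$ drives the PIF to infinity whenever $\vK_t \neq \mathbf{0}$.
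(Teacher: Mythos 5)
Your proposal is correct and follows essentially the same route as the paper: establish that the posterior covariance (and hence the gain) is unchanged by the corruption since the covariance update does not depend on the observed value, then apply Proposition \ref{prop:KL-divergence-gaussians} so that the KL divergence collapses to the Mahalanobis quadratic form in the mean difference $\vK_t(\vy_t - \vy_t^c)$. The only cosmetic difference is that the paper writes the update in the precision form of Proposition \ref{prop:kf-update-step-precision}, which is what makes the stated gain $\vK_t = \vSigma_t\,\vH_t^\intercal\,\vR_t^{-1}$ immediate, but your observation that the two forms are equivalent covers this.
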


\begin{proof}

Following proposition \ref{prop:kf-update-step-precision},
the update equations for the contaminated and uncontaminated densities take the form
\begin{equation}
\label{eq:part-clean-corrupted-posterior-terms-kf}
\begin{aligned}
	\vmu_{t} =\vmu_{t\vert t-1}+\,\vK_t \left(\vy_{t}-\hat{\vy}_{t}\right),&&
	\bar{\vmu_{t}} =\vmu_{t\vert t-1} + \vK_t\,\left(\vy_{t}^{c}-\hat{\vy}_{t}\right),\\
	\vSigma_{t}^{-1} =\vSigma_{t\vert t-1}^{-1} +  \vH_{t}^{\intercal} \vR_{t}^{-1} \vH_{t}, &&
	\bar{\vSigma_{t}}^{-1} =\vSigma_{t|t-1}^{-1} + \vH_{t}^{\intercal}\,\vR_{t}^{-1}\,\vH_{t},
\end{aligned}
\end{equation}
where we observe that $\vSigma_t = \bar\vSigma_t$.

Next, following Proposition \ref{prop:KL-divergence-gaussians},
we write the PIF as
\begin{equation}\label{eq:part-pif-expansion-i}
\begin{aligned}
    \operatorname{PIF}_{q_{\rm LG}}(\vy_t^{c},\vy_{1:t})
    &=\frac{1}{2}\Bigg(
    \Tr\left(\vSigma_{t}^{-1}\vSigma_{t}\right) - \dimstate +
    \left(\vmu_{t} - \vmu_{t}^{c}\right)^\intercal \vSigma_{t}^{-1}\left(\vmu_{t} - \vmu_{t}^{c}\right)+ 
    \ln\left(\frac{\det\vSigma_{t}}{\det\vSigma_{t}}\right)\Bigg)\\
    &=\Tr\left(\vI_\dimstate\right) - \dimstate + \tfrac{1}{2}\left(\vmu_{t} - \vmu_{t}^{c}\right)^\intercal \vSigma_{t}^{-1}\left(\vmu_{t} - \vmu_{t}^{c}\right)\\
    &=\tfrac{1}{2}\left(\vy_{t} - \vy_{t}^{c}\right)^\intercal \vK_t^\intercal\vSigma_{t}^{-1}\vK_t\left(\vy_{t} - \vy_{t}^{c}\right)\\
    &=\tfrac{1}{2}\left[\vK_t\,(\vy_{t} - \vy_{t}^{c})\right]^\intercal\,
    \vSigma_{t}^{-1}
    \left[\vK_t\,(\vy_{t} - \vy_{t}^{c})\right].
\end{aligned}
\end{equation}
\end{proof}

\paragraph{Proof of Theorem \ref{theorem:kf-unbounded}}
\begin{proof}
    First write an upper bound for the PIF
    following Lemma \ref{lemma:pif-form-kf}, Remark \ref{remark:fro-properties}, and Remark \ref{remark:fro-properties}:
    \begin{equation}
    \begin{aligned}
        {\rm PIF}_{q_{\rm LG}}(\vy_t^c,\data_{1:t})
        &=
        \tfrac{1}{2}\left[\vK_t\,(\vy_{t} - \vy_{t}^{c})\right]^\intercal\,
        \vSigma_{t}^{-1}
        \left[\vK_t\,(\vy_{t} - \vy_{t}^{c})\right]\\
        &\leq \tfrac{1}{2}\,\sigma_{\rm max}(\vSigma_t^{-1})\|\vK_t\,(\vy_{t} - \vy_{t}^{c})\|_2^2\\
        &\leq \tfrac{1}{2}\,\sigma_{\rm max}(\vSigma_t^{-1})\|\vK_t\|_F^2\,\|\vy_{t} - \vy_{t}^{c}\|_2^2\\
        &= C_1\,\|\vy_{t} - \vy_{t}^{c}\|_2^2.
    \end{aligned}
    \end{equation}
    Then, since $\|\vK_t\|_F^2 \neq 0$ by assumption,
    \begin{equation}
        \sup_{\vy_t^c} {\rm PIF}_{q_{\rm LG}}(\vy_t^c,\data_{1:t})
        \leq \sup_{\vy_t^c} C_1\,\|\vy_{t} - \vy_{t}^{c}\|_2^2 = \infty.
    \end{equation}
\end{proof}

\subsection{Proof of Theorem \ref{theorem:wolf-bounded} --- WoLF is outlier-robust}
\label{section:proof-wolf-bounded}
Let $W$ be a weighting function such that  
$\sup_{\vy_t \in \real^\dimobs}\,W(\vy_t, \hat{\vy}_t) < \infty$ and  
$\sup_{\vy_t \in \real^\dimobs}\,W(\vy_t, \hat{\vy}_t)^k\,\|\vy_t - \hat{\vy}_t\|_2 < \infty$  
for $k \geq 2$.  
For simplicity, denote  
$w_t := W(\vy_t, \hat{\vy}_t)$ and  
$\bar{w}_t := W(\vy_t^c, \hat{\vy}_t)$.  
Let  
$q_{\rm W-LG}(\vtheta_t \cond \data_{1:t}) = \mathcal{N}(\vtheta_t \cond \vmu_t, \vSigma_t)$ and  
$q_{\rm W-LG}(\vtheta_t \cond \data_{1:t}^c) = \mathcal{N}(\vtheta_t \cond \bar{\vmu}_t, \bar{\vSigma}_t)$  
be the posterior densities under $q_{\rm W-LG}$,  
conditioned on the uncorrupted dataset $\data_{1:t}$ and the  
corrupted dataset $\data_{1:t}^c$, respectively, using the weighting function $W$.

Our goal is to show that $q_{\rm LG}$ is outlier robust, i.e.,  
\begin{equation}
    \sup_{\vy_t^c} {\rm PIF}_{q_{\rm W-LG}}(\vy_t^c, \data_{1:t}) < \infty.  
\end{equation}  
To establish this, we first derive the explicit form of the PIF under the choice of $q_{\rm W-LG}$.  
The result is presented in the lemma below.

\begin{lemma}\label{lemma:pif-form-wolf}
The PIF between the posterior density $q_{\rm W-LG}$ under the corrupted and uncorrupted dataset
takes the form
\begin{equation}\label{eq:PIF-wolf}
    {\rm PIF}_{q_{\rm W-LG}}(\vy_t^c,\data_{1:t}) =
     \frac{1}{2}\Bigg(
    \underbrace{\Tr\left(\bar{\vSigma_{t}}^{-1}\vSigma_{t}\right)}_{\rm (T.1)}
    - \dimstate
    +
    \underbrace{\left(\vmu_{t} - \bar{\vmu_{t}}\right)^\intercal \bar{\vSigma}_{t}^{-1}\left(\vmu_{t} - \bar{\vmu}_{t}\right)}_{\rm (T.2)}
    +
  \underbrace{\log\left(\frac{\det\bar\vSigma_{t}}{\det\vSigma_{t}}\right)}_{\rm (T.3)}\Bigg),
\end{equation}
where
\begin{equation}
\label{eq:part-clean-corrupted-posterior-terms}
\begin{aligned}
	\vmu_{t} =\vmu_{t\vert t-1}+w_t^2\,\vK_t \left(\vy_{t}-\hat{\vy}_{t}\right),&&
	\bar{\vmu_{t}} =\vmu_{t\vert t-1} + \bar{w_t}^2\,\bar{\vK_t}\,\left(\vy_{t}^{c}-\hat{\vy}_{t}\right),\\
	\vSigma_{t}^{-1} =\vSigma_{t\vert t-1}^{-1} +  w_t^2\vH_{t}^{\intercal} \vR_{t}^{-1} \vH_{t}, &&
	\bar{\vSigma_{t}}^{-1} =\vSigma_{t|t-1}^{-1} +\bar{w_t}^2\,\vH_{t}^{\intercal}\,\vR_{t}^{-1}\,\vH_{t},
\end{aligned}
\end{equation}
\end{lemma}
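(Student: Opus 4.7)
The plan is to mirror the derivation of Lemma \ref{lemma:pif-form-kf} but without the simplification $\vSigma_t = \bar{\vSigma}_t$, since for WoLF the two posterior covariances differ because the weights $w_t$ and $\bar{w}_t$ depend on the observation.

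First I would derive the update equations \eqref{eq:part-clean-corrupted-posterior-terms} by invoking Proposition \ref{prop:weighted-kf}, which states that the WoLF update is obtained from the precision-form KF update of Proposition \ref{prop:kf-update-step-precision} after replacing $\vR_t^{-1}$ by $W^2(\vy_t,\hat{\vy}_t)\,\vR_t^{-1}$. Applying this substitution twice, once with $w_t = W(\vy_t,\hat{\vy}_t)$ for the clean dataset $\data_{1:t}$ and once with $\bar{w}_t = W(\vy_t^c,\hat{\vy}_t)$ for the contaminated dataset $\data_{1:t}^c$, yields the posterior precisions $\vSigma_t^{-1}$ and $\bar{\vSigma}_t^{-1}$ and the posterior means $\vmu_t$ and $\bar{\vmu}_t$ as given in the statement. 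Note that the predicted mean $\vmu_{t|t-1}$, predicted covariance $\vSigma_{t|t-1}$, and predicted observation $\hat{\vy}_t = \vH_t\,\vmu_{t|t-1}$ do not depend on the observation and are therefore the same in both recursions.

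Second, I would plug these two Gaussian posteriors into the closed-form KL divergence between multivariate Gaussians given in Proposition \ref{prop:KL-divergence-gaussians}. With $p_1(\vtheta_t) = q_{\rm W\text{-}LG}(\vtheta_t\cond\data_{1:t}^c) = \normdist{\vtheta_t}{\bar{\vmu}_t}{\bar{\vSigma}_t}$ and $p_2(\vtheta_t) = q_{\rm W\text{-}LG}(\vtheta_t\cond\data_{1:t}) = \normdist{\vtheta_t}{\vmu_t}{\vSigma_t}$, the definition of the PIF gives
\begin{equation*}
    {\rm PIF}_{q_{\rm W\text{-}LG}}(\vy_t^c,\data_{1:t}) = \KL{\normdist{\vtheta_t}{\bar{\vmu}_t}{\bar{\vSigma}_t}}{\normdist{\vtheta_t}{\vmu_t}{\vSigma_t}},
\end{equation*}
which expands directly into the four terms (T.1)--(T.3) stated in \eqref{eq:PIF-wolf}.

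Since both steps are direct applications of results already established in the excerpt, there is no real obstacle; the only thing to be careful about is the direction of the KL divergence (the contaminated posterior plays the role of $p_1$ in Proposition \ref{prop:KL-divergence-gaussians}) and the fact that, unlike in the KF case of Lemma \ref{lemma:pif-form-kf}, the trace and log-determinant terms do not cancel because $\vSigma_t\neq\bar{\vSigma}_t$. These three surviving terms are precisely what will later allow Theorem \ref{theorem:wolf-bounded} to use the boundedness assumptions on $W$ and $W^k\|\vy_t-\hat{\vy}_t\|_2$ to bound the PIF uniformly in $\vy_t^c$.
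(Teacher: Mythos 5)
Your proposal is correct and takes essentially the same route as the paper, whose proof is a one-line appeal to Algorithm \ref{algo:wlf-step} (equivalently Proposition \ref{prop:weighted-kf}, which you invoke) for the four update equations and to Proposition \ref{prop:KL-divergence-gaussians} for the closed-form Gaussian KL. One bookkeeping caveat: with the contaminated posterior as $p_1$ and the clean posterior as $p_2$, Proposition \ref{prop:KL-divergence-gaussians} produces $\Tr(\vSigma_t^{-1}\bar{\vSigma}_t)$, $(\vmu_t-\bar{\vmu}_t)^\intercal\vSigma_t^{-1}(\vmu_t-\bar{\vmu}_t)$ and $\log(|\vSigma_t|/|\bar{\vSigma}_t|)$, i.e.\ the roles of $\vSigma_t$ and $\bar{\vSigma}_t$ are swapped relative to the displayed terms ${\rm (T.1)}$--${\rm (T.3)}$, which instead correspond to the KL taken in the opposite order; this mismatch sits in the lemma statement itself rather than in your argument (you follow the PIF definition faithfully), and it is immaterial for the downstream boundedness results, but you should state explicitly which orientation you are expanding so that the formula you write down is the one you actually derive.
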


\begin{proof}
    The proof follows directly from Algorithm \ref{algo:wlf-step}
    and Proposition \ref{prop:KL-divergence-gaussians} on
    the KL divergence between two multivariate Gaussian densities.
\end{proof}

Lemma \ref{lemma:pif-form-wolf} establishes that if the PIF under $q_{\rm W-LG}$ is outlier-robust,  
then each of the terms $({\rm T.1})$, $({\rm T.2})$, and $({\rm T.3})$  
must remain finite (bounded) under the supremum over all possible values of $\vy_t^c$.  
Before demonstrating that this is indeed the case,  
we first present two auxiliary lemmas that will assist in proving the boundedness of these terms.

\begin{lemma}\label{lemma:norm-covariance-bound}
    The Frobenius norm of the corrupted posterior covariance $\bar{\vSigma}_t$
    shown in \eqref{eq:part-clean-corrupted-posterior-terms} is bounded above by
\begin{equation}
	\|\bar{\vSigma_{t}}\|_{\rm F} \leq
	\frac{\dimstate}{\sigma_{\rm min}(\vSigma_{t|t-1}^{-1}) + w_t^2\,\sigma_{\rm min}(\vH_t^\intercal\,\vR_t^{-1}\,\vH_t))}.
\end{equation}
\end{lemma}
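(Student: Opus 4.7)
The target inequality controls the Frobenius norm of the contaminated posterior covariance $\bar{\vSigma}_t$ via the spectrum of its inverse. The expression \eqref{eq:part-clean-corrupted-posterior-terms} writes $\bar{\vSigma}_t^{-1}$ as a sum of two Hermitian positive semidefinite matrices, namely $\vSigma_{t|t-1}^{-1}$ and $\bar{w}_t^2\,\vH_t^\intercal\vR_t^{-1}\vH_t$ (I suspect the $w_t^2$ in the statement is a typographical slip for $\bar{w}_t^2$, but the argument is identical in either case). This additive structure is exactly what Weyl-type eigenvalue bounds and the trace-versus-Frobenius comparison are designed for, so the proof should chain together Remarks \ref{remark:weyls-inverse-inequality} and \ref{remark:trace-bound} together with the elementary fact that the trace of a PSD matrix is at most $\dimstate$ times its largest eigenvalue.

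Concretely, I would proceed in three short steps. First, starting from
\begin{equation*}
    \bar{\vSigma}_t^{-1} = \vSigma_{t|t-1}^{-1} + \bar{w}_t^2\,\vH_t^\intercal\vR_t^{-1}\vH_t,
\end{equation*}
apply Remark \ref{remark:weyls-inverse-inequality} to obtain
\begin{equation*}
    \sigma_{\rm max}(\bar{\vSigma}_t)
    = \sigma_{\rm min}(\bar{\vSigma}_t^{-1})^{-1}
    \leq \bigl(\sigma_{\rm min}(\vSigma_{t|t-1}^{-1}) + \bar{w}_t^{\,2}\,\sigma_{\rm min}(\vH_t^\intercal\vR_t^{-1}\vH_t)\bigr)^{-1}.
\end{equation*}
Second, invoke Remark \ref{remark:trace-bound} to pass from the Frobenius norm to the trace: since $\bar{\vSigma}_t$ is PSD (being the inverse of a sum of two PSD matrices), $\|\bar{\vSigma}_t\|_F \leq \Tr(\bar{\vSigma}_t)$. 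Third, bound the trace by the dimension times the largest eigenvalue, $\Tr(\bar{\vSigma}_t) = \sum_{d=1}^{\dimstate}\sigma_d(\bar{\vSigma}_t) \leq \dimstate\,\sigma_{\rm max}(\bar{\vSigma}_t)$, and substitute the Weyl bound from step one to conclude.

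There is no real obstacle here: each step is a direct invocation of a remark stated earlier in the chapter, and the composition is immediate. The only subtlety is ensuring the hypotheses of the remarks apply, which requires noting that $\vSigma_{t|t-1}^{-1}$, $\vH_t^\intercal\vR_t^{-1}\vH_t$ and hence $\bar{\vSigma}_t$ are all Hermitian PSD (inherited from $\vSigma_{t|t-1}$ and $\vR_t$ being positive definite); this is a one-line check. The resulting bound will be crucial in the remaining proof of Theorem \ref{theorem:wolf-bounded}, since it shows that even after contamination, the spectrum of $\bar{\vSigma}_t$ cannot blow up with $\vy_t^c$ — the weight $\bar{w}_t$ only appears multiplying a fixed PSD matrix and, combined with the boundedness assumption $\sup_{\vy_t^c}\bar{w}_t < \infty$, this uniformly controls all three terms (T.1), (T.2), (T.3) of the PIF \eqref{eq:PIF-wolf}.
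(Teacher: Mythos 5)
Your proposal is correct and follows essentially the same route as the paper's proof: bound $\|\bar{\vSigma}_t\|_F$ by $\Tr(\bar{\vSigma}_t) \leq \dimstate\,\sigma_{\rm max}(\bar{\vSigma}_t)$ via Remarks \ref{remark:hermitian-matrix-eigvals} and \ref{remark:trace-bound}, then control $\sigma_{\rm max}(\bar{\vSigma}_t) = \sigma_{\rm min}(\bar{\vSigma}_t^{-1})^{-1}$ with the Weyl-type inequality of Remark \ref{remark:weyls-inverse-inequality} applied to the additive DLR-free decomposition of $\bar{\vSigma}_t^{-1}$. Your observation that the $w_t^2$ in the statement should read $\bar{w}_t^2$ is also consistent with the paper's own derivation, which carries $\bar{w}_t^2$ through the eigenvalue bound.
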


\begin{proof}
    Let $\bar{\vSigma}_t$ be the posterior covariance under the corrupted dataset $\data_{1:t}^c$.
    Following
    Remark \ref{remark:hermitian-matrix-eigvals},
    Remark \ref{remark:trace-bound}, and
    the properties of the trace, we obtain
    \begin{equation}\label{eq:part-upper-bound-fro}
    \|\bar{\vSigma}_t\|_F
    \leq \Tr\left(\bar{\vSigma}_t\right)\\
    =  \sum_{d=1}^\dimstate \sigma_d(\bar{\vSigma}_t)\\
    \leq \dimstate\,\sigma_{\rm max}(\bar{\vSigma}_t).
    \end{equation}

    Next, an upper bound for
    $\sigma_{\rm max}(\bar{\vSigma}_t)$ is given by
    \begin{equation}\label{eq:part-upper-bound-sigma}
    \begin{aligned}
        \sigma_{\rm max}(\bar{\vSigma}_t)
        &= \sigma_{\rm max}\left(\left(\vSigma_{t|t-1}^{-1} + \bar{w}_t^2\,\vH_t^\intercal\,\vR_t^{-1}\,\vH_t\right)^{-1}\right)\\
        &= \left(\sigma_{\rm min}\left(\vSigma_{t|t-1}^{-1} + \bar{w}_t^2\,\vH_t^\intercal\,\vR_t^{-1}\,\vH_t\right)\right)^{-1}\\
        &\leq \left( \sigma_{\rm min}(\vSigma_{t|t-1}^{-1}) + \sigma_{\rm min}\left(\bar{w}_t^2\,\vH_t^\intercal\,\vR_t^{-1}\,\vH_t\right)\right)^{-1}\\
        &= \left( \sigma_{\rm min}(\vSigma_{t|t-1}^{-1}) + \bar{w}_t^2\,\sigma_{\rm min}\left(\vH_t^\intercal\,\vR_t^{-1}\,\vH_t\right)\right)^{-1}.
    \end{aligned}
    \end{equation}
    In \eqref{eq:part-upper-bound-sigma},
    the second equality is a consequence of the positive definiteness (symmetry) of $\bar\vSigma_t$,
    and the upper bound follows from Remark \ref{remark:weyls-inverse-inequality}.

    The desired result follows as a consequence of
    \eqref{eq:part-upper-bound-fro} and \eqref{eq:part-upper-bound-sigma}.
\end{proof}

\begin{lemma}\label{lemma:norm-diff-means-bound}
	The L2 norm of the difference between the corrupted and uncorrupted posterior mean is bounded above by
	\begin{equation}\label{eq:part-difference-posterior-means}
		\|\vmu_t - \bar{\vmu}_t\|_2^2 \leq C_3\,\left(w_t^2\|\vy_t^c - \hat{\vy}_t\|  \right)^2 + C_1,
	\end{equation}
	where $C_1$ and $C_3$ are real-valued elements
	that do not depend on $\vy_t^c$.
\end{lemma}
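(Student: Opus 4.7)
The plan is to expand the difference $\vmu_t - \bar{\vmu}_t$ using the explicit update equations in \eqref{eq:part-clean-corrupted-posterior-terms}, apply the triangle inequality together with the elementary bound $(a+b)^2 \leq 2a^2 + 2b^2$, and then control each piece separately. More precisely, writing
\begin{equation*}
    \vmu_t - \bar{\vmu}_t = w_t^2\,\vK_t(\vy_t - \hat{\vy}_t) - \bar{w}_t^2\,\bar{\vK}_t(\vy_t^c - \hat{\vy}_t),
\end{equation*}
I would bound $\|\vmu_t - \bar{\vmu}_t\|_2^2$ by $2\,\|w_t^2\,\vK_t(\vy_t - \hat{\vy}_t)\|_2^2 + 2\,\|\bar{w}_t^2\,\bar{\vK}_t(\vy_t^c - \hat{\vy}_t)\|_2^2$. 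The first summand does not depend on the contamination $\vy_t^c$ (it only involves the clean observation $\vy_t$, the prediction $\hat{\vy}_t$, and the uncorrupted gain $\vK_t$), so it can be absorbed into the additive constant $C_1$.

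For the second summand, which is the only term depending on $\vy_t^c$, I would use the compatibility of the Frobenius norm with the $L_2$ vector norm from Remark \ref{remark:fro-properties} and submultiplicativity to write
\begin{equation*}
    \|\bar{w}_t^2\,\bar{\vK}_t(\vy_t^c - \hat{\vy}_t)\|_2^2 \leq \|\bar{\vK}_t\|_F^2\,\bigl(\bar{w}_t^2\,\|\vy_t^c - \hat{\vy}_t\|_2\bigr)^2.
\end{equation*}
Since $\bar{\vK}_t = \bar{\vSigma}_t\,\vH_t^\intercal\,\vR_t^{-1}$, submultiplicativity gives $\|\bar{\vK}_t\|_F \leq \|\bar{\vSigma}_t\|_F\,\|\vH_t^\intercal\,\vR_t^{-1}\|_F$, and applying Lemma \ref{lemma:norm-covariance-bound} yields a bound on $\|\bar{\vSigma}_t\|_F$ that is uniform in $\bar{w}_t \in [0,\bar{w}_{\max}]$ (simply drop the nonnegative $\bar{w}_t^2\,\sigma_{\rm min}(\vH_t^\intercal\,\vR_t^{-1}\,\vH_t)$ term in the denominator). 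This produces a constant $C_3$ independent of $\vy_t^c$, yielding the claimed inequality.

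The main obstacle, as stated, is essentially bookkeeping: one must be careful that the constant extracted from Lemma \ref{lemma:norm-covariance-bound} does not secretly depend on $\vy_t^c$ through $\bar{w}_t$, and that the factor $\bar{w}_t^2\,\|\vy_t^c - \hat{\vy}_t\|_2$ is exactly the quantity whose boundedness is assumed in the hypotheses of Theorem \ref{theorem:wolf-bounded} (with $k=2$). A minor subtlety is that the statement uses $w_t^2$ inside the parenthesis on the right-hand side where the proof naturally produces $\bar{w}_t^2$; this appears to be a typographical conflation, but the assumption $\sup_{\vy_t^c}\bar{w}_t^2\|\vy_t^c-\hat{\vy}_t\|_2 < \infty$ is what is actually used to ensure that the bound is finite when taking the supremum over $\vy_t^c$ in the subsequent proof of Theorem \ref{theorem:wolf-bounded}. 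Beyond this, the argument reduces to invoking the two preceding lemmas and standard matrix-norm inequalities collected in the earlier remarks.
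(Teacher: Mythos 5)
Your proposal follows essentially the same route as the paper's proof: expand $\vmu_t - \bar{\vmu}_t$ via the update equations, split off the clean term into $C_1$, bound the contaminated term using Frobenius-norm compatibility and Lemma \ref{lemma:norm-covariance-bound}, and drop the nonnegative $\bar{w}_t^2\,\sigma_{\rm min}(\vH_t^\intercal\vR_t^{-1}\vH_t)$ term to obtain a constant $C_3$ independent of $\vy_t^c$. Your version is in fact slightly more careful than the paper's, since you use the valid bound $\|a-b\|_2^2 \leq 2\|a\|_2^2 + 2\|b\|_2^2$ where the paper writes $\|a-b\|_2^2 \leq \|a\|_2^2 + \|b\|_2^2$ (false in general, though harmless here up to constants), and you correctly identify that the $w_t^2$ in the lemma statement should read $\bar{w}_t^2$, consistent with the paper's own final display.
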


\begin{proof}
We begin by expanding \eqref{eq:part-difference-posterior-means}:
\begin{equation}\label{eq:part-mean-diff-pif}
\begin{aligned}
	\|\vmu_t - \bar{\vmu}_t\|_2^2
	&= \|\vK_t\,(\vy_t - \hat{\vy}_t) - \bar{\vK}_t\,(\vy_t^c - \hat{\vy}_t)\|_2^2\\
	&\leq \|\vK_t\,(\vy_t - \hat{\vy}_t)\|_2^2 + \|\bar{\vK}_t\,(\vy_t^c - \hat{\vy}_t)\|_2^2\\
	&= \|\bar{w}_t^2\,\bar{\vSigma}_{t|t-1}\,\vH_t\,\vR_{t}^{-1}\,(\vy_t^c - \hat{\vy}_t)\|_2^2 + C_1\\
	&= \bar{w}_t^4\, \|\bar{\vSigma}_{t|t-1}\,\vH_t\,\vR_{t}^{-1}\,(\vy_t^c - \hat{\vy}_t)\|_2^2 + C_1,
\end{aligned}
\end{equation}
where $C_1 = \|\vK_t\,(\vy_t - \hat{\vy}_t)\|_2^2$ does not depend on $\vy_t^c$.

Next, following Remark \ref{remark:fro-properties}, we bound
$\bar{w}_t^4\, \|\bar{\vSigma}_{t|t-1}\,\vH_t\,\vR_{t}^{-1}\,(\vy_t^c - \hat{\vy}_t)\|_2^2$
as follows
\begin{equation}\label{eq:part-mean-diff-pif-2}
\begin{aligned}
        &\bar{w}_t^4\, \|\bar{\vSigma}_{t|t-1}\,\vH_t\,\vR_{t}^{-1}\,(\vy_t^c - \hat{\vy}_t)\|_2^2\\
	&\leq \bar{w}_t^4\,
	\|\bar{\vSigma}_{t|t-1}\|_F^2\,
	\|\vH_t\,\vR_t^{-1}\|_2^2\,
	\|\vy_t^c - \hat{\vy_t}^2\|_2^2\\
	&= C_2\,\bar{w}_t^4\,
	\|\bar{\vSigma}_{t|t-1}\|_F^2\,
	\|\vy_t^c - \hat{\vy_t}^2\|_2^2,
\end{aligned}
\end{equation}
where $C_2 = \|\vH_t\,\vR_t^{-1}\|_2^2$.

Finally,
using Lemma \ref{lemma:norm-covariance-bound},
an upper bound for \eqref{eq:part-mean-diff-pif-2}
is given by
\begin{equation}
\begin{aligned}
        &C_2\,\bar{w}_t^4\,
	\|\bar{\vSigma}_{t|t-1}\|_F^2\,
	\|\vy_t^c - \hat{\vy_t}^2\|_2^2\\
	&\leq C_2\,\bar{w}_t^4\,
	\left(\frac
	{D}
	{\sigma_{\rm min}(\vSigma_{t|t-1}^{-1}) + w_t^2\,\sigma_{\rm min}(\vH_t^\intercal\,\vR_t^{-1}\,\vH_t))}
	\right)^2\,
	\|\vy_t^c - \hat{\vy}_t\|_2^2\\
	&= C_2\,D^2\,
	\frac
	{\bar{w}_t^4}
	{\left(\sigma_{\rm min}(\vSigma_{t|t-1}^{-1}) + w_t^2\,\sigma_{\rm min}(\vH_t^\intercal\,\vR_t^{-1}\,\vH_t))\right)^2}\,
	\|\vy_t^c - \hat{\vy}_t\|_2^2\\
	&= C_2\,D^2
	\frac
	{1}
	{\left(w_t^{-2}\,\sigma_{\rm min}(\vSigma_{t|t-1}^{-1}) + \sigma_{\rm min}(\vH_t^\intercal\,\vR_t^{-1}\,\vH_t))\right)^2}\,
	\|\vy_t^c - \hat{\vy}_t\|_2^2\\
	&\leq
	\frac{C_2\,D^2}{\sigma_{\rm min}(\vSigma_{t|t-1}^{-1})}\,\bar{w}_t^4\,\|\vy_t^c - \hat{\vy}_t\|_2^2 + C_1\\
	&= C_3\,\left(
	\,\bar{w}_t^2\,\|\vy_t^c - \hat{\vy}_t\|_2
	\right)^2,
\end{aligned}
\end{equation}
where $C_3 = \frac{C_2\,D^2}{\sigma_{\rm min}(\vSigma_{t|t-1}^{-1})}$.
\end{proof}

With the auxiliary lemmas established and following the remarks above,  
we now proceed to demonstrate that each of the terms $({\rm T.1})$, $({\rm T.2})$, and $({\rm T.3})$  
are indeed bounded, as required for outlier robustness.  
Since the key components of the following propositions were already defined in Proposition \ref{lemma:pif-form-wolf},  
we proceed directly by stating each term and showing that it is bounded over  
all possible values of $\vy_t^c$.

\begin{proposition}[Bound for ${\rm T.1}$]
\label{prop:wolf-bound-t1}
\begin{equation}
	\sup_{\vy_t^c}
	\Tr\left(\bar{\vSigma_{t}}^{-1}\vSigma_{t}\right)
	< \infty.
\end{equation}
\end{proposition}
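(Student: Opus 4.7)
The plan is to exploit the additive structure of $\bar{\vSigma}_t^{-1}$ from \eqref{eq:part-clean-corrupted-posterior-terms} and isolate the only $\vy_t^c$-dependent quantity, which will be $\bar{w}_t^2$. Note that $\vSigma_t$ itself does not depend on $\vy_t^c$ (it is computed from the uncorrupted observation $\vy_t$ via $w_t$), so the only route through which $\vy_t^c$ enters the product $\bar{\vSigma}_t^{-1} \vSigma_t$ is via the scalar $\bar{w}_t^2 = W(\vy_t^c, \hat{\vy}_t)^2$.

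First I would substitute
\begin{equation*}
\bar{\vSigma}_t^{-1} = \vSigma_{t|t-1}^{-1} + \bar{w}_t^2\, \vH_t^\intercal \vR_t^{-1} \vH_t
\end{equation*}
into the trace and use linearity to obtain
\begin{equation*}
\Tr\!\left(\bar{\vSigma}_t^{-1} \vSigma_t\right) = \Tr\!\left(\vSigma_{t|t-1}^{-1} \vSigma_t\right) + \bar{w}_t^2\, \Tr\!\left(\vH_t^\intercal \vR_t^{-1} \vH_t\, \vSigma_t\right).
\end{equation*}
The first summand is a finite constant independent of $\vy_t^c$. The second summand factorises as $\bar{w}_t^2$ times another finite constant $C_4 := \Tr(\vH_t^\intercal \vR_t^{-1} \vH_t\, \vSigma_t)$, which is nonnegative because $\vH_t^\intercal \vR_t^{-1} \vH_t$ and $\vSigma_t$ are positive semidefinite.

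The next step is to invoke the boundedness assumption on $W$, namely $\sup_{\vy_t^c \in \real^\dimobs} W(\vy_t^c, \hat{\vy}_t) < \infty$, which immediately gives $\sup_{\vy_t^c} \bar{w}_t^2 < \infty$. Taking the supremum over $\vy_t^c$ on both sides yields
\begin{equation*}
\sup_{\vy_t^c}\,\Tr\!\left(\bar{\vSigma}_t^{-1} \vSigma_t\right) \leq \Tr\!\left(\vSigma_{t|t-1}^{-1} \vSigma_t\right) + C_4 \cdot \sup_{\vy_t^c} \bar{w}_t^2 < \infty,
\end{equation*}
which is the desired bound.

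There is essentially no hard step here: the inequality is obtained purely by linearity of the trace and the first of the two assumptions on $W$. Unlike the forthcoming bounds for $(\mathrm{T.2})$ and $(\mathrm{T.3})$, we do not even need the stronger assumption $\sup_{\vy_t^c} W(\vy_t^c, \hat{\vy}_t)^k \|\vy_t^c - \hat{\vy}_t\|_2 < \infty$, nor do we need the auxiliary bounds in Lemma~\ref{lemma:norm-covariance-bound} or Lemma~\ref{lemma:norm-diff-means-bound}. The main subtlety to flag is simply the observation that $\vSigma_t$ is built from $w_t$ (uncorrupted) rather than $\bar{w}_t$ (corrupted), so it enters the bound as a constant; this is what makes $(\mathrm{T.1})$ the easiest of the three terms to control.
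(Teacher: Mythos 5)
Your proof is correct and follows essentially the same route as the paper's: expand $\bar{\vSigma}_t^{-1}$ via its additive form, use linearity of the trace to isolate $\bar{w}_t^2$ as the only $\vy_t^c$-dependent factor, and conclude from $\sup_{\vy_t^c}\bar{w}_t<\infty$. Your explicit remark that $\vSigma_t$ is built from the uncorrupted weight $w_t$ and hence enters as a constant is a useful clarification that the paper leaves implicit.
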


\begin{proof}
Using \eqref{eq:part-clean-corrupted-posterior-terms}, $({\rm T.1})$ in \eqref{eq:PIF-wolf}
can be written as
\begin{equation}
\begin{aligned}	
	\Tr\left(\bar{\vSigma_{t}}^{-1}\vSigma_{t}\right)
	&=\Tr\left(
	\left[ \vSigma_{t|t-1}^{-1} + \bar{w}_t^2\,\vH_t^\intercal\,\vR_t^{-1}\,\vH_t \right]\,
	\vSigma_t\right)\\
	&= 
	\Tr\left(\vSigma_{t|t-1}^{-1}\,\vSigma_{t}\right)
	+ \bar{w}_t^2\,\Tr\left(
	\vH_t^\intercal\,\vR_t^{-1}\,\vH_t\,\vSigma_t
	\right)
\end{aligned}
\end{equation}
Since $\sup_{\vy_t^c} \bar{w}_t$ is bounded,
it follows that
\begin{equation}
	\sup_{\vy_t^c}
	\Tr\left(\bar{\vSigma_{t}}^{-1}\vSigma_{t}\right)
	< \infty.
\end{equation}
\end{proof}

\begin{proposition}[Bound for ${\rm T.2}$]
\label{prop:wolf-bound-t2}
\begin{equation}\label{part:pif-t2-term}
\sup_{\vy_t^c} \left(\vmu_{t} - \bar{\vmu_{t}}\right)^\intercal \bar{\vSigma}_{t}^{-1}\left(\vmu_{t} - \bar{\vmu}_{t}\right)
< \infty
\end{equation}
\end{proposition}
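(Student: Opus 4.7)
The plan is to factorise the quadratic form as the product of the largest eigenvalue of $\bar{\vSigma}_t^{-1}$ and the squared Euclidean norm $\|\vmu_t - \bar{\vmu}_t\|_2^2$, and then bound each factor separately, uniformly in $\vy_t^c$. The two ingredients I would combine are the boundedness hypotheses on $W$ together with Lemma \ref{lemma:norm-diff-means-bound}, which already does most of the work for the mean-difference factor.

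First, I would apply Remark \ref{remark:upper-bound-xAx} to the positive semidefinite matrix $\bar{\vSigma}_t^{-1}$ to obtain
\begin{equation*}
(\vmu_t - \bar{\vmu}_t)^\intercal \bar{\vSigma}_t^{-1} (\vmu_t - \bar{\vmu}_t) \leq \sigma_{\max}\!\left(\bar{\vSigma}_t^{-1}\right)\,\|\vmu_t - \bar{\vmu}_t\|_2^2.
\end{equation*}
For the eigenvalue factor, I would use the expression $\bar{\vSigma}_t^{-1} = \vSigma_{t|t-1}^{-1} + \bar{w}_t^2\,\vH_t^\intercal \vR_t^{-1} \vH_t$ from \eqref{eq:part-clean-corrupted-posterior-terms} and Weyl's inequality together with the hypothesis $\sup_{\vy_t^c} W(\vy_t^c, \hat{\vy}_t) < \infty$ to conclude that $\sigma_{\max}(\bar{\vSigma}_t^{-1})$ is bounded above by a constant $M$ independent of $\vy_t^c$.

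For the norm factor, Lemma \ref{lemma:norm-diff-means-bound} yields $\|\vmu_t - \bar{\vmu}_t\|_2^2 \leq C_3\left(\bar{w}_t^2\,\|\vy_t^c - \hat{\vy}_t\|_2\right)^2 + C_1$ with $C_1,C_3$ independent of $\vy_t^c$. I would then split $\bar{w}_t^2\,\|\vy_t^c - \hat{\vy}_t\|_2 \leq \bar{w}_t^2\,\|\vy_t^c\|_2 + \bar{w}_t^2\,\|\hat{\vy}_t\|_2$ via the triangle inequality. The second term is bounded because $\sup_{\vy_t^c}\bar{w}_t^2 < \infty$ and $\hat{\vy}_t$ depends only on past (uncontaminated) quantities, while the first term is bounded by directly invoking the hypothesis $\sup_{\vy_t^c} W(\vy_t^c, \hat{\vy}_t)^k\,\|\vy_t^c\|_2 < \infty$ with $k=2$. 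Multiplying the resulting uniform bound on $\|\vmu_t - \bar{\vmu}_t\|_2^2$ by $M$ would then give \eqref{part:pif-t2-term}.

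The main obstacle I anticipate is ensuring a clean tracking of which quantities depend on $\vy_t^c$ and which do not: $\vmu_t$, $\hat{\vy}_t$, $\vSigma_{t|t-1}$, and the constants appearing in Lemma \ref{lemma:norm-diff-means-bound} are all computed from the uncorrupted history, so they pose no issue, but one must verify that the growth condition $\sup_{\vy_t^c} W(\vy_t^c,\hat{\vy}_t)^2\|\vy_t^c\|_2 < \infty$ is strong enough to dominate the linear growth of $\|\vy_t^c\|_2$ after squaring; here the choice $k=2$ in the hypothesis is exactly tight, since the product appears squared in Lemma \ref{lemma:norm-diff-means-bound} and multiplied once by a bounded weight.
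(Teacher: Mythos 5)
Your proposal is correct and rests on the same two ingredients as the paper's proof: the Rayleigh-quotient bound of Remark \ref{remark:upper-bound-xAx} and the mean-difference bound of Lemma \ref{lemma:norm-diff-means-bound}. The only difference is organizational, but it has a small substantive payoff. The paper first expands $\bar{\vSigma}_t^{-1} = \vSigma_{t|t-1}^{-1} + \bar{w}_t^2\,\vH_t^\intercal\vR_t^{-1}\vH_t$ and bounds the two resulting quadratic forms separately; for the second term the lingering factor of $\bar{w}_t^2$ combines with the squared bound from the lemma to produce $\left(\bar{w}_t^3\|\vy_t^c - \hat{\vy}_t\|_2\right)^2$, so the paper must invoke the growth hypothesis at $k=3$. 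You instead bound $\sigma_{\rm max}(\bar{\vSigma}_t^{-1})$ once, uniformly in $\vy_t^c$ (which follows from Weyl's inequality and $\sup_{\vy_t^c}\bar{w}_t < \infty$), and then only need the $k=2$ case of the hypothesis to control $\|\vmu_t - \bar{\vmu}_t\|_2^2$. Since Theorem \ref{theorem:wolf-bounded} assumes the condition for all $k \geq 2$, nothing changes at the level of the theorem, but your version is marginally more economical and your bookkeeping of which quantities depend on $\vy_t^c$ is sound.
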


\begin{proof}
	Begin by expanding the left hand side of \eqref{part:pif-t2-term}:
	\begin{equation}\label{eq:part-pif-t2-term-expand}
	\begin{aligned}
	&\left(\vmu_{t} - \bar{\vmu_{t}}\right)^\intercal \bar{\vSigma}_{t}^{-1}\left(\vmu_{t} - \bar{\vmu}_{t}\right)\\
	&=\left(\vmu_{t} - \bar{\vmu_{t}}\right)^\intercal
	\left(\vSigma_{t|t-1} +
	\bar{w}_t^2\,\vH_t^\intercal\,\vR_t^{-1}\vH_t\right)
	\left(\vmu_{t} - \bar{\vmu}_{t}\right)\\
	&=\left(\vmu_{t} - \bar{\vmu_{t}}\right)^\intercal
	\vSigma_{t|t-1}^{-1}
	\left(\vmu_{t} - \bar{\vmu}_{t}\right) + 
	\left(\vmu_{t} - \bar{\vmu_{t}}\right)^\intercal
	\left(\bar{w}_t^2\,\vH_t^\intercal\,\vR_t^{-1}\vH_t\right)
	\left(\vmu_{t} - \bar{\vmu}_{t}\right).
	\end{aligned}
	\end{equation}
	Next, we bound each of the terms in \eqref{eq:part-pif-t2-term-expand}.

        For the first term, we obtain
	\begin{equation}\label{eq:part-pif-t2-term-expand-i}
	\begin{aligned}
	&\left(\vmu_{t} - \bar{\vmu_{t}}\right)^\intercal
	\vSigma_{t|t-1}^{-1}
	\left(\vmu_{t} - \bar{\vmu}_{t}\right)\\
	&\leq \sigma_{\rm max}(\vSigma^{-1}_{t|t-1})
	\|\vmu_t - \bar{\vmu}_t\|\\
	&\leq 
	\sigma_{\rm max}(\vSigma^{-1}_{t|t-1})\,
	\left(
	C_3\,\left(
	\,\bar{w}_t^2\,\|\vy_t^c - \hat{\vy}_t\|_2
	\right)^2 + C_1
	\right)\\
	&= C_5\,\left(
	\,\bar{w}_t^2\,\|\vy_t^c - \hat{\vy}_t\|_2
	\right)^2 + C_6,
	\end{aligned}
	\end{equation}
        where we make use of Remark \ref{remark:upper-bound-xAx} and
        Lemma \ref{lemma:norm-diff-means-bound}.
        
        By assumption,  
        $\sup_{\vy_t^c} \bar{w}_t^2 \|\vy_t^c - \vy_t\| < \infty$,  
        which ensures that the supremum over $\vy_t^c$ for  
        \eqref{eq:part-pif-t2-term-expand-i} is bounded.  

	Similarly, for the second term, suppose
        $\sigma_{\rm max}(\vH_t^\intercal\,\vR_t^{-1}\vH_t) > 0$, then
	\begin{equation}\label{eq:part-pif-t2-term-expand-ii}
	\begin{aligned}
	&\left(\vmu_{t} - \bar{\vmu_{t}}\right)^\intercal
	\left(\bar{w}_t^2\,\vH_t^\intercal\,\vR_t^{-1}\vH_t\right)
	\left(\vmu_{t} - \bar{\vmu}_{t}\right)\\
        &\leq \bar{w}_t^2\,\sigma_{\rm max}(\vH_t^\intercal\,\vR_t^{-1}\vH_t)
	\left(
	C_3\,\left(
	\,\bar{w}_t^2\,\|\vy_t^c - \hat{\vy}_t\|_2
	\right)^2 + C_1
	\right)\\
	&\leq C_7\,\left(
	\,\bar{w}_t^3\,\|\vy_t^c - \hat{\vy}_t\|_2
	\right)^2 + C_8.
	\end{aligned}
	\end{equation}
        Again, by assumption,  
        $\sup_{\vy_t^c} \bar{w}_t^3 \|\vy_t^c - \vy_t\| < \infty$,  
        which ensures that the supremum over $\vy_t^c$ for  
        \eqref{eq:part-pif-t2-term-expand-ii} is bounded.  
\end{proof}

\begin{proposition}[Bound for ${\rm T.3}$]
\label{prop:wolf-bound-t3}
\begin{equation}
	\sup_{\vy_t^c}\log\left(
	\frac{|\bar\vSigma_{t}|}
	{|\vSigma_{t}|}\right)
	< \infty
\end{equation}
\end{proposition}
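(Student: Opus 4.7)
The plan is to observe that this bound is actually the simplest of the three because the denominator $|\vSigma_t|$ does not depend on $\vy_t^c$ at all — it is fixed by the uncorrupted observation. Hence the problem reduces to producing a deterministic upper bound on $|\bar\vSigma_t|$ that is uniform over $\vy_t^c$.

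First I would exploit the structure of the precision update in \eqref{eq:part-clean-corrupted-posterior-terms}, namely
\begin{equation*}
\bar\vSigma_t^{-1} = \vSigma_{t|t-1}^{-1} + \bar w_t^{\,2}\,\vH_t^\intercal \vR_t^{-1} \vH_t.
\end{equation*}
Since $\vR_t^{-1}$ is positive definite and $\bar w_t^{\,2}\ge 0$, the matrix $\bar w_t^{\,2}\,\vH_t^\intercal \vR_t^{-1} \vH_t$ is positive semidefinite, regardless of the value of $\vy_t^c$. Consequently $\bar\vSigma_t^{-1} \succeq \vSigma_{t|t-1}^{-1}$ in the Loewner order, and inverting and taking determinants yields $|\bar\vSigma_t| \le |\vSigma_{t|t-1}|$. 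This is the one key inequality; everything else is bookkeeping.

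With that in hand, I would simply write
\begin{equation*}
\log\!\left(\frac{|\bar\vSigma_t|}{|\vSigma_t|}\right)
\;\le\; \log\!\left(\frac{|\vSigma_{t|t-1}|}{|\vSigma_t|}\right),
\end{equation*}
and note that the right-hand side is a finite constant that does not depend on $\vy_t^c$ (it depends only on the predictive covariance $\vSigma_{t|t-1}$, the clean weight $w_t$, and the model matrices $\vH_t,\vR_t$). Taking $\sup_{\vy_t^c}$ of the left-hand side therefore gives a finite bound, which is what we wanted to show. Combining this with Proposition~\ref{prop:wolf-bound-t1} and Proposition~\ref{prop:wolf-bound-t2} then completes the proof of Theorem~\ref{theorem:wolf-bounded}.

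I do not foresee a genuine obstacle here: unlike $(\mathrm{T.2})$, the term $(\mathrm{T.3})$ contains no $\vy_t^c$-dependent factor outside the inverse-covariance form, so one does not need to invoke the growth assumptions on $W$ (boundedness of $w_t$ and of $w_t^k\|\vy_t-\hat\vy_t\|_2$). The only subtlety worth flagging is that one should \emph{not} try to apply Remark~\ref{remark:lower-bound-determinants} directly to $\vSigma_{t|t-1}^{-1}$ and $\bar w_t^{\,2}\vH_t^\intercal\vR_t^{-1}\vH_t$, since the latter may be rank-deficient when $\dimobs<\dimstate$; the monotonicity of the determinant under the Loewner order, which requires only positive semidefiniteness, is the cleaner route.
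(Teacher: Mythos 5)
Your proof is correct, and it takes a genuinely different --- and cleaner --- route than the paper's. The paper writes $\log|\bar\vSigma_t| = -\log|\vSigma_{t|t-1}^{-1} + \bar{w}_t^2\,\vH_t^\intercal\vR_t^{-1}\vH_t|$, invokes the determinant superadditivity of Remark \ref{remark:lower-bound-determinants}, then replaces the sum of determinants by the minimum of its two summands and argues the two cases separately, arriving in the second case at the bound $-2\dimstate\log\bar{w}_t - \log|\vH_t^\intercal\vR_t^{-1}\vH_t|$. You instead note that $\bar{w}_t^2\,\vH_t^\intercal\vR_t^{-1}\vH_t \succeq 0$ for every $\vy_t^c$, so $\bar\vSigma_t^{-1}\succeq\vSigma_{t|t-1}^{-1}$, hence $\bar\vSigma_t\preceq\vSigma_{t|t-1}$ and $|\bar\vSigma_t|\leq|\vSigma_{t|t-1}|$, a constant independent of $\vy_t^c$. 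What your route buys is not just brevity: it needs nothing from $W$ beyond nonnegativity, and it sidesteps two weak points of the paper's case analysis. First, when $\dimobs<\dimstate$ the matrix $\vH_t^\intercal\vR_t^{-1}\vH_t$ has rank at most $\dimobs$ and is singular, so $\log|\vH_t^\intercal\vR_t^{-1}\vH_t|=-\infty$ and the case-two expression is not a usable bound; you flag exactly this. Second, in the outlier regime $\bar{w}_t\to 0$ (which is precisely where the IMQ weight is doing its job) the term $-2\dimstate\log\bar{w}_t$ diverges to $+\infty$, and the paper's appeal to $\sup_{\vy_t^c}\bar{w}_t<\infty$ controls $\log\bar{w}_t$ from above but not from below, so the case-two argument does not close as written. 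The repair is exactly your inequality: keep the $|\vSigma_{t|t-1}^{-1}|$ term rather than taking a minimum, i.e., bound $-\log\left(|\vSigma_{t|t-1}^{-1}|+\bar{w}_t^{2\dimstate}|\vH_t^\intercal\vR_t^{-1}\vH_t|\right)\leq-\log|\vSigma_{t|t-1}^{-1}|$, which is equivalent to your Loewner-monotonicity step.
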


\begin{proof}
We first expand $({\rm T.3})$ to obtain
\begin{equation}
	\log\left(\frac{|\bar\vSigma_{t}|}{|\vSigma_{t}|}\right)
	= \log |\bar\vSigma_t| - \log |\vSigma_t|,
\end{equation}
which shows that we only to bound $\log|\bar\vSigma_t|$.
For this, consider
\begin{equation}\label{eq:part-pif-t3-bound}
\begin{aligned}
	\log|\bar\vSigma_t|
	&= \log|(\vSigma_{t|t-1} + \bar{w}_t^2\,\vH_{t}^\intercal\,\vR_t^{-1}\,\vH_t)^{-1}|\\
	&=-\log|\vSigma_{t|t-1}^{-1}
	+ \bar{w}_t^2\,\vH_{t}^\intercal\,\vR_t^{-1}\,\vH_t|\\
	&\leq -\log\left(
	|\vSigma_{t|t-1}^{-1}| +
	|\bar{w}_t^2\,\vH_t^\intercal\,\vR_t^{-1}\,\vH_t|
	\right)\\
	&= -\log\left(
	|\vSigma_{t|t-1}^{-1}| +
	\bar{w}_t^{2D}
	|\,\vH_t^\intercal\,\vR_t^{-1}\,\vH_t|
	\right)
\end{aligned}
\end{equation}
Since $\vSigma_{t|t-1}^{-1}$
is positive definite and
$\vH_t^\intercal\,\vR_t^{-1}\,\vH_t$
is positive semidefinite,
we upper bound \eqref{eq:part-pif-t3-bound}
further by taking the minimum of
the two terms, i.e., 
\begin{equation}
	\log|\bar\vSigma_t|
	\leq
	-\log\left(
	\min\{
	|\vSigma_{t|t-1}^{-1}|,\,
	\bar{w}_t^{2D}
	|\,\vH_t^\intercal\,\vR_t^{-1}\,\vH_t|
	\}
	\right).
\end{equation}
If the smallest determinant is given by the
posterior predictive covariance $\vSigma_{t|t-1}$,
then \eqref{eq:part-pif-t3-bound} is bounded.
Conversely, if the smallest determinant
is $\bar{w}_t^{2D} |\,\vH_t^\intercal\,\vR_t^{-1}\,\vH_t|$,
we obtain
\begin{equation}\label{eq:part-pif-t3-case2}
\begin{aligned}
	\log|\bar\vSigma_t|
	&\leq
	-\log\left(\bar{w}_t^{2D}|\vH_t^\intercal\,\vR_t^{-1}\,\vH_t|\right)\\
	&= -2D\log(\bar w_t) - \log|\vH_t^\intercal\,\vR_t^{-1}\,\vH_t|.
\end{aligned}
\end{equation}
Finally,
note that $\sup_{\vy_t^c} \bar{w}_t < \infty$
implies
$\sup_{\vy_t^c} \log\bar{w}_t < \infty$.
So that \eqref{eq:part-pif-t3-case2} is bounded.
\end{proof}

\paragraph{Proof of Theorem \ref{theorem:wolf-bounded}}
\begin{proof}
    The proof follows from Propositions \ref{prop:wolf-bound-t1}, \ref{prop:wolf-bound-t2}, and \ref{prop:wolf-bound-t3}.
\end{proof}

\section{Experiments}
\label{sec:experiments-robustness}
In this section, we study the performance of the WoLF algorithm in multiple experiments.

For our robust baselines,
we make use of three methods that are representative of recent state-of-the-art 
approaches to robust filtering:
the Bernoulli KF of \citet{wang2018} (\mWang),
which is an example of a detect-and-reject strategy; and
the inverse-Wishart filter of \citet{Agamennoni2012} (\mAgamenoni),
which  is an example of a compensation-based strategy.
The \mWang and \mAgamenoni are deterministic and optimise a VB objective to compute a Gaussian approximation to the state posterior.
For the neural network fitting problem,
we also consider  a variant of  online gradient descent (\ogd) based on Adam \citep{kingma2014adam}, which
uses multiple inner iterations per step (measurement).
This method does scale to high-dimensional state spaces, but only gives a maximum a posteriori (MAP) estimate and
is not as sample efficient as a robust Bayesian filter.

For experiments where KF or EKF is used as the baseline,
 we consider the following WoLF variants:
(i) the WoLF version with inverse multi-quadratic weighting function (\mWlfImq),
(ii)  the thresholded WoLF with Mahalanobis-based weighting function (\mWlfMd).

\begin{table}[ht]
    \small
    \centering
    \begin{tabular}{llll}
    \toprule
        Method & Cost & \#HP & Ref \\
    \midrule
     \mkf &  $O(\dimstate^3)$ & 0 & \cite{kalman1960}\\
     \mWang & $O(I\,\dimstate^3)$ & 3 & \cite{wang2018}\\
     \mAgamenoni & $O(I\,\dimstate^3)$ & 2 & \cite{Agamennoni2012} \\
     \ogd & $O(I\, \dimstate^2)$ & 2 & \cite{bencomo2023implicit}\\
     \mWlfImq  &  $O(\dimstate^3)$ & 1 &(Ours)\\
     \mWlfMd  &  $O(\dimstate^3)$ & 1 & (Ours)\\
    \bottomrule
    \end{tabular}
    \caption{
        Computational complexity  of the update step,
        assuming  $d \leq \dimstate$ and assuming linear dynamics.
        Here, $I$ is the number of inner iterations,
        \#HP refers to the number of hyperparameters we tune, and
        ``Cost'' refers to the computational complexity. }
    \label{tab:complexity-linear-model}
\end{table}

\subsection{Robust KF for tracking a 2D object}
\label{experiment:2d-tracking}

We consider the classical problem of estimating the position of an object moving in 2D
with constant velocity,
which is commonly used to benchmark tracking problems
(see e.g., Example 8.2.1.1 in \citet{murphy2023-pmlbook2} or Example 4.5 in \citet{sarkka2023filtering}).
The SSM takes the form
\begin{equation} \label{eq:noisy-2d-ssm}
\begin{aligned}
    p(\vtheta_t \vert \vtheta_{t-1}) &= \normdist{\vtheta_t}{\vF_t\vtheta_{t-1}}{\vQ_t},\\
    p(\vy_t \vert \vtheta_t) &= \normdist{\vy_t}{\vH_t\vtheta_t}{\vR_t},
\end{aligned}
\end{equation}
where $\vQ_t = q\,{\bf I}_4$, $\vR_t = r\,{\bf I}_2$,
$(\vtheta_{0,t}, \vtheta_{1,t})$ is the position, 
$(\vtheta_{2, t}, \vtheta_{3,t})$ is the velocity,
{\small
\begin{align*}
    \vF_t &=
    \begin{pmatrix}
    1 & 0 & \Delta & 0\\
    0 & 1 & 0 & \Delta \\
    0 & 0 & 1 & 0 \\
    0 & 0 & 0 & 1
    \end{pmatrix}, & 
    \vH_t &= \begin{pmatrix}
        1 & 0 & 0 & 0\\
        0 & 1 & 0 & 0
    \end{pmatrix},
\end{align*}}
$\!\!\Delta = 0.1$ is the sampling rate,
$q = 0.10$ is the system noise,
$r = 10$ is the measurement noise,
and ${\bf I}_K$ is a $K\times K$ identity matrix.
We simulate 500 trials, each with 1,000 steps.
For each method, we compute the scaled RMSE metric
$ J_{T,i} = \sqrt{\sum_{t=1}^T (\vtheta_{t,i}- \vmu_{t,i}) ^ 2}$
for $i\in\{0,1,2,3\}$ as well as the total running time (relative to the \mkf).

In our experiments,
the true data generating process is one of two variants of \eqref{eq:noisy-2d-ssm}.
The first variant
(which we call {\bf Student observations})
corresponds to a system whose measurement process
comes from the Student-t likelihood:
\begin{equation}\label{eq:noisy-2d-ssm-outlier-covariance}
    \begin{aligned}
    p(\vy_t \vert \vtheta_t)
    &= {\rm St}({\vy_t\,\vert\,\vH_t\vtheta_t,\,\vR_t,\nu_t})\\
    &= \int_0^\infty {\cal N}\left(\vy_t\,\vert\,\vH_t\vtheta_t, \frac{\vR_t}{\tau}\right){\rm Gam}\left(\tau \vert \frac{\nu_t}{2}, \frac{\nu_t}{2}\right) d\tau, 
    \end{aligned}
\end{equation}
with ${\rm Gam}(\cdot \vert a, b)$ the Gamma density function with shape $a$ and rate $b$, and
$\nu_t=2.01$.
The second variant 
(which we call {\bf mixture observations})
corresponds to a system where the mean of the observations
changes sporadically. Instances of this variant can occur as a form of
human error or a software bug in a data-entry program.
To emulate this scenario, we modify \eqref{eq:noisy-2d-ssm}
by using the following mixture model for the observation
process:
\begin{equation}\label{eq:noisy-2d-ssm-outlier-mean}
\begin{aligned}
p(\vy_t \vert \vtheta_t) &= \normdist{\vy_t}{\vm_t}{\vR_t},\\
\vm_t &= 
    \begin{cases}
        \vH_t\,\vtheta_t & \text{w.p.}\ 1 - p_\epsilon,\\
        2\,\vH_t\,\vtheta_t & \text{w.p.}\ p_\epsilon,
    \end{cases}
\end{aligned}
\end{equation}
where $p_\epsilon = 0.05$. 

\paragraph{Results}

\begin{figure}[htb]
    \centering
    \includegraphics[width=0.45\linewidth]{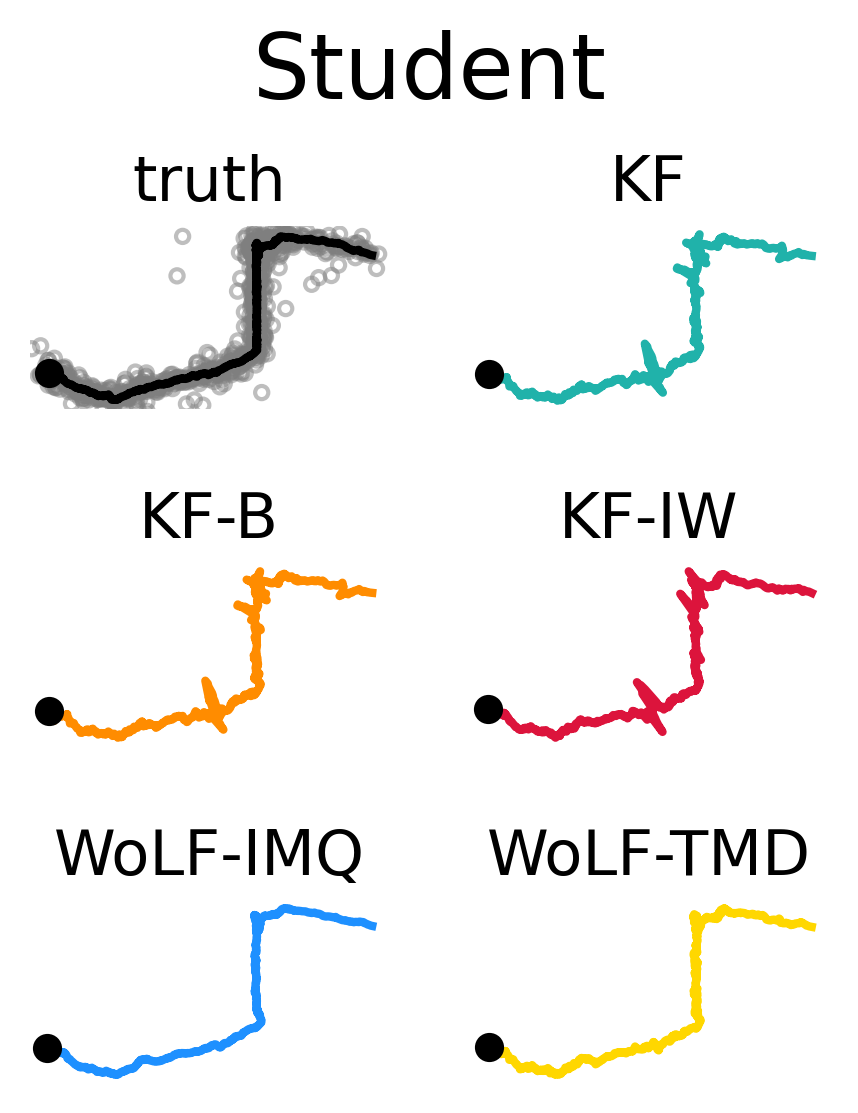}
    \vline
    \includegraphics[width=0.45\linewidth]{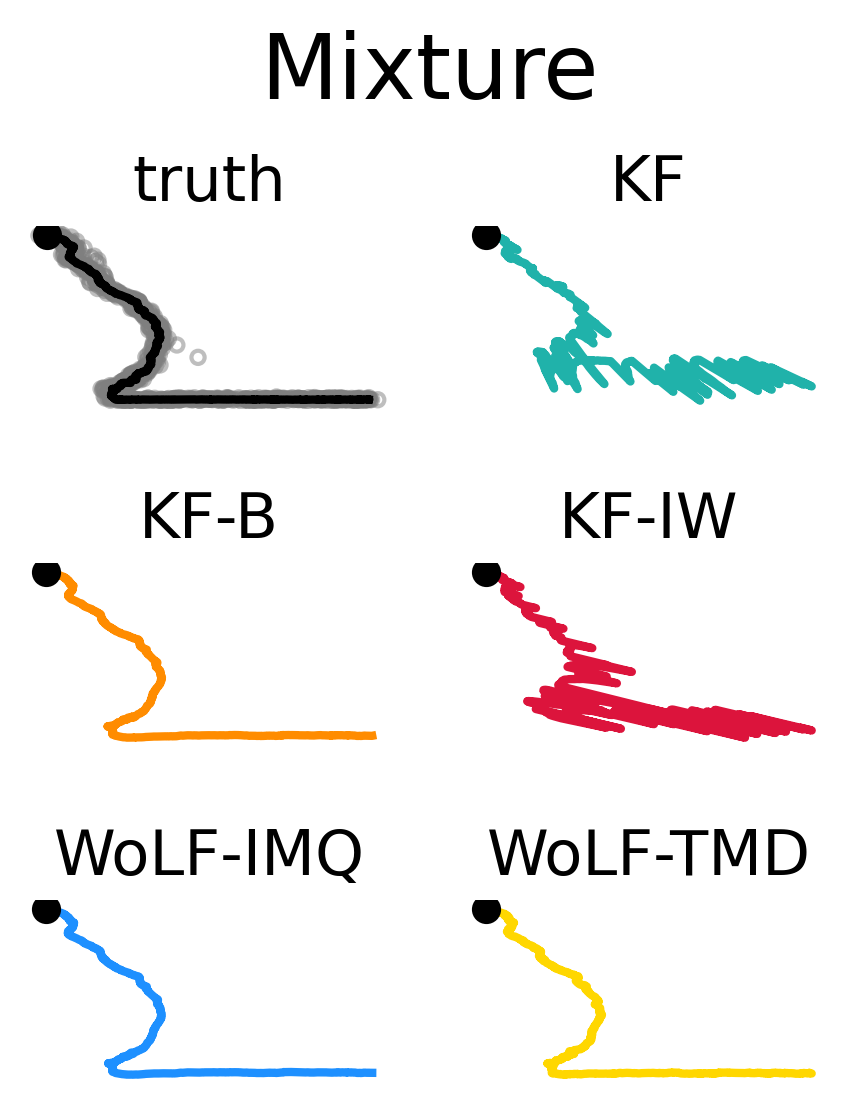}
    \caption{
    The left panel shows a sample path using the Student variant and
    the right panel shows a sample path using the mixture variant.
    The top left figure on each panel shows the true underlying state in black,
    and the measurements as grey dots.
    }
    \label{fig:linear-ssm-sample-runs}
\end{figure}

Figure \ref{fig:linear-ssm-sample-runs} shows a sample of each variant
along with the filtered state for each method.
For the Student variant (left panel),
the \mWlfImq and the \mWlfMd  estimate the true state
more closely than the competing methods.
Both the \mAgamenoni and the \mWang look comparable to the \mkf, which are not robust to outliers.
For the mixture variant (right panel).
the \mWlfImq, the \mWlfMd, and the \mWang filter the true state correctly.
In contrast, the \mAgamenoni and the \mkf are not robust to outliers.

\begin{figure}[ht]
    \centering
    \includegraphics[width=0.7\linewidth]{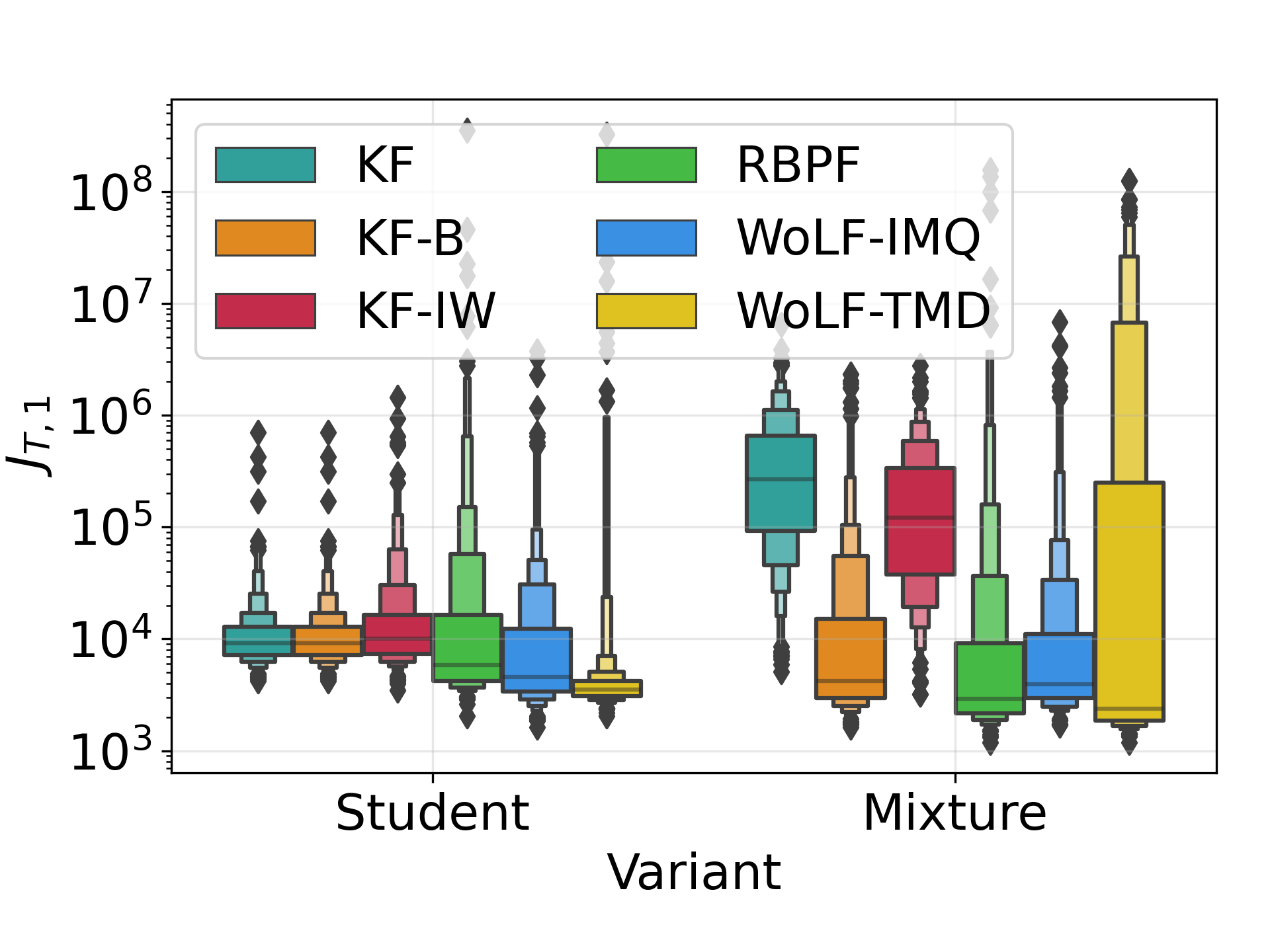}
    \caption{
        Distribution  (across 500 2d tracking trials)
        of RMSE for first component of the state vector, $J_{T,0}$.
        Left panel: Student observation model.
        Right panel: Mixture observation model.
    }
    \label{fig:2d-ssm-sample}
\end{figure}

The results in Figure \ref{fig:linear-ssm-sample-runs}
hold for multiple trials as shown in Figure \ref{fig:2d-ssm-sample},
which plots the distribution
of the errors in the first component of the state vector.
As a benchmark, we include the particle-filter-based method of \cite{boustati2020generalised},
which we denote \texttt{RBPF}.
The \texttt{RBPF} performs comparably to our proposed method;
however, it has much higher computational cost and does not have a closed-form solution.

\begin{table}[ht]
    \centering
    \begin{tabular}{c|cc}
        \toprule
        Method & Student & Mixture \\
        \midrule
         \mWang & 2.0x & 3.7x \\
         \mAgamenoni & 1.2x & 5.3x \\
         \mWlfImq (ours) & 1.0x  & 1.0x \\
         \mWlfMd (ours) &  1.0x & 1.0x \\
         \bottomrule
    \end{tabular}
    \caption{Mean slowdown rate over \mkf.}
    \label{tab:2d-ssm-running-time}
\end{table}

Table \ref{tab:2d-ssm-running-time} shows
the median slowdown (in running time) to process the measurements relative to the \mkf.
The slowdown for method \texttt{X} is obtained 
dividing the running time of method \texttt{X} over the running time of the \mkf.
Under the Student variant, the \mWlfImq, the \mWlfMd, and the \mAgamenoni have similar running time to the \mkf.
In contrast, the \mWang takes twice the amount of time.
Under the mixture variant, the \mWang and the \mAgamenoni are almost four times and five times slower than the \mkf respectively.
The changes in slowdown rate are due the number of inner iterations that were chosen during the first trial.

\subsection{Online learning of a neural network in the presence of outliers}
\label{subsec:uci-corrupted}

In this section, we benchmark the methods using a corrupted version
of the tabular UCI regression datasets.\footnote{
The dataset is available at \url{https://github.com/yaringal/DropoutUncertaintyExps}.
}
Here,
we consider a single-hidden-layer multi-layered perceptron (MLP)
with twenty hidden units and a real-valued output unit.
In this experiment, the state dimension (number of parameters in the MLP)
is $\dimstate =(n_\text{in} \times 20 + 20) + (20 \times 1 + 1)$, where
$n_\text{in}$ is the dimension of the feature $\vx_t$.
In Table \ref{tab:uci-description}, we show the the values that $n_\text{in}$ takes for each dataset.
\begin{table}[htb]
\centering
\scriptsize
\begin{tabular}{lrrr}
    \toprule
     & \#Examples $T$ & \#Features $\dimobs$ & \#Parameters $\dimstate$ \\
    Dataset &  &  &  \\
    \midrule
    Boston & $ 506 $ & $ 14 $ & $ 321 $ \\
    Concrete & $ 1,030 $ & $ 9 $ & $ 221 $ \\
    Energy & $ 768 $ & $ 9 $ & $ 221 $ \\
    Kin8nm & $ 8,192 $ & $ 9 $ & $ 221 $ \\
    Naval & $ 11,934 $ & $ 18 $ & $ 401 $ \\
    Power & $ 9,568 $ & $ 5 $ & $ 141 $ \\
    Protein & $ 45,730 $ & $ 10 $ & $ 241 $ \\
    Wine & $ 1,599 $ & $ 12 $ & $ 281 $ \\
    Yacht & $ 308 $ & $ 7 $ & $ 181 $ \\
\bottomrule
\end{tabular}
\caption{Description of UCI datasets.
Number of parameters refers to the size of the one-layer
MLP.
}
\label{tab:uci-description}
\end{table}

Below, we take the static case $\vQ_t = 0\,\vI_\dimstate$,
so that the prior predictive mean is $\vmu_{t|t-1} = \vmu_{t-1}$.

Each trial is carried out as follows:
first, we randomly shuffle the rows in the dataset;
second, we divide the dataset into a warmup dataset (10\% of rows) and a corrupted dataset (remaining 90\% of rows);
third, we normalise the corrupted dataset using min-max normalisation from the warmup dataset;
fourth, with probability $p_\epsilon=0.1$,
we replace a measurement $\vy_t\in\real$ with a corrupted data point  $\vu_t \sim {\cal U}[-50, 50]$; and
fifth, we run each method on the corrupted dataset.

For each dataset and for each method, we evaluate the prior predictive RMedSE
\begin{equation}
    {\rm RMedSE} = \sqrt{{\rm median}\{(\vy_t - h_t(\vmu_{t | t- 1})) ^ 2\}_{t=1}^T}
\end{equation}
which is the squared root of the median squared error
between the measurement $\vy_t$ and the prior predictive $h_t(\vmu_{t|t-1}) = h(\vmu_{t | t-1}, \vx_t)$.\footnote{We use median instead of mean because we have outliers in measurement space.}
Here, $h$ is the MLP.
We also evaluate the average time step of each method, i.e.,
we run each method and divide the total running time by the number of samples in the corrupted dataset.

Figure \ref{fig:uci-per-step-time}
shows the percentage change of the RMedSE and the percentage change of running time
with respect to those of the \ogd
for all corrupted UCI datsets.
\begin{figure}[ht]
    \centering
    \includegraphics[width=0.8\linewidth]{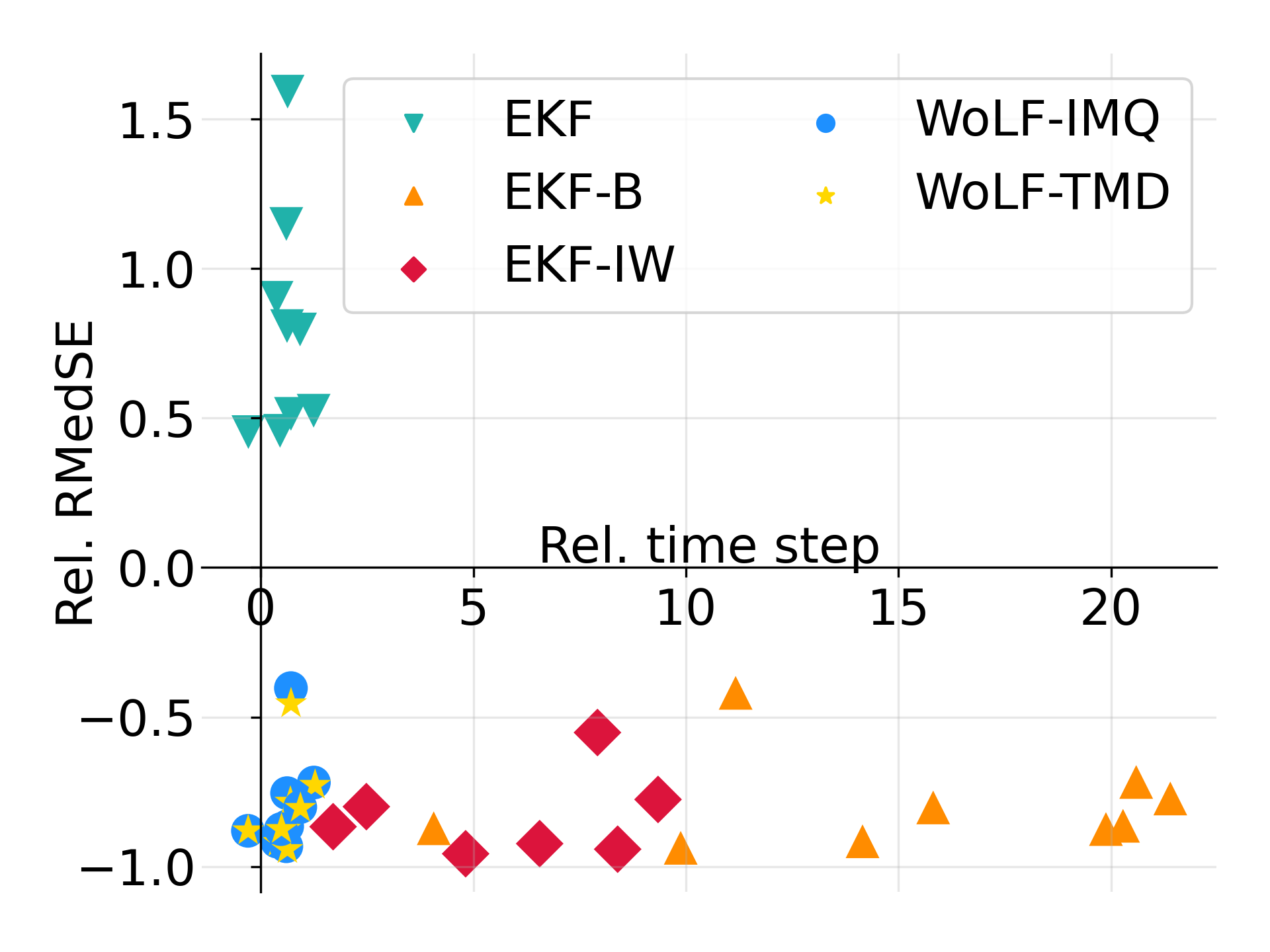}
    \caption{
    RMedSE
    versus time per step
    (relative to the \ogd minus $1$)
    across the corrupted UCI datasets.
    }
    \label{fig:uci-per-step-time}
\end{figure}
Given the computational complexity of the remaining methods, ideally, a robust Bayesian alternative to the \ogd
should be as much to the left as possible on the $x$-axis (rel. time step)
and as low as possible on the $y$-axis (rel. RMedSE).
We observe that the \mWlfImq and the \mWlfMd have both of these traits.
In particular, we observe that the only two points in the third quadrant are those of the \mWlfImq and the \mWlfMd.
Note that the \mAgamenoniExtended and the \mWangExtended have much higher relative running time and
the \mkfExtended has much higher relative RMedSE.

\subsection{Robust EKF for online MLP regression (1d)}
\label{experiment:training-neural-network}

In this section, we consider an online
nonlinear 1d regression, with the training
data coming
either from an i.i.d. source, or a correlated source.
The latter 
corresponds to a non-stationary problem.

We present a stream of observations
${\cal D}^\text{filter} = (y_1, x_1), \ldots (y_T, x_T)$ with
$y_t \in \real$ the measurements,
$x_t \in \real$ the exogenous variables, and
$T = 1500$.
The measurements and exogenous variables are sequentially sampled from the processes
\begin{equation}\label{eq:noisy-sinusoidal}
    y_t = 
    \begin{cases}
         \vtheta^*_{1} x_t  - \vtheta^*_{2}
         \cos(\vtheta^*_{3} x_t\,\pi) + \vtheta^*_{4} x_t^3 
        + V_t & \text{w.p.}\, 1 - p_\epsilon,\\
        U_t & \text{w.p. }\, p_\epsilon,
    \end{cases}
\end{equation}
where the parameters of the observation
model are $\vtheta^*=(0.2, -10, 1.0, 1.0)$,
the inputs are $x_t\sim {\cal U}[-3, 3]$, 
and the noise is
$V_t\sim {\cal N}(0, 3)$, 
$U_t \sim {\cal U}[-40, 40]$, and $p_\epsilon = 0.05$.

We consider four configurations of this experiment. In each experiment
the data is either sorted by $x_t$ value (i.e, 
the exogenous variable satisfies $x_i < x_j$ for all $i < j$,
representing a correlated source) or is
unsorted (representing
an i.i.d. source), and
the measurement function is either 
a clean version of the true data generating process
(i.e., 
\eqref{eq:noisy-sinusoidal} with $p_\epsilon = 0$ and unknown coefficients $\vtheta$), or 
a neural network with unknown parameters $\vtheta$.
Specifically, we use a  multi-layered perceptron (MLP) with two hidden layers and 10 units per layer:
\begin{equation}\label{eq:experiment-mlp}
    h(\vtheta_t, x_t)
    = \vw_t^{(3)}\phi\left(\vw_t^{(2)}\phi\left(\vw_t^{(1)}x_t + \vb_t^{(1)}\right) + \vb_t^{(2)}\right) + \vb_t^{(3)},
\end{equation}
with
activation function $\phi(u) = \max\{0, u\}$
applied elementwise.
Thus the state vector encodes the parameters:
\begin{equation*}
\begin{aligned}
    \vtheta_t = (
    \vw_t^{(1)} \in \real^{10\times 1}, \vw_t^{(2)} \in \real^{10\times 10}, \vw_t^{(3)} \in \real^{1 \times 10},
    \vb_t^{(1)}\in\real^{10}, \vb_t^{(2)} \in \real^{10},  \vb_t^{(3)} \in \real)
\end{aligned}
\end{equation*}
and has size so that $\vtheta\in\real^{141}$.
Note that in this experiment $h_t(\vtheta) = h(\vtheta, x_t)$.
We set $Q_t = 10^{-4}{\bf I}$,
which allows the parameters to slowly drift over time and provides some regularisation.

For each method, we evaluate the RMedSE.
The \mAgamenoniExtended and the \mWangExtended methods
are taken with two inner iterations,
which implies that their computational complexity is twice that of the WoLF methods.

\paragraph{MLP measurement model}

Figure \ref{fig:online-mlp-example-run-sorted} shows results
when the data are presented in sorted order of $x_t$.
We show the  performance on 100 trials.
The left panel
shows the mean prior-predictive $h(\vmu_{t | t-1}, x_t)$ of 
each method, and the underlying true state process,
for a single trial.
The right panel shows the RMedSE after multiple trials.
\begin{figure}[htb]
    \centering
    \includegraphics[width=0.45\linewidth]{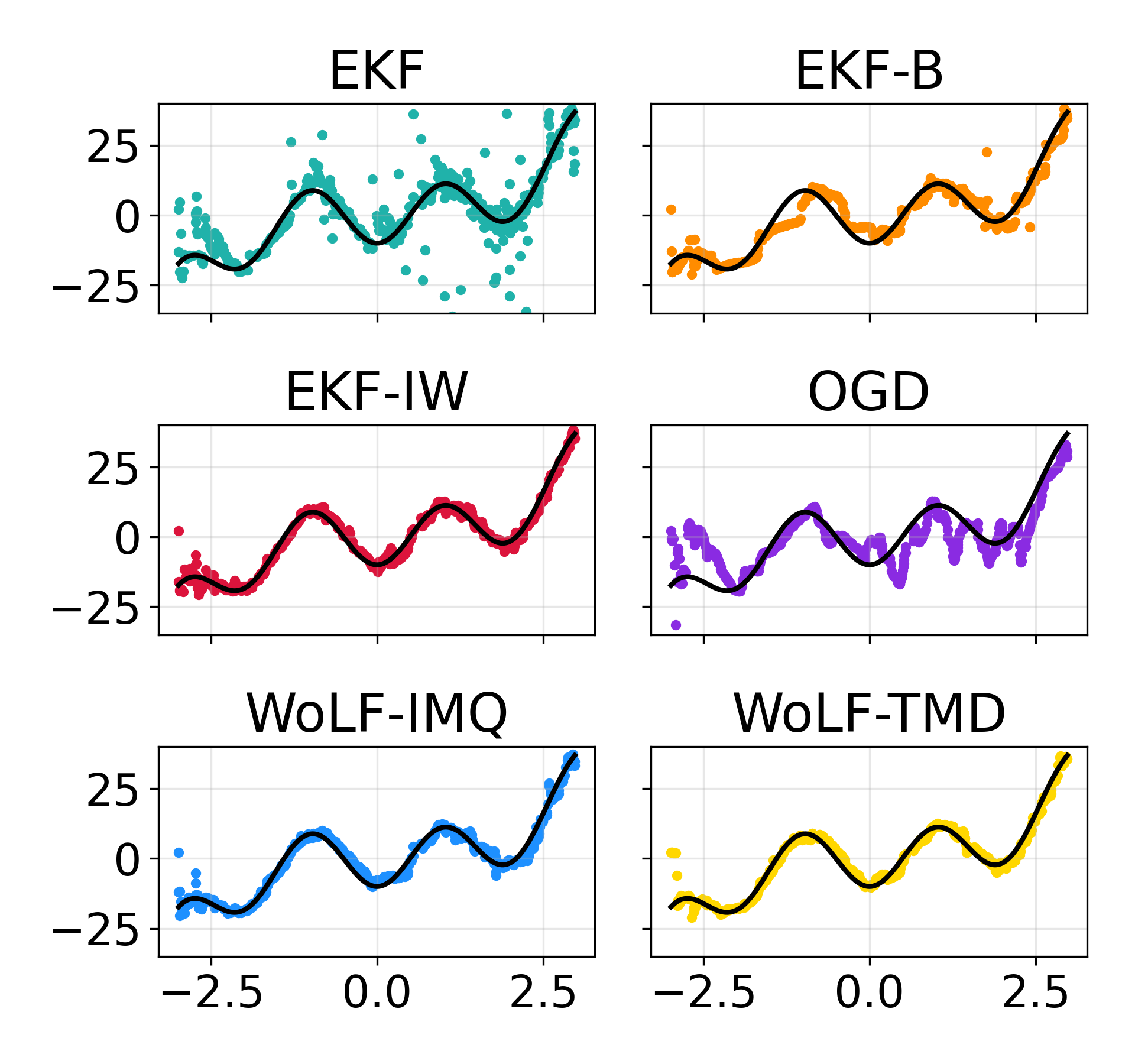}
    \includegraphics[width=0.45\linewidth]{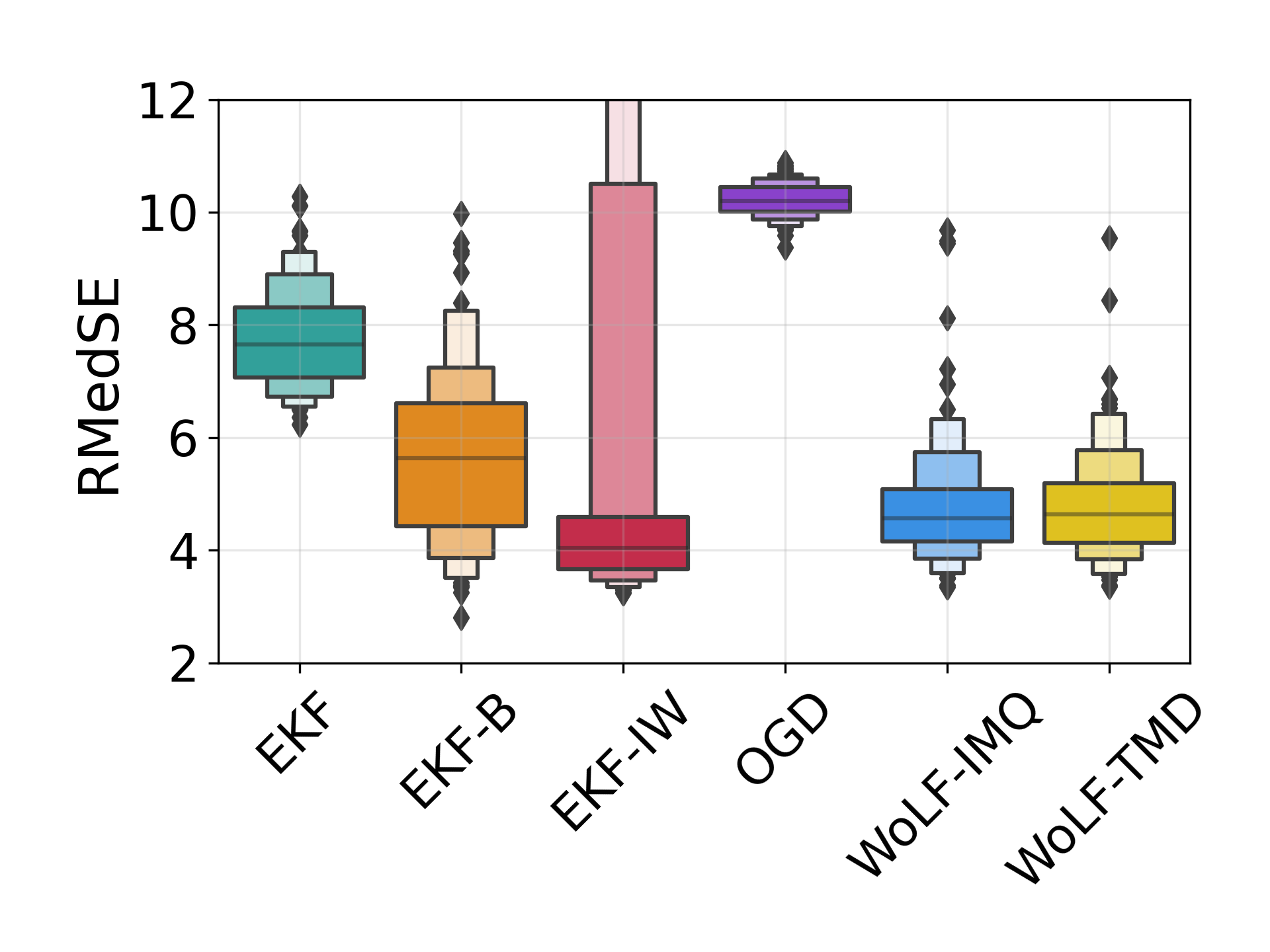}
    \caption{
    Results with sorted data.
        Left panel shows a
        run of each filter on the 1d regression,
        with the true underlying
        data-generating function in solid black line
        and the next-step  predicted observation
        as dots.
        Right panel shows the
        RMedSE distribution over multiple trials.
    }
    \label{fig:online-mlp-example-run-sorted}
\end{figure}
We observe on the right panel that the \mWlfImq and the \mAgamenoniExtended have 
the lowest mean error and lowest standard deviation among the competing methods.
However, the \mAgamenoniExtended takes twice as long to run the experiment.
For all methods, the performance worse on the left-most side of the plot on the left panel,
which is a region with not enough data to determine whether a measurement is an inlier or an outlier.

Figure \ref{fig:online-mlp-example-run-unsorted} shows the results when data are presented in random order of $\vx_t$. 
\begin{figure}[htb]
    \centering
    \includegraphics[width=0.45\linewidth]{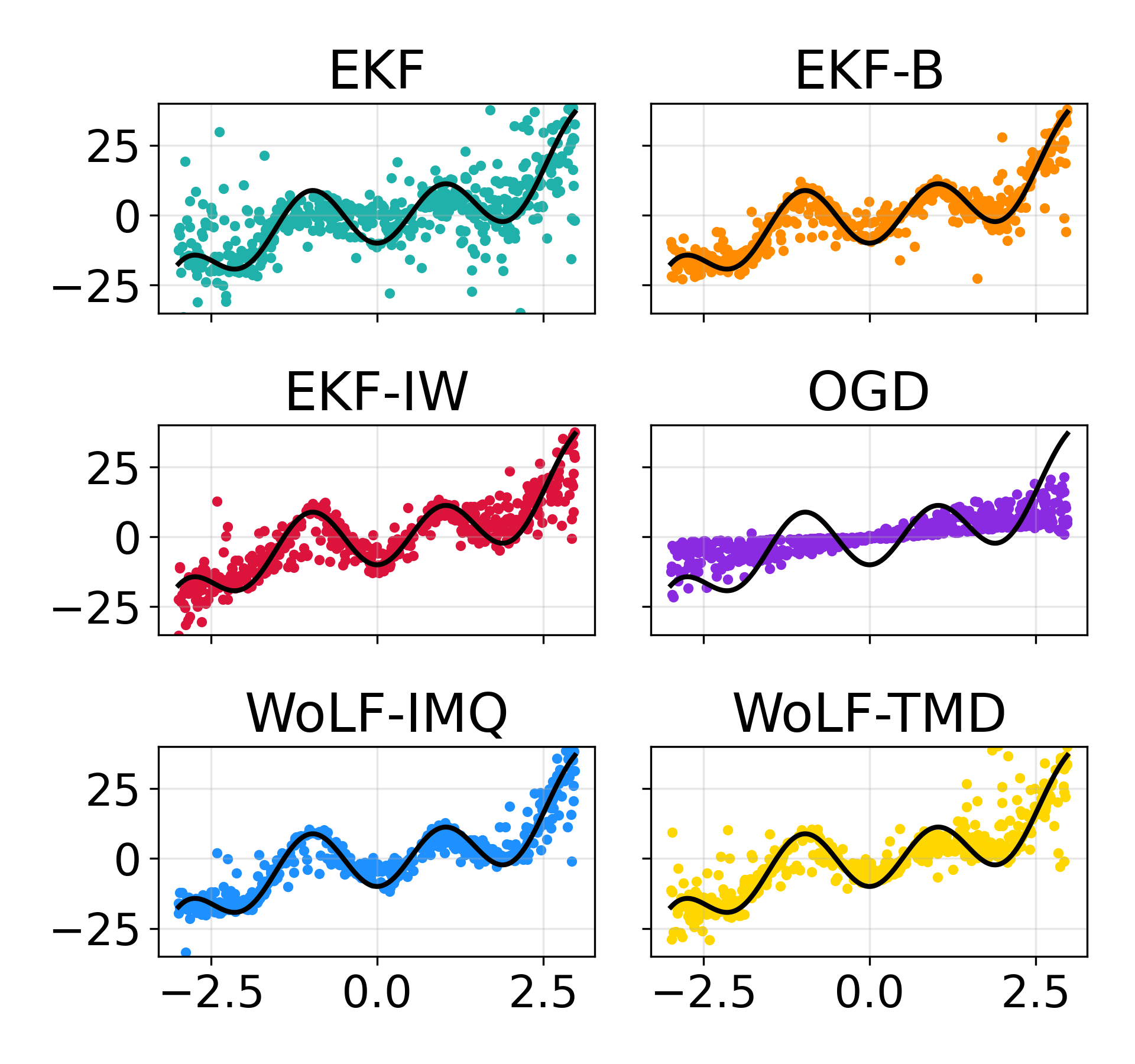}
    \includegraphics[width=0.45\linewidth]{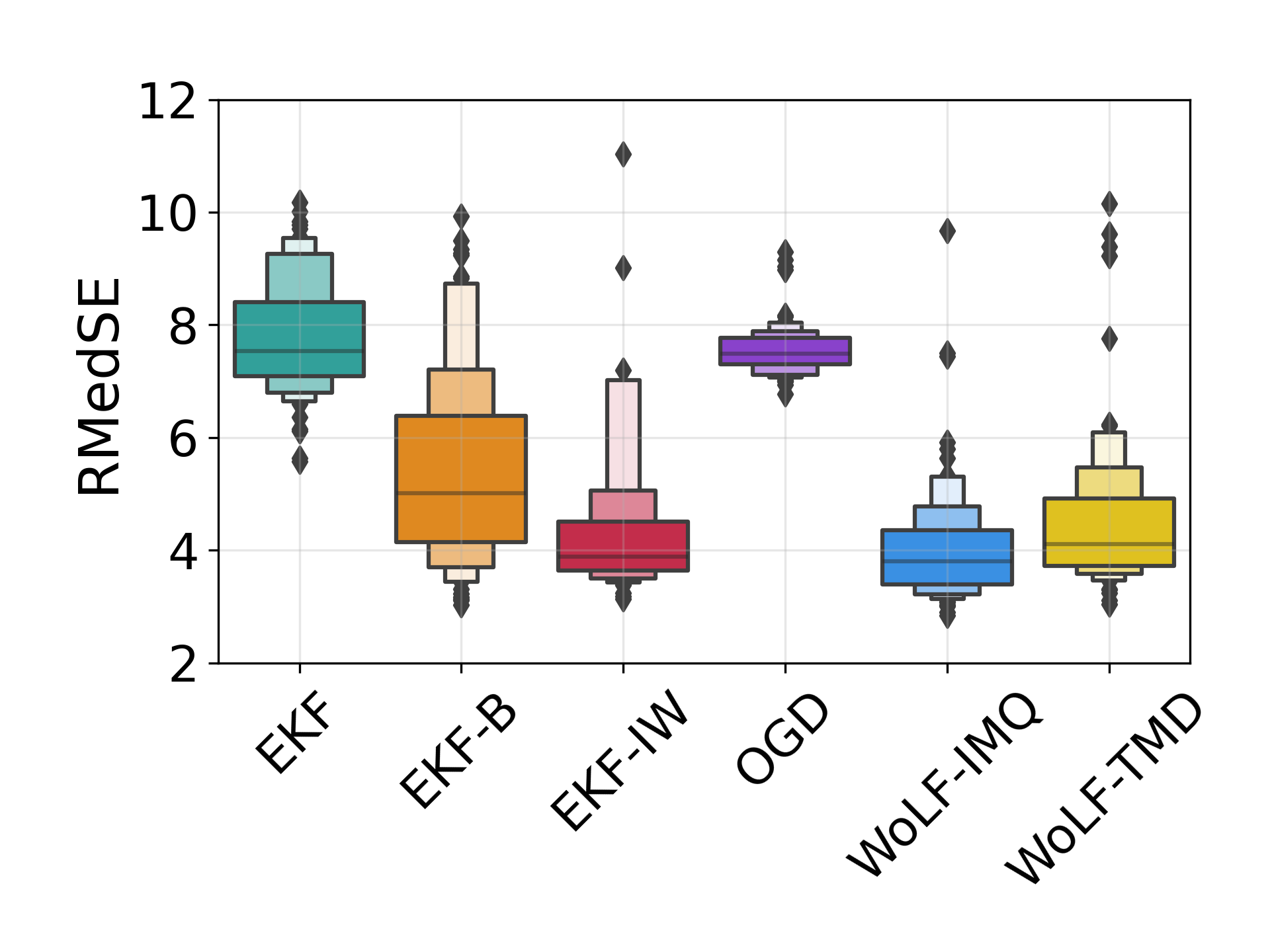}
    \caption{
    Results with unsorted inputs.
        The left panel shows a run of each filter with the underlying
        data-generating function in solid black line
        and the next-step  predicted observation
        as dots.
        The right panel shows the distribution of $G_T$ for multiple runs.
        We remove all values of $G_T$ that have a value larger than 800.
    }
    \label{fig:online-mlp-example-run-unsorted}
\end{figure}
We show results for a single run on the left panel and the the RMedSE after multiple trials on the right panel.
Similar to the sorted configuration, we observe that the \mAgamenoniExtended and the \mWlfImq are the methods with lowest RMedSE.
However, the \mAgamenoniExtended has longer tails than the \mWlfImq.

\paragraph{True measurement model}

We modify the experiment above by taking the measurement function to be
$h_t(\vtheta_t) = h(\vtheta_t, x_t) = \vtheta_{t,1}x_t  - \vtheta_{t,2}\cos(\vtheta_{t,3} x_t\,\pi) + \vtheta_{t,4}x_t^3$,
with state $\vtheta_t \in \real^4$ and $\vtheta_{t,i}$ the $i$-th entry of the state vector $\vtheta_t$.
Figure \ref{fig:sorted-unsorted-clean-measurement} shows a single
run of the filtering process when the data is presented unsorted (left panel)
and sorted (right panel).
\begin{figure}[htb]
    \centering
    \includegraphics[width=0.45\linewidth]{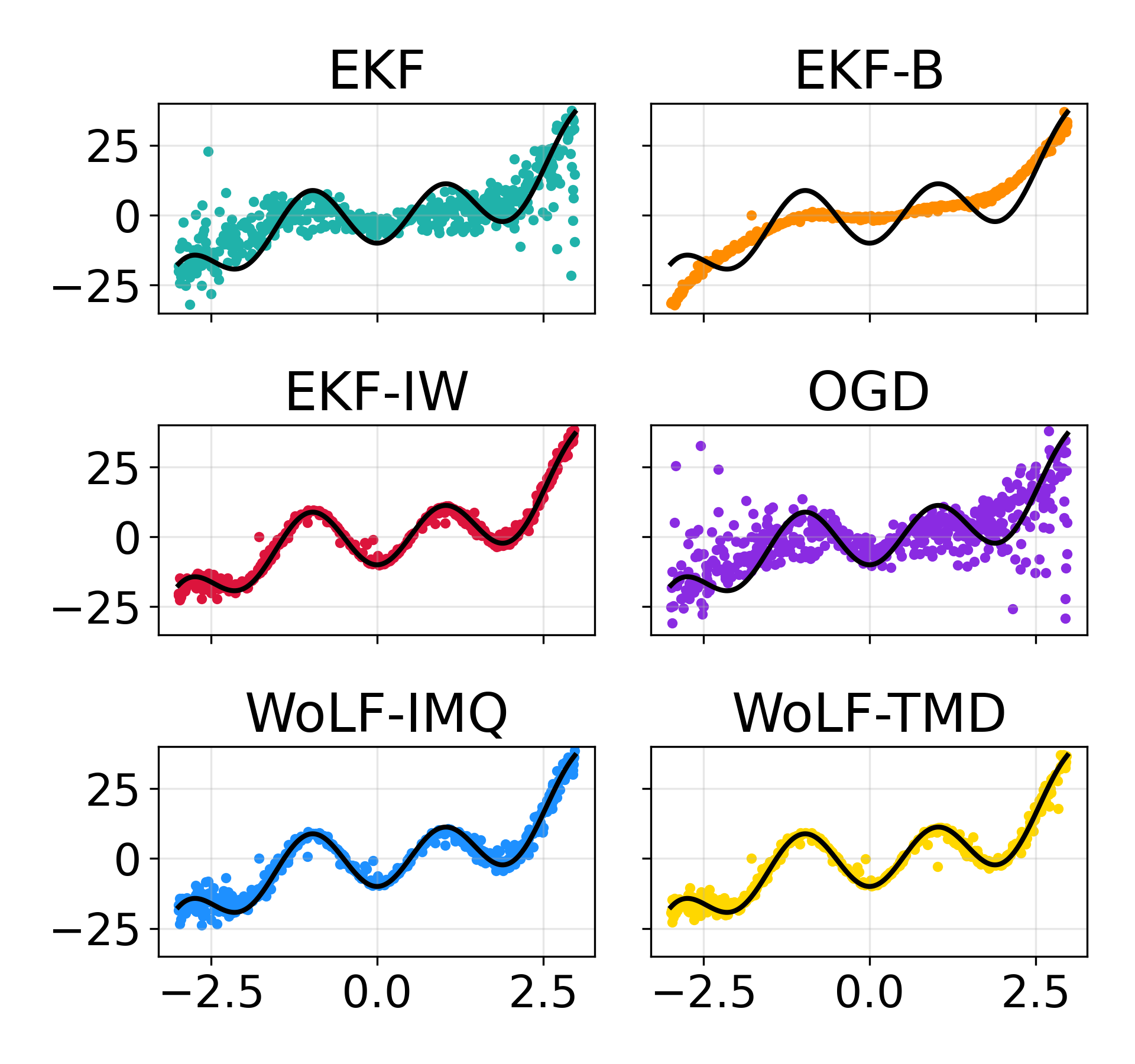}
    \includegraphics[width=0.45\linewidth]{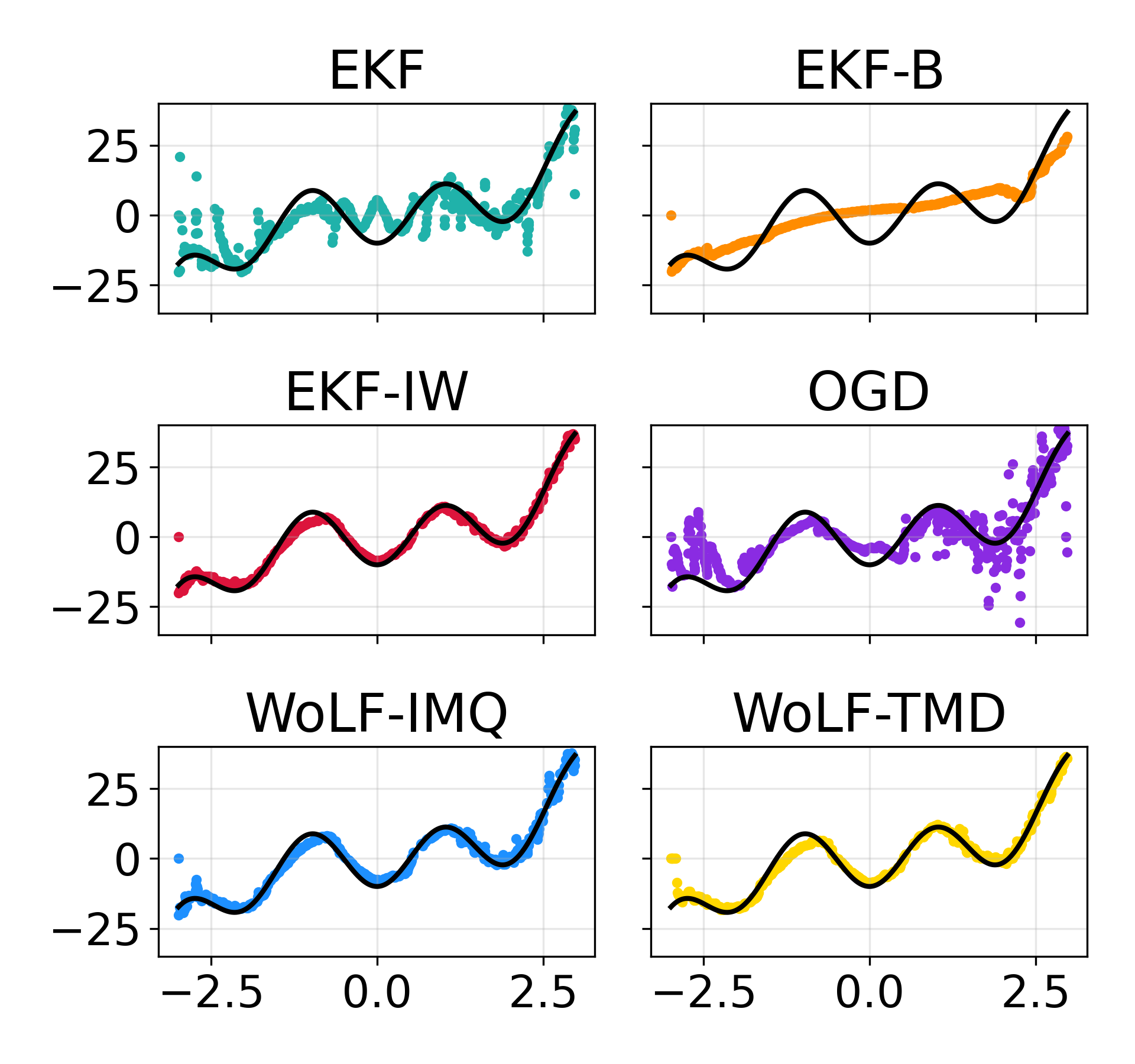}
    \caption{
        The figure shows a run of each filter with the underlying
        data-generating function in solid black line
        and the evaluation of $h(\vmu_{t|t-1}, x_t)$ in points.
        The left panel shows the configuration with unsorted $x_t$ values and
        the right panel shows the configuration with sorted $x_t$ values.
    }
    \label{fig:sorted-unsorted-clean-measurement}
\end{figure}
We observe that the behaviour of the \mWlfImq, the \mWlfMd, and the \mAgamenoniExtended have similar performance.
However, the \mAgamenoniExtended takes twice the amount of time to run.
The \ogd and the \mkfExtended are not able to correctly filter out outlier measurement at the tails.
Finally, the \mWangExtended over-penalises inliers and does not capture the curvature of the measurement process.

\subsection{Non-stationary heavy-tailed regression}
\label{experiment:heavy-tail-regression}
In this experiment, we make use of the BONE framework developed in Chapter \ref{ch:adaptivity}
and the WoLF filter discussed in this chapter
to develop an outlier-robust method for linear regression with heavy-tailed noise
and changing model parameters.

It is well-known that the combination \texttt{RL-PR} is sensitive to outliers if the choice of \cModel is misspecified,
since an observation that is ``unusual'' may trigger a changepoint unnecessarily.
As a consequence,
various works have proposed outlier-robust variants to the \RLPR[inf] for segmentation
\citep{knoblauch2018doublyrobust-bocd,fearnhead2019robustchangepoint, altamirano2023robust,sellier2023robustbocdgp}
and for filtering \citep{reimann2024changingfiltering}.
In what follows, we show how we can easily
accommodate
robust methods into the BONE framework by changing the way we compute the likelihood
and/or posterior.
In particular, we 
consider the WoLF-IMQ method of \cite{duranmartin2024-wlf}.\footnote{
We set the soft threshold value  to 4, representing four standard deviations of tolerance before declaring an outlier.
}
We use  WoLF-IMQ 
because it is a provably robust algorithm and it is a 
straightforward modification of the  linear Gaussian posterior update equations.
We denote
\RLPR[inf] with \cPosterior taken to be \texttt{LG} as \RLPRKF and
\RLPR[inf] with \cPosterior taken to be WoLF-IMQ as \RLPRWoLF.

To demonstrate the utility of a robust method,
we consider a 
piecewise linear regression model with Student-$t$ errors,
where the measurement are sampled according to
$\vx_t \sim {\cal U}[-2, 2]$,
$ \vy_t \sim {\rm St}\big( \phi(\vx_t)^\intercal\vtheta_t, 1,\,\, 2.01 \big)$
a Student-$t$ distribution with location $ \phi(\vx_t)^\intercal\vtheta_t$,
scale $1$,
degrees of freedom $2.01$, and
$\phi(\vx_t) = (1,\,x,\,x^2)$.
At every timestep, the parameters take the value
\begin{equation}
\vtheta_t =
\begin{cases}
\vtheta_{t-1} & \text{w.p. } 1 - p_\epsilon,\\
{\cal U}[-3, 3]^3 & \text{w.p. } p_\epsilon,
\end{cases}
\end{equation}
with $p_\epsilon = 0.01$, and $\vtheta_0 \sim {\cal U}[-3, 3]^3$. 
Intuitively, at each timestep, there is probability $p_\epsilon$  of a changepoint, and conditional on a changepoint occurring, the each of the entries of the new parameters $\vtheta_t$ are sampled from a uniform in $[-3,3]$. 
Figure \ref{fig:segements-tdlist-lr} shows a sample data generated by this process.

\begin{figure}[htb]
    \centering
    \includegraphics[width=0.9\linewidth]{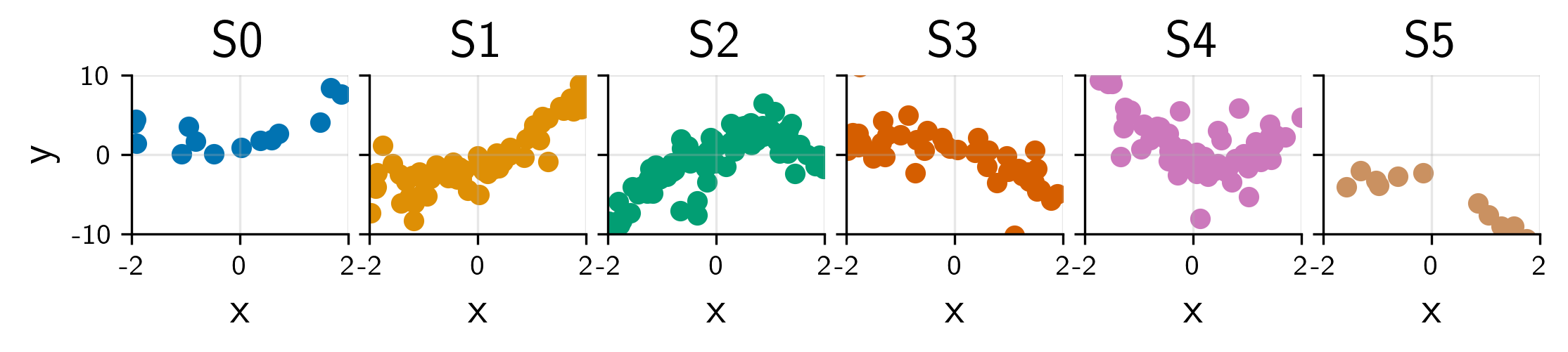}
    \caption{
        Sample run of the heavy-tailed-regression process.
        Each box corresponds to the samples within a segment.
    }
    \label{fig:segements-tdlist-lr}
\end{figure}

To process this data, our choice of \cModel is
$h(\vtheta_t, \vx_t) = \vtheta_t^\intercal\,\phi(\vx_t)$
with 
\begin{equation}
    \ell(\vy_t; \vtheta_t, \vx_t)
    = -W^2\big(\vy_t, h(\vtheta_t, \vx_t)\big)\,\log{\cal N}(\vy_t \cond h(\vtheta_t, \vx_t), 1.0),
\end{equation}
a weighted Gaussian log-likelihood and
$W(u, z) = (1 + \frac{(u - z)^2}{c^2})^{-1/2}$ the inverse multi-quadratic (IMQ) function
with soft threshold value $c=4$, 
representing four standard deviations of tolerance to outliers.
Here $u,z \in \real$.

The left panel in Figure \ref{fig:outliers-lr-res} shows
the rolling mean (with a window of size 10) of the RMSE for
\RLPRKF, \RLPRWoLF, and \staticKF.
The right panel in Figure \ref{fig:outliers-lr-res}
shows the distribution of the RMSE for all methods after 30 trials.
\begin{figure}[htb]
    \centering
    \includegraphics[width=0.48\linewidth]{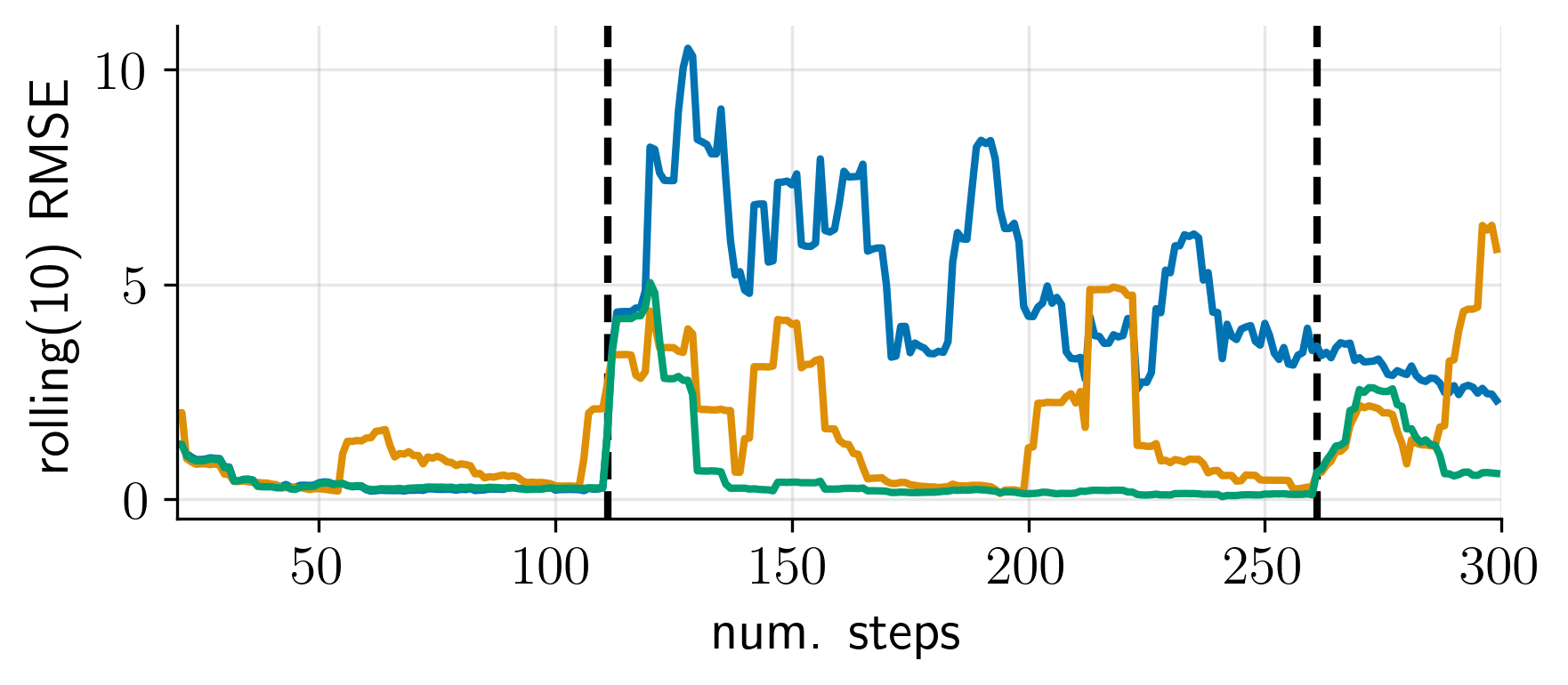}
    \includegraphics[width=0.48\linewidth]{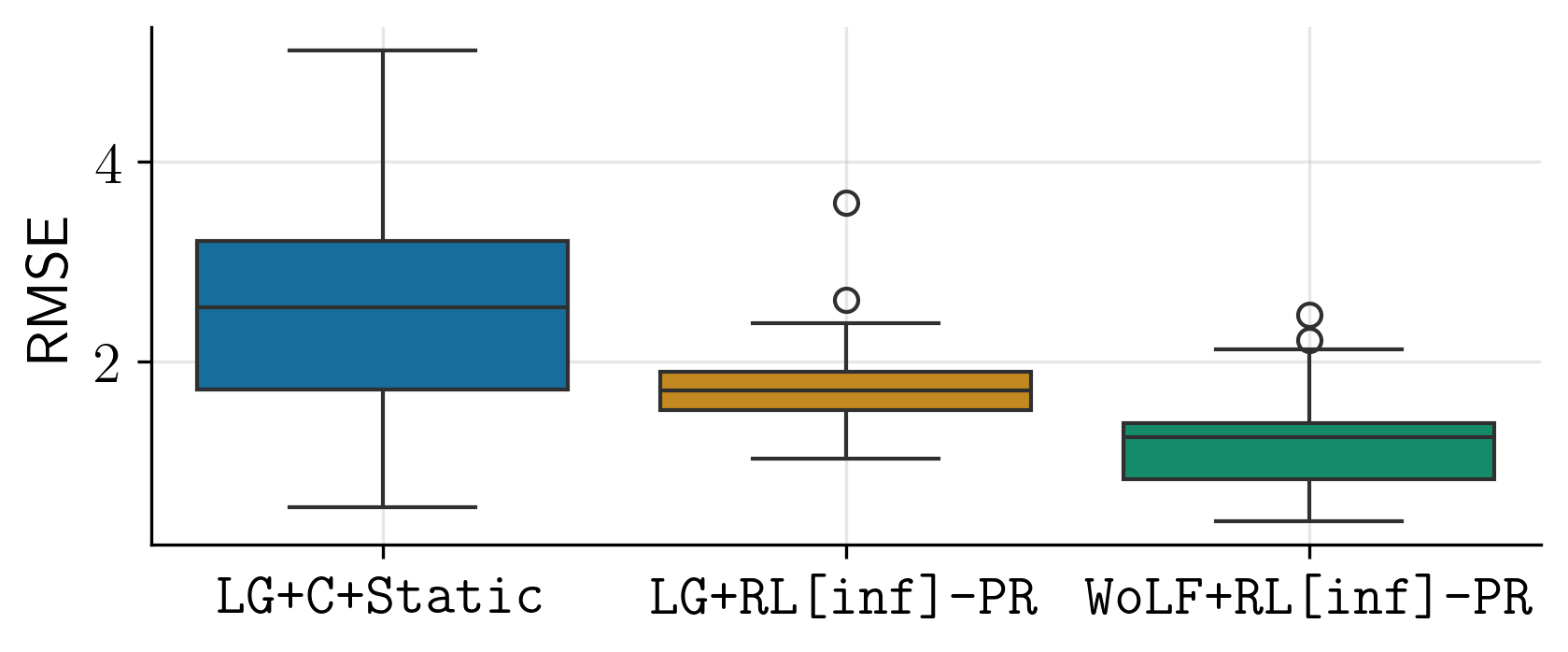}
    \caption{
    The \textbf{left panel} shows the rolling RMSE using a window of the 10 previous observations.
    The \textbf{right panel} shows the distribution of final RMSE over 30 runs.
    The vertical dotted line denotes a change in the true model parameters.
    }
    \label{fig:outliers-lr-res}
\end{figure}

The left panel of Figure \ref{fig:outliers-lr-res} shows that
\staticKF has a lower rolling RMSE error than \RLPRKF up to first changepoint (around 100 steps).
The performance of \staticKF significantly deteriorates afterwards.
Next, \RLPRKF wrongly detects changepoints and resets its parameters frequently.
This results in periods of increased rolling RMSE.
Finally, \RLPRWoLF has the lowest error among the methods.
After the regime change, its error increases at a similar rate to the other methods,
however, it correctly adapts to the regime and its error decreases soon after the changepoint.

\begin{figure}[htb]
    \centering
    \includegraphics[width=0.48\linewidth]{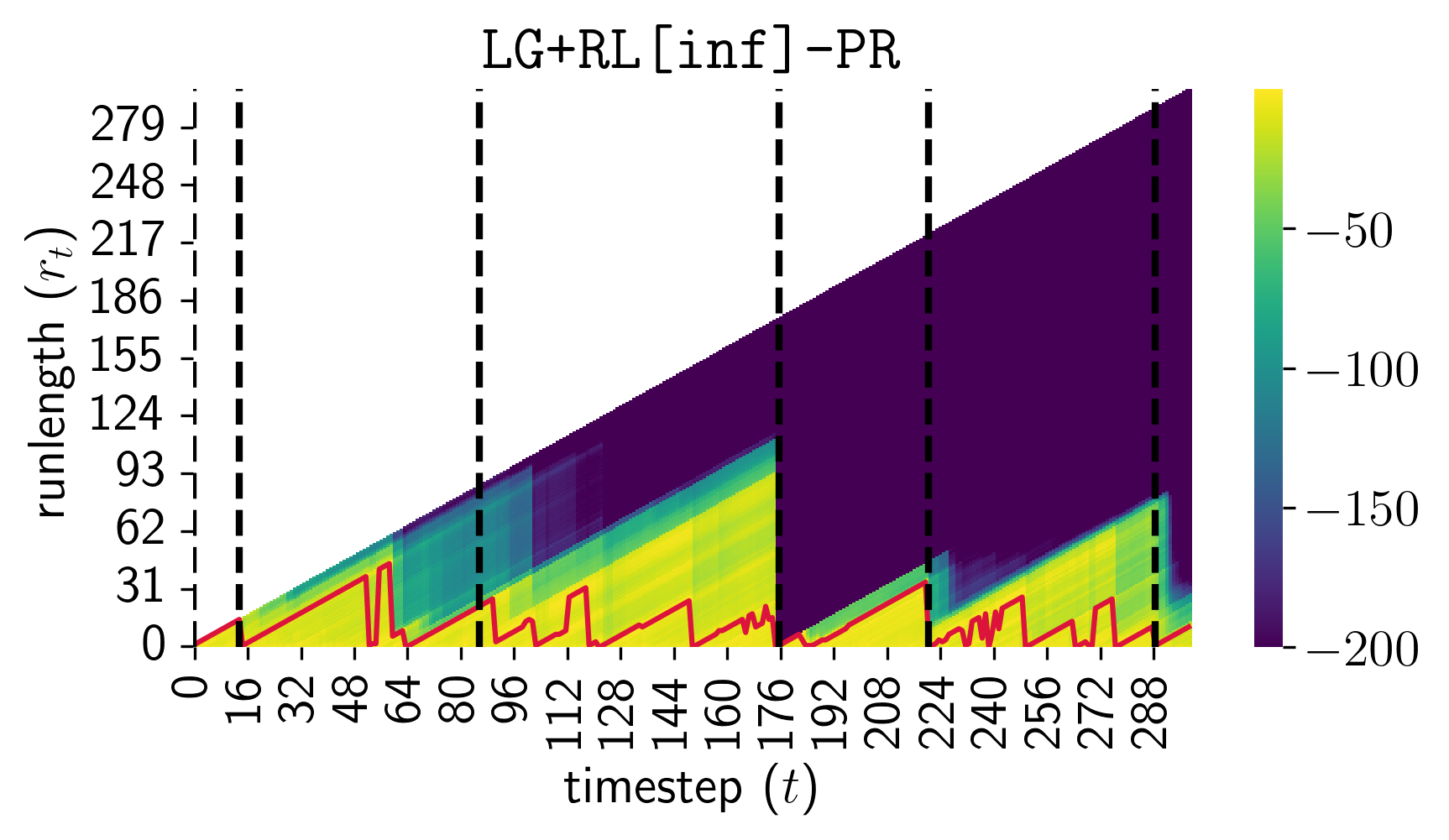}
    \includegraphics[width=0.48\linewidth]{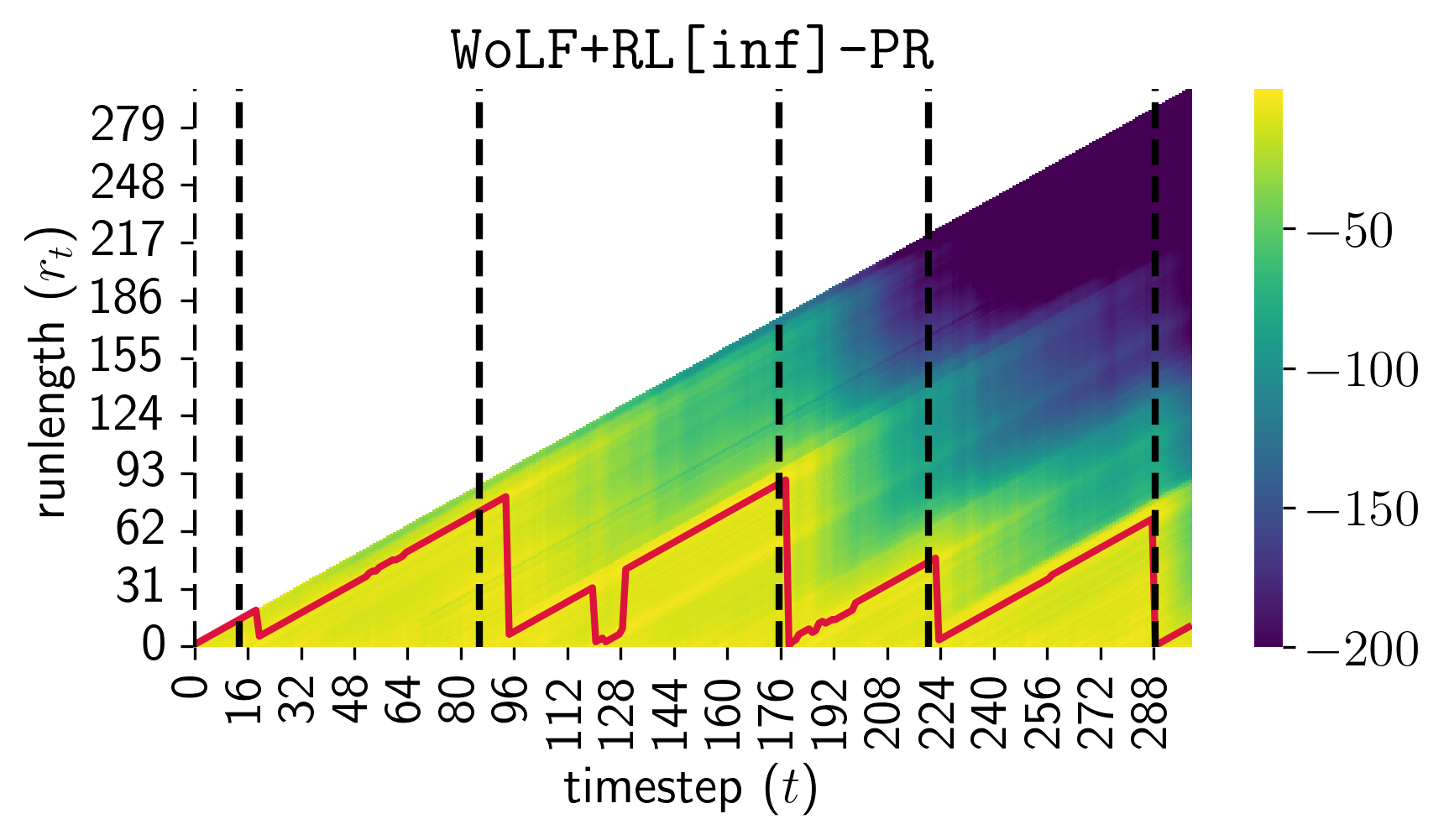}
    \caption{
        Segmentation of the non-stationary linear regression problem.
        The left panel shows the segmentation done by \RLPRKF.
        The right panel shows the segmentation done by \RLPRWoLF.
        The $x$-axis is the timestep $t$, the $y$-axis is the runlength $r_t$ (note that it is always the case that $r_t\leq t$), and the color bar shows the value $\log\,p(r_t \cond \vy_{1:t})$.
        The red line in either plot is the trajectory of the mode, i.e.,
        the set $r_{1:t}^* = \{\argmax_{r_{1}} p(r_1 \cond \data_{1}),  \ldots, \argmax_{r_{t}} p(r_t \cond \data_{1:t})\}$.
        Note that the non-robust method (left) oversegments the signal.
    }
    \label{fig:bocd-lr-stress}
\end{figure}

Figure \ref{fig:bocd-lr-stress} shows
the posterior belief of the value of the runlength  
using \RLPRKF and \RLPRWoLF.
The constant reaction to outliers in the case of \RLPRKF means that the parameters keep reseting back to the initial prior belief.
As a consequence, the RMSE of \RLPRKF deteriorates. 
On the other hand, \RLPRWoLF  resets less often, and accurately adjusts to the regime changes when they do happen. This results in the lowest RMSE among the three methods.

\subsection{Outlier-robust exponentially-weighted moving average}
\label{experiment:outlier-robust-ewma}
In this experiment,
we present an outlier-robust exponentially weighted moving average (EWMA)
as an instance of our WoLF method.\footnote{
This experiment is based on the notes 
\url{https://gerdm.github.io/posts/wolf-ewma/}.
}

This experiment is organised as follows:
first, we recap the EWMA.
Then, we introduce the unidimensional SSM with unit (unobserved) signal and observation coefficients and show that the EWMA is a special case of the KF in this setting.
Next, we derive the WoLF method for an EWMA.
Finally, we show a numerical experiment that
illustrates the robustness of the WoLF method in corrupted one-dimensional financial data.

\paragraph{The exponentially weighted moving average (EWMA)}
Given a sequence of observations (or measurements) $y_{1:t} = (y_1, \ldots, y_t)$,
the EWMA of the observations at time $t$ is given by
\begin{equation}
m_t = \beta\,y_t + (1-\beta)\,m_{t-1},
\end{equation}
where $\beta \in (0,1]$ is the smoothing factor (or learning rate).
Higher levels of $\beta$ give more weight to recent observations.

\paragraph{The Kalman filter in one dimension}
Consider the following one-dimensional SSM:
\begin{equation}\label{eq:ssm-unidimensional}
\begin{aligned}
z_t &= z_{t-1} + w_t,\\
y_t &= z_t + e_t,
\end{aligned}
\end{equation}
where $z_t$ is the (unobserved) signal, $w_t$ is the process noise, and $e_t$ is the observation noise.
We assume that ${\rm var}(w_t) = q_t$ and ${\rm var}(e_t) = r_t$.
Put simply, the SSM model $(2)$ assumes that the observations $y_t$ are generated by a (unobserved) signal $z_t$ plus noise $e_t$.
The (unobserved) signal $z_t$ evolves over time according to a random walk with noise $w_t$
and the observations $y_t$ are generated by the (unobserved) signal $z_t$ plus noise $e_t$.

\begin{proposition}\label{prop:kf-unidimensional-is-ewma}
    Under the initial density $p(z_0) = {\cal N}(z_0 \cond m_0, s_0)$ and
    measurement model $p(y_t \cond z_t) = {\cal N}(y_t \cond z_t, r_t^2)$.
    The posterior density $p(z_t \cond y_{1:t})$ is given by
    \begin{equation}
    \begin{aligned}
    p(z_t \cond y_{1:t}) &\propto p(y_t \cond z_t)\,p(z_t \cond y_{1:t-1})\\
    &= {\cal N}(z_t \cond m_t, s_t^2),
    \end{aligned}
    \end{equation}
    with
    \begin{equation}\label{eq:kf-unidimensional}
    \begin{aligned}
    k_t &= \frac{s_{t-1}^2 + q_t^2}{s_{t-1}^2 + q_t^2 + r_t^2},\\
    s_t^2 &= k_t\,r_t^2,\\
    m_t &= k_t\,y_t + (1-k_t)\,m_{t-1}.
    \end{aligned}
    \end{equation}
\end{proposition}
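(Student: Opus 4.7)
The plan is to specialise the Kalman filter equations from Proposition \ref{prop:kf-equations} to the one-dimensional model \eqref{eq:ssm-unidimensional}. In this setting, the transition and projection matrices collapse to the scalars $F_t = 1$ and $H_t = 1$, the dynamics variance is $q_t^2$, and the measurement variance is $r_t^2$. Since everything is scalar, the matrix-inversion steps reduce to divisions and all quantities remain well defined.

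First I would write out the predict step. Substituting $F_t = 1$ and using the prior density $p(z_{t-1} \cond y_{1:t-1}) = {\cal N}(z_{t-1} \cond m_{t-1}, s_{t-1}^2)$ yields the predictive mean $m_{t|t-1} = m_{t-1}$ and predictive variance $s_{t|t-1}^2 = s_{t-1}^2 + q_t^2$. Next I would carry out the update step: the innovation variance is $S_t = s_{t|t-1}^2 + r_t^2 = s_{t-1}^2 + q_t^2 + r_t^2$, and the Kalman gain is
\begin{equation*}
k_t \;=\; \frac{s_{t|t-1}^2}{S_t} \;=\; \frac{s_{t-1}^2 + q_t^2}{s_{t-1}^2 + q_t^2 + r_t^2},
\end{equation*}
which matches the form claimed in \eqref{eq:kf-unidimensional}.

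For the posterior mean, I would plug into $m_t = m_{t|t-1} + k_t(y_t - m_{t|t-1})$ and rearrange as $m_t = k_t y_t + (1 - k_t) m_{t-1}$, making evident its EWMA-like structure with time-varying smoothing factor $k_t$. For the posterior variance, I would start from $s_t^2 = (1 - k_t) s_{t|t-1}^2$ and substitute $1 - k_t = r_t^2 / S_t$, giving
\begin{equation*}
s_t^2 \;=\; \frac{r_t^2 \, s_{t|t-1}^2}{S_t} \;=\; \frac{s_{t|t-1}^2}{S_t}\, r_t^2 \;=\; k_t \, r_t^2,
\end{equation*}
which is the identity claimed. Gaussianity of $p(z_t \cond y_{1:t})$ is inherited directly from Proposition \ref{prop:kf-equations}, so no additional argument is needed.

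I do not anticipate any real obstacle: the derivation is a direct scalar specialisation of results already established, and the only non-obvious algebraic manipulation is the equivalence $(1 - k_t) s_{t|t-1}^2 = k_t r_t^2$, which is a one-line rearrangement. The pedagogical value of the proof lies less in technical difficulty than in making explicit how the Kalman gain $k_t$ plays the role of the EWMA learning rate $\beta$, with the constant-rate EWMA recovered as the stationary regime in which $k_t$ converges to a fixed value determined by the ratio $q_t^2 / r_t^2$.
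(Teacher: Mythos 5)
Your proof is correct, but it takes a genuinely different route from the paper's. You specialise the general Kalman filter equations (Proposition \ref{prop:kf-equations}) to the scalar case with $F_t = H_t = 1$, which reduces the whole argument to reading off the predict step, the gain $k_t = s_{t|t-1}^2/S_t$, and the one-line rearrangements $m_t = k_t y_t + (1-k_t)m_{t-1}$ and $(1-k_t)\,s_{t|t-1}^2 = k_t\,r_t^2$. The paper instead re-derives everything from first principles: it writes out the Gaussian convolution for the predict step, completes the square in the log-posterior to obtain $m_t = \bigl(s_{t|t-1}^{-2} + r_t^{-2}\bigr)^{-1}\bigl(m_{t-1}/s_{t|t-1}^2 + y_t/r_t^2\bigr)$ and $s_t^2 = \bigl(s_{t|t-1}^{-2}+r_t^{-2}\bigr)^{-1}$, and then simplifies these precision-form expressions into the gain form. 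Your approach is shorter and avoids duplicating work already done in Chapter 2; the paper's approach is self-contained and makes the completing-the-square mechanics explicit for readers treating this section in isolation. Both yield identical formulas, and your closing observation that $k_t$ plays the role of the EWMA smoothing factor is exactly the point the paper draws immediately after its proof.
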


\begin{proof}
    Suppose $p(z_0) = {\cal N}(z_0 \cond m_0, s_0)$.
    Let
    $p(z_{t-1} \cond y_{1:t-1}) = {\cal N}(z_{t-1} \cond m_{t-1}, s_{t-1})$ and
    $p(y_t \cond z_t) = {\cal N}(y_t \cond z_t, r_t)$.
    Then, the prediction step is given by
    
    $$
    \begin{aligned}
    p(z_t \cond y_{1:t-1}) &= \int p(z_t \cond z_{t-1})\,p(z_{t-1} \cond y_{1:t-1})\,dz_{t-1}\\
    &= \int {\cal N}(z_t \cond z_{t-1}, q_t)\,{\cal N}(z_{t-1} \cond m_{t-1}, s_{t-1})\,dz_{t-1}\\
    &= {\cal N}(z_t \cond m_{t-1}, s_{t-1} + q_t)\\
    &= {\cal N}(z_t \cond m_{t-1}, s_{t|t-1}),
    \end{aligned}
    $$
    with $s_{t|t-1} = s_{t-1} + q_t$.
    Next, the update step is given by
    $$
    \begin{aligned}
    p(z_t \cond y_{1:t})
    &\propto p(y_t \cond z_t)\,p(z_t \cond y_{1:t-1})\\
    &= {\cal N}(y_t \cond z_t, r_t^2)\,{\cal N}(z_t \cond m_{t-1}, s_{t-1} + q_t^2).
    \end{aligned}
    $$
    To compute the posterior, consider the log-posterior density
    \begin{equation*}
    \begin{aligned}
    \log p(z_t \cond y_{1:t})
    &= -\frac{1}{s_{t|t-1}^2}(z_t - m_{t-1})^2 - \frac{1}{r_t^2}(y_t - z_t)^2 + {\rm const}.\\
    &= -\frac{1}{s_{t|t-1}}(z_t ^ 2 - 2z_t m_{t-1} + m_{t-1}^2) - \frac{1}{r_t^2}(y_t^2 - 2y_t z_t + z_t^2) + {\rm const}.\\
    &= -\frac{1}{s_{t|t-1}}(z_t^2 - 2z_tm_{t-1}) - \frac{1}{r_t^2}(z_t^2 - 2z_ty_t) + {\rm const}.\\
    &= -\left((s_{t|t-1}^{-2} + r_{t}^{-2})z_t^2 - 2z_t\left( \frac{m_{t-1}}{s_{t|t-1}^2} + \frac{y_t}{r_t^2} \right)\right) + {\rm const}.\\
    &= -\left(s_{t|t-1}^{-2} + r_t^{-2}\right)\left[z_t^2 - 2z_t\left(s_{t|t-1} + r_{t}^{-2}\right)^{-1}\left( \frac{m_{t-1}}{s_{t|t-1}^2} + \frac{y_t}{r_t^2} \right)\right] + {\rm const}\\
    &= -\left(s_{t|t-1}^{-2} + r_t^{-2}\right)\left[z_t - \left(s_{t|t-1} + r_{t}^{-2}\right)^{-1}\left( \frac{m_{t-1}}{s_{t|t-1}^2} + \frac{y_t}{r_t^2} \right)\right]^2 + {\rm const}
    \end{aligned}
    \end{equation*}
    where ${\rm const}$ denotes a constant that does not depend on $z_t$.
    From the above, we see that the posterior $p(z_t \cond y_{1:t})$ is a Gaussian density with mean $m_t$ and variance $s_t^2$,
    where
    $$
    \begin{aligned}
        m_t &= \left(s_{t|t-1} + r_{t}^{-2}\right)^{-1}\left( \frac{m_{t-1}}{s_{t|t-1}^2} + \frac{y_t}{r_t^2} \right),\\
        s_t^2 &= \left(s_{t|t-1}^{-2} + r_t^{-2}\right)^{-1}.
    \end{aligned}
    $$
    Next, we simplify the above expressions to obtain the Kalman filter equations \eqref{eq:kf-unidimensional}.
    For the posterior mean $m_t$, we have
    $$
    \begin{aligned}
        m_t
        &= \left(s_{t|t-1}^2 + r_{t}^{-2}\right)^{-1}\left( \frac{m_{t-1}}{s_{t|t-1}^2} + \frac{y_t}{r_t^2} \right),\\
        &= \frac{s_{t|t-1}^2r_t^2}{s_{t|t-1} + r_t^2}\left( \frac{r_t^2m_{t-1} + s_{t|t-1}^2y_t}{s_{t|t-1}^2r_t^2} \right),\\
        &= \frac{r_t^2}{s_{t|t-1}^2 + r_t^2}m_{t-1} + \frac{s_{t|t-1}^2}{s_{t|t-1}^2 + r_t^2}y_t,\\
        &= \left(1 - \frac{s_{t|t-1}^2}{s_{t|t-1}^2 +r_t^2}\right)m_{t-1} + \frac{s_{t|t-1}^2}{s_{t|t-1}^2 + r_t^2}y_t,\\
        &= (1 - k_t)m_{t-1} + k_ty_t.
    \end{aligned}
    $$
    
    with $k_t = s_{t|t-1}^2 / (s_{t|t-1}^2 + r_t^2)$.
    Finally, compute the posterior variance $s_t^2$:
    $$
        s_t^2
        = \frac{1}{s_{t|t-1}^{-2} + r_t^{-2}}
        = \frac{s_{t|t-1}^2r_t^2}{s_{t|t-1}^2 + r_t^2}
        = \left(\frac{s_{t|t-1}^2}{s_{t|t-1}^2 + r_t^2}\right)r_t^2
        = k_tr_t^2.
    $$
\end{proof}

Proposition \ref{prop:kf-unidimensional-is-ewma}
shows that the Kalman filter applied to the SSM \eqref{eq:ssm-unidimensional}
is equivalent to the EWMA with $\beta$ replaced by $k_t$,
i.e., the KF is an EWMA with a time-varying smoothing factor.

\paragraph{The WoLF method for the EWMA}
To create a 1D version of WoLF, recall that WoLF replaces the $r_t$ in the KF equations \eqref{eq:kf-unidimensional}
for $r_t^2 = r^2 / w_t^2$ with $w_t: \mathbb{R} \to \mathbb{R}$ a weight function.
Intuitively, the weight function $w_t$ determines degree of certainty that $y_t$ is an outlier.
Here, we consider the IMQ weight function
\begin{equation}
    w_t = \left(1 + \frac{(y_t - m_{t-1})^2}{c^2}\right)^{-1/2},
\end{equation}
where $c > 0$ is the soft threshold.

Next, consider the SSM \eqref{eq:ssm-unidimensional} with
$q_t^2 = q^2$ and $r_t^2 = r^2 / w_t^2$. Here $q \geq 0$ and $r > 0$ are fixed hyperparameters.
With these assumptions, the rate $k_t$ in \eqref{eq:kf-unidimensional} for WoLF becomes
\begin{equation}
k_t = \frac{s_{t-1}^2 + q^2}{s_{t-1}^2 + q^2 + r^2 / w_t^2}.
\end{equation}
As a consequence, we obtain that, as $y_t \to \infty$,
the rate $k_t$ converges to $0$ faster than $y_t$ tends to $\infty$.
We obtain
\begin{equation}
    (m_t \to m_{t-1} \text{ and } s_t^2 \to s_{t-1}^2) \text{ as } y_t \to \infty.
\end{equation}
In other words, with 1D Wolf, large and unexpected errors get discarded.
The larger the error, the less information it provides to the estimate $m_t$.

The WoLF EWMA is computed using
\begin{equation}\label{eq:wolf-1d}
\begin{aligned}
k_t &= \frac{s_{t-1}^2 + q^2}{s_{t-1}^2 + q^2 + r^2 / w_t^2},\\
s_t^2 &= k_t\,r_t^2,\\
m_t &= k_t\,y_t + (1-k_t)\,m_{t-1}.
\end{aligned}
\end{equation}

\paragraph{A robust EWMA for log-returns}
To test the 1d-WoLF,
consider data from the Dow Jones Industrial Average (DJI) from 2019 to the end of 2024.
We want to estimate the EWMA of log-returns for DJI.
Suppose that the data is corrupted with outliers and we do know 
in advance the level of corruption or their occurrence.
Figure \ref{fig:dji-returns} shows the log-returns DJI from  2020 to 2024.
\begin{figure}[htb]
    \centering
    \includegraphics[width=0.9\linewidth]{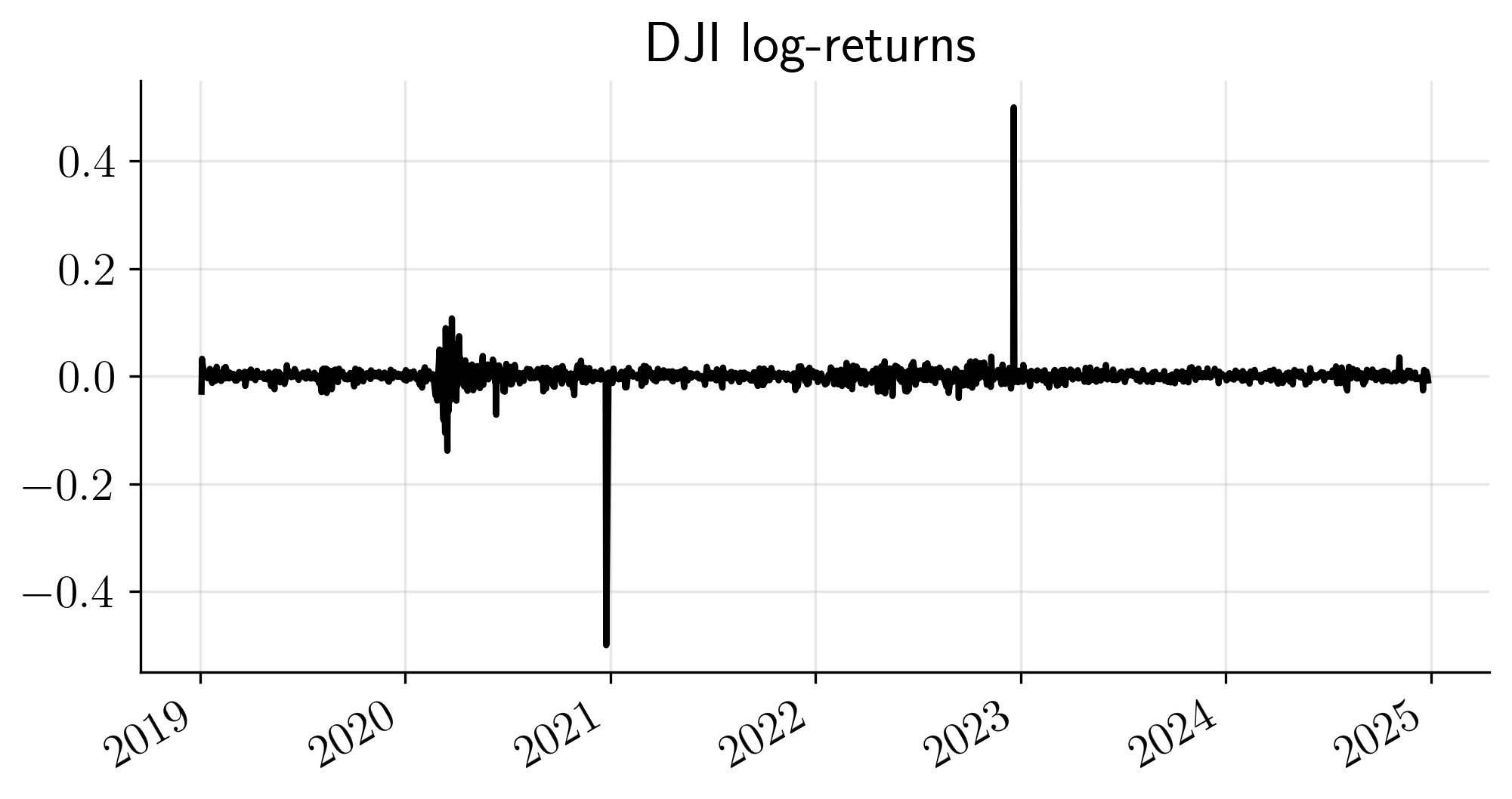}
    \caption{
        Log-returns of DJI from 2019 to 2024.
        The outliers at the beginning of 2021 and at the beginning of 2023
        correspond to erroneous datapoints.
    }
    \label{fig:dji-returns}
\end{figure}

Next, Figure \ref{fig:dji-ewma-methods} shows the EWMA estimate and the WoLF-EWMA estimate
with hyperparameters $\beta=0.095$ for the EWMA and
$m_0=0$, $s_0=1$, $q=0.01$, $r=1.0$, and $c=0.05$ for WoLF-EWMA.
\begin{figure}[htb]
    \centering
    \includegraphics[width=0.9\linewidth]{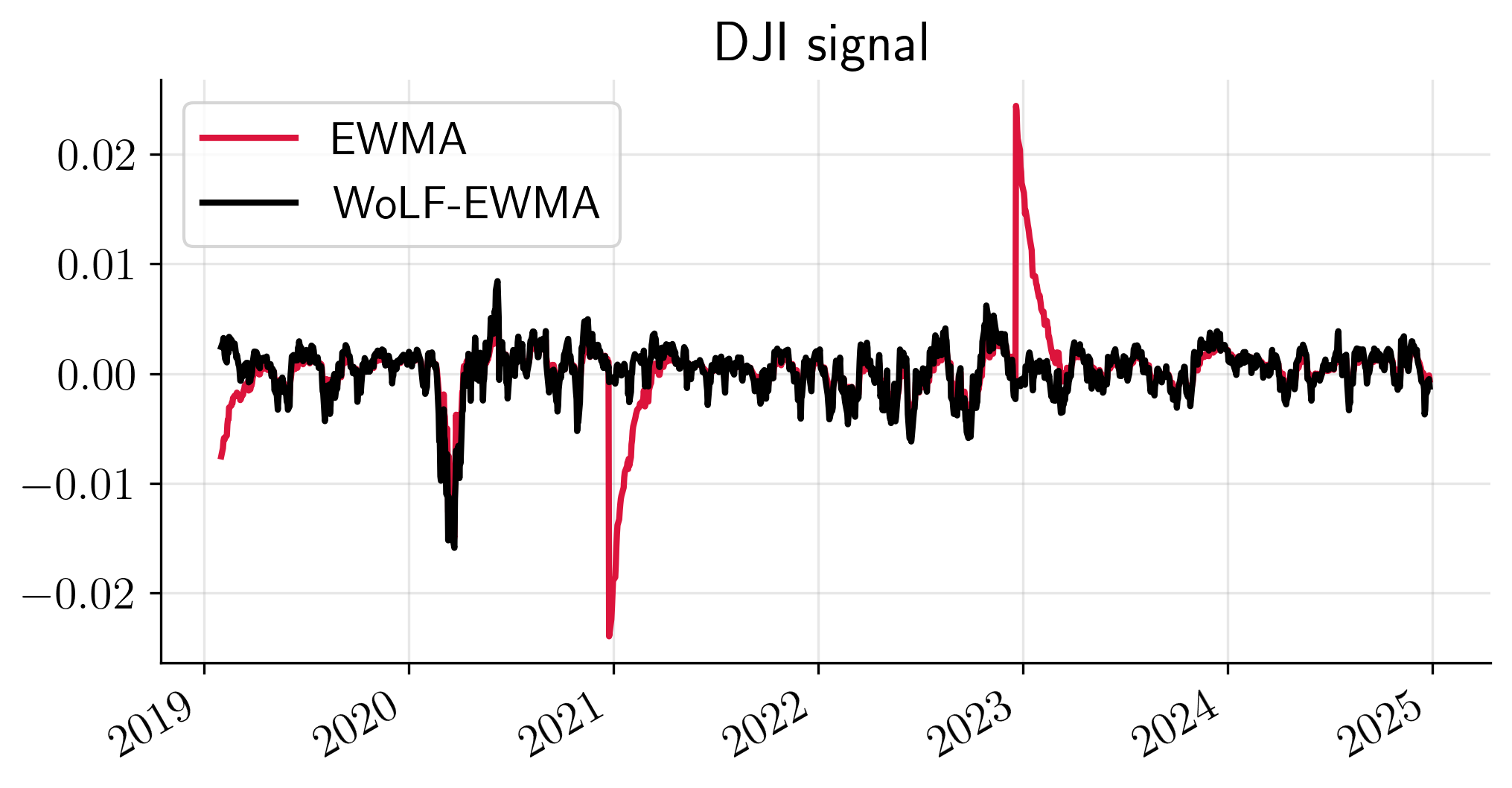}
    \caption{
        EWMA and WolF-EWMA estimates over
        DJI log-returns from 2019 to 2024.
    }
    \label{fig:dji-ewma-methods}
\end{figure}
We observe that under the standard EWMA, an outlier event significantly biases the posterior estimate of the signal $z_t$.
However, WoLF-EWMA ignores the outliers and provides a more robust estimate of the DJI signal.
Furthermore, the WoLF-EWMA estimate closely resembles that of EWMA outside \textit{outlier events}.
Finally, Figure \ref{fig:dji-learning-rate} shows the smoothing factors $\beta_t$ for the EWMA and the $k_t$.
\begin{figure}[htb]
    \centering
    \includegraphics[width=0.9\linewidth]{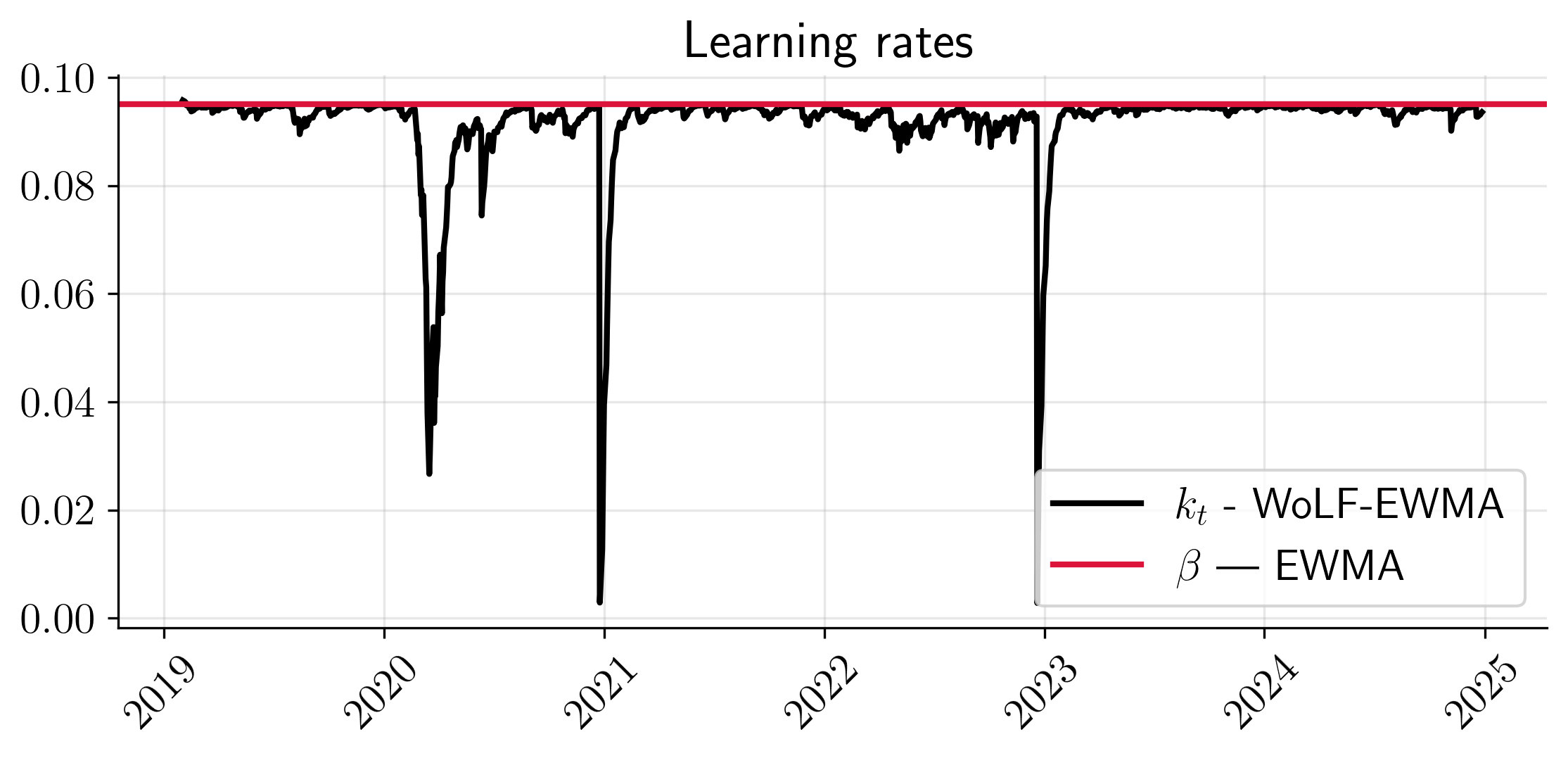}
    \caption{
        Smoothing factors for
        EWMA and WolF-EWMA for the 
        DJI log-returns from 2019 to 2024.
    }
    \label{fig:dji-learning-rate}
\end{figure}
We observe that the smoothing factor for WolF-EWMA $k_t$ is fairly stable and closely follows the fixed value $\beta$ for EWMA.
However, it decreases when the observation $y_t$ is is unusually large.
This occurs at the outlier events.

\section{Conclusion}
We introduced a provably robust filtering algorithm based on generalised Bayes
which we call the weighted observation likelihood  filter or WoLF.
Our algorithm is as fast as the KF, has closed-form update equations, and is
straightforward to apply to various filtering methods.
The superior performance of the WoLF is shown on a wide range of filtering problems.
In contrast, alternative robust methods either have higher computational complexity than the WoLF, or
similar computational complexity but not higher performance.

\chapter{Scalability}
\label{ch:scalability}


Throughout this thesis, we have assumed that the posterior density over model parameters is
Gaussian with a full-rank covariance matrix.
This assumption, while simple and powerful,
results in significant computational and memory demands.
Specifically,
most of the methods introduced in Chapter \ref{ch:recursive-bayes} scale at rates of at least $O(D^2)$ in memory and $O(D^3)$ in computation time.
This limitation makes such methods impractical for online learning in high-dimensional parameter spaces,
as is common with deep neural networks --- the $O(D^2)$ memory requirement alone makes them infeasible for training even moderately sized neural networks.

For example,  
consider a multi-layer perceptron (MLP) with $28^2$ input units, three hidden layers, and 100 units in each hidden layer.
Such a network contains approximately 99,000 parameters.  
Representing these parameters as 32-bit floats would require around 40GB of memory  
for a single step of the Extended Kalman Filter (EKF) algorithm (see Section \ref{sec:extended-kalman-filter}).  
This makes it infeasible to run on standard GPU devices and impractical for real-time applications.  

To address these limitations, this chapter focuses on the challenge of recursively training deep neural networks
using filtering techniques.
We propose three scalable algorithms that build on the methods introduced in Chapter \ref{ch:recursive-bayes}.  
Each algorithm leverages different structural assumptions about the state space  
and applies approximations to reduce both the computational and memory requirements.  

First, in Section \ref{sec:subspace}, we present the method introduced in \cite{duranmartin2022-subspace-bandits},
which projects the weights of the neural network onto a lower-dimensional affine subspace
and performs EKF-like updates on the reduced space.
Next, Section \ref{sec:PULSE} introduces the method proposed in \cite{cartea2023sharpbayes},
which builds upon the subspace approach and leverages ideas from the ``last-layer'' literature in Bayesian neural networks.
Specifically, this method projects the hidden layers of a neural network onto a lower-dimensional affine space and works with a full-covariance matrix for the last layer.
Then,
Section \ref{sec:lofi} describes the low-rank Kalman filter (LoFi) method
introduced in \cite{chang2023lofi}.
This method imposes a diagonal-plus-low-rank structure on the posterior precision matrix.
This results in EKF-like updates.
In contrast to the previous two methods,
LoFi updates all the parameters of the neural network but reduces the computational cost
by maintaining a diagonal plus low-rank posterior precision matrix.

Finally, Section \ref{sec:scalability-summary} provides an overview of the methods
presented in this chapter and
Section \ref{sec:experiments-scalability}
presents an empirical study of the performance of
the proposed algorithms on an online classification task using the Fashion MNIST dataset \citep{xiao2017fashionmnist}.

\section{Subspace parameters}\label{sec:subspace}

There is a growing literature showing that the number of parameters (or degrees of freedom)
required to fit a neural network is much smaller than the total number of parameters contained
in the neural network architecture.
These \textit{active} parameters of a neural network can either be found as a subset of nodes
\citep{frankle2023lotteryhypothesis},
or contained within a lower-dimensional linear subspace \citep{li2018subspace,larsen2022subspaceNets}.
This observation is called the lottery-ticket hypothesis.

Research on the lottery-ticket hypothesis exploits the over-parametrisation of neural networks
in the sense that
``a randomly-initialised dense neural network contains subnetworks (or linear subspaces) that,
when trained in isolation,
reach test accuracy comparable to that of the original network'' \citep{frankle2023lotteryhypothesis}.
The subnetworks or linear subspaces that satisfy the lottery ticket hypothesis are called \textit{winning tickets}.

The work in \cite{duranmartin2022-subspace-bandits}
exploits the lottery-ticket hypothesis by projecting the weights of the neural network
to a lower-dimensional affine subspace.
Then, they make use of the EKF to perform sequential updates
over the lower-dimensional space.
We provide details below.

\subsection{The SSM for the subspace EKF and update step}

Consider a \textit{projection} matrix $\vA\in\real^{\dimstate\times\dimstatesub}$,
with $d \ll D$.
Write the parameters of the neural network as a linear mapping between the projection matrix $\vA$
and a lower-dimensional vector $\phiddensub\in\real^\dimstatesub$,
plus an \textit{offset} vector $\vtheta_* \in \real^\dimstate$:
\begin{equation}\label{eq:subspace-params}
    \vtheta(\phiddensub) = \vA\,\phiddensub + \vtheta_*.
\end{equation}
The lower-dimensional vector $\phiddensub$ is assumed to be time-dependent
and evolving according to the equation $\phiddensub_t = f(\phiddensub_{t-1}) + \vu_t$,
with $\vu_t$ a zero-mean random variable with known covariance matrix.\footnote{
The term \eqref{eq:subspace-params} is different from techniques such as
LoRA \citep{hu2021lora} that reduce the parameters in each layer using a linear projection.
Here, we project all of the parameters of the network onto a single lower-dimensional space.
}
Under assumption \eqref{eq:subspace-params},
the SSM over model parameters and observations becomes
\begin{equation}\label{eq:subspace-ssm}
\begin{aligned}
    \phiddensub_t &= f(\phiddensub_{t-1}) + \vu_t,\\
    \vtheta_t &:= \vtheta(\phiddensub_t) = {\bf A}\,\phiddensub_t + \vtheta_*,\\
    \vy_t &= h(\vtheta_t, \vx_t) + \ve_t.
\end{aligned}
\end{equation}

The SSM \eqref{eq:subspace-ssm} corresponds
to that presented in Section \ref{sec:extended-kalman-filter} with measurement function
$h(\vA\,\phiddensub_t + \vtheta_*, \vx_t)$.
Assuming Gaussian prior for the subspace ${\cal N}(\phiddensub_0 \cond \vmu_0, \vSigma_0)$,
application of the EKF is straightforward.
Algorithm \ref{algo:subspace-extended-kalman-filter}
shows a step of the EKF under the subspace assumption.
For simplicity, we assume that $f(\phiddensub) = \phiddensub$.

\begin{algorithm}[htb]
\begin{algorithmic}[1]
    \REQUIRE $\data_t = (\vx_t, \vy_t)$ // datapoint
    \REQUIRE $(\vmu_{t-1}, \vSigma_{t-1})$ // previous $\dimstatesub$-dimensional mean and covariance
    \REQUIRE $\vA \in \real^{\dimstate\times\dimstatesub}$ // projection matrix
    \REQUIRE $\vtheta_*\in \real^\dimstate $ // offset vector
    \STATE  // predict step
    \STATE $\vF_t \gets \nabla_\vtheta f(\vmu_{t-1})$
    \STATE $\vmu_{t|t-1} \gets \vF_t\vmu_{t-1}$
    \STATE $\vSigma_{t|t-1} \gets \vF_t\vSigma_{t|t-1}\vF_{t}^\intercal + \vQ_t$
    \STATE //update step
    \STATE $\vH_t \gets \nabla_\vtheta h(\vA\,\vmu_{t|t-1} + \vtheta_*, \vx_t)$
    \STATE $\vS_t = \vH_t\,\vA\,\vSigma_{t-1}\,\vA^\intercal\,\vH_t^\intercal + \vR_t$
    \STATE $\vK_t = \vSigma_{t|t-1}\,\vA^\intercal\,\vH_t^\intercal\,\vS_t^{-1}$
    \STATE $\vmu_t \gets \vmu_{t-1} + \vK_t(\vy_t - \vH_t\,\vA\,\vmu_{t|t-1})$
    \STATE $\vSigma_t \gets \vSigma_{t|t-1} - \vK_t\,\vA^\intercal\,\vH_t^\intercal\vSigma_{t|t-1}$
    \RETURN $(\vmu_t, \vSigma_t)$
\end{algorithmic}
\caption{
    predict and update steps for the subspace extended Kalman filter
    with subspace for $t \geq 1$.
}
\label{algo:subspace-extended-kalman-filter}
\end{algorithm}

\subsection{Warmup phase: estimating the projection matrix and the offset term}
\label{subsec:warm-up-subspace}
Two important components of the method are the projection matrix $\vA$ and the offset vector $\vtheta_*$.
The offset can either be initialised from a random process, as in \cite{li2018subspace},
or determined during a \textit{warmup} period, as described in \cite{larsen2022subspaceNets}.
In this section, we focus on the approach proposed in \cite{larsen2022subspaceNets},
where the projection matrix $\vA$ is constructed by performing a singular value decomposition (SVD)
on iterates of batch stochastic gradient descent (SGD),
and the offset $\vtheta_*$ is taken as the final step of the warmup procedure.

Suppose we are given a warm-up dataset $\dwarmup$
with $N_{\rm warmup} \geq 1$ datapoints,
which we divide into into $B$ 
non-intersecting random batches
$\data_{(1)}, \ldots, \data_{(B)}$ such that
$$
\bigcup_{b=1}^B \data_{(b)} = \dwarmup.
$$

Consider the negative log-likelihood
\begin{equation}\label{eq:neg-log-likelihood}
    -\log p(\data_{\rm warmup} \cond {\bm\theta}) = -\sum_{n=1}^{N_{\rm warmup}} \log p(\vy_n\cond{\bm\theta}, \vx_n).
\end{equation}
And train for $E$ epochs using randomised batch SGD.
At the end of the $E$ epochs, we obtain ${\bm\theta}^{(E)}$.
Then, the offset vector $\vtheta_*$ is given by
\begin{equation*}
    \vtheta_* = \vtheta^{(E)}.
\end{equation*}
Next, to estimate the projection matrix $\vA$,
we take the iterates found during the SGD optimisation procedure.
To avoid redundancy, we
skip the first $n$ iterations and store the iterates every $k$ steps.
Let
\begin{equation*}
{\cal E} = \begin{bmatrix}
    \horzbar & \vtheta^{(n)} & \horzbar\\
    \horzbar & \vtheta^{(n + k)} & \horzbar\\
    \horzbar & \vtheta^{(n + 2k)} &\horzbar\\
    & \vdots & \\
    \horzbar & \vtheta^{(E)} & \horzbar\\
\end{bmatrix} \in \real^{\hat{E}\times D}\,,
\end{equation*}
where $\hat{E} = \lfloor (E - n) / k \rfloor +1$.
With the SVD decomposition ${\cal E} = {\bf U}\,\vSigma\,{\bf V}$ and the first $d$ columns of the matrix ${\bf V}$,
the projection matrix is
\begin{equation*}
    {\bf A} = \begin{bmatrix}
    \vertbar & \vertbar &  & \vertbar \\
    {\bf V}_{:,1} & {\bf V}_{:,\dimstatesub} & \ldots & {\bf V}_{:,\dimstatesub} \\
    \vertbar & \vertbar & & \vertbar \\
    \end{bmatrix},
\end{equation*}
where ${\bf V}_{:,\,k}$ denotes the $k$-th column of ${\bf V}$. 
Algorithm \ref{algo:MAP-SGD}
shows the warmup procedure to find $\vA$ and $\vtheta_*$ using 
a warmup dataset $\data_{\rm warmup}$.
In Algorithm \ref{algo:MAP-SGD},
the function $\bm\kappa: \real^M \to \real^M$ is the per-step transformation of the Adam algorithm;
see \cite{kingma2014adam}.
\begin{algorithm}[H]
\begin{algorithmic}[1]
    \REQUIRE $\vtheta^{(0)}$ // initial set of parameters
    \REQUIRE $E \geq 1$ // number of epochs
    \REQUIRE $\data_{\rm warmup}$ // warmup dataset
    \REQUIRE ${\cal E} = []$ // empty matrix
    \REQUIRE $n \geq 0$, $k \geq 1$ // skip and stride numbers
    \FOR{epoch $e=1,\ldots,E$}
        \FOR{batch $m=1, \ldots, M$}
            \STATE $G_e \gets -\nabla_{\vtheta}\log p(\data_{(m)} \cond \vtheta)$
            \STATE ${\vtheta}^{(e)} \gets {\vtheta}^{(e-1)} - \vkappa(G_T)$
        \ENDFOR
        \IF{$(e \geq n)$ \AND $(e \mod k = 0)$}
            \STATE ${\cal E} \gets \left[{\cal E}^\intercal, \left(\vtheta^{(e)}\right)^{\intercal}\right]^\intercal$ // stack vertically
        \ENDIF
    \ENDFOR
    \STATE ${\bf U}\,\vSigma\,{\bf V} \gets {\rm SVD}({\cal E})$ // SVD decomposition of stacked SGD iterates
    \STATE $\vA \gets \begin{bmatrix} \vV_{:,\,1} & \ldots & \vV_{:,\,d} \end{bmatrix}$ // define projection matrix
    \STATE $\vtheta_* \gets \vtheta^{(E)}$ // define offset parameters
\end{algorithmic}
\caption{
    Initialisation of the offset vector and projection matrix
    via batch SGD.
}
\label{algo:MAP-SGD}
\end{algorithm}

\section{Subspace and last-layer parameters}\label{sec:PULSE}
In the previous section,
we leveraged the lottery ticket hypothesis to make online estimation of the posterior density of high-dimensional model parameters computationally tractable.
However, the performance of this method heavily depends on the choice of the projection matrix ${\bf A}$,
which projects the parameters of the hidden and output layers onto a single linear subspace.
This reliance on ${\bf A}$ can limit the method’s flexibility and effectiveness in practice.

An alternative approach involves treating the last layer and the hidden layers of a neural network as separate components.
Commonly referred to as Bayesian last-layer methods \citep{harrison2024variationalbayesianlayers} or neural-linear methods,
these approaches improve the predictive power of neural networks by optimising the weights of the hidden layers using standard techniques, such as Adam,
and then placing a posterior density over the parameters of the last layer \citep[Section 17.3.5]{murphy2023-pmlbook2}.
This separation is particularly useful in applications such as Bayesian neural contextual bandit problems \citep{riquelme2018deepbanditshowdown}.

Because the hidden layers are typically much more computationally expensive to train than the output layers,
these methods often update the hidden layers intermittently, while the output layer is updated more frequently.
This division helps to reduce the computational cost of training while maintaining predictive accuracy.

Building on these ideas, the work in \cite{cartea2023sharpbayes} proposes a fully online Bayesian version of last-layer methods,
called PULSE (projection-based unification of last-layer and subspace estimation).
PULSE combines the strengths of the lottery ticket hypothesis with the power of neural-linear approaches by applying subspace projections only to the hidden layers,
while maintaining a full-covariance Gaussian posterior over the parameters of the last layer.\footnote{
In \cite{cartea2023sharpbayes}, PULSE is employed to detect the so-called \textit{toxicity} of trades being sent to a broker.
This is different from other models \citep{cartea2022brokers,  bergault2024mean, cartea2024nash, aqsha2024strategiclearningtradingbrokermediated}
because we do not assume that traders are 
informed or uninformed, instead, we think of each trade separately 
and we predict whether  the trade is informed or uninformed.
}
This approach allows for a fully-online treatment of online learning of high-dimensional neural networks,
while maintaining a reduced computational cost.

Specifically, PULSE provides a Bayesian framework for sequentially updating the parameters of a neural network.
It uses subspace projections to represent the hidden-layer parameters compactly
while maintaining a detailed posterior density for the last layer.
Unlike previous approaches, which either update all parameters or focus solely on the last layer during online learning,
PULSE strikes a balance by combining both strategies: projecting the hidden-layer parameters and fully updating the last-layer parameters.
Whereas the original PULSE method decouples the update for the hidden layer and the subspace entirely,
here, we introduce a modification that solves a coupled system of equations.

\subsection{The SSM for PULSE}
We decompose the model parameters for a neural network $\vtheta\in\real^\dimstate$
between the hidden-layer parameters $\phidden\in\real^{D_{\rm hidden}}$ and
the last-layer parameters $\plast  \in \real^\dimstatelast$. That is, $\vtheta = (\phidden, \plast)$.
Here, $D = D_{\rm hidden} + \dimstatelast$.
Next, write the hidden layer parameters $\phidden$ as an affine projection of the form
\begin{equation}\label{eq:hidden-layer-decomposition}
    \phidden = \vA\,\phiddensub + \phidden_*.
\end{equation}
Similar to \eqref{eq:subspace-params}, ${\bf A}$ is a $({D_{\rm hidden}\times \dimstatehiddensub})$ fixed projection matrix  and
$\phiddensub_t\in\real^\dimstatehiddensub$ are the projected (subspace) parameters such that $\dimstatehiddensub \ll 
D_{\rm hidden}$, 
and $\phidden_*\in\real^{D_{\rm hidden}}$ is the offset term. The term $\phidden_*$ is initialised as in Section \ref{subsec:warm-up-subspace} but considering only the last layer parameters.

Similar to Section \ref{sec:subspace},
the lower-dimensional vector $\phiddensub$ is assumed to be time-dependent
and evolving according to the equation $\phiddensub_t = \phiddensub_{t-1} + \vu_t^{\rm hidden}$,
with $\vu_t^{\rm hidden}$ a zero-mean random vector with known covariance matrix.
Next, 
the parameters of the last layer are assumed to be time-dependent and evolve according
to $\plast_t = \plast_{t-1} + \vu_t^{\rm last}$ with $\vu_t^{\rm last}$ a zero-mean random vector with known covariance matrix.
Here, we take ${\rm Cov}(\vu_s^{\rm hidden}, \vu_t^{r\rm last}) = \bf{0}$ for all $s,t$.
Under these assumptions,
the SSM over model parameters and observations take the form
\begin{equation}\label{eq:subspace-last-layer-ssm}
\begin{aligned}
    \phiddensub_t &= \phiddensub_{t-1} + \vu_{t}^{\rm hidden},\\
    \plast_t &= \plast_{t-1} +\vu_t^{\rm last},\\
    \vtheta_t &= ({\bf A}\,\phiddensub_t + \phidden_*, \plast_t),\\
    \vy_t &= h(\vtheta_t, \vx_t) + \ve_t,
\end{aligned}
\end{equation}
where, as before, $\ve_t$ is a zero mean random vector with known covariance matrix $\vR_t$,
$\vy_t\in\real^\dimobs$ are the measurements and
$h: \real^\dimstate \times \real^\dimin \to \real^\dimobs$
is a differentiable function w.r.t. the first entry, e.g., a neural network.

\subsection{Objective}
We introduce Gaussian priors for both $\plast$ and ${\phiddensub}$ at the beginning of the deploy stage.
Let $t=1$ be the first timestamp of the deploy dataset.
Denote the prior densities for $\plast$ and ${\phiddensub}$ by
\begin{align*}
     &\normdist{\plast}{\plast^{(E)}}{\sigma^2_\plast\,{\bf I}_\dimstatelast},\\
     &\normdist{{\phiddensub}}{ {\bf 0} } {\sigma^2_\phiddensub\,{\bf I}_\dimstatehiddensub},
\end{align*}
where $\plast^{(E)}$ is the last iterates from the warmup stage, and
$\sigma^2_\plast,\,\sigma^2_\phiddensub$ are the coefficients of the prior covariance matrix.

At each timestep during the deploy stage, we approximate the posterior densities of the last-layer parameters $\plast$, and the subspace hidden-layer parameters $\phiddensub$, as disjoint multivariate Gaussians.

To compute the posterior mean and covariance of the subspace hidden-layer parameters $(\mhidden_t, \covhidden_t)$,
as well as the posterior mean and covariance of the last-layer parameters $(\mlast_t, \covlast_t)$,
we solve a recursive variational inference (VI) optimisation problem.
This approach resembles the R-VGA method introduced in Section \ref{sec:rvga},
so that
$\cov(\vu_t^{\rm hidden}) = {\bf 0}$ and $\cov(\vu_t^{\rm last}) = {\bf 0}$.
Specifically, at each timestep, we optimise the block mean-field objective
\begin{equation}\label{eq:subspace-last-rvga}
    \vTheta_t
    = \argmin_{\mhidden, \mlast, \covhidden, \covlast}
    \KL{\normdist{\plast}{\mlast}{\covlast}\normdist{\phiddensub}{\mhidden}{\covhidden}}
    {p(\plast, \phiddensub \cond \data_{1:T})},
\end{equation}
where $\vTheta_t = (\mhidden_t, \mlast_t, \covhidden_t, \covlast_t)$ represents the set of optimised posterior parameters, and $p(\plast, \phiddensub \cond \data_{1:t})$ denotes the reference posterior density at time $t$, given by
\begin{equation}
\begin{aligned}
    p(\plast, \phiddensub \cond \data_{1:t})
    &\propto q_{t-1}(\plast, \phiddensub \cond \data_{1:t-1})\,
    p(\vy_t \cond \vtheta, \vx_t) \\
    &= \normdist{\plast}{\mlast_{t-1}}{\covlast_{t-1}} \,
       \normdist{\phiddensub}{\mhidden_{t-1}}{\covhidden_{t-1}} \,
       p(\vy_t \cond \vtheta, \vx_t).
\end{aligned}
\end{equation}
Here,
$\vtheta = ({\bf A}\,\phiddensub + \phidden_*, \plast)$ and
$q_{t-1}(\plast, \phiddensub \cond \data_{1:t-1})$ is the posterior density from the previous timestep, which we take to be the prior when computing the posterior at time $t$. In general, the density $p(\vy_t \cond (\phiddensub, \plast), \vx_t) $ might not be linear-Gaussian ($h$ not linear in $\vtheta$).
In such cases, we employ a linearised  moment-matched Gaussian likelihood
$\hat{p}(\vy_t \cond \vtheta, \vx_t)$---we return to this point in the next subsection. 

Then, the approximated posterior at time $t$ is 
\begin{equation}
q_{t}(\phiddensub, \plast)
= \normdist{\phiddensub}{\mhidden_t}{\covhidden_t}\,{\normdist{\plast}{\mlast_t}{\covlast_t}}.
\end{equation}

\subsection{Update step}
To obtain closed-form updates,
we consider a modified likelihood for the measurement $\vy_t$,
which we take as the one introduced in the exponential-family EKF 
of Section \ref{sec:exponential-family-ekf}.
Let $h(\vtheta, \vx)$ be the mean of the measurement model $p(\vy \cond \vtheta, \vx)$.
A first-order approximation of $h(\vtheta, \vx_t)$ around $(\mhidden_{t-1}, \mlast_{t-1})$ yields
\begin{equation}
    \bar{h}_t(\phiddensub, \plast)
    = \kappa_t
    + \bar{\vZ_t}\,(\phiddensub - \mhidden_{t-1})
    + \bar{\vW_t}\,(\plast - \mlast_{t-1})
\end{equation}
with
\begin{equation}
\begin{aligned}
    \kappa_t &= h\big((\mhidden_{t-1}, \mlast_{t-1}), \vx_t\big),\\
    \bar{\vZ_t} &= \nabla_\phiddensub h\big((\phiddensub, \mlast_{t-1}), \vx_t\big)\vert_{\phiddensub = \mhidden_{t-1}},\\
    \bar{\vW_t} &= \nabla_\plast h\big((\mhidden_{t-1}, \plast), \vx_t\big)\vert_{\plast = \mlast_{t-1}}.
\end{aligned}
\end{equation}
The density for the measurements is taken to be
\begin{equation}\label{eq:linearised-model-pulse}
\begin{aligned}
    \hat{p}(\vy_t \cond( \phiddensub, \plast), \vx_t)
    = {\cal N}(\vy_t \cond \bar{h}_t( \phiddensub, \plast), \bar{\vR}_t)
    \propto \exp\left(-\tfrac{1}{2}\|\bar{\vR}_t^{-1/2}\,\left(\vy_t - \bar{h}_t(\phiddensub, \plast)\right)\|_2^2\right),
\end{aligned}
\end{equation}
where $\bar{\vR}_t$
is the moment-matched observation variance.
Next, our reference posterior is modified as
\begin{equation}
\begin{aligned}
    p(\phiddensub, \plast \cond \data_{1:t})
    &\propto q_{t-1}(\phiddensub, \plast \cond \data_{1:t-1})\,
    \hat{p}(\vy_t \cond (\phiddensub, \plast), \vx_t)\\
    &=
    \normdist{\phiddensub}{\mhidden_{t-1}}{\covhidden_{t-1}}\,
    \normdist{\plast}{\mlast_{t-1}}{\covlast_{t-1}}\,
    \hat{p}(\vy_t \cond (\phiddensub, \plast), \vx_t).
\end{aligned}
\end{equation}

\begin{proposition}\label{prop:fixed-points-pulse}
Optimising    the objective \eqref{eq:subspace-last-rvga}
    under the linearised measurement model \eqref{eq:linearised-model-pulse}
    yields the fixed-point equations
    \begin{equation}
    \begin{aligned}
        \mlast_t &= \mlast_{t-1} - \covlast_{t-1}\nabla_{\mlast_t}{\cal E}_t,\\
        \covlast_t^{-1} &= \covlast_{t-1}^{-1} + 2\, \nabla_{{\covlast_t}}{\cal E}_t,\\
        \mhidden_t &= \mhidden_{t-1} - \covhidden_{t-1}\nabla_{\mhidden_t}{\cal E}_t,\\
        \covhidden_t^{-1} &= \covhidden_{t-1}^{-1} + 2\, \nabla_{{\covhidden_t}}{\cal E}_t,
    \end{aligned}
    \end{equation}
    where
    ${\cal E}_t := \mathbb{E}_{\normdisthidden \normdistlast}[\log \hat{p}(\vy_t \cond \phiddensub, \plast, \vx_t)]$.
\end{proposition}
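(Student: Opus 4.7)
The plan is to exploit the fact that, under the linearised measurement model \eqref{eq:linearised-model-pulse} and the mean-field factorisation, the objective \eqref{eq:subspace-last-rvga} decomposes into three tractable pieces: two block-wise KL-to-prior terms and a single cross-block expectation. I would start by writing the reference posterior as the product $q_{t-1}(\phiddensub)\,q_{t-1}(\plast)\,\hat p(\vy_t\cond(\phiddensub,\plast),\vx_t)$ up to a constant, substituting it into the KL divergence, and using that the variational family also factorises across $(\phiddensub,\plast)$ to obtain
\begin{equation*}
\mathcal{L}_t
= \KL{\normdisthidden}{\normdist{\phiddensub}{\mhidden_{t-1}}{\covhidden_{t-1}}}
+ \KL{\normdistlast}{\normdist{\plast}{\mlast_{t-1}}{\covlast_{t-1}}}
- {\cal E}_t + \text{const}.
\end{equation*}
This reduces the problem to finding stationary points of $\mathcal{L}_t$ in the four variational parameters.

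Next, I would compute the block-wise gradients. For the two KL-to-prior terms Proposition \ref{prop:KL-divergence-gaussians} gives closed-form derivatives; in particular,
\begin{equation*}
\nabla_{\mlast_t}\KL{\normdistlast}{\normdist{\plast}{\mlast_{t-1}}{\covlast_{t-1}}}
= \covlast_{t-1}^{-1}(\mlast_t-\mlast_{t-1}),
\end{equation*}
together with the analogous identity relating $\nabla_{\covlast_t^{-1}}\KL$ to $\tfrac{1}{2}(\covlast_{t-1}-\covlast_t)$, and symmetric formulas for the hidden block. Setting $\nabla\mathcal{L}_t=0$ and rearranging using $\covlast_{t-1}^{-1}\covlast_{t-1}=\vI$ (and likewise for the hidden block) then yields each of the four fixed-point equations in the statement.

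The step I expect to be the main technical obstacle is the gradient of ${\cal E}_t$ with respect to the covariance parameters, since ${\cal E}_t$ depends on $\covhidden_t$ and $\covlast_t$ through the Gaussian expectation, not just through an additive term. For this, I would invoke the Bonnet and Price identities to rewrite $\nabla_{\mlast_t}{\cal E}_t = \mathbb{E}_{q_t}[\nabla_\plast \log\hat p]$ and $\nabla_{\covlast_t}{\cal E}_t = \tfrac{1}{2}\mathbb{E}_{q_t}[\nabla^2_\plast \log\hat p]$, then pass to the precision parametrisation via $\nabla_{\covlast_t^{-1}} = -\covlast_t(\nabla_{\covlast_t})\covlast_t$. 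The cross term $(\phiddensub-\mhidden_{t-1})^\intercal\bar{\vZ}_t^\intercal\bar{\vR}_t^{-1}\bar{\vW}_t(\plast-\mlast_{t-1})$ inside $\log\hat p$ is a product of two zero-mean independent factors under the mean-field posterior, so its expectation vanishes; this is precisely what preserves the block-diagonal structure and allows the last-layer and subspace updates to be written independently, with each fixed point depending only on gradients of ${\cal E}_t$ with respect to its own block.
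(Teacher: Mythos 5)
Your decomposition of the objective and the block-wise stationarity argument are essentially the paper's own proof, so the skeleton is right; the Bonnet--Price machinery you anticipate is not actually needed for this proposition (it is only required in the companion result that evaluates $\nabla{\cal E}_t$ explicitly). One point of care is the sign convention: writing the KL against $q_{t-1}\,\hat p/Z_t$ gives $\mathcal{L}_t = \KL{\cdot}{\cdot} + \KL{\cdot}{\cdot} - {\cal E}_t + \mathrm{const}$, as you have it, but stationarity then yields $\mlast_t = \mlast_{t-1} + \covlast_{t-1}\nabla_{\mlast_t}{\cal E}_t$ and $\covlast_t^{-1} = \covlast_{t-1}^{-1} - 2\,\nabla_{\covlast_t}{\cal E}_t$ --- the opposite signs to the statement. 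The proposition is written under the convention in which ${\cal E}_t$ enters the decomposition with a plus sign (effectively treating it as an expected loss), so you must either adopt that convention or flag the discrepancy; otherwise your fixed points will not literally match the ones you are asked to prove, even though everything reconciles once $\nabla{\cal E}_t$ is evaluated with the matching sign.

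The genuine error is your claim that the cross term $(\phiddensub-\mhidden_{t-1})^\intercal\bar{\vZ}_t^\intercal\bar{\vR}_t^{-1}\bar{\vW}_t(\plast-\mlast_{t-1})$ has vanishing expectation because the factors are zero-mean. They are independent under the mean-field posterior, but they are centred at the \emph{previous} means, not the current variational means, so $\mathbb{E}_{q_t}[\phiddensub-\mhidden_{t-1}]=\mhidden_t-\mhidden_{t-1}\neq 0$ at the fixed point. The expectation of the cross term is $(\mhidden_t-\mhidden_{t-1})^\intercal\bar{\vZ}_t^\intercal\bar{\vR}_t^{-1}\bar{\vW}_t(\mlast_t-\mlast_{t-1})$, and this is exactly what makes $\nabla_{\mlast_t}{\cal E}_t$ depend on $\mhidden_t$ through $\bar h_t(\mhidden_t,\mlast_t)$, coupling the two mean fixed-point equations; the composite gain matrices in Theorem \ref{theorem:pulse} exist precisely to solve that coupled system. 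Only the covariance updates decouple, and that is because the cross term does not depend on $\covhidden_t$ or $\covlast_t$, not because its expectation vanishes. For the proposition as stated this slip is harmless, since the fixed-point equations are expressed abstractly in terms of $\nabla{\cal E}_t$, but the justification you give for the block structure is wrong and would mislead you in the subsequent step.
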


\begin{proof}
We begin by rewriting the objective function \eqref{eq:subspace-last-rvga}
using the linearised measurement model \eqref{eq:linearised-model-pulse}.
To simplify notation, let
$\hat{p}(\vy_t) := p(\vy_t \,\vert\, \phiddensub, \plast; \vx_t)$,
$\vdhidden[t] := \normdist{\phiddensub}{\mhidden_t}{\covhidden_t}$, and
$\vdlast[t] := \normdist{\plast}{\mlast_t}{\covlast_t}$.
\begin{equation}
\begin{aligned}
    {\cal K}_t
    &= \KL{\normdistlast\,\normdisthidden}{\vdlast[t-1]\,\vdhidden[t-1]\, \hat{p}(\vy_t)}\\
    &= \iint \normdisthidden\,\normdistlast \log\left( \frac{\normdisthidden\,\normdistlast}
        { \vdhidden[t-1]\, \vdlast[t-1]\, p(\vy_t)} \right)
        \d\phiddensub\,\d\plast\\
    &= \iint \normdisthidden\,\normdistlast \left[
        \log\left(\frac{\normdisthidden}{\vdhidden[t-1]}\right)
        + \log\left(\frac{\normdisthidden}{\vdlast[t-1]} \right)
        - \log \hat{p}(\vy_t)
    \right] \d\phiddensub\, \d\plast\\
    &= \int \normdisthidden\, \log\left(\frac{\normdisthidden}{\vdhidden[t-1]}\right) \d\phiddensub + \int \normdistlast \log\left(\frac{\normdistlast}{\vdlast[t-1]}\right) \d\plast  \\ 
    &\quad  + \iint \normdisthidden\,\normdistlast\, \log\hat{p}(\vy_t)\, \d\plast \,\d\phiddensub \\
    &= \mathbb{E}_{\normdisthidden}\left[
        \log\left(\frac{\normdisthidden}{\vdhidden[t-1]}\right)
    \right] + \mathbb{E}_{\normdistlast}\left[
        \log\left(\frac{\normdistlast}{\vdlast[t-1]}\right)
    \right] \\
    &\quad  + \mathbb{E}_{\normdisthidden\, \normdistlast}[\log\hat{p}(\vy_t)]\,.
\end{aligned}
\end{equation}

Finally, we obtain
\begin{equation}\label{eq:rvga-fsll-rewrite}
    {\cal K}_t
    = \KL{\normdistlast}{\vdlast[t-1]}
    + \KL{\normdisthidden}{\vdhidden[t-1]}
     + {\cal E}_t
\end{equation}
where ${\cal E}_t := \mathbb{E}_{\normdisthidden \normdistlast}[\log \hat{p}(\vy_t)]$.

The first and second terms in \eqref{eq:rvga-fsll-rewrite} correspond to a Kullback--Leibler
divergence between two multivariate Gaussians.
The last term corresponds to the posterior-predictive marginal log-likelihood for the $t$-th observation.
To minimise \eqref{eq:rvga-fsll-rewrite} with respect to $\vTheta_t$,
we use the Kullback-Leibler divergence between two multivariate Gaussian
derived in Proposition \ref{prop:KL-divergence-gaussians}.

The derivative of ${\cal K}_t$ with respect to $\mlast_t$ is 
\begin{align}
    \nabla_{\mlast_t} {\cal K}_t
    &= \nabla_{\mlast_t} \left(
    \KL{\normdistlast}{\vdlast[t-1]}
    + {\cal E}_t
    \right) \nonumber\\ 
    &= \nabla_{\mlast_t}\left(
        \frac{1}{2}\mlast_t^\intercal\covlast_{t-1}^{-1}\mlast_t - \mlast_t^\intercal\covlast_{t-1}\mlast_{t-1}
         + \nabla_{\mlast_t}{\cal E}_t
    \right)\nonumber \\
    &= \covlast_{t-1}^{-1}\mlast_t - \covlast_{t-1}^{-1}\mlast_{t-1} +\nabla_{\mlast_t} {\cal E}_t \nonumber \\
    &= \covlast_{t-1}^{-1}\left( \mlast_t - \mlast_{t-1} - \covlast_{t-1}\nabla_{\mlast_t}{\cal E}_t \right). \label{eq:dK-mlast}
\end{align}
Set \eqref{eq:dK-mlast} to zero and solve for
\begin{equation*}
    {\mlast_t} = \mlast_{t-1} - \covlast_{t-1}\nabla_{\mlast_t}{\cal E}_t.
\end{equation*}
Next, we estimate the condition
for $\covlast_t$. Use \eqref{eq:rvga-fsll-rewrite} to obtain
\begin{align}
    \nabla_{\covlast_t} {\cal K}_t &=
    \nabla_{\covlast_t}\left(
        -\frac{1}{2}\,\log|{\covlast_t}| + \frac{1}{2}\,{\rm Tr}\left({\covlast_t}\,\covlast_{t-1}^{-1}\right) + {\cal E}_t
    \right)\nonumber \\
    &= -\frac{1}{2}\,{\covlast_t}^{-1} + \frac{1}{2}\,\covlast_{t-1}^{-1} + \nabla_{\covlast}{\cal E}_t. \label{eq:dK-covlast}
\end{align}
The fixed-point solution for \eqref{eq:dK-covlast} satisfies
\begin{equation*}
    {\covlast_t}^{-1} = \covlast_{t-1}^{-1} + 2\, \nabla_{\covlast_t}{\cal E}_t.
\end{equation*}
The fixed-point conditions for $\mhidden_t$ and $\covhidden_t$ are derived similarly.
\end{proof}

Proposition \ref{prop:fixed-points-pulse}  
yields the fixed-point equations that the terms in $\vTheta_t$  
must satisfy.  
Furthermore,  
these equations are expressed in terms of the gradients of the expected linearised likelihood model  
\eqref{eq:linearised-model-pulse}.  
In the following proposition, we  
provide explicit expressions for the gradients  
of the linearised likelihood model with respect to each term in $\vTheta$.

\begin{proposition}\label{prop:gradients-wrt-linearised-log-likelihood-pulse}
Let
${\cal E}_t := \mathbb{E}_{\normdisthidden \normdistlast}[\log \hat{p}(\vy_t \cond (\phiddensub, \plast), \vx_t)]$
be the expected linearised log-likelihood.
The derivative of ${\cal E}_t$ w.r.t. $\mlast_t$ and $\covlast_t$ take the form
\begin{align}
    \nabla_{\mlast_t}\,{\cal E}_t&=
    -\bar{\vW_t}^\intercal\,\bar{\vR}_t^{-1}\,\left(\vy_t - \bar{h}_t(\mhidden_t, {\mlast_t})\right),\\
    \nabla_{\covlast_t}\,{\cal E}_t&=
    \tfrac{1}{2}\bar{\vW_t}^\intercal\,\bar{\vR}_t^{-1}\,\bar{\vW_t}.
\end{align}
Similarly, the derivative of ${\cal E}_t$ w.r.t. ${\mhidden_t}$ and ${\covhidden_t}$ take the form
\begin{align}
    \nabla_{\mhidden_t}\,{\cal E}_t &=
    -\bar{\vZ_t}^\intercal\,\bar{\vR}_t^{-1}\,\left(\vy_t - \bar{h}_t({\mhidden_t}, {\mlast_t})\right),\\
    \nabla_{\covhidden_t}\,{\cal E}_t&=
    \tfrac{1}{2}\bar{\vZ_t}^\intercal\,\bar{\vR}_t^{-1}\,\bar{\vZ_t}.
\end{align}
\end{proposition}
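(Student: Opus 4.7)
The plan is to exploit the fact that $\bar h_t$ is affine in $(\phiddensub,\plast)$ and that the variational density factorises as $\normdisthidden\,\normdistlast$, so the expectation defining $\mathcal{E}_t$ reduces to a closed-form bias--variance expression to which standard matrix calculus can be applied term by term.

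First I would substitute \eqref{eq:linearised-model-pulse} into the definition of $\mathcal{E}_t$ to get
\begin{equation*}
    \mathcal{E}_t = -\tfrac{1}{2}\,\mathbb{E}_{\normdisthidden\,\normdistlast}\!\left[(\vy_t-\bar h_t(\phiddensub,\plast))^\intercal \bar{\vR}_t^{-1}(\vy_t-\bar h_t(\phiddensub,\plast))\right] + C,
\end{equation*}
with $C$ independent of the variational parameters. Writing $\vr(\phiddensub,\plast)=\vy_t-\bar h_t(\phiddensub,\plast)$, I would decompose it into its mean $\bar{\vr}_t := \vy_t-\bar h_t(\mhidden_t,\mlast_t)$ plus the centred linear fluctuations $-\bar{\vZ}_t(\phiddensub-\mhidden_t)$ and $-\bar{\vW}_t(\plast-\mlast_t)$. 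Since $\phiddensub\perp\plast$ under the mean-field variational density, all cross terms vanish, and applying $\mathbb{E}[\vv^\intercal \vM \vv]=\bar{\vv}^\intercal \vM \bar{\vv} + \operatorname{Tr}(\vM\,\operatorname{Cov}(\vv))$ gives the closed form
\begin{equation*}
    \mathcal{E}_t = -\tfrac{1}{2}\,\bar{\vr}_t^\intercal \bar{\vR}_t^{-1}\bar{\vr}_t - \tfrac{1}{2}\operatorname{Tr}(\bar{\vZ}_t^\intercal \bar{\vR}_t^{-1}\bar{\vZ}_t\,\covhidden_t) - \tfrac{1}{2}\operatorname{Tr}(\bar{\vW}_t^\intercal \bar{\vR}_t^{-1}\bar{\vW}_t\,\covlast_t) + C.
\end{equation*}

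Second, I would differentiate each term. For $\nabla_{\mlast_t}\mathcal{E}_t$, only the first summand depends on $\mlast_t$; using the chain rule with Jacobian $\partial \bar{\vr}_t/\partial \mlast_t = -\bar{\vW}_t$ produces $-\bar{\vW}_t^\intercal \bar{\vR}_t^{-1}(\vy_t-\bar h_t(\mhidden_t,\mlast_t))$, as stated. The identical argument with $\bar{\vZ}_t$ in place of $\bar{\vW}_t$ yields $\nabla_{\mhidden_t}\mathcal{E}_t$. For the covariance gradients, only the corresponding trace term contributes; the identity $\nabla_{\vA}\operatorname{Tr}(\vM \vA) = \vM^\intercal$ (noting $\bar{\vW}_t^\intercal \bar{\vR}_t^{-1}\bar{\vW}_t$ is symmetric) immediately gives $\nabla_{\covlast_t}\mathcal{E}_t = \tfrac{1}{2}\bar{\vW}_t^\intercal \bar{\vR}_t^{-1}\bar{\vW}_t$ and analogously for $\covhidden_t$.

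There is no substantive mathematical obstacle; the exercise is essentially bookkeeping in matrix calculus once the linearity of $\bar h_t$ is exploited. The one step requiring care is the sign bookkeeping: I need the sign of $\nabla_{\mlast_t}\mathcal{E}_t$ to produce the Kalman-style correction in the fixed-point equations of Proposition \ref{prop:fixed-points-pulse} (so that the innovation $\vy_t-\bar h_t(\mhidden_t,\mlast_t)$ enters the update with the expected sign). I would cross-check the final signs against that fixed-point system and, as a sanity check, verify that in the single-block case the proposition collapses to the standard linear-Gaussian EKF information-form update.
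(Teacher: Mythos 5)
Your proof is correct in substance but takes a genuinely different route from the paper. The paper does not compute $\mathcal{E}_t$ in closed form at all: it invokes Bonnet's and Price's theorems to pass the gradients inside the Gaussian expectation, i.e.\ $\nabla_{\mlast_t}\mathbb{E}_q[\log\hat p] = \mathbb{E}_q[\nabla_{\plast}\log\hat p]$ and $\nabla_{\covlast_t}\mathbb{E}_q[\log\hat p] = \tfrac{1}{2}\mathbb{E}_q[\nabla^2_{\plast}\log\hat p]$, and then evaluates the Jacobian and Hessian of the linearised log-likelihood, which are constant in $\plast$ (resp.\ affine), so the outer expectation is trivial. You instead exploit the affinity of $\bar h_t$ to evaluate $\mathcal{E}_t$ exactly via $\mathbb{E}[\vv^\intercal\vM\vv]=\bar\vv^\intercal\vM\bar\vv+\Tr(\vM\,\mathrm{Cov}(\vv))$ and the mean-field independence of $\phiddensub$ and $\plast$, and then differentiate the resulting bias-plus-trace expression. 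Your route is more elementary (no Gaussian-calculus identities needed) and makes the dependence on $\covhidden_t,\covlast_t$ explicit; the paper's route generalises to non-affine integrands where no closed form exists and is the standard R-VGA-style derivation. Both yield the same formulas.

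One caveat on the sign bookkeeping you flag at the end: taking the definition $\mathcal{E}_t := \mathbb{E}[\log\hat p]$ literally, your closed form gives $\nabla_{\mlast_t}\mathcal{E}_t = +\bar{\vW}_t^\intercal\bar{\vR}_t^{-1}(\vy_t-\bar h_t(\mhidden_t,\mlast_t))$ and $\nabla_{\covlast_t}\mathcal{E}_t = -\tfrac{1}{2}\bar{\vW}_t^\intercal\bar{\vR}_t^{-1}\bar{\vW}_t$, i.e.\ the negatives of the stated formulas; the statement (and the fixed-point equations of Proposition \ref{prop:fixed-points-pulse}) are consistent with $\mathcal{E}_t$ being the expected \emph{negative} log-likelihood. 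This is an inconsistency inherited from the paper's own definition rather than a flaw in your argument, but your write-up asserts the stated signs follow ``as stated'' from your closed form, which they do not without flipping the sign of $\mathcal{E}_t$; you should make that flip explicit rather than deferring it to a cross-check.
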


\begin{proof}
    From Bonnet's Theorem and Price's Theorem
    (See Theorem 3 and Theorem 4 in \cite{lin2019steinexpfam}),
    we obtain
    \begin{equation}
    \begin{aligned}
    \nabla_\mlast\mathbb{E}_{\normdisthidden \normdistlast}[\log \hat{p}(\vy_t\cond (\phiddensub, \plast), \vx_t)]
    &= \mathbb{E}_{\normdisthidden \normdistlast}[\nabla_\plast \log \hat{p}(\vy_t)],\\
    \nabla_\covlast\mathbb{E}_{\normdisthidden \normdistlast}[\log \hat{p}(\vy_t\cond (\phiddensub, \plast), \vx_t)]
    &= \tfrac{1}{2}\mathbb{E}_{\normdisthidden \normdistlast}[\nabla_\plast^2\log \hat{p}(\vy_t)].
    \end{aligned}
    \end{equation}
    
    The Jacobian and the Hessian and the log-likelihood w.r.t. $\plast$ are
    \begin{equation}
    \begin{aligned}
        \nabla_\plast\,\log\hat{p}(\vy_t\cond (\phiddensub, \plast), \vx_t)
        &=  -\bar{\vW_t}^\intercal\,\bar{\vR}_t^{-1}\,\left(\vy_t - \bar{h}_t(\phiddensub, \plast)\right),\\
        \nabla^2_\plast\,\log\hat{p}(\vy_t\cond (\phiddensub, \plast), \vx_t)
        &= \bar{\vW_t}^\intercal\,\bar{\vR}_t^{-1}\,\bar{\vW_t}.\\
    \end{aligned}
    \end{equation}
    Hence,
    \begin{equation}
    \begin{aligned}
        \mathbb{E}_{\normdisthidden \normdistlast}[\nabla_\plast\,\log\hat{p}(\vy_t\cond (\phiddensub, \plast), \vx_t)]
        &=  -\bar{\vW_t}^\intercal\,\bar{\vR}_t^{-1}\,\left(\vy_t - \bar{h}_t\right),\\
        \mathbb{E}_{\normdisthidden \normdistlast}[\nabla^2_\plast\,\log\hat{p}(\vy_t\cond (\phiddensub, \plast), \vx_t)]
        &= \bar{\vW_t}^\intercal\,\bar{\vR}_t^{-1}\,\bar{\vW_t}.\\
    \end{aligned}
    \end{equation}
    The result is derived similarly for the parameters of hidden subspace.
\end{proof}

Plugging the result of Proposition \ref{prop:gradients-wrt-linearised-log-likelihood-pulse}  
into the fixed-point equations given in Proposition \ref{prop:fixed-points-pulse}  
yields a pair of linear equations whose only unknowns are the terms in $\vTheta$.  
The next theorem derives the update equations for PULSE,  
which explicitly solve for each term in $\vTheta$,  
yielding $\vTheta_t$.  

\begin{theorem}[PULSE]\label{theorem:pulse}
    The approximated posterior that solves  objective \eqref{eq:subspace-last-rvga}
    under the linearised measurement model \eqref{eq:linearised-model-pulse} at time $t$ is
    \begin{equation}
        q_{t}(\phiddensub, \plast)
        = \normdist{\phiddensub}{\mhidden_t}{\covhidden_t}\,{\normdist{\plast}{\mlast_t}{\covlast_t}},
    \end{equation}
    where
    \begin{equation}
    \begin{aligned}
        \mlast_t &= \mlast_{t-1} + \hat{\vK}_{\plast,t}\,(\vy_t - h\big((\mhidden_{t-1}, \mlast_{t-1}), \vx_t\big),\\
        \mhidden_t &= \mhidden_{t-1} + \hat{\vK}_{\phiddensub,t}\,(\vy_t - h\big((\mhidden_{t-1}, \mlast_{t-1}), \vx_t\big),\\
        \covlast_t^{-1} &= \covlast_{t-1}^{-1} + \bar{\vW_t}^\intercal\,\bar{\vR}_t^{-1}\,\bar{\vW_t},\\
        \covhidden_t^{-1} &= \covhidden_{t-1}^{-1} + \bar{\vZ_t}^\intercal\,\bar{\vR}_t^{-1}\,\bar{\vZ_t},
    \end{aligned}
    \end{equation}
    and
    \begin{equation}
    \begin{aligned}
        \vK_{\phiddensub, t} &= \covhidden_t\,\bar{\vZ_t}^\intercal\,\bar{\vR}_t^{-1},\\
        \vK_{\plast, t} &= \covlast_t\,\bar{\vW_t}^\intercal\,\bar{\vR}_t^{-1},\\
        \hat{\vK}_{\phiddensub,t} &= \left(\vI_\dimstatehiddensub - \vK_{\phiddensub,t}\,\bar{\vW_t}\,\vK_{\plast,t}\,\bar{\vZ_t}\right)^{-1}\,
        \vK_\phiddensub\,(\vI_\dimobs - \bar{\vW_t}\,\vK_{\plast,t}), \\
        \hat{\vK}_{\plast,t} &= \left(\vI_\dimstatelast - \vK_{\plast,t}\,\bar{\vZ_t}\,\vK_{\phiddensub,t}\,\bar{\vW_t}\right)^{-1}\,
        \vK_\plast\,(\vI_\dimobs - \bar{\vZ_t}\,\vK_{\phiddensub,t}),
    \end{aligned}
    \end{equation}
    where $\vI_m$ is an $m\times m$ identity matrix.
\end{theorem}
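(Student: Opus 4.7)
The plan is to combine Proposition \ref{prop:fixed-points-pulse} and Proposition \ref{prop:gradients-wrt-linearised-log-likelihood-pulse} to obtain the update equations, and then to decouple the resulting linear system for the two posterior means.

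First, I would substitute the Hessian expressions from Proposition \ref{prop:gradients-wrt-linearised-log-likelihood-pulse} into the precision fixed-points of Proposition \ref{prop:fixed-points-pulse}. This is immediate and yields
\begin{equation*}
    \covlast_t^{-1} = \covlast_{t-1}^{-1} + \bar{\vW_t}^\intercal\,\bar{\vR}_t^{-1}\,\bar{\vW_t},
    \qquad
    \covhidden_t^{-1} = \covhidden_{t-1}^{-1} + \bar{\vZ_t}^\intercal\,\bar{\vR}_t^{-1}\,\bar{\vZ_t},
\end{equation*}
giving the claimed precision updates. Defining the partial Kalman gains $\vK_{\plast,t}=\covlast_t\bar{\vW_t}^\intercal\bar{\vR}_t^{-1}$ and $\vK_{\phiddensub,t}=\covhidden_t\bar{\vZ_t}^\intercal\bar{\vR}_t^{-1}$ using these posterior covariances sets up the mean updates in standard Kalman form.

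Next, for the means, I would substitute the Jacobian expressions from Proposition \ref{prop:gradients-wrt-linearised-log-likelihood-pulse} into the mean fixed-points, yielding a coupled pair of equations through the linearised mean $\bar h_t(\mhidden_t,\mlast_t)=\kappa_t+\bar{\vZ_t}(\mhidden_t-\mhidden_{t-1})+\bar{\vW_t}(\mlast_t-\mlast_{t-1})$. Writing $\vd_\plast=\mlast_t-\mlast_{t-1}$, $\vd_\phiddensub=\mhidden_t-\mhidden_{t-1}$, and $\vr_t=\vy_t-\kappa_t$, each fixed-point becomes (after collecting $\vd_\plast$ on the left and using $I+\covlast_{t-1}\bar{\vW_t}^\intercal\bar{\vR}_t^{-1}\bar{\vW_t}=\covlast_{t-1}\covlast_t^{-1}$, and similarly for the hidden block)
\begin{equation*}
    \vd_\plast = \vK_{\plast,t}\bigl(\vr_t-\bar{\vZ_t}\vd_\phiddensub\bigr),
    \qquad
    \vd_\phiddensub = \vK_{\phiddensub,t}\bigl(\vr_t-\bar{\vW_t}\vd_\plast\bigr).
\end{equation*}
This algebraic simplification — replacing $\covlast_{t-1}\bar{\vW_t}^\intercal\bar{\vR}_t^{-1}$ by $\vK_{\plast,t}=\covlast_t\bar{\vW_t}^\intercal\bar{\vR}_t^{-1}$ (and analogously for the hidden block) — is the small but key identity that converts the prior-covariance form of the fixed-point into the posterior-covariance form appearing in the theorem.

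Finally, I would decouple the two equations. Substituting the second into the first gives
\begin{equation*}
    \bigl(I-\vK_{\plast,t}\bar{\vZ_t}\vK_{\phiddensub,t}\bar{\vW_t}\bigr)\vd_\plast
    = \vK_{\plast,t}\bigl(I-\bar{\vZ_t}\vK_{\phiddensub,t}\bigr)\vr_t,
\end{equation*}
so that $\vd_\plast=\hat{\vK}_{\plast,t}\vr_t$ with $\hat{\vK}_{\plast,t}$ as stated; the analogous substitution yields $\hat{\vK}_{\phiddensub,t}$. The main obstacle, and the only nontrivial step, is verifying the invertibility of $I-\vK_{\plast,t}\bar{\vZ_t}\vK_{\phiddensub,t}\bar{\vW_t}$ (and its hidden-layer counterpart); this can be argued from the positive semi-definiteness of the curvature terms $\bar{\vW_t}^\intercal\bar{\vR}_t^{-1}\bar{\vW_t}$ and $\bar{\vZ_t}^\intercal\bar{\vR}_t^{-1}\bar{\vZ_t}$ together with a Schur-complement identity applied to the joint $(\phiddensub,\plast)$ precision, which is positive definite and hence has an invertible reduction onto either block.
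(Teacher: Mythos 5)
Your proposal is correct and follows essentially the same route as the paper's proof: read off the precision updates from Propositions \ref{prop:fixed-points-pulse} and \ref{prop:gradients-wrt-linearised-log-likelihood-pulse}, use the identity $\vI + \covlast_{t-1}\bar{\vW}_t^\intercal\bar{\vR}_t^{-1}\bar{\vW}_t = \covlast_{t-1}\covlast_t^{-1}$ to pass from the prior-covariance to the posterior-covariance (Kalman-gain) form of the coupled mean equations, and then substitute one into the other to decouple. The only difference is that you flag the invertibility of $\vI - \vK_{\plast,t}\bar{\vZ}_t\vK_{\phiddensub,t}\bar{\vW}_t$ as a point needing justification, which the paper's proof silently assumes.
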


\begin{proof}
    For the precision matrix of the last-layer parameters, we obtain
    \begin{equation}\label{eq:part-posterior-precision-last-layer}
    \begin{aligned}
        \covlast_t^{-1}
        &=  \covlast_{t-1}^{-1} + 2\, \nabla_{\covlast_t}{\cal E}_t\\
        &= \covlast_{t-1}^{-1} + \bar{\vW_t}^\intercal\,\bar{\vR}_t^{-1}\,\bar{\vW_t}.
    \end{aligned}
    \end{equation}
    Next, the precision matrix for the subspace hidden layer parameters take the form
    \begin{equation}
    \begin{aligned}
        {\covhidden_t}^{-1}
        &=  \covhidden_{t-1}^{-1} + 2\, \nabla_{{\covhidden_t}}{\cal E}_t\\
        &= \covhidden_{t-1}^{-1} + \bar{\vZ_t}^\intercal\,\bar{\vR}_t^{-1}\,\bar{\vZ_t}.
    \end{aligned}
    \end{equation}
    
    The posterior mean of the last layer of model parameters take the form
    \begin{equation}
    \begin{aligned}
        {\mhidden_t}
        &= \mhidden_{t-1} - \covhidden_{t-1}\nabla_{\mhidden_t}{\cal E}_t\\
        &= \mhidden_{t-1} + \covhidden_{t-1} 
        \bar{\vZ_t}^\intercal\,\bar{\vR}_t^{-1}
        \,\left(\vy_t - \bar{h}_t({\mhidden_t}, {\mlast_t})\right).
    \end{aligned}
    \end{equation}
    Similarly, the posterior mean of the subspace model parameters take the form
    \begin{equation}
    \begin{aligned}
        {\mlast_t}
        &= \mlast_{t-1} - \covlast_{t-1}\nabla_{\mlast_t}{\cal E}_t\\
        &= \mlast_{t-1} + \covlast_{t-1} \bar{\vW_t}^\intercal\,\bar{\vR}_t^{-1}\,\left(\vy_t - \bar{h}_t({\mhidden_t}, {\mlast_t})\right).
    \end{aligned}
    \end{equation}
    Hence, we need to solve the system of equations
    \begin{align}
        {\mlast_t} &= \mlast_{t-1} + \covlast_{t-1} \bar{\vW_t}^\intercal\,\bar{\vR}_t^{-1}\,\left(\vy_t - \bar{h}_t({\mhidden_t}, {\mlast_t})\right),\label{eq:part-fixed-point-mlast}\\
        {\mhidden_t} &= \mhidden_{t-1} + \covhidden_{t-1} \bar{\vZ_t}^\intercal\,\bar{\vR}_t^{-1}\,\left(\vy_t - \bar{h}_t({\mhidden_t}, {\mlast_t})\right).
    \end{align}
    
    In the rest of the proof, we show the solution for $\mlast_t$. Similar steps yield the solution for $\mhidden_t$.
    We begin by expanding the right hand side of \eqref{eq:part-fixed-point-mlast}. We obtain
    \begin{equation}
    \begin{aligned}
        {\mlast_t}
        &= \mlast_{t-1} + \covlast_{t-1} \bar{\vW_t}^\intercal\,\bar{\vR}_t^{-1}\,\left(
        \vy_t - \bar{h}_t({\mhidden_t}, {\mlast_t})
        \right)\\
        &= \mlast_{t-1} + \covlast_{t-1} \bar{\vW_t}^\intercal\,\bar{\vR}_t^{-1}\,\left(
        \vy_t - [\kappa_t + \bar{\vZ_t}\,(\phiddensub - \mhidden_{t-1}) + \bar{\vW_t}\,({\plast_t} - \mlast_{t-1})]
        )\right).
    \end{aligned}
    \end{equation}
    Expanding terms and grouping for ${\mlast_t}$ and $\mlast_{t-1}$, we obtain
    \begin{equation}\label{eq:part-mlast-implicit}
    \begin{aligned}
    {\mlast_t}
    &= \mlast_{t-1} + \left[\covlast_{t-1}^{-1} + \bar{\vW}_t^\intercal\,\bar{\vR}_t^{-1}\,\bar{\vW}_t\right]^{-1}
    \bar{\vW}_t^\intercal\,\bar{\vR}_t^{-1}\left(\vy_t - \kappa_t - \bar{\vZ}_t\,({\mhidden_t} - \mhidden_{t-1})\right) \\
    &= \mlast_{t-1} + \covlast_{t}\bar{\vW}_t^\intercal\,\bar{\vR}_t^{-1}\,\left(\vy_t - \kappa_t - \bar{\vZ}_t\,({\mhidden_t} - \mhidden_{t-1})\right)\\
    &= \mlast_{t-1} + {\vK}_{{\plast_t},t}\,\left(\vy_t - \kappa_t - \bar{\vZ}_t\,({\mhidden_t} - \mhidden_{t-1})\right),
    \end{aligned}
    \end{equation}
    where we defined $\vK_{{\plast_t},t} = \covlast_{t}\bar{\vW}_t^\intercal\,\bar{\vR}_t^{-1}$.

    Similarly, for the subspace hidden-layer parameters, we obtain
    \begin{equation}\label{eq:part-mhidden-implicit}
        {\mhidden_t} = \mhidden_{t-1} + \vK_{\phiddensub,t}\,\left(\vy_t - \kappa_t - \bar{\vW}\,({\mlast_t} - \mlast_{t-1})\right)
    \end{equation}
    with $\vK_{\phiddensub,t} = \covhidden_t\,\bar{\vZ}_t^\intercal\,\bar{\vR}_t^{-1}$.

    The terms in \eqref{eq:part-mlast-implicit} and \eqref{eq:part-mhidden-implicit}
    resemble a Kalman filter update,
    as described in Proposition \ref{prop:kf-update-step-precision}.
    However, these terms are interdependent,
    as each relies on the unknown variables ${\mhidden_t}$ and ${\mlast_t}$.
    In what follows, we derive a solution to this system of equations.

    From \eqref{eq:part-mhidden-implicit}, note that
    \begin{equation}
        {\mhidden_t} - \mhidden_{t-1}
        = \vK_{\phiddensub,t}\,\left(\vy_t - \kappa_t - \bar{\vW}\,({\mlast_t} - \mlast_{t-1})\right).
    \end{equation}
    Then, \eqref{eq:part-mlast-implicit} takes the form
    \begin{equation}
    {\mlast_t} = \mlast_{t-1} + {\vK}_{{\plast_t},t}\,
    \left(\vy_t - \kappa_t - \bar{\vZ}_t\,
    \left( \vK_{\phiddensub,t}\,\left(\vy_t - \kappa_t - \bar{\vW}\,({\mlast_t} - \mlast_{t-1})\right) \right)
    \right).
    \end{equation}
    Expanding and grouping the terms ${\mlast_t}$ and $\mlast_{t-1}$, we obtain
    \begin{equation}
        {\mlast_t} = \mlast_{t-1} +
        \left( \vI_\dimstatelast - \vK_{{\plast_t},t}\,\bar{\vZ}_t\,\vK_{\phiddensub}\,\bar{\vW}_t\right)^{-1}\,
        \vK_{{\plast_t},t}\,(\vI_\dimobs - \bar{\vW}_t\,\vK_{\phiddensub,t})\,\left(\vy_t - \kappa_t\right).
    \end{equation}
    A similar derivation yields $\mhidden_t$.
\end{proof}

Theorem \ref{theorem:pulse} demonstrates that the update equations for PULSE share a structural resemblance with those of the extended Kalman filter (EKF) and the exponential family EKF (expfamEKF), as introduced in Sections \ref{sec:extended-kalman-filter} and \ref{sec:exponential-family-ekf}. However, a key distinction lies in the modification of the gain matrix, which is adjusted to account for the dependencies between the last-layer parameters and the subspace hidden-layer parameters.

Algorithm \ref{algo:pulse-update} provides pseudocode for the update step for PULSE.
\begin{algorithm}[H]
\begin{algorithmic}[1]
    \REQUIRE $\vy_t$ // measurement at time $t$
    \REQUIRE $\vR_t$ // observation variance at time $t$
    \REQUIRE $(\mhidden_{t-1}, \covhidden_{t-1})$ // previous posterior mean and covariance for subspace parameters
    \REQUIRE $(\mlast_{t-1}, \covlast_{t-1})$ // previous posterior mean and covariance  for the last-layer parameters 
    \STATE $\hat{\vy}_t \gets h\big((\mhidden_{t-1}, \mlast_{t-1}), \vx_t\big)$
    \STATE  $\bar{\vZ_t} \gets \nabla_\phiddensub h\big((\phiddensub, \mlast_{t-1}), \vx_t\big)\vert_{\phiddensub = \mhidden_{t-1}}$
    \STATE $\bar{\vW_t} \gets \nabla_\plast h\big((\mhidden_{t-1}, \plast), \vx_t\big)\vert_{\plast = \mlast_{t-1}}$
    \STATE $\covhidden_t^{-1} = \covhidden_{t-1}^{-1} + \bar{\vZ_t}^\intercal\,\vR_t^{-1}\,\bar{\vZ_t}$
    \STATE $\covlast_t^{-1} = \covlast_{t-1}^{-1} + \bar{\vW_t}^\intercal\,\vR_t^{-1}\,\bar{\vW_t}$
    \STATE // posterior mean updates
    \STATE $\vK_{\phiddensub, t} = \covhidden_t\,\bar{\vZ_t}^\intercal\,\vR_t^{-1}$
    \STATE $\vK_{\plast, t} = \covlast_t\,\bar{\vW_t}^\intercal\,\vR_t^{-1}$
    \STATE $\hat{\vK}_{\phiddensub,t} = \left(\vI_\dimstatehiddensub - \vK_{\phiddensub,t}\,\bar{\vW_t}\,\vK_{\plast,t}\,\bar{\vZ_t}\right)^{-1}\,
    \vK_\phiddensub\,(\vI_\dimobs - \bar{\vW_t}\,\vK_{\plast,t})$
    \STATE $\hat{\vK}_{\plast,t} = \left(\vI_\dimstatelast - \vK_{\plast,t}\,\bar{\vZ_t}\,\vK_{\phiddensub,t}\,\bar{\vW_t}\right)^{-1}\,
    \vK_\plast\,(\vI_\dimobs - \bar{\vZ_t}\,\vK_{\phiddensub,t})$
    \STATE $\mlast_t \gets \mlast_{t-1} + \hat{\vK}_{\plast,t}\,(\vy_t - \hat{\vy}_t)$
    \STATE $\mhidden_t \gets \mhidden_{t-1} + \hat{\vK}_{\phiddensub,t}\,(\vy_t - \hat{\vy}_t)$
\end{algorithmic}
\caption{
    Update step for PULSE
}
\label{algo:pulse-update}
\end{algorithm}

\section{The low-rank extended Kalman filter}\label{sec:lofi}
The low-rank Kalman filter (LoFi) introduced in \cite{chang2023lofi}
approximates the the distribution over model parameters as Gaussian,
$q(\vtheta_t \cond \data_{1:t})=\normdist{\vtheta_t}{\vmu_t }{\vSigma_t}$.
Here, the posterior precision is diagonal plus low rank (DLR), i.e., it has the form
$\vSigma_t^{-1} = \vUpsilon_t + \vW_t \vW_t^\intercal$,
where $\vUpsilon_t$ is diagonal and $\vW_t$ is a $\dimstate \times d$ matrix.
Here, we seek to work with $\vUpsilon_t$ and $\vW_t$.
In this sense, we seek ``predict and update equations'' that
depend on these terms, rather than the covariance matrix itself.

Below, we show an efficient recursive form to estimate the terms that comprise the DLR posterior precision matrix,
as well as the posterior mean.
This has two main steps---a predict step and an update step.
The predict step takes $O(\dimstate\,d^2 + d^3)$ time,
And the update step takes $O(\dimstate (d + \dimobs)^2)$ time.
We define these steps below:
\begin{equation}\label{eq:lofi-ssm}
\begin{aligned}
    p(\vtheta_t \cond \vtheta_{t-1}) &= {\cal N}(\vtheta_t \cond \vtheta_{t-1}, \hat{q}\,\vI),\\
    p(\vy_t \cond \vtheta_{t-1}) &= {\cal N}(\vy_t \cond \vH_t\vtheta_t, \vR_t),\\
\end{aligned}
\end{equation}
with $\hat{q} \geq 0$.

\subsection{Predict step}
Here, we derive a predict step that makes use of the DLR structure of the covariance matrix.
\begin{proposition}[Posterior predictive covariance matrix]\label{prop:lofi-predict-covariance}
    Suppose $\vSigma_{t-1}^{-1} = \vUpsilon_{t-1} + \vW_{t-1}\,\vW_{t-1}^\intercal$.
    Then, under \eqref{eq:lofi-ssm}, the predict step $p(\vtheta_t \cond \data_{1:t-1})$ is
    of Gaussian form with mean $\vmu_{t-1}$ and covariance
    \begin{equation}
        \vSigma_{t|t-1} = \vUpsilon_{t|t-1}^{-1} - \vUpsilon_{t-1}^{-1}\,\vW_{t-1}\vB_{t|t-1}\,\vW_{t-1}^\intercal\,\vUpsilon_{t-1}^{-1},
    \end{equation}
    where
    \begin{align}
        \vUpsilon_{t|t-1}^{-1} &= \vUpsilon_{t-1}^{-1} + \hat{q}\,\vI_{\dimstate},\label{eq-part:upsilon-pred}\\
        \vB_{t|t-1} &= \left(\vI_{d}  + \vW_{t-1}^\intercal\,\vUpsilon_{t-1}^{-1}\,\vW_{t-1} \right)^{-1}.\label{eq-part:B-pred}
    \end{align}
\end{proposition}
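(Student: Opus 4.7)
The plan is to invoke the standard Kalman filter predict equations from Proposition \ref{prop:kf-equations} to obtain a simple expression for $\vSigma_{t|t-1}$, and then rewrite that expression so that it exhibits the DLR structure needed for the claim.

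First I would observe that the SSM \eqref{eq:lofi-ssm} corresponds to $\vF_t = \vI_\dimstate$ and $\vQ_t = \hat q\,\vI_\dimstate$. Plugging these into Proposition \ref{prop:kf-equations} immediately gives the Gaussian form of $p(\vtheta_t \cond \data_{1:t-1})$, together with
\begin{equation*}
    \vmu_{t|t-1} = \vmu_{t-1}, \qquad \vSigma_{t|t-1} = \vSigma_{t-1} + \hat q\,\vI_\dimstate .
\end{equation*}

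The remaining content is purely algebraic: I need to rewrite $\vSigma_{t-1} + \hat q\,\vI_\dimstate$ in the DLR form stated in the proposition. To do this, I would apply the Woodbury identity to invert $\vSigma_{t-1}^{-1} = \vUpsilon_{t-1} + \vW_{t-1}\vW_{t-1}^\intercal$ with the diagonal part $\vUpsilon_{t-1}$ playing the role of the invertible ``base'' matrix, yielding
\begin{equation*}
    \vSigma_{t-1} = \vUpsilon_{t-1}^{-1} - \vUpsilon_{t-1}^{-1}\vW_{t-1}\bigl(\vI_d + \vW_{t-1}^\intercal \vUpsilon_{t-1}^{-1}\vW_{t-1}\bigr)^{-1}\vW_{t-1}^\intercal \vUpsilon_{t-1}^{-1}.
\end{equation*}
Recognising the inner bracket as $\vB_{t|t-1}$ from \eqref{eq-part:B-pred} and then adding $\hat q\,\vI_\dimstate$, I would collect the two diagonal terms $\vUpsilon_{t-1}^{-1} + \hat q\,\vI_\dimstate = \vUpsilon_{t|t-1}^{-1}$ as defined in \eqref{eq-part:upsilon-pred}. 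This produces exactly the claimed expression for $\vSigma_{t|t-1}$.

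There is no real obstacle in this argument; the only things to verify are that the matrices being inverted are well-defined. Since $\vSigma_{t-1}^{-1}$ is a valid precision matrix, $\vUpsilon_{t-1}$ is positive definite (it is a diagonal positive-definite component of a DLR decomposition), so $\vUpsilon_{t-1}^{-1}$ exists, and the capacitance matrix $\vI_d + \vW_{t-1}^\intercal \vUpsilon_{t-1}^{-1}\vW_{t-1}$ is positive definite and hence invertible. The proof is essentially one application of the Kalman predict step followed by one application of Woodbury, with the key observation that the diagonal-plus-low-rank structure of the precision is preserved by adding an isotropic noise covariance.
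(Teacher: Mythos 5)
Your proposal matches the paper's own proof essentially line for line: both invoke the Kalman predict step with $\vF_t = \vI_\dimstate$, $\vQ_t = \hat{q}\,\vI_\dimstate$ to get $\vSigma_{t|t-1} = \vSigma_{t-1} + \hat{q}\,\vI_\dimstate$, then apply the Woodbury identity to $\vUpsilon_{t-1} + \vW_{t-1}\vW_{t-1}^\intercal$ and absorb $\hat{q}\,\vI_\dimstate$ into the diagonal term to form $\vUpsilon_{t|t-1}^{-1}$. Your additional remarks on the invertibility of $\vUpsilon_{t-1}$ and of the capacitance matrix are correct and slightly more careful than the paper's presentation.
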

\begin{proof}
    Following Proposition \eqref{prop:kf-equations}, we obtain 
    $\vmu_{t|t-1} = \vmu_{t-1}$.
    Next,
    \begin{equation}
    \begin{aligned}
        \vSigma_{t|t-1}
        &= \vSigma_{t-1} + \hat{q}\,\vI_D\\
        &= \left(\vUpsilon_{t-1} + \vW_{t-1}\,\vW_{t-1}^\intercal\right)^{-1} + \hat{q}\,\vI_D\\
        &= (\vUpsilon_{t-1}^{-1} + \hat{q}\,\vI_D)
        - \vUpsilon_{t-1}^{-1}\vW_{t-1}\,\left(\vW_{t-1}^\intercal\,\vUpsilon_{t-1}^{-1}\,\vW_{t-1}
        + \vI_d\right)^{-1}\vW_{t-1}\,\vUpsilon_{t-1}^{-1}\\
        &= \vUpsilon_{t|t-1}^{-1} - \vUpsilon_{t-1}^{-1}\,\vW_{t-1}\vB_{t|t-1}\,\vW_{t-1}^\intercal\,\vUpsilon_{t-1}^{-1},
    \end{aligned}
    \end{equation}
    where $\vUpsilon_{t|t-1}$ and $\vB_{t|t-1}$ are defined in \eqref{eq-part:upsilon-pred} and \eqref{eq-part:B-pred} respectively.
\end{proof}

Next, we provide predict-step equations that depend on $\vUpsilon_{t-1}$ and $\vW_{t-1}$ only.
To do this, we work with the precision matrix.

\begin{lemma}
    \label{lemma:part-matrix-form-C}
    For $\vB_{t|t-1}$ defined in \eqref{eq-part:B-pred},
    $\vUpsilon_{t-1}$ and $\vUpsilon_{t|t-1}$ $\dimstate$-dimensional diagonal covariance matrices, and
    $\vW_{t-1}$ a $\dimstate\times d$ matrix.
    The following identity holds
    \begin{equation}
    \begin{aligned}
        \vC_t^{-1}
        &:= \vB_{t|t-1}^{-1} - {\vW}_{t-1}^\intercal\,\vUpsilon_{t-1}^{-1}\,\vUpsilon_{t|t-1}\,\vUpsilon_{t-1}^{-1}\,\vW_{t-1}\\
        &= \vI_d + \vW_{t-1}^\intercal\,\left(
        \vUpsilon_{t-1}^{-1} - \vUpsilon_{t-1}^{-1}\,\vUpsilon_{t|t-1}\,\vUpsilon_{t-1}^{-1}
        \right)\,\vW_{t-1}.
    \end{aligned}
    \end{equation}
\end{lemma}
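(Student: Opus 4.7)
The plan is to proceed by direct algebraic substitution, since the identity is essentially a rewriting of the definition of $\vC_t^{-1}$ using the explicit form of $\vB_{t|t-1}^{-1}$ supplied in \eqref{eq-part:B-pred}. I would first observe that \eqref{eq-part:B-pred} gives $\vB_{t|t-1}^{-1} = \vI_d + \vW_{t-1}^\intercal\,\vUpsilon_{t-1}^{-1}\,\vW_{t-1}$, which is the only nontrivial input needed.

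Substituting this expression into the defining equation for $\vC_t^{-1}$, I would write
\begin{equation*}
\vC_t^{-1}
= \vI_d + \vW_{t-1}^\intercal\,\vUpsilon_{t-1}^{-1}\,\vW_{t-1}
- \vW_{t-1}^\intercal\,\vUpsilon_{t-1}^{-1}\,\vUpsilon_{t|t-1}\,\vUpsilon_{t-1}^{-1}\,\vW_{t-1},
\end{equation*}
and then factor $\vW_{t-1}^\intercal$ on the left and $\vW_{t-1}$ on the right of the two $\vW$-containing summands. This immediately yields the claimed identity
\begin{equation*}
\vC_t^{-1} = \vI_d + \vW_{t-1}^\intercal\,\bigl(\vUpsilon_{t-1}^{-1} - \vUpsilon_{t-1}^{-1}\,\vUpsilon_{t|t-1}\,\vUpsilon_{t-1}^{-1}\bigr)\,\vW_{t-1}.
\end{equation*}

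There is no genuine obstacle here: the lemma is essentially bookkeeping that rewrites $\vC_t^{-1}$ in a form that isolates the ``middle'' matrix $\vUpsilon_{t-1}^{-1} - \vUpsilon_{t-1}^{-1}\,\vUpsilon_{t|t-1}\,\vUpsilon_{t-1}^{-1}$. The only thing to be mindful of is that all three matrices $\vUpsilon_{t-1}$, $\vUpsilon_{t|t-1}$, and their inverses are diagonal, so they commute and the rearrangement is unambiguous, with $\vUpsilon_{t-1}^{-1}$ and $\vUpsilon_{t|t-1}$ both well-defined (the former by positive-definiteness of the prior precision and the latter from \eqref{eq-part:upsilon-pred} since $\hat q\geq 0$). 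This reformulation is useful for the next step of the derivation, where one expects $\vUpsilon_{t-1}^{-1} - \vUpsilon_{t-1}^{-1}\,\vUpsilon_{t|t-1}\,\vUpsilon_{t-1}^{-1}$ to simplify further using \eqref{eq-part:upsilon-pred} into a more convenient diagonal expression involving $\hat q$.
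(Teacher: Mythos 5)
Your proposal is correct and follows exactly the paper's own argument: substitute $\vB_{t|t-1}^{-1} = \vI_d + \vW_{t-1}^\intercal\,\vUpsilon_{t-1}^{-1}\,\vW_{t-1}$ from \eqref{eq-part:B-pred} and factor $\vW_{t-1}^\intercal$ and $\vW_{t-1}$ out of the two remaining terms. The additional remarks about commutativity of the diagonal matrices are harmless but not needed for this bookkeeping step.
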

\begin{proof}
    The proof follows through algebraic manipulation:
    \begin{equation}
    \begin{aligned}
        \vC_t
    ^{-1}    &= \vB_{t|t-1}^{-1} - {\vW}_{t-1}^\intercal\,\vUpsilon_{t-1}^{-1}\,\vUpsilon_{t|t-1}\,\vUpsilon_{t-1}^{-1}\,\vW_{t-1}\\
        &= \left( \vI_d + \vW_{t-1}^\intercal\,\vUpsilon_{t-1}^{-1}\,\vW_{t-1}\right)
        - {\vW}_{t-1}^\intercal\,\vUpsilon_{t-1}^{-1}\,\vUpsilon_{t|t-1}\,\vUpsilon_{t-1}^{-1}\,\vW_{t-1}\\
        &= \vI_d + \vW_{t-1}^\intercal\,\left(
        \vUpsilon_{t-1}^{-1} - \vUpsilon_{t-1}^{-1}\,\vUpsilon_{t|t-1}\,\vUpsilon_{t-1}^{-1}
        \right)\,\vW_{t-1}.
    \end{aligned}
    \end{equation}
\end{proof}

\begin{proposition}[Posterior predictive precision matrix]\label{prop:lofi-predict-precision}
    The predicted precision matrix derived from Proposition \ref{prop:lofi-predict-covariance} takes the form
    \begin{equation}
        \vSigma_{t|t-1}^{-1} = \vUpsilon_{t|t-1} + \vW_{t|t-1}\,\vW_{t|t-1}^\intercal,
    \end{equation}
    where
    \begin{align}
        \vUpsilon_{t|t-1} &= \left(\vUpsilon_{t-1}^{-1} + \hat{q}\,\vI_{\dimstate}\right)^{-1}, \label{eq:lofi-predict-diagonal}\\
        \vW_{t|t-1} &= \vUpsilon_{t|t-1}\,\vUpsilon_{t-1}^{-1}\,\vW_{t-1}\,\vC_t^{1/2} \label{eq:lofi-predict-low-rank}.
    \end{align}
    Here ${\bf A}^{1/2}$ refers to the Cholesky decomposition of a positive definite matrix ${\bf A}$.
\end{proposition}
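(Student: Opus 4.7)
The plan is to invert the expression for $\vSigma_{t|t-1}$ obtained in Proposition \ref{prop:lofi-predict-covariance} and show that the resulting precision matrix admits the claimed diagonal-plus-low-rank form. Specifically, recall that Proposition \ref{prop:lofi-predict-covariance} gives
\begin{equation*}
\vSigma_{t|t-1} = \vUpsilon_{t|t-1}^{-1} - \vUpsilon_{t-1}^{-1}\,\vW_{t-1}\,\vB_{t|t-1}\,\vW_{t-1}^\intercal\,\vUpsilon_{t-1}^{-1},
\end{equation*}
which matches the template $\vA - \vU\,\vC\,\vU^\intercal$ with $\vA = \vUpsilon_{t|t-1}^{-1}$, $\vU = \vUpsilon_{t-1}^{-1}\,\vW_{t-1}$ and $\vC = \vB_{t|t-1}$.

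First, I would apply the Woodbury matrix identity in the form $(\vA - \vU\,\vC\,\vU^\intercal)^{-1} = \vA^{-1} + \vA^{-1}\vU\bigl(\vC^{-1} - \vU^\intercal \vA^{-1}\vU\bigr)^{-1}\vU^\intercal \vA^{-1}$. Substituting our identifications gives
\begin{equation*}
\vSigma_{t|t-1}^{-1} = \vUpsilon_{t|t-1} + \vUpsilon_{t|t-1}\,\vUpsilon_{t-1}^{-1}\,\vW_{t-1}\,\bigl(\vB_{t|t-1}^{-1} - \vW_{t-1}^\intercal\,\vUpsilon_{t-1}^{-1}\,\vUpsilon_{t|t-1}\,\vUpsilon_{t-1}^{-1}\,\vW_{t-1}\bigr)^{-1}\,\vW_{t-1}^\intercal\,\vUpsilon_{t-1}^{-1}\,\vUpsilon_{t|t-1},
\end{equation*}
where I used the fact that $\vUpsilon_{t|t-1}$ and $\vUpsilon_{t-1}^{-1}$ are diagonal, hence symmetric. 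The inner parenthesised expression is precisely $\vC_t^{-1}$ as defined in Lemma \ref{lemma:part-matrix-form-C}, so we may replace that inverse with $\vC_t$ and obtain
\begin{equation*}
\vSigma_{t|t-1}^{-1} = \vUpsilon_{t|t-1} + \vUpsilon_{t|t-1}\,\vUpsilon_{t-1}^{-1}\,\vW_{t-1}\,\vC_t\,\vW_{t-1}^\intercal\,\vUpsilon_{t-1}^{-1}\,\vUpsilon_{t|t-1}.
\end{equation*}

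The remaining task is to factor the second term as $\vW_{t|t-1}\,\vW_{t|t-1}^\intercal$. Taking $\vC_t^{1/2}$ to be the Cholesky factor of $\vC_t$ (so $\vC_t = \vC_t^{1/2}(\vC_t^{1/2})^\intercal$) and defining $\vW_{t|t-1} = \vUpsilon_{t|t-1}\,\vUpsilon_{t-1}^{-1}\,\vW_{t-1}\,\vC_t^{1/2}$, the symmetry of the diagonal matrices gives $\vW_{t|t-1}\,\vW_{t|t-1}^\intercal$ equal to the low-rank correction above, completing the proof.

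The main obstacle is verifying that $\vC_t$ is symmetric positive definite, so that its Cholesky factor exists. To see this, rewrite the representation from Lemma \ref{lemma:part-matrix-form-C} as $\vC_t^{-1} = \vI_d + \vW_{t-1}^\intercal\,\vUpsilon_{t-1}^{-1}(\vUpsilon_{t-1} - \vUpsilon_{t|t-1})\vUpsilon_{t-1}^{-1}\,\vW_{t-1}$. Since $\vUpsilon_{t|t-1}^{-1} = \vUpsilon_{t-1}^{-1} + \hat{q}\,\vI_\dimstate \succeq \vUpsilon_{t-1}^{-1}$, we have $\vUpsilon_{t-1} \succeq \vUpsilon_{t|t-1}$, so the sandwiched diagonal is positive semi-definite and $\vC_t^{-1} \succeq \vI_d$, which implies $\vC_t$ is symmetric positive definite with $\vC_t \preceq \vI_d$. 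With the Cholesky decomposition thereby guaranteed, the identification of $\vW_{t|t-1}$ is well-defined and the proof is concluded.
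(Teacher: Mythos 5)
Your proof is correct and follows essentially the same route as the paper's: apply the Woodbury identity to the covariance form from Proposition \ref{prop:lofi-predict-covariance}, recognise the inner parenthesised matrix as $\vC_t^{-1}$ from Lemma \ref{lemma:part-matrix-form-C}, and factor the resulting low-rank correction via the Cholesky factor of $\vC_t$. Your write-up is in fact slightly tighter than the paper's: you correctly place $\vC_t$ (not $\vC_t^{-1}$, as the paper's derivation mistakenly displays in its penultimate line) in the middle of the low-rank term, and you supply the positive-definiteness argument for $\vC_t$ that justifies the Cholesky factorisation, which the paper leaves implicit.
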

\begin{proof}
    Consider the posterior predictive covariance matrix derived in Proposition \ref{prop:lofi-predict-covariance}.
    Then, the posterior predictive precision matrix takes the form
    \begin{equation}
    \begin{aligned}
        &\vSigma_{t|t-1}^{-1}\\
        &= \left(\vUpsilon_{t|t-1}^{-1} - \vUpsilon_{t-1}^{-1}\,\vW_{t-1}\vB_{t|t-1}\,\vW_{t-1}^\intercal\,\vUpsilon_{t-1}^{-1}\right)^{-1}\\
        &= \vUpsilon_{t|t-1} +
        \vUpsilon_{t|t-1}\,\vUpsilon_{t-1}^{-1}\,\vW_{t-1}
        \left(
        \vB_{t|t-1}^{-1} -
        {\vW}_{t-1}^\intercal\,\vUpsilon_{t-1}^{-1}\,\vUpsilon_{t|t-1}\,\vUpsilon_{t-1}^{-1}\,\vW_{t-1}
        \right)^{-1}\,\vW_{t-1}\,\vUpsilon_{t-1}^{-1}\vUpsilon_{t|t-1}\\
        &= \vUpsilon_{t|t-1} +
        \vUpsilon_{t|t-1}\,\vUpsilon_{t-1}^{-1}\,\vW_{t-1}\,
        \vC_t^{-1}\,\vW_{t-1}\,\vUpsilon_{t-1}^{-1}\vUpsilon_{t|t-1}\\
        &= \vUpsilon_{t|t-1} + \vW_{t|t-1}\,\vW_{t|t-1}^\intercal. 
    \end{aligned}
    \end{equation}
    Here, $\vC_t$ is derived in Lemma \ref{lemma:part-matrix-form-C} and $\vW_{t|t-1}$ is defined in \eqref{eq:lofi-predict-low-rank}.
\end{proof}

Proposition \ref{prop:lofi-predict-precision} derives predict steps in terms of $\vUpsilon_{t-1}$ and $\vW_{t-1}$,
so that dependence in $\vSigma_{t|t-1}$ is implicit.
The computational cost of the predict step is $O(\dimstate\,d + d^3)$.
Algorithm \ref{algo:lofi-predict} provides pseudocode for the predict step.

\begin{algorithm}[H]
\begin{algorithmic}[1]
    \REQUIRE $\vmu_{t-1}$ // previous mean
    \REQUIRE $(\vUpsilon_{t-1}, \vW_{t-1})$ // previous diagonal and low-rank parts
    \REQUIRE $q_t$ // scalar dynamics covariance
    \STATE $\vmu_{t|t-1} \gets \vmu_{t-1}$
    \STATE $\vUpsilon_{t|t-1} \gets \left(\vUpsilon_{t-1}^{-1} + q_t\,\vI_{\dimstate}\right)^{-1}$
    \STATE $\vC_t^{-1} \gets \vI_d + \vW_{t-1}^\intercal\,\left( \vUpsilon_{t-1}^{-1} - \vUpsilon_{t-1}^{-1}\,\vUpsilon_{t|t-1}\,\vUpsilon_{t-1}^{-1} \right)\,\vW_{t-1}$
    \STATE $\vW_{t|t-1} \gets \vUpsilon_{t|t-1}\,\vUpsilon_{t-1}^{-1}\,\vW_{t-1}\,\vC_t^{1/2}$
    \STATE $\hat{\vy}_t \gets h(\vmu_{t|t-1}, \vx_t)$ // one-step-ahead forecast
\end{algorithmic}
\caption{
    Predict step for LoFi
}
\label{algo:lofi-predict}
\end{algorithm}

\subsection{Update step}
The LoFi update step makes use of the DLR form of the precision matrix.
\begin{proposition}\label{prop:lofi-update}
    The form of the posterior mean and posterior covariance after an update step takes the form
    \begin{equation}
    \begin{aligned}
        \ve_t &= \vy_t - \hat{\vy}_t\,,\\
        \vmu_t &= \vmu_{t|t-1} +
        \left[
        \vUpsilon_{t|t-1}^{-1} -
        \vUpsilon_{t|t-1}^{-1}\,\tilde{\vW_t} \left( \vI_{d+\dimobs} + \tilde{\vW}_t^\intercal\,\vUpsilon_{t|t-1}^{-1}\,\tilde{\vW}_t \right)^{-1}\,\tilde{\vW}_t^\intercal\,\vUpsilon_{t|t-1}^{-1}
        \right]\vH_t^\intercal\,\vR_t^{-1}\,\ve_t,\\
        \vSigma_{t}^{-1} &= \vUpsilon_{t|t-1} + \tilde{\vW}_t\,\tilde{\vW}_t^\intercal,
    \end{aligned}
    \end{equation}
    where
    \begin{equation}\label{eq:part-tilde-W-lofi}
        \tilde{\vW}_{t|t-1} =
        \begin{bmatrix}
        \vW_{t|t-1}& \vH_t\,\vR_t^{1/2}
        \end{bmatrix},
    \end{equation}
    $\vW_{t|t-1}$ is given by \eqref{eq:lofi-predict-low-rank} and
    $\vUpsilon_{t|t-1}$ is given by \eqref{eq:lofi-predict-diagonal}.
\end{proposition}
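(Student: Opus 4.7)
The plan is to start from the precision-form Kalman update given in Proposition \ref{prop:kf-update-step-precision} and then exploit the DLR structure of the predictive precision established in Proposition \ref{prop:lofi-predict-precision}. Concretely, since $\vSigma_{t|t-1}^{-1} = \vUpsilon_{t|t-1} + \vW_{t|t-1}\vW_{t|t-1}^\intercal$, the precision update
\begin{equation*}
\vSigma_t^{-1} \;=\; \vSigma_{t|t-1}^{-1} + \vH_t^\intercal\,\vR_t^{-1}\,\vH_t
\end{equation*}
immediately becomes $\vSigma_t^{-1} = \vUpsilon_{t|t-1} + \vW_{t|t-1}\vW_{t|t-1}^\intercal + \vH_t^\intercal\,\vR_t^{-1}\,\vH_t$. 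The key algebraic observation is that $\vH_t^\intercal\,\vR_t^{-1}\,\vH_t$ is itself a rank-$\dimobs$ outer product, namely $(\vH_t^\intercal\vR_t^{-1/2})(\vH_t^\intercal\vR_t^{-1/2})^\intercal$, so stacking columns gives exactly $\vSigma_t^{-1} = \vUpsilon_{t|t-1} + \tilde{\vW}_t\,\tilde{\vW}_t^\intercal$ with $\tilde{\vW}_t$ a $\dimstate\times(d+\dimobs)$ matrix as in \eqref{eq:part-tilde-W-lofi}. (Modulo the conventions on $\vH_t$ and $\vR_t^{\pm 1/2}$ used in the paper; I would double-check the direction of the transpose and sign of the exponent here before committing to the final typesetting.)

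Next, I would invert the DLR precision using the Woodbury identity. Applying it to $\vUpsilon_{t|t-1} + \tilde{\vW}_t\tilde{\vW}_t^\intercal$ yields
\begin{equation*}
\vSigma_t
= \vUpsilon_{t|t-1}^{-1} - \vUpsilon_{t|t-1}^{-1}\tilde{\vW}_t\left(\vI_{d+\dimobs} + \tilde{\vW}_t^\intercal\vUpsilon_{t|t-1}^{-1}\tilde{\vW}_t\right)^{-1}\tilde{\vW}_t^\intercal\vUpsilon_{t|t-1}^{-1},
\end{equation*}
which is exactly the bracketed expression appearing in the proposition. Substituting this into the mean update $\vmu_t = \vmu_{t|t-1} + \vSigma_t\,\vH_t^\intercal\,\vR_t^{-1}\,\ve_t$ from Proposition \ref{prop:kf-update-step-precision} gives the stated formula for $\vmu_t$.

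The main obstacle I anticipate is purely notational rather than conceptual: getting the decomposition of $\vH_t^\intercal\vR_t^{-1}\vH_t$ into an outer product consistent with the proposition's definition of $\tilde{\vW}_t$ (the asymmetry in how $\vH_t$ and $\vR_t^{1/2}$ versus $\vR_t^{-1/2}$ appear needs to be pinned down carefully to match dimensions $\dimstate\times\dimobs$). Once that is fixed, the rest is a straight application of Proposition \ref{prop:kf-update-step-precision} plus one invocation of the Woodbury identity, so no genuinely new machinery is needed beyond what was used in Propositions \ref{prop:lofi-predict-covariance}--\ref{prop:lofi-predict-precision} for the predict step.
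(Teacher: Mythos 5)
Your proposal is correct and follows essentially the same route as the paper's proof: the precision-form update of Proposition \ref{prop:kf-update-step-precision}, absorption of the rank-$\dimobs$ likelihood term into the augmented low-rank factor $\tilde{\vW}_t$, a single Woodbury inversion, and substitution of the resulting $\vSigma_t$ into the mean update. Your caution about the factorisation of $\vH_t^\intercal\,\vR_t^{-1}\,\vH_t$ is warranted: the factor consistent with the outer-product form is $\vH_t^\intercal\,\vR_t^{-1/2}$, and the paper's displayed $\vH_t\,\vR_t^{1/2}$ in \eqref{eq:part-tilde-W-lofi} (and the intermediate $\vH_t\,\vR_{t-1}^\intercal$ in its own derivation) is a typo rather than a substantive difference.
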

\begin{proof}
    Following the Kalman filter update for the precision matrix
    shown in Proposition \ref{prop:kf-update-step-precision}, we obtain
    \begin{equation}
    \begin{aligned}
        \vSigma_t^{-1}
        &= \vSigma_{t|t-1}^{-1} + \vH_t\,\vR_{t}^{-1}\,\vH_t\\
        &= \vUpsilon_{t|t-1} + \vW_{t|t-1}\,\vW_{t|t-1}^\intercal +
        \vH_t^\intercal\,\vR_t^{-\intercal/2}\,\vR_{t}^{-1/2}\,\vH_t\\
        &= \vUpsilon_{t|t-1} +
        \begin{bmatrix}
            \vW_{t|t-1} & \vH_t\,\vR_{t-1}^\intercal
        \end{bmatrix}
        \begin{bmatrix}
            \vW_{t|t-1}^\intercal\\[10pt]
            \vR_{t}^{-1/2}\,\vH_t
        \end{bmatrix}\\
        &= \vUpsilon_{t|t-1} + \tilde{\vW}_{t}\,\tilde{\vW}_{t}^\intercal,
    \end{aligned}
    \end{equation}
    where $\tilde{\vW}_t$ is given by \eqref{eq:part-tilde-W-lofi}.

    Next, using the Woodbury identity matrix, $\vSigma_t$ takes the form
    \begin{equation}\label{eq:part-posterior-covariance-lofi-rewrite}
    \begin{aligned}
       \vSigma_t
       &= \left(\vUpsilon_{t|t-1} + \tilde{\vW}_{t}\,\tilde{\vW}_{t}^\intercal\right)^{-1}\\
       &= \vUpsilon_{t|t-1}^{-1} -
       \vUpsilon_{t|t-1}^{-1}\,\tilde{\vW}_t
       \left(
        \vI_{d+o} + \tilde{\vW}_t^\intercal\vUpsilon_{t|t-1}^{-1}\tilde{\vW}_t
       \right)^{-1}
       \tilde{\vW_{t}}^\intercal\,\vUpsilon_{t|t-1}^{-1}.
    \end{aligned}
    \end{equation}
    The proof concludes from the update step of the Kalman filter shown in Proposition \ref{prop:kf-update-step-precision}
    with covariance given by \eqref{eq:part-posterior-covariance-lofi-rewrite}.
\end{proof}

After the update step of Proposition \eqref{prop:lofi-update}, the low-rank component $\tilde{\vW}$
is a $\dimstate\times (d+o)$ matrix.
To maintain a $\dimstate\times d$ low-rank matrix,
LoFi performs a singular value decomposition (SVD) over the $\dimstate\times(d + o)$ low rank matrix $\tilde{\vW}_t$
and maintains the top $d$ singular components.

Algorithm \ref{algo:lofi-update} provides pseudocode for the update step and subsequent low-rank
projection of the matrix $\tilde{\vW}_t$.
\begin{algorithm}[H]
\begin{algorithmic}[1]
    \REQUIRE $\vy_t$ // measurement at time $t$
    \REQUIRE $\vmu_{t|t-1}$ // predicted mean
    \REQUIRE $(\vUpsilon_{t|t-1}, \vW_{t|t-1})$ // predicted diagonal and low-rank components
    \STATE $\tilde{\vW}_{t|t-1} \gets
        \begin{bmatrix} \vW_{t|t-1}& \vH_t\,\vR_t^{1/2} \end{bmatrix}$
    \STATE $\tilde{\vP}_t \gets \vUpsilon_{t|t-1}^{-1} -
        \vUpsilon_{t|t-1}^{-1}\,\tilde{\vW_t} \left( \vI_{d+\dimobs} + \tilde{\vW}_t^\intercal\,\vUpsilon_{t|t-1}^{-1}\,\tilde{\vW}_t \right)^{-1}\,\tilde{\vW}_t^\intercal\,\vUpsilon_{t|t-1}^{-1}$
    \STATE $\tilde{\vK}_t \gets \tilde{\vP}_t\,\vH_t^\intercal\,\vR^{-1}$
    \STATE $
        \vmu_t \gets \vmu_{t|t-1}  + \tilde{\vK}_t\,(\vy_t - \hat{\vy}_t)
    $
    \STATE // build low-rank component (using reduced-rank SVD)
    \STATE $\tilde{\vW}_t^\intercal\,\tilde{\vW}_t \gets \tilde{\vV}_t\,\tilde{\vS}_t^2\,\tilde{\vV}_t^\intercal$ // right-singular vectors
    \STATE $\tilde{\vU}_t \gets \tilde{\vW}\,\tilde{\vV}\,\tilde{\vS}_t^{-1}$ // left singular vectors
    \STATE $\vW_t \gets \tilde{\vU}_{:, :d}\,(\tilde{\vS}_t)_{:d,:}$ // low-rank construction
    \STATE $ (\vD_t)_i \gets \sum_{j=d}^{d+o} (\tilde{\vU})_{i,j}\,(\tilde{\vS}_t)_j\,(\tilde{\vU})_{i,j}\,(\tilde{\vS}_t)_j$ // dropped variance
    \STATE $\vUpsilon_t \gets \vUpsilon_{t|t-1} + \vD_t$
\end{algorithmic}
\caption{
    Update step for LoFi
}
\label{algo:lofi-update}
\end{algorithm}
In Algorithm \ref{algo:lofi-update}, Line 7 is computed for all $i=1,\ldots,D$ and $j=1,\ldots,d$;
and Line 8 is computed for all $i=1,\ldots,D$.
Finally
$\vA_{:, d:} = \begin{bmatrix} \vA_{:, 1} & \ldots & \vA_{:, d} \end{bmatrix}$
and similarly for $\vA_{d:, :}$.

\section{Summary of methods}\label{sec:scalability-summary}
In this section, we summarise the methods introduced in this chapter, namely,
the subspace method, the PULSE method, and the LoFi method.

\begin{table}[htb]
    \footnotesize
    \centering
    \begin{tabular}{l|ccc}
    \toprule
        Method & Assumption & Time complexity & Memory complexity\\
    \midrule
        subspace & $\vtheta_t = \vA\,\phiddensub_t + \phidden_*$ & $O(d\,\dimstate + d^3)$ & $O(d\,\dimstate + d^2)$\\
        LoFi & $\vSigma_{t}^{-1} = \vW_t\,\vW_t^\intercal + \vUpsilon_t$ & $O(\dimstate\,(d+o)^2 + (d+o)^3)$ & $O(d + d\,\dimstate)$\\
        PULSE & $\vtheta_t = ({\bf A}\,\phiddensub_t + \phidden_*, \plast_t)$ & $O(d_{\rm hidden}^3 + d_{\rm last}^3)$ & $O(\dimstate\,d_{\rm hidden} + d_{\rm hidden}^2 + d_{\rm last}^2)$\\
    \bottomrule
    \end{tabular}
    \caption{
        Time and memory complexity of the update step for various methods.
        The row \textit{Assumption} denotes the change from the assumptions in used in the EKF algorithm.
    }
    \label{tab:complexity-scalability}
\end{table}

\section{Experiments}\label{sec:experiments-scalability}

In this section, we present empirical results in which we evaluate the performance and speed (time to run)
of the methods presented in this chapter.
We also study the effects of various
hyper-parameters of our algorithm, such as how we choose the subspace.

\subsection{Online classification with a convolutional neural network}
In this experiment, we consider consider the problem of one-step-ahead classification
of a stream of images coming from the FashionMNIST dataset \citep{xiao2017fashionmnist}.
We test the Subspace method, the PULSE method, and the LoFi method
to train a modified LeNet5 architecture with ReLU activation;
we add an additional 20-unit dense layer
\citep{lecun1998lenet5}.

For each of the methods presented in this chapter, we consider
dimensions of sizes $d \in \{1, 5, 10, 25, 50, 70, 100\}$,
which corresponds to the subspace parameters for the Subspace method (Section \ref{sec:subspace}),
the hidden-layer subspace for PULSE (Section \ref{sec:PULSE}), and
the low-rank component for LoFi (Section \ref{sec:lofi}).

For the subspace and PULSE methods,
we estimate the projection matrix $\vA$ using the procedure outlined in Section \ref{subsec:warm-up-subspace}
using a warmup dataset of 2000 samples separate from the training data.

In the following figures we show the one-step-ahead classification accuracy for each of the methods
as a function of $d$,
using an exponentially-weighted moving average (EWMA) with a span value of 100 observations.

\paragraph{Subspace:}
Figure \ref{fig:fashion-mnist-subspace-rank} shows the result of the Fashion MNIST task for the subspace method.
We consider the additional dimensions $\{150, 200, 250\}$ to compare to the total number of parameters in PULSE (see below).
\begin{figure}[htb]
    \centering
    \includegraphics[width=0.9\linewidth]{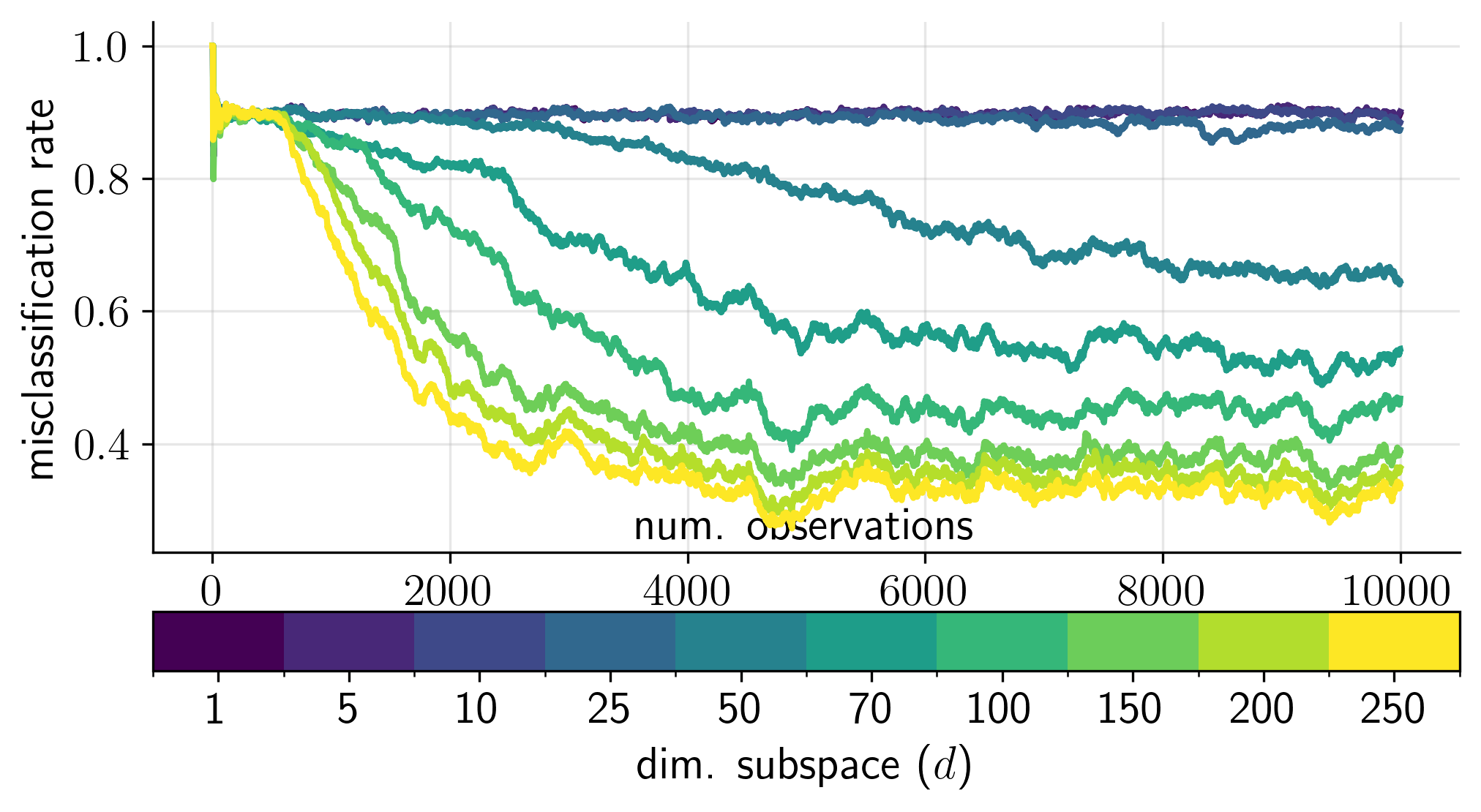}
    \caption{
    Comparison of the prequential accuracy on the Fashion MNIST dataset for the subspace method.
    The $y$-axis shows the EWMA accuracy using a span of 100 observations.
    }
    \label{fig:fashion-mnist-subspace-rank}
\end{figure}
We observe that for $d \leq 10$, the method performs no better than a random classifier, which corresponds
to a misclassification rate of $0.9$.
Then, for $d > 10$, the method improves its performance as $d$ increases.
This is because we allow the algorithm to consider higher degrees of freedom.

\paragraph{PULSE:}
Next, Figure \ref{fig:fashion-mnist-pulse-rank} shows the results for the PULSE method.
Here $d$ is the dimension of the subspace of the parameters in the hidden-layers.
\begin{figure}[htb]
    \centering
    \includegraphics[width=0.9\linewidth]{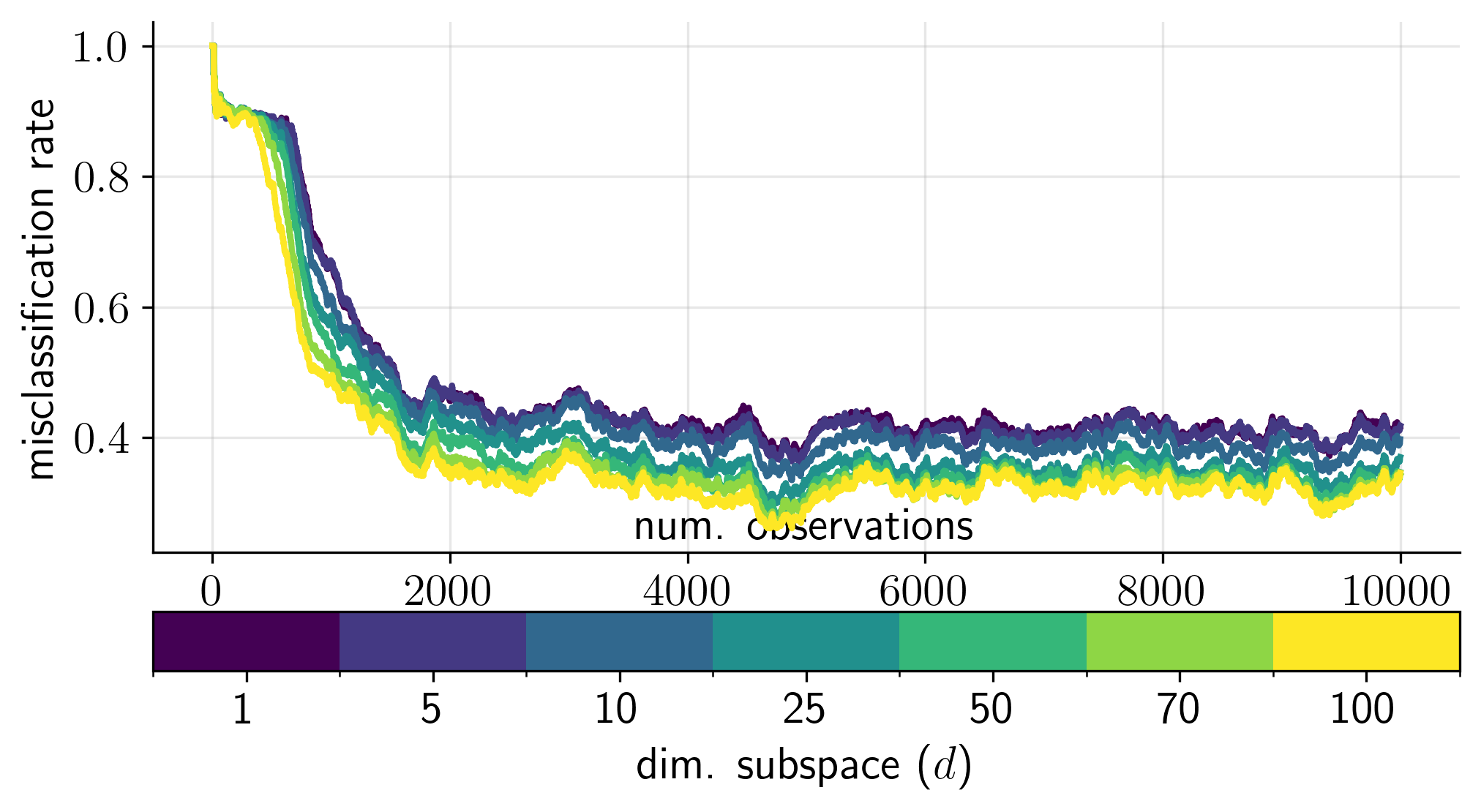}
    \caption{
    Comparison of the prequential accuracy on the Fashion MNIST dataset for the PULSE method.
    The $y$-axis shows the EWMA accuracy using a span of 100 observations.
    }
    \label{fig:fashion-mnist-pulse-rank}
\end{figure}
Because the output is $10$-dimensional and the second-to-last layer has $20$ units,
the total number of parameters that get updated is $200 + d$.
As a consequence, the performance of the model places strong emphasis on the last layer parameters.
We observe that $d=1$ provides a much better result than the subspace counterpart.
However, the performance of the method as we increase $d$ is not as stark as in the subspace case
shown in Figure \ref{fig:fashion-mnist-subspace-rank}.

\paragraph{LoFi:}
Finally, Figure \ref{fig:fashion-mnist-lofi-rank} the results for the LoFi method.
Here $d$ is the rank of the DLR matrix that characterises the posterior precision matrix.
For LoFi, all $\dimstate$ model parameters get updated, however,
the total number of meta-parameters required to perform the update step is $(D + D\,d)$:
$D$ diagonal terms and $D\,d$ low-rank terms.
\begin{figure}[htb]
    \centering
    \includegraphics[width=0.9\linewidth]{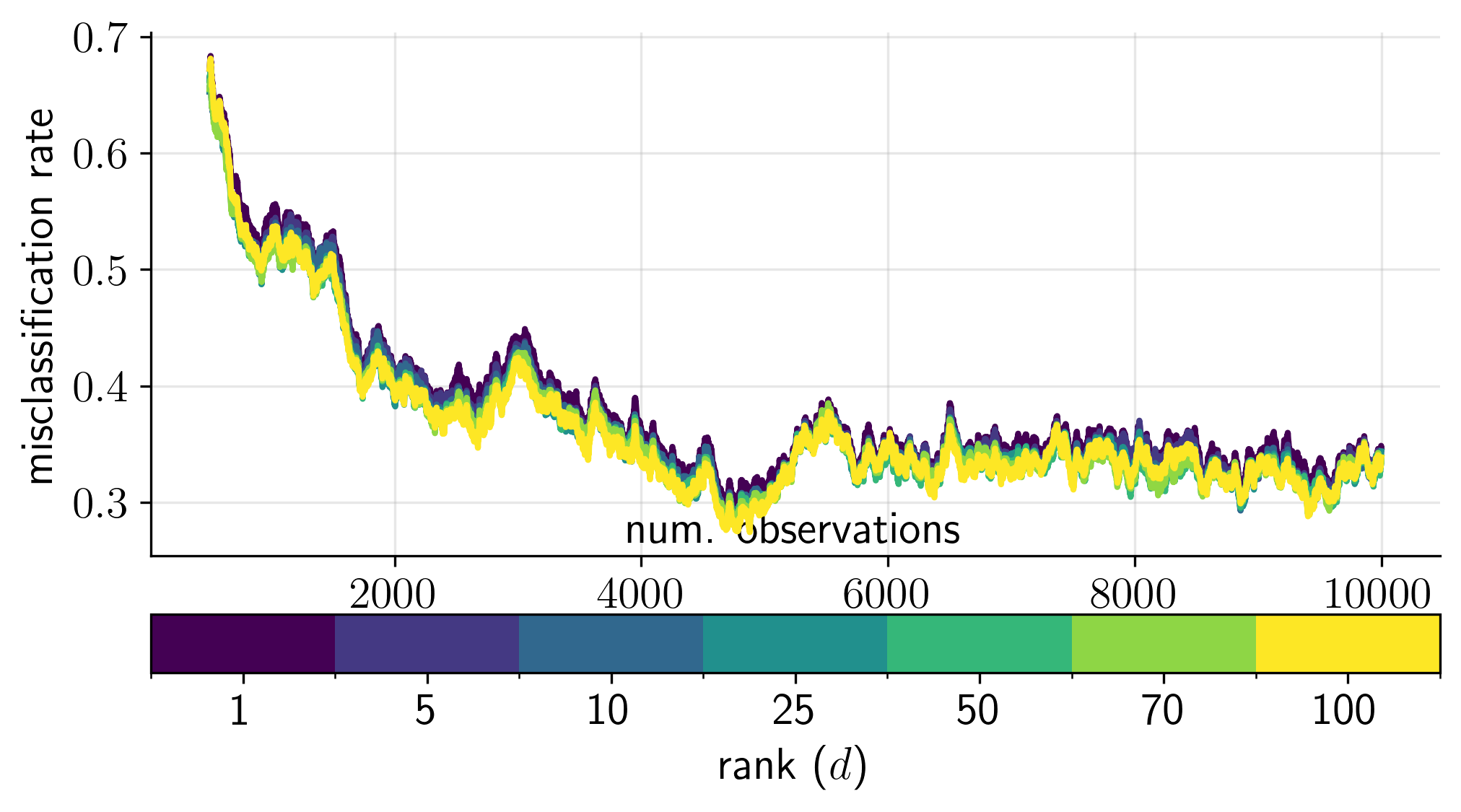}
    \caption{
    Comparison of the prequential accuracy on the Fashion MNIST dataset.
    The $y$-axis shows the EWMA accuracy using a span of 100 observations.
    }
    \label{fig:fashion-mnist-lofi-rank}
\end{figure}
We observe that for this dataset, the total rank does not have a strong influence on the performance of the method
as the rank increases, relative to the Subspace and the PULSE method.

\paragraph{Showdown:}
Figure \ref{fig:showdown-mnist-lofi-rank} compares the subspace, PULSE, and LoFi method on the Fashion MNIST task.
The $x$-axis shows the running time in seconds and the $y$-axis shows the final one-step-ahead misclassification rate.
Each marker corresponds to the final misclassification rate.
The lines correspond to the median performance across choices of $d$. 
\begin{figure}[htb]
    \centering
    \includegraphics[width=0.9\linewidth]{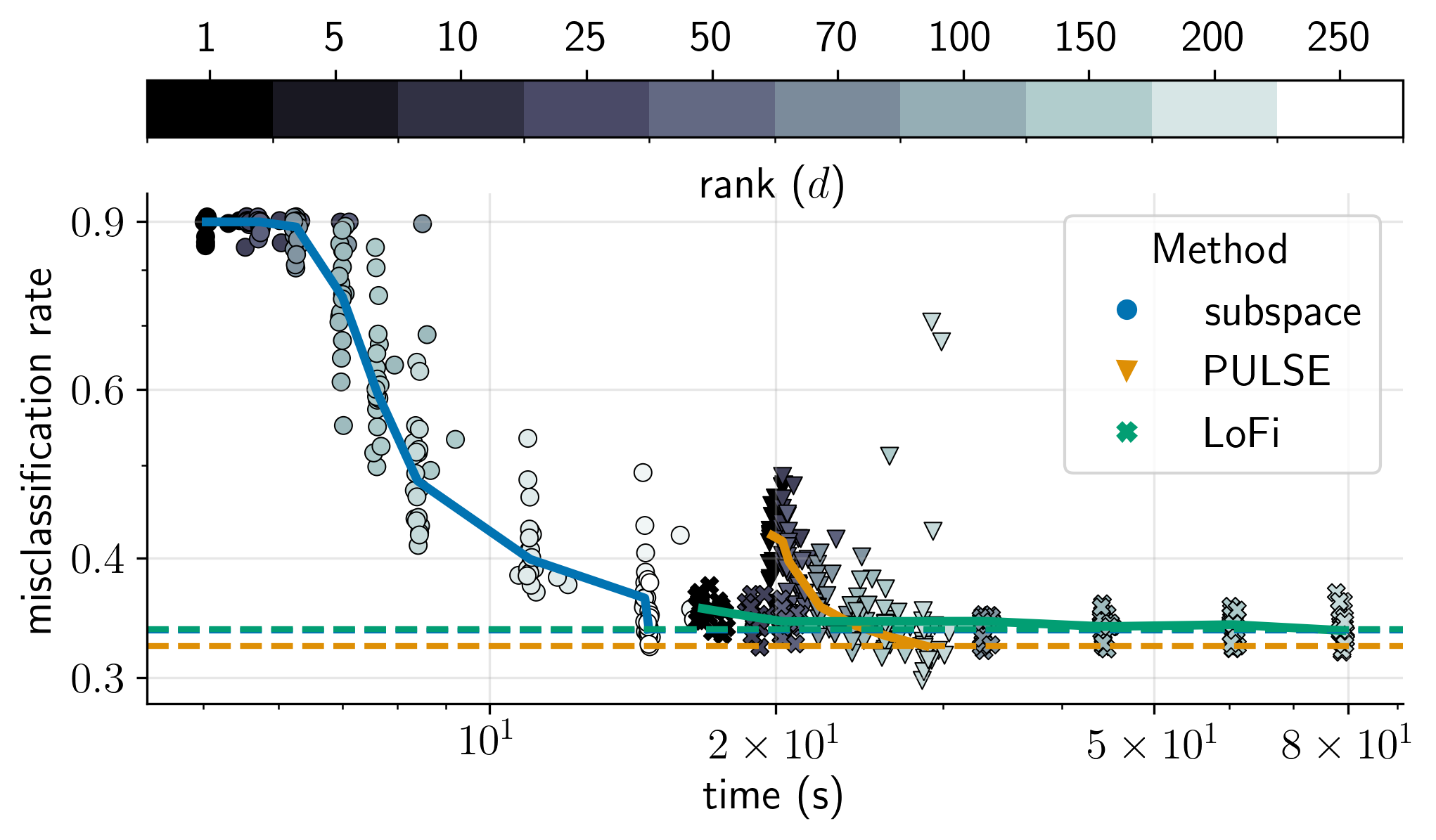}
    \caption{
    Comparison of the prequential misclassification rate on the Fashion MNIST dataset on 
    the last $8000$ observations.
    The dashed lines denote the best-performing configuration for each method.
    }
    \label{fig:showdown-mnist-lofi-rank}
\end{figure}
We observe that
LoFi is the method that maintains a consistent misclassification rate as a function of $d$.
However, the running time increases significantly as we increase $d$.
Next, PULSE starts a higher misclassification rate than LoFi. However,
at $d=70$ PULSE matches the performance of LoFi at around half the running time of Lofi.
Finally, the subspace method has the highest variability among the competing methods.
However, its performance significantly improves for $d \geq 100$
and matches the performance of LoFi with less running time.

\section{Conclusion}
In this chapter, we presented three methods for scalable filtering in high-dimensional parametric models,  
such as those found in neural networks.  
Specifically, we introduced:  
(i) the subspace method, which projects the weights of a neural network into a lower-dimensional subspace  
and performs filtering within this reduced space;  
(ii) the PULSE method, which extends the subspace method by defining a density function  
over the subspace and the last-layer parameters found using variational optimisation; and  
(iii) the LoFi method, which tracks model parameters using a low-rank plus diagonal representation  
of the posterior precision matrix.  

To evaluate these methods, we conducted experiments on an online classification problem  
using the Fashion MNIST dataset.  
The results demonstrate the applicability of these approaches to neural networks,  
highlighting their potential for scalable and efficient Bayesian filtering in high-dimensional settings.  
\chapter{Final remarks}
\label{ch:conclusion}

This thesis has proposed Bayesian filtering as a principled framework for adaptive, robust,  
and scalable online learning in the presence of non-stationary environments, misspecified models,  
and high-dimensional parameters.  

The primary contribution of this thesis has been the development of novel filtering methods,  
demonstrating their effectiveness in addressing sequential problems in machine learning.  
Specifically, we have introduced:  
(i) a unified framework for adapting to non-stationary environments,  
(ii) a novel, lightweight filtering method that is provably robust and a straightforward extension of the Kalman filter, and  
(iii) a suite of methods designed to reduce the time and memory complexity of classical filters  
when applied to high-dimensional parametric models.  

Despite these contributions, several limitations remain.
In particular, in future work, we would like to evaluate the proposed methods on novel architectures  
such as the
Transformer architecture
\citep{vaswani2017transformers,moreno2024rough} or
Graph Neural Networks \citep{scarselli2008graphneuralnet,arroyo2025vanishing} in temporal settings.  
Furthermore, future work will address (i) reinforcement learning problems, where agents can experience non-stationarity
even in static environments \citep{zhang2022catastrophicrl,waldon2024dare} and
(ii) sequential decision making problems in finance \citep{scalzo2021nonstationary,cartea2023optimal,cartea2024decentralized,drissi2022solvability,arroyo2024deep}.

These limitations present avenues for future research,  
including fully-online reinforcement learning and temporal problems involving non-stationarity on graph-structured data.  

In conclusion, this thesis demonstrates the versatility and potential of Bayesian filtering as a framework  
for parametric online learning and lays the foundation for further advancements in adaptive, robust,  
and scalable algorithms. By bridging the gap between traditional Bayesian approaches and the demands  
of modern machine learning, this work contributes to the development of tools that will become increasingly essential  
in real-world applications characterised by uncertainty, non-stationarity, and high-dimensionality.

\bibliographystyle{plainnat}
\bibliography{refs}

\begin{thebibliography}{164}
\providecommand{\natexlab}[1]{#1}
\providecommand{\url}[1]{\texttt{#1}}
\expandafter\ifx\csname urlstyle\endcsname\relax
  \providecommand{\doi}[1]{doi: #1}\else
  \providecommand{\doi}{doi: \begingroup \urlstyle{rm}\Url}\fi

\bibitem[Abélès et~al.(2024)Abélès, de~Vilmarest, and Wintemberger]{abeles2024adaptive}
Baptiste Abélès, Joseph de~Vilmarest, and Olivier Wintemberger.
\newblock Adaptive time series forecasting with markovian variance switching, 2024.

\bibitem[Adams and MacKay(2007)]{adams2007bocd}
Ryan~Prescott Adams and David J.~C. MacKay.
\newblock Bayesian online changepoint detection, 2007.

\bibitem[Agamennoni et~al.(2012)Agamennoni, Nieto, and Nebot]{Agamennoni2012}
G~Agamennoni, J~I Nieto, and E~M Nebot.
\newblock Approximate inference in state-space models with heavy-tailed noise.
\newblock \emph{IEEE Transactions on Signal Processing}, 2012.

\bibitem[Agudelo-Espa{\~n}a et~al.(2020)Agudelo-Espa{\~n}a, Gomez-Gonzalez, Bauer, Sch{\"o}lkopf, and Peters]{agudelo2020bocdprediction}
Diego Agudelo-Espa{\~n}a, Sebastian Gomez-Gonzalez, Stefan Bauer, Bernhard Sch{\"o}lkopf, and Jan Peters.
\newblock Bayesian online prediction of change points.
\newblock In \emph{Conference on Uncertainty in Artificial Intelligence}, pages 320--329. PMLR, 2020.

\bibitem[Alami(2023)]{alami2023banditnonstationary}
Reda Alami.
\newblock Bayesian change-point detection for bandit feedback in non-stationary environments.
\newblock In \emph{Asian Conference on Machine Learning}, pages 17--31. PMLR, 2023.

\bibitem[Alami et~al.(2020)Alami, Maillard, and F{\'e}raud]{alami2020restartedbocd}
R{\'e}da Alami, Odalric Maillard, and Raphael F{\'e}raud.
\newblock Restarted bayesian online change-point detector achieves optimal detection delay.
\newblock In \emph{International conference on machine learning}, pages 211--221. PMLR, 2020.

\bibitem[Alquier and Ridgway(2020)]{alquier2020concentration}
Pierre Alquier and James Ridgway.
\newblock {Concentration of tempered posteriors and of their variational approximations}.
\newblock \emph{The Annals of Statistics}, 48\penalty0 (3):\penalty0 1475 -- 1497, 2020.
\newblock \doi{10.1214/19-AOS1855}.
\newblock URL \url{https://doi.org/10.1214/19-AOS1855}.

\bibitem[Altamirano et~al.(2023{\natexlab{a}})Altamirano, Briol, and Knoblauch]{Altamirano2023_bocpd}
Matias Altamirano, Francois-Xavier Briol, and Jeremias Knoblauch.
\newblock Robust and scalable {B}ayesian online changepoint detection.
\newblock In \emph{International Conference on Machine Learning}, 2023{\natexlab{a}}.

\bibitem[Altamirano et~al.(2023{\natexlab{b}})Altamirano, Briol, and Knoblauch]{Altamirano2023_gp}
Matias Altamirano, Fran{\c c}ois-Xavier Briol, and Jeremias Knoblauch.
\newblock Robust and conjugate {G}aussian process regression.
\newblock \emph{arXiv:2311.00463}, 2023{\natexlab{b}}.

\bibitem[Altamirano et~al.(2023{\natexlab{c}})Altamirano, Briol, and Knoblauch]{altamirano2023robust}
Matias Altamirano, François-Xavier Briol, and Jeremias Knoblauch.
\newblock Robust and scalable bayesian online changepoint detection, 2023{\natexlab{c}}.

\bibitem[Aminikhanghahi and Cook(2017)]{aminikhanghahi2017changepointsurvey}
Samaneh Aminikhanghahi and Diane~J Cook.
\newblock A survey of methods for time series change point detection.
\newblock \emph{Knowledge and information systems}, 51\penalty0 (2):\penalty0 339--367, 2017.

\bibitem[Aqsha et~al.(2024)Aqsha, Drissi, and Sánchez-Betancourt]{aqsha2024strategiclearningtradingbrokermediated}
Alif Aqsha, Fayçal Drissi, and Leandro Sánchez-Betancourt.
\newblock Strategic learning and trading in broker-mediated markets, 2024.
\newblock URL \url{https://arxiv.org/abs/2412.20847}.

\bibitem[Arroyo et~al.(2022)Arroyo, Scalzo, Stankovi{\'c}, and Mandic]{arroyo2022dynamic}
Alvaro Arroyo, Bruno Scalzo, Ljubi{\v{s}}a Stankovi{\'c}, and Danilo~P Mandic.
\newblock Dynamic portfolio cuts: A spectral approach to graph-theoretic diversification.
\newblock In \emph{ICASSP 2022-2022 IEEE International Conference on Acoustics, Speech and Signal Processing (ICASSP)}, pages 5468--5472. IEEE, 2022.

\bibitem[Arroyo et~al.(2024)Arroyo, Cartea, Moreno-Pino, and Zohren]{arroyo2024deep}
Alvaro Arroyo, Alvaro Cartea, Fernando Moreno-Pino, and Stefan Zohren.
\newblock Deep attentive survival analysis in limit order books: Estimating fill probabilities with convolutional-transformers.
\newblock \emph{Quantitative Finance}, 24\penalty0 (1):\penalty0 35--57, 2024.

\bibitem[Arroyo et~al.(2025)Arroyo, Gravina, Gutteridge, Barbero, Gallicchio, Dong, Bronstein, and Vandergheynst]{arroyo2025vanishing}
{\'A}lvaro Arroyo, Alessio Gravina, Benjamin Gutteridge, Federico Barbero, Claudio Gallicchio, Xiaowen Dong, Michael Bronstein, and Pierre Vandergheynst.
\newblock On vanishing gradients, over-smoothing, and over-squashing in gnns: Bridging recurrent and graph learning.
\newblock \emph{arXiv preprint arXiv:2502.10818}, 2025.

\bibitem[Asadi et~al.(2024)Asadi, Sabach, Liu, Gottesman, and Fakoor]{asadi2024td}
Kavosh Asadi, Shoham Sabach, Yao Liu, Omer Gottesman, and Rasool Fakoor.
\newblock Td convergence: An optimization perspective.
\newblock \emph{Advances in Neural Information Processing Systems}, 36, 2024.

\bibitem[Ash and Adams(2020)]{Ash2020}
Jordan~T Ash and Ryan~P Adams.
\newblock On warm-starting neural network training.
\newblock In \emph{NIPS}, 2020.
\newblock URL \url{http://arxiv.org/abs/1910.08475}.

\bibitem[Barbu et~al.(2020)Barbu, Zhu, et~al.]{barbu2020montecarlo}
Adrian Barbu, Song-Chun Zhu, et~al.
\newblock \emph{Monte carlo methods}, volume~35.
\newblock Springer, 2020.

\bibitem[Barp et~al.(2019)Barp, Briol, Duncan, Girolami, and Mackey]{Barp2019}
A.~Barp, F.-X. Briol, A.~B. Duncan, M.~Girolami, and L.~Mackey.
\newblock {Minimum Stein discrepancy estimators}.
\newblock In \emph{Neural Information Processing Systems}, pages 12964--12976, 2019.

\bibitem[Barry and Hartigan(1992)]{barry1992ppm}
Daniel Barry and John~A Hartigan.
\newblock Product partition models for change point problems.
\newblock \emph{The Annals of Statistics}, pages 260--279, 1992.

\bibitem[Basseville et~al.(1993)Basseville, Nikiforov, et~al.]{basseville1993detectionchanges}
Michele Basseville, Igor~V Nikiforov, et~al.
\newblock \emph{Detection of abrupt changes: theory and application}, volume 104.
\newblock prentice Hall Englewood Cliffs, 1993.

\bibitem[Battin(1982)]{battin1982apollo}
Richard~H Battin.
\newblock Space guidance evolution-a personal narrative.
\newblock \emph{Journal of Guidance, Control, and Dynamics}, 5\penalty0 (2):\penalty0 97--110, 1982.

\bibitem[Beal et~al.(2001)Beal, Ghahramani, and Rasmussen]{beal2001infinite}
Matthew Beal, Zoubin Ghahramani, and Carl Rasmussen.
\newblock The infinite hidden markov model.
\newblock \emph{Advances in neural information processing systems}, 14, 2001.

\bibitem[Bencomo et~al.(2023)Bencomo, Snell, and Griffiths]{bencomo2023implicit}
Gianluca~M Bencomo, Jake~C Snell, and Thomas~L Griffiths.
\newblock Implicit maximum a posteriori filtering via adaptive optimization.
\newblock \emph{arXiv:2311.10580}, 2023.

\bibitem[Bergault and S{\'a}nchez-Betancourt(2025)]{bergault2024mean}
Philippe Bergault and Leandro S{\'a}nchez-Betancourt.
\newblock A mean field game between informed traders and a broker.
\newblock \emph{SIAM Journal on Financial Mathematics}, 16\penalty0 (2):\penalty0 358--388, 2025.

\bibitem[Bernardo and Smith(1994)]{Bernardo94}
J.~Bernardo and A.~Smith.
\newblock \emph{Bayesian Theory}.
\newblock John Wiley, 1994.

\bibitem[Bhattacharya et~al.(2019)Bhattacharya, Pati, and Yang]{bhattacharya2019bayesian}
Anirban Bhattacharya, Debdeep Pati, and Yun Yang.
\newblock {Bayesian fractional posteriors}.
\newblock \emph{The Annals of Statistics}, 47\penalty0 (1):\penalty0 39 -- 66, 2019.
\newblock \doi{10.1214/18-AOS1712}.
\newblock URL \url{https://doi.org/10.1214/18-AOS1712}.

\bibitem[Bissiri et~al.(2016)Bissiri, Holmes, and Walker]{bissiri2016generalbayes}
Pier~Giovanni Bissiri, Chris~C Holmes, and Stephen~G Walker.
\newblock A general framework for updating belief distributions.
\newblock \emph{Journal of the Royal Statistical Society: Series B (Statistical Methodology)}, 78\penalty0 (5):\penalty0 1103--1130, 2016.

\bibitem[Blundell et~al.(2015)Blundell, Cornebise, Kavukcuoglu, and Wierstra]{blundell2015bbb}
Charles Blundell, Julien Cornebise, Koray Kavukcuoglu, and Daan Wierstra.
\newblock Weight uncertainty in neural network.
\newblock In Francis Bach and David Blei, editors, \emph{Proceedings of the 32nd International Conference on Machine Learning}, volume~37 of \emph{Proceedings of Machine Learning Research}, pages 1613--1622, Lille, France, 07--09 Jul 2015. PMLR.
\newblock URL \url{https://proceedings.mlr.press/v37/blundell15.html}.

\bibitem[Boustati et~al.(2020)Boustati, Akyildiz, Damoulas, and Johansen]{boustati2020generalised}
Ayman Boustati, Omer~Deniz Akyildiz, Theodoros Damoulas, and Adam Johansen.
\newblock Generalised {B}ayesian filtering via sequential monte carlo.
\newblock \emph{Advances in neural information processing systems}, 33:\penalty0 418--429, 2020.

\bibitem[Bradbury et~al.(2018)Bradbury, Frostig, Hawkins, Johnson, Leary, Maclaurin, Necula, Paszke, Vander{P}las, Wanderman-{M}ilne, and Zhang]{jax2018github}
James Bradbury, Roy Frostig, Peter Hawkins, Matthew~James Johnson, Chris Leary, Dougal Maclaurin, George Necula, Adam Paszke, Jake Vander{P}las, Skye Wanderman-{M}ilne, and Qiao Zhang.
\newblock {JAX}: composable transformations of {P}ython+{N}um{P}y programs, 2018.
\newblock URL \url{http://github.com/jax-ml/jax}.

\bibitem[Breiman(2001)]{breiman2001twocultures}
Leo Breiman.
\newblock Statistical modeling: The two cultures (with comments and a rejoinder by the author).
\newblock \emph{Statistical science}, 16\penalty0 (3):\penalty0 199--231, 2001.

\bibitem[Cai et~al.(2021)Cai, Sener, and Koltun]{cai2021ocl}
Zhipeng Cai, Ozan Sener, and Vladlen Koltun.
\newblock Online continual learning with natural distribution shifts: An empirical study with visual data.
\newblock In \emph{Proceedings of the IEEE/CVF international conference on computer vision}, pages 8281--8290, 2021.

\bibitem[Cao et~al.(2024)Cao, Zhang, Sun, Liu, Yau, and Li]{Cao2024}
Wenhan Cao, Tianyi Zhang, Zeju Sun, Chang Liu, Stephen S-T Yau, and Shengbo~Eben Li.
\newblock Nonlinear bayesian filtering with natural gradient gaussian approximation.
\newblock \emph{arXiv [eess.SY]}, October 2024.
\newblock URL \url{http://arxiv.org/abs/2410.15832}.

\bibitem[Cartea and S\'{a}nchez-Betancourt(2025)]{cartea2022brokers}
\'{A}. Cartea and L.~S\'{a}nchez-Betancourt.
\newblock Brokers and informed traders: Dealing with toxic flow and extracting trading signals.
\newblock \emph{SIAM Journal on Financial Mathematics}, 16\penalty0 (2):\penalty0 243--270, 2025.

\bibitem[Cartea and S{\'a}nchez-Betancourt(2023)]{cartea2023optimal}
{\'A}lvaro Cartea and Leandro S{\'a}nchez-Betancourt.
\newblock Optimal execution with stochastic delay.
\newblock \emph{Finance and Stochastics}, 27\penalty0 (1):\penalty0 1--47, 2023.

\bibitem[Cartea et~al.(2023{\natexlab{a}})Cartea, Drissi, and Osselin]{cartea2023bandits}
{\'A}lvaro Cartea, Fay{\c{c}}al Drissi, and Pierre Osselin.
\newblock Bandits for algorithmic trading with signals.
\newblock \emph{Available at SSRN 4484004}, 2023{\natexlab{a}}.

\bibitem[Cartea et~al.(2024{\natexlab{a}})Cartea, Drissi, and Monga]{cartea2024decentralized}
{\'A}lvaro Cartea, Fay{\c{c}}al Drissi, and Marcello Monga.
\newblock Decentralized finance and automated market making: Predictable loss and optimal liquidity provision.
\newblock \emph{SIAM Journal on Financial Mathematics}, 15\penalty0 (3):\penalty0 931--959, 2024{\natexlab{a}}.

\bibitem[Cartea et~al.(2024{\natexlab{b}})Cartea, Jaimungal, and S{\'a}nchez-Betancourt]{cartea2024nash}
{\'A}lvaro Cartea, Sebastian Jaimungal, and Leandro S{\'a}nchez-Betancourt.
\newblock Nash equilibrium between brokers and traders.
\newblock \emph{arXiv preprint arXiv:2407.10561}, 2024{\natexlab{b}}.

\bibitem[Cartea et~al.(2023{\natexlab{b}})Cartea, Duran-Martin, and Sánchez-Betancourt]{cartea2023sharpbayes}
Álvaro Cartea, Gerardo Duran-Martin, and Leandro Sánchez-Betancourt.
\newblock Detecting toxic flow, 2023{\natexlab{b}}.

\bibitem[Chaer et~al.(1997)Chaer, Bishop, and Ghosh]{chaer1997mixturekf}
Wassim~S Chaer, Robert~H Bishop, and Joydeep Ghosh.
\newblock A mixture-of-experts framework for adaptive kalman filtering.
\newblock \emph{IEEE Transactions on Systems, Man, and Cybernetics, Part B (Cybernetics)}, 27\penalty0 (3):\penalty0 452--464, 1997.

\bibitem[Chang and Athans(1978)]{chang1978switchingkf}
Chaw-Bing Chang and Michael Athans.
\newblock State estimation for discrete systems with switching parameters.
\newblock \emph{IEEE Transactions on Aerospace and Electronic Systems}, AES-14\penalty0 (3):\penalty0 418--425, 1978.

\bibitem[Chang et~al.(2022)Chang, Murphy, and Jones]{chang2022diagonal}
Peter~G Chang, Kevin~Patrick Murphy, and Matt Jones.
\newblock On diagonal approximations to the extended kalman filter for online training of bayesian neural networks.
\newblock In \emph{Continual Lifelong Learning Workshop at ACML 2022}, 2022.

\bibitem[Chang et~al.(2023)Chang, Duran-Martin, Shestopaloff, Jones, and Murphy]{chang2023lofi}
Peter~G Chang, Gerardo Duran-Martin, Alex Shestopaloff, Matt Jones, and Kevin~Patrick Murphy.
\newblock Low-rank extended kalman filtering for online learning of neural networks from streaming data.
\newblock In \emph{Conference on Lifelong Learning Agents}, pages 1025--1071. PMLR, 2023.

\bibitem[Chen et~al.(2019)Chen, Barp, Briol, Gorham, Girolami, Mackey, and Oates]{chen2019stein}
Wilson~Ye Chen, Alessandro Barp, Fran{\c{c}}ois-Xavier Briol, Jackson Gorham, Mark Girolami, Lester Mackey, and Chris Oates.
\newblock {S}tein point {m}arkov chain {M}onte {C}arlo.
\newblock In \emph{International Conference on Machine Learning}, pages 1011--1021, 2019.

\bibitem[Chen et~al.(2022)Chen, Sanz-Alonso, and Willett]{chen2022enkfdnn}
Yuming Chen, Daniel Sanz-Alonso, and Rebecca Willett.
\newblock Autodifferentiable ensemble kalman filters.
\newblock \emph{SIAM Journal on Mathematics of Data Science}, 4\penalty0 (2):\penalty0 801--833, 2022.
\newblock \doi{10.1137/21M1434477}.
\newblock URL \url{https://doi.org/10.1137/21M1434477}.

\bibitem[Chen et~al.(2003)]{chen2003bayesianfiltersurvey}
Zhe Chen et~al.
\newblock Bayesian filtering: From kalman filters to particle filters, and beyond.
\newblock \emph{Statistics}, 182\penalty0 (1):\penalty0 1--69, 2003.

\bibitem[Coppersmith et~al.(1997)Coppersmith, Hoffman, and Rothblum]{coppersmith1997rayleighinequalities}
Don Coppersmith, Alan~J Hoffman, and Uriel~G Rothblum.
\newblock Inequalities of rayleigh quotients and bounds on the spectral radius of nonnegative symmetric matrices.
\newblock \emph{Linear algebra and its applications}, 263:\penalty0 201--220, 1997.

\bibitem[de~Freitas et~al.(2000)de~Freitas, Niranjan, Gee, and Doucet]{freitas2000smcneuralnets}
Joao~FG de~Freitas, Mahesan Niranjan, Andrew~H. Gee, and Arnaud Doucet.
\newblock Sequential monte carlo methods to train neural network models.
\newblock \emph{Neural computation}, 12\penalty0 (4):\penalty0 955--993, 2000.

\bibitem[Detommaso et~al.(2018)Detommaso, Cui, Marzouk, Spantini, and Scheichl]{detommaso2018stein}
Gianluca Detommaso, Tiangang Cui, Youssef Marzouk, Alessio Spantini, and Robert Scheichl.
\newblock A {S}tein variational newton method.
\newblock \emph{Advances in Neural Information Processing Systems}, 2018.

\bibitem[Dewaskar et~al.(2023)Dewaskar, Tosh, Knoblauch, and Dunson]{dewaskar2023robustifying}
Miheer Dewaskar, Christopher Tosh, Jeremias Knoblauch, and David~B Dunson.
\newblock Robustifying likelihoods by optimistically re-weighting data.
\newblock \emph{arXiv:2303.10525}, 2023.

\bibitem[Dohare et~al.(2024)Dohare, Hernandez-Garcia, Lan, Rahman, Mahmood, and Sutton]{dohare2024continualbackprop}
Shibhansh Dohare, J~Fernando Hernandez-Garcia, Qingfeng Lan, Parash Rahman, A~Rupam Mahmood, and Richard~S Sutton.
\newblock Loss of plasticity in deep continual learning.
\newblock \emph{Nature}, 632\penalty0 (8026):\penalty0 768--774, 2024.

\bibitem[Doucet et~al.(2000)Doucet, de~Freitas, Murphy, and Russell]{Doucet2000}
Arnaud Doucet, Nando de~Freitas, Kevin Murphy, and Stuart Russell.
\newblock {Rao-Blackwellised} particle filtering for dynamic bayesian networks.
\newblock In \emph{{UAI}}, 2000.
\newblock URL \url{http://arxiv.org/abs/1301.3853}.

\bibitem[Doucet et~al.(2009)Doucet, Johansen, et~al.]{doucet2009tutorialpf}
Arnaud Doucet, Adam~M Johansen, et~al.
\newblock A tutorial on particle filtering and smoothing: Fifteen years later.
\newblock \emph{Handbook of nonlinear filtering}, 12\penalty0 (656-704):\penalty0 3, 2009.

\bibitem[Drissi(2022)]{drissi2022solvability}
Fay{\c{c}}al Drissi.
\newblock Solvability of differential riccati equations and applications to algorithmic trading with signals.
\newblock \emph{Applied Mathematical Finance}, 29\penalty0 (6):\penalty0 457--493, 2022.

\bibitem[Duran-Martin et~al.(2022)Duran-Martin, Kara, and Murphy]{duranmartin2022-subspace-bandits}
Gerardo Duran-Martin, Aleyna Kara, and Kevin Murphy.
\newblock Efficient online bayesian inference for neural bandits.
\newblock In Gustau Camps-Valls, Francisco J.~R. Ruiz, and Isabel Valera, editors, \emph{Proceedings of The 25th International Conference on Artificial Intelligence and Statistics}, volume 151 of \emph{Proceedings of Machine Learning Research}, pages 6002--6021. PMLR, 28--30 Mar 2022.

\bibitem[Duran-Martin et~al.(2024)Duran-Martin, Altamirano, Shestpaloff, S{\'a}nchez-Betancourt, Knoblauch, Jones, Fran{\c c}ois-Xavier, and Murphy]{duranmartin2024-wlf}
Gerardo Duran-Martin, Matias Altamirano, Alexander~Y. Shestpaloff, Leandro S{\'a}nchez-Betancourt, Jeremias Knoblauch, Matt Jones, Briol Fran{\c c}ois-Xavier, and Kevin~P. Murphy.
\newblock Outlier-robust kalman filtering through generalised bayes.
\newblock In \emph{International Conference on Machine Learning}. PMLR, 2024.

\bibitem[Duran-Martin et~al.(2025)Duran-Martin, S{\'a}nchez-Betancourt, Shestopaloff, and Murphy]{duranmartin2024-bocl}
Gerardo Duran-Martin, Leandro S{\'a}nchez-Betancourt, Alex Shestopaloff, and Kevin~Patrick Murphy.
\newblock A unifying framework for generalised bayesian online learning in non-stationary environments.
\newblock \emph{Transactions on Machine Learning Research}, 2025.
\newblock ISSN 2835-8856.
\newblock URL \url{https://openreview.net/forum?id=osesw2V10u}.

\bibitem[Eubank(2005)]{eubank2005kalmanprimer}
Randall~L Eubank.
\newblock \emph{A Kalman filter primer}.
\newblock Chapman and Hall/CRC, 2005.

\bibitem[Evensen(1994)]{evensen1994enkf}
Geir Evensen.
\newblock Sequential data assimilation with a nonlinear quasi-geostrophic model using monte carlo methods to forecast error statistics.
\newblock \emph{Journal of Geophysical Research: Oceans}, 99\penalty0 (C5):\penalty0 10143--10162, 1994.

\bibitem[Evensen(2003)]{evensen2003ensemble}
Geir Evensen.
\newblock The ensemble kalman filter: Theoretical formulation and practical implementation.
\newblock \emph{Ocean dynamics}, 53:\penalty0 343--367, 2003.

\bibitem[Farrokhabadi et~al.(2022)Farrokhabadi, Browell, Wang, Makonin, Su, and Zareipour]{farrokhabadi2020electricitycovid}
Mostafa Farrokhabadi, Jethro Browell, Yi~Wang, Stephen Makonin, Wencong Su, and Hamidreza Zareipour.
\newblock Day-ahead electricity demand forecasting competition: Post-covid paradigm.
\newblock \emph{IEEE Open Access Journal of Power and Energy}, 9:\penalty0 185--191, 2022.
\newblock \doi{10.1109/OAJPE.2022.3161101}.

\bibitem[Fearnhead and Liu(2007)]{fearnhead2007line}
Paul Fearnhead and Zhen Liu.
\newblock On-line inference for multiple changepoint problems.
\newblock \emph{Journal of the Royal Statistical Society Series B: Statistical Methodology}, 69\penalty0 (4):\penalty0 589--605, 2007.

\bibitem[Fearnhead and Liu(2011)]{fearnhead2011adaptivecp}
Paul Fearnhead and Zhen Liu.
\newblock Efficient bayesian analysis of multiple changepoint models with dependence across segments.
\newblock \emph{Statistics and Computing}, 21:\penalty0 217--229, 2011.

\bibitem[Fearnhead and Rigaill(2019)]{fearnhead2019robustchangepoint}
Paul Fearnhead and Guillem Rigaill.
\newblock Changepoint detection in the presence of outliers.
\newblock \emph{Journal of the American Statistical Association}, 114\penalty0 (525):\penalty0 169--183, 2019.

\bibitem[Fox et~al.(2007)Fox, Sudderth, Jordan, and Willsky]{fox2007sticky}
Emily~B Fox, Erik~B Sudderth, Michael~I Jordan, and Alan~S Willsky.
\newblock The sticky hdp-hmm: Bayesian nonparametric hidden markov models with persistent states.
\newblock \emph{Arxiv preprint}, 2, 2007.

\bibitem[Frankle(2023)]{frankle2023lotteryhypothesis}
Jonathan Frankle.
\newblock \emph{The Lottery Ticket Hypothesis: On Sparse, Trainable Neural Networks}.
\newblock PhD thesis, Massachusetts Institute of Technology, 2023.

\bibitem[Galashov et~al.(2024)Galashov, Titsias, György, Lyle, Pascanu, Whye, and Sahani]{Galashov2024}
Alexandre Galashov, Michalis~K Titsias, András György, Clare Lyle, Razvan Pascanu, Teh~Yee Whye, and Maneesh Sahani.
\newblock Non-stationary learning of neural networks with automatic soft parameter reset.
\newblock In \emph{NIPS}, November 2024.
\newblock URL \url{https://arxiv.org/abs/2411.04034}.

\bibitem[Gama et~al.(2008)Gama, Aguilar-Ruiz, and Klinkenberg]{gama2008streamlearn}
Joao Gama, Jesus Aguilar-Ruiz, and Ralf Klinkenberg.
\newblock Knowledge discovery from data streams.
\newblock \emph{Intelligent Data Analysis}, 12\penalty0 (3):\penalty0 251--252, 2008.

\bibitem[Ghahramani and Hinton(2000)]{ghahramani2000vsssm}
Zoubin Ghahramani and Geoffrey~E Hinton.
\newblock Variational learning for switching state-space models.
\newblock \emph{Neural computation}, 12\penalty0 (4):\penalty0 831--864, 2000.

\bibitem[Goulet et~al.(2021)Goulet, Nguyen, and Amiri]{goulet2021tagi}
James-A Goulet, Luong~Ha Nguyen, and Saeid Amiri.
\newblock Tractable approximate gaussian inference for bayesian neural networks.
\newblock \emph{Journal of Machine Learning Research}, 22\penalty0 (251):\penalty0 1--23, 2021.

\bibitem[Greenberg et~al.(2023)Greenberg, Yannay, and Mannor]{greenberg2024okf}
Ido Greenberg, Netanel Yannay, and Shie Mannor.
\newblock Optimization or architecture: How to hack kalman filtering.
\newblock In A.~Oh, T.~Naumann, A.~Globerson, K.~Saenko, M.~Hardt, and S.~Levine, editors, \emph{Advances in Neural Information Processing Systems}, volume~36, pages 50482--50505. Curran Associates, Inc., 2023.
\newblock URL \url{https://proceedings.neurips.cc/paper_files/paper/2023/file/9dfcc83c01e94d02c751c47517855c9f-Paper-Conference.pdf}.

\bibitem[Grewal and Andrews(2010)]{grewal2010kfhist}
Mohinder~S Grewal and Angus~P Andrews.
\newblock Applications of kalman filtering in aerospace 1960 to the present [historical perspectives].
\newblock \emph{IEEE Control Systems Magazine}, 30\penalty0 (3):\penalty0 69--78, 2010.

\bibitem[Gr{\"u}nwald(2012)]{grunwald2012safe}
Peter Gr{\"u}nwald.
\newblock {The safe Bayesian}.
\newblock In \emph{International Conference on Algorithmic Learning Theory}, pages 169--183, 2012.

\bibitem[Gr{\"u}nwald and van Ommen(2017)]{grunwald2017inconsistency}
Peter Gr{\"u}nwald and Thijs van Ommen.
\newblock Inconsistency of {B}ayesian inference for misspecified linear models, and a proposal for repairing it.
\newblock \emph{Bayesian Analysis}, 12\penalty0 (4):\penalty0 1069 -- 1103, 2017.

\bibitem[Gunasekara et~al.(2023)Gunasekara, Pfahringer, Gomes, and Bifet]{gunasekara2023surveyocl}
Nuwan Gunasekara, Bernhard Pfahringer, Heitor~Murilo Gomes, and Albert Bifet.
\newblock Survey on online streaming continual learning.
\newblock In \emph{IJCAI}, pages 6628--6637, 2023.

\bibitem[Gupta et~al.(2024)Gupta, Wadhvani, and Rasool]{gupta2024changepointsurvey}
Muktesh Gupta, Rajesh Wadhvani, and Akhtar Rasool.
\newblock Comprehensive analysis of change-point dynamics detection in time series data: A review.
\newblock \emph{Expert Systems with Applications}, page 123342, 2024.

\bibitem[Harrison et~al.(2024)Harrison, Willes, and Snoek]{harrison2024variationalbayesianlayers}
James Harrison, John Willes, and Jasper Snoek.
\newblock Variational bayesian last layers, 2024.
\newblock URL \url{https://arxiv.org/abs/2404.11599}.

\bibitem[Hastie et~al.(2009)Hastie, Tibshirani, Friedman, and Friedman]{hastie2009esl}
Trevor Hastie, Robert Tibshirani, Jerome~H Friedman, and Jerome~H Friedman.
\newblock \emph{The elements of statistical learning: data mining, inference, and prediction}, volume~2.
\newblock Springer, 2009.

\bibitem[Holmes and Walker(2017)]{holmes2017assigning}
Chris~C Holmes and Stephen~G Walker.
\newblock {Assigning a value to a power likelihood in a general Bayesian model}.
\newblock \emph{Biometrika}, 104\penalty0 (2):\penalty0 497--503, 2017.

\bibitem[Horn and Johnson(2012)]{horn2012matrixanalysis}
Roger~A Horn and Charles~R Johnson.
\newblock \emph{Matrix analysis}.
\newblock Cambridge university press, 2012.

\bibitem[Hu et~al.(2021)Hu, Shen, Wallis, Allen-Zhu, Li, Wang, Wang, and Chen]{hu2021lora}
Edward~J Hu, Yelong Shen, Phillip Wallis, Zeyuan Allen-Zhu, Yuanzhi Li, Shean Wang, Lu~Wang, and Weizhu Chen.
\newblock Lora: Low-rank adaptation of large language models.
\newblock \emph{arXiv preprint arXiv:2106.09685}, 2021.

\bibitem[Huang et~al.(2016)Huang, Zhang, Li, and Chambers]{Huang2016}
Yulong Huang, Yonggang Zhang, Ning Li, and Jonathon Chambers.
\newblock A robust {G}aussian approximate filter for nonlinear systems with heavy tailed measurement noises.
\newblock In \emph{IEEE International Conference on Acoustics, Speech and Signal Processing}, pages 4209--4213, 2016.

\bibitem[Huber(1981)]{huber2011robust}
Peter~J Huber.
\newblock Robust statistics.
\newblock \emph{Wiley Series in Probability and Mathematical Statistics}, 1981.

\bibitem[Hu{\v{s}}kov{\'a}(1999)]{huvskova1999gradualabruptchange}
M~Hu{\v{s}}kov{\'a}.
\newblock Gradual changes versus abrupt changes.
\newblock \emph{Journal of Statistical Planning and Inference}, 76\penalty0 (1-2):\penalty0 109--125, 1999.

\bibitem[Immer et~al.(2021)Immer, Korzepa, and Bauer]{immer2021improving}
Alexander Immer, Maciej Korzepa, and Matthias Bauer.
\newblock Improving predictions of bayesian neural nets via local linearization.
\newblock In \emph{International conference on artificial intelligence and statistics}, pages 703--711. PMLR, 2021.

\bibitem[Ismail and Principe(1996)]{ismail1996equivalence}
MY~Ismail and JC~Principe.
\newblock Equivalence between rls algorithms and the ridge regression technique.
\newblock In \emph{Conference Record of The Thirtieth Asilomar Conference on Signals, Systems and Computers}, pages 1083--1087. IEEE, 1996.

\bibitem[Jones et~al.(2024)Jones, Chang, and Murphy]{jones2024bong}
Matt Jones, Peter Chang, and Kevin Murphy.
\newblock Bayesian online natural gradient ({BONG}).
\newblock In \emph{Advances in Neural Information Processing Systems}, May 2024.
\newblock URL \url{http://arxiv.org/abs/2405.19681}.

\bibitem[Kalman(1960)]{kalman1960}
R.~E. Kalman.
\newblock {A New Approach to Linear Filtering and Prediction Problems}.
\newblock \emph{Journal of Basic Engineering}, 82\penalty0 (1):\penalty0 35--45, 03 1960.
\newblock ISSN 0021-9223.
\newblock \doi{10.1115/1.3662552}.
\newblock URL \url{https://doi.org/10.1115/1.3662552}.

\bibitem[Kelly(1990)]{kelly1990recursiveridge}
RJ~Kelly.
\newblock Reducing geometric dilution of precision using ridge regression.
\newblock \emph{IEEE transactions on aerospace and electronic systems}, 26\penalty0 (1):\penalty0 154--168, 1990.

\bibitem[Kingma and Ba(2015)]{kingma2014adam}
Diederik~P. Kingma and Jimmy Ba.
\newblock Adam: {A} method for stochastic optimization.
\newblock In Yoshua Bengio and Yann LeCun, editors, \emph{3rd International Conference on Learning Representations, {ICLR} 2015}, 2015.

\bibitem[Knoblauch and Damoulas(2018)]{knoblauch2018varbocd}
Jeremias Knoblauch and Theodoros Damoulas.
\newblock Spatio-temporal bayesian on-line changepoint detection with model selection.
\newblock In \emph{International Conference on Machine Learning}, pages 2718--2727. PMLR, 2018.

\bibitem[Knoblauch et~al.(2018)Knoblauch, Jewson, and Damoulas]{knoblauch2018doublyrobust-bocd}
Jeremias Knoblauch, Jack~E Jewson, and Theodoros Damoulas.
\newblock Doubly robust bayesian inference for non-stationary streaming data with $\beta$-divergences.
\newblock In S.~Bengio, H.~Wallach, H.~Larochelle, K.~Grauman, N.~Cesa-Bianchi, and R.~Garnett, editors, \emph{Advances in Neural Information Processing Systems}, volume~31. Curran Associates, Inc., 2018.
\newblock URL \url{https://proceedings.neurips.cc/paper_files/paper/2018/file/a3f390d88e4c41f2747bfa2f1b5f87db-Paper.pdf}.

\bibitem[Knoblauch et~al.(2022)Knoblauch, Jewson, and Damoulas]{knoblauch2022gvi}
Jeremias Knoblauch, Jack Jewson, and Theodoros Damoulas.
\newblock An optimization-centric view on bayes' rule: Reviewing and generalizing variational inference.
\newblock \emph{Journal of Machine Learning Research}, 23\penalty0 (132):\penalty0 1--109, 2022.
\newblock URL \url{http://jmlr.org/papers/v23/19-1047.html}.

\bibitem[Kuhl(1990)]{kuhl1990ridge}
Mark~R Kuhl.
\newblock Ridge regression signal processing.
\newblock \emph{NASA, Langley Research Center, Joint University Program for Air Transportation Research, 1989-1990}, 1990.

\bibitem[Kullback and Leibler(1951)]{kullback1951kld}
Solomon Kullback and Richard~A Leibler.
\newblock On information and sufficiency.
\newblock \emph{The annals of mathematical statistics}, 22\penalty0 (1):\penalty0 79--86, 1951.

\bibitem[Kurle et~al.(2019)Kurle, Cseke, Klushyn, Van Der~Smagt, and G{\"u}nnemann]{kurle2019continual}
Richard Kurle, Botond Cseke, Alexej Klushyn, Patrick Van Der~Smagt, and Stephan G{\"u}nnemann.
\newblock Continual learning with bayesian neural networks for non-stationary data.
\newblock In \emph{International Conference on Learning Representations}, 2019.

\bibitem[Lambert et~al.(2022)Lambert, Bonnabel, and Bach]{lambert2022rvga}
Marc Lambert, Silv{\`e}re Bonnabel, and Francis Bach.
\newblock The recursive variational gaussian approximation (r-vga).
\newblock \emph{Statistics and Computing}, 32\penalty0 (1):\penalty0 10, 2022.

\bibitem[Lambert et~al.(2023)Lambert, Bonnabel, and Bach]{lambert2023lrvga}
Marc Lambert, Silv{\`e}re Bonnabel, and Francis Bach.
\newblock The limited-memory recursive variational gaussian approximation (l-rvga).
\newblock \emph{Statistics and Computing}, 33\penalty0 (3):\penalty0 70, 2023.

\bibitem[Larsen et~al.(2022)Larsen, Fort, Becker, and Ganguli]{larsen2022subspaceNets}
Brett~W. Larsen, Stanislav Fort, Nic Becker, and Surya Ganguli.
\newblock How many degrees of freedom do we need to train deep networks: a loss landscape perspective.
\newblock In \emph{International Conference on Learning Representations}, 2022.
\newblock URL \url{https://openreview.net/forum?id=ChMLTGRjFcU}.

\bibitem[LeCun et~al.(1998)LeCun, Bottou, Bengio, and Haffner]{lecun1998lenet5}
Yann LeCun, L{\'e}on Bottou, Yoshua Bengio, and Patrick Haffner.
\newblock Gradient-based learning applied to document recognition.
\newblock \emph{Proceedings of the IEEE}, 86\penalty0 (11):\penalty0 2278--2324, 1998.

\bibitem[Leondes(1970)]{leondes1970-kfapollo}
C~T Leondes.
\newblock Theory and applications of kalman filtering.
\newblock Technical Report AGARDOGRAPH-139, NASA, February 1970.

\bibitem[Li et~al.(2021)Li, Boyd, Smyth, and Mandt]{li2021onlinelearning}
Aodong Li, Alex Boyd, Padhraic Smyth, and Stephan Mandt.
\newblock Detecting and adapting to irregular distribution shifts in bayesian online learning, 2021.

\bibitem[Li et~al.(2018)Li, Farkhoor, Liu, and Yosinski]{li2018subspace}
Chunyuan Li, Heerad Farkhoor, Rosanne Liu, and Jason Yosinski.
\newblock Measuring the intrinsic dimension of objective landscapes.
\newblock In \emph{International Conference on Learning Representations}, 2018.

\bibitem[Li et~al.(2010)Li, Chu, Langford, and Schapire]{li2010contextualbandits}
Lihong Li, Wei Chu, John Langford, and Robert~E Schapire.
\newblock A contextual-bandit approach to personalized news article recommendation.
\newblock In \emph{Proceedings of the 19th international conference on World wide web}, pages 661--670, 2010.

\bibitem[Lin et~al.(2019)Lin, Khan, and Schmidt]{lin2019steinexpfam}
Wu~Lin, Mohammad~Emtiyaz Khan, and Mark Schmidt.
\newblock Stein's lemma for the reparameterization trick with exponential family mixtures.
\newblock \emph{arXiv preprint arXiv:1910.13398}, 2019.

\bibitem[Linderman et~al.(2017)Linderman, Johnson, Miller, Adams, Blei, and Paninski]{linderman2017rslds}
Scott Linderman, Matthew Johnson, Andrew Miller, Ryan Adams, David Blei, and Liam Paninski.
\newblock Bayesian learning and inference in recurrent switching linear dynamical systems.
\newblock In \emph{Artificial intelligence and statistics}, pages 914--922. PMLR, 2017.

\bibitem[Liu(2023)]{liu2023bdemm}
Bin Liu.
\newblock Robust sequential online prediction with dynamic ensemble of multiple models: A review.
\newblock \emph{Neurocomputing}, page 126553, 2023.

\bibitem[Liu et~al.(2023)Liu, Van~Roy, and Xu]{liu2023nonstationarybandits}
Yueyang Liu, Benjamin Van~Roy, and Kuang Xu.
\newblock Nonstationary bandit learning via predictive sampling.
\newblock In \emph{International Conference on Artificial Intelligence and Statistics}, pages 6215--6244. PMLR, 2023.

\bibitem[Lu et~al.(2018)Lu, Liu, Dong, Gu, Gama, and Zhang]{lu2018preqnonstationarysurvey}
Jie Lu, Anjin Liu, Fan Dong, Feng Gu, Joao Gama, and Guangquan Zhang.
\newblock Learning under concept drift: A review.
\newblock \emph{IEEE transactions on knowledge and data engineering}, 31\penalty0 (12):\penalty0 2346--2363, 2018.

\bibitem[Magill(1965)]{magill1965optimaladaptivefilter}
David Magill.
\newblock Optimal adaptive estimation of sampled stochastic processes.
\newblock \emph{IEEE Transactions on Automatic Control}, 10\penalty0 (4):\penalty0 434--439, 1965.

\bibitem[Matsubara et~al.(2022)Matsubara, Knoblauch, Briol, and Oates]{matsubara2021robust}
Takuo Matsubara, Jeremias Knoblauch, Fran{\c{c}}ois-Xavier Briol, and Chris~J Oates.
\newblock Robust generalised {B}ayesian inference for intractable likelihoods.
\newblock \emph{Journal of the Royal Statistical Society: Series B}, 84\penalty0 (3):\penalty0 997--1022, 2022.

\bibitem[Mehra(1972)]{mehra1972adaptivefiltering}
Raman Mehra.
\newblock Approaches to adaptive filtering.
\newblock \emph{IEEE Transactions on automatic control}, 17\penalty0 (5):\penalty0 693--698, 1972.

\bibitem[Mellor and Shapiro(2013)]{mellor2013changepointthompsonsampling}
Joseph Mellor and Jonathan Shapiro.
\newblock Thompson sampling in switching environments with bayesian online change point detection.
\newblock \emph{arXiv preprint arXiv:1302.3721}, 2013.

\bibitem[Miller and Dunson(2018)]{miller2018robust}
Jeffrey~W Miller and David~B Dunson.
\newblock Robust {B}ayesian inference via coarsening.
\newblock \emph{Journal of the American Statistical Association}, 2018.

\bibitem[Minka(2013)]{minka2013expectationpropagation}
Thomas~P Minka.
\newblock Expectation propagation for approximate bayesian inference.
\newblock \emph{arXiv preprint arXiv:1301.2294}, 2013.

\bibitem[Mishkin et~al.(2018)Mishkin, Kunstner, Nielsen, Schmidt, and Khan]{mishkin2018slang}
Aaron Mishkin, Frederik Kunstner, Didrik Nielsen, Mark Schmidt, and Mohammad~Emtiyaz Khan.
\newblock Slang: Fast structured covariance approximations for bayesian deep learning with natural gradient.
\newblock \emph{Advances in neural information processing systems}, 31, 2018.

\bibitem[Moreno-Pino et~al.(2024)Moreno-Pino, Arroyo, Waldon, Dong, and Cartea]{moreno2024rough}
Fernando Moreno-Pino, {\'A}lvaro Arroyo, Harrison Waldon, Xiaowen Dong, and {\'A}lvaro Cartea.
\newblock Rough transformers: Lightweight and continuous time series modelling through signature patching.
\newblock \emph{Advances in Neural Information Processing Systems}, 37:\penalty0 106264--106294, 2024.

\bibitem[Mu and Yuen(2015)]{mu2015}
He-Qing Mu and Ka-Veng Yuen.
\newblock Novel outlier-resistant extended {K}alman filter for robust online structural identification.
\newblock \emph{Journal of Engineering Mechanics}, 141\penalty0 (1):\penalty0 04014100, 2015.

\bibitem[Murphy(2022)]{muprhy2022-pml1Book}
Kevin~P. Murphy.
\newblock \emph{Probabilistic Machine Learning: An introduction}.
\newblock MIT Press, 2022.
\newblock URL \url{http://probml.github.io/book1}.

\bibitem[Murphy(2023)]{murphy2023-pmlbook2}
Kevin~P. Murphy.
\newblock \emph{Probabilistic Machine Learning: Advanced Topics}.
\newblock MIT Press, 2023.
\newblock URL \url{http://probml.github.io/book2}.

\bibitem[Naesseth et~al.(2019)Naesseth, Lindsten, Sch{\"o}n, et~al.]{naesseth2019elementssmc}
Christian~A Naesseth, Fredrik Lindsten, Thomas~B Sch{\"o}n, et~al.
\newblock Elements of sequential monte carlo.
\newblock \emph{Foundations and Trends{\textregistered} in Machine Learning}, 12\penalty0 (3):\penalty0 307--392, 2019.

\bibitem[Nassar et~al.(2022)Nassar, Brennan, Evans, and Lowrey]{nassar2022bam}
Josue Nassar, Jennifer Brennan, Ben Evans, and Kendall Lowrey.
\newblock Bam: Bayes with adaptive memory.
\newblock \emph{arXiv preprint arXiv:2202.02405}, 2022.

\bibitem[Nguyen et~al.(2017)Nguyen, Li, Bui, and Turner]{nguyen2017vcl}
Cuong~V Nguyen, Yingzhen Li, Thang~D Bui, and Richard~E Turner.
\newblock Variational continual learning.
\newblock \emph{arXiv preprint arXiv:1710.10628}, 2017.

\bibitem[Ollivier(2019)]{ollivier2019extended}
Yann Ollivier.
\newblock The extended kalman filter is a natural gradient descent in trajectory space.
\newblock \emph{arXiv preprint arXiv:1901.00696}, 2019.

\bibitem[Ostendorf et~al.(1996)Ostendorf, Digalakis, and Kimball]{ostendorf1996hmm}
Mari Ostendorf, Vassilios~V Digalakis, and Owen~A Kimball.
\newblock From hmm's to segment models: A unified view of stochastic modeling for speech recognition.
\newblock \emph{IEEE Transactions on speech and audio processing}, 4\penalty0 (5):\penalty0 360--378, 1996.

\bibitem[Peterka(1981)]{peterka1981bayesianidentification}
V{\'a}clav Peterka.
\newblock Bayesian approach to system identification.
\newblock In \emph{Trends and Progress in System identification}, pages 239--304. Elsevier, 1981.

\bibitem[Reimann(2024)]{reimann2024changingfiltering}
Hans Reimann.
\newblock Towards robust inference for bayesian filtering of linear gaussian dynamical systems subject to additive change.
\newblock masterthesis, Universit{\"a}t Potsdam, 2024.

\bibitem[Riabiz et~al.(2022)Riabiz, Chen, Cockayne, Swietach, Niederer, Mackey, and Oates]{riabiz2022optimal}
Marina Riabiz, Wilson Chen, Jon Cockayne, Pawel Swietach, Steven~A. Niederer, Lester Mackey, and Chris.~J. Oates.
\newblock Optimal thinning of {MCMC} output.
\newblock \emph{arXiv:2005.03952}, 2022.

\bibitem[Riquelme et~al.(2018)Riquelme, Tucker, and Snoek]{riquelme2018deepbanditshowdown}
Carlos Riquelme, George Tucker, and Jasper Snoek.
\newblock Deep bayesian bandits showdown.
\newblock In \emph{International conference on learning representations}, volume~9, 2018.

\bibitem[Robert et~al.(2007)]{robert2007bayesianbook}
Christian~P Robert et~al.
\newblock \emph{The Bayesian choice: from decision-theoretic foundations to computational implementation}, volume~2.
\newblock Springer, 2007.

\bibitem[Roth et~al.(2017)Roth, Hendeby, Fritsche, and Gustafsson]{Roth2017enkf}
Michael Roth, Gustaf Hendeby, Carsten Fritsche, and Fredrik Gustafsson.
\newblock The ensemble kalman filter: a signal processing perspective.
\newblock \emph{{EURASIP J. Adv. Signal Processing}}, 2017\penalty0 (1):\penalty0 56, 2017.
\newblock URL \url{https://doi.org/10.1186/s13634-017-0492-x}.

\bibitem[Rumelhart et~al.(1986)Rumelhart, Hinton, and Williams]{rumelhart1986backprop}
David~E Rumelhart, Geoffrey~E Hinton, and Ronald~J Williams.
\newblock Learning internal representations by error propagation, parallel distributed processing, explorations in the microstructure of cognition, ed. de rumelhart and j. mcclelland. vol. 1. 1986.
\newblock \emph{Biometrika}, 71\penalty0 (599-607):\penalty0 6, 1986.

\bibitem[Saat{\c{c}}i et~al.(2010)Saat{\c{c}}i, Turner, and Rasmussen]{saatcci2010GPBOCD}
Yunus Saat{\c{c}}i, Ryan~D Turner, and Carl~E Rasmussen.
\newblock Gaussian process change point models.
\newblock In \emph{Proceedings of the 27th International Conference on Machine Learning (ICML-10)}, pages 927--934, 2010.

\bibitem[Sarkka and Nummenmaa(2009)]{sarkka2009kfvb}
Simo Sarkka and Aapo Nummenmaa.
\newblock Recursive noise adaptive kalman filtering by variational bayesian approximations.
\newblock \emph{IEEE Transactions on Automatic control}, 54\penalty0 (3):\penalty0 596--600, 2009.

\bibitem[S{\"a}rkk{\"a} and Svensson(2023)]{sarkka2023filtering}
Simo S{\"a}rkk{\"a} and Lennart Svensson.
\newblock \emph{Bayesian filtering and smoothing}, volume~17.
\newblock Cambridge university press, 2023.

\bibitem[Scalzo et~al.(2021)Scalzo, Arroyo, Stankovi{\'c}, and Mandic]{scalzo2021nonstationary}
Bruno Scalzo, Alvaro Arroyo, Ljubi{\v{s}}a Stankovi{\'c}, and Danilo~P Mandic.
\newblock Nonstationary portfolios: Diversification in the spectral domain.
\newblock In \emph{ICASSP 2021-2021 IEEE International Conference on Acoustics, Speech and Signal Processing (ICASSP)}, pages 5155--5159. IEEE, 2021.

\bibitem[Scarselli et~al.(2008)Scarselli, Gori, Tsoi, Hagenbuchner, and Monfardini]{scarselli2008graphneuralnet}
Franco Scarselli, Marco Gori, Ah~Chung Tsoi, Markus Hagenbuchner, and Gabriele Monfardini.
\newblock The graph neural network model.
\newblock \emph{IEEE transactions on neural networks}, 20\penalty0 (1):\penalty0 61--80, 2008.

\bibitem[Schirmer et~al.(2024)Schirmer, Zhang, and Nalisnick]{schirmer2024ssmtta}
Mona Schirmer, Dan Zhang, and Eric Nalisnick.
\newblock Test-time adaptation with state-space models.
\newblock \emph{arXiv preprint arXiv:2407.12492}, 2024.

\bibitem[Schmidt et~al.(2023)Schmidt, Hennig, Nick, and Tronarp]{schmidt2023rankreducedkf}
Jonathan Schmidt, Philipp Hennig, J{\"o}rg Nick, and Filip Tronarp.
\newblock The rank-reduced kalman filter: Approximate dynamical-low-rank filtering in high dimensions.
\newblock \emph{Advances in Neural Information Processing Systems}, 36:\penalty0 61364--61376, 2023.

\bibitem[Sellier and Dellaportas(2023)]{sellier2023robustbocdgp}
Jeremy Sellier and Petros Dellaportas.
\newblock Bayesian online change point detection with hilbert space approximate student-t process.
\newblock In \emph{International Conference on Machine Learning}, pages 30553--30569. PMLR, 2023.

\bibitem[Singhal and Wu(1988)]{singhal1988ekfmlp}
Sharad Singhal and Lance Wu.
\newblock Training multilayer perceptrons with the extended kalman algorithm.
\newblock \emph{Advances in neural information processing systems}, 1, 1988.

\bibitem[Soch et~al.(2020)Soch, Faulkenberry, Petrykowski, and Allefeld]{soch2020statsbook}
Joram Soch, Thomas~J Faulkenberry, Kenneth Petrykowski, and Carsten Allefeld.
\newblock The book of statistical proofs.
\newblock \emph{Open, Zenodo}, 10, 2020.

\bibitem[Tao(2010)]{tao2010eigenvalues}
Terence Tao.
\newblock 254a, notes 3a: Eigenvalues and sums of hermitian matrices.
\newblock \url{https://terrytao.wordpress.com/2010/01/12/254a-notes-3a-eigenvalues-and-sums-of-hermitian-matrices/}, 2010.
\newblock Accessed: 2024-12-18.

\bibitem[Tao and Yau(2023)]{tao2023robustkf}
Yangtianze Tao and Stephen Shing-Toung Yau.
\newblock Outlier-robust iterative extended kalman filtering.
\newblock \emph{IEEE Signal Processing Letters}, 30:\penalty0 743--747, 2023.
\newblock \doi{10.1109/LSP.2023.3285118}.

\bibitem[Thompson(1933)]{thompson1933sampling}
William~R Thompson.
\newblock On the likelihood that one unknown probability exceeds another in view of the evidence of two samples.
\newblock \emph{Biometrika}, 25\penalty0 (3-4):\penalty0 285--294, 1933.

\bibitem[Ting et~al.(2007)Ting, Theodorou, and Schaal]{ting2007}
Jo-Anne Ting, Evangelos Theodorou, and Stefan Schaal.
\newblock Learning an outlier-robust kalman filter.
\newblock In \emph{European Conference on Machine Learning}, pages 748--756. Springer, 2007.

\bibitem[Titsias et~al.(2024)Titsias, Galashov, Rannen-Triki, Pascanu, Teh, and Bornschein]{titsias2023kalman}
Michalis~K Titsias, Alexandre Galashov, Amal Rannen-Triki, Razvan Pascanu, Yee~Whye Teh, and Jorg Bornschein.
\newblock Kalman filter for online classification of non-stationary data.
\newblock In \emph{ICLR}, 2024.

\bibitem[Van~den Burg and Williams(2020)]{van2020evaluation}
Gerrit~JJ Van~den Burg and Christopher~KI Williams.
\newblock An evaluation of change point detection algorithms.
\newblock \emph{arXiv preprint arXiv:2003.06222}, 2020.

\bibitem[Van~der Vaart(2000)]{van2000asymptoticstats}
Aad~W Van~der Vaart.
\newblock \emph{Asymptotic statistics}, volume~3.
\newblock Cambridge university press, 2000.

\bibitem[Van~Gael et~al.(2008)Van~Gael, Saatci, Teh, and Ghahramani]{van2008beam}
Jurgen Van~Gael, Yunus Saatci, Yee~Whye Teh, and Zoubin Ghahramani.
\newblock Beam sampling for the infinite hidden markov model.
\newblock In \emph{Proceedings of the 25th international conference on Machine learning}, pages 1088--1095, 2008.

\bibitem[Vaswani et~al.(2017)Vaswani, Shazeer, Parmar, Uszkoreit, Jones, Gomez, Kaiser, and Polosukhin]{vaswani2017transformers}
Ashish Vaswani, Noam Shazeer, Niki Parmar, Jakob Uszkoreit, Llion Jones, Aidan~N Gomez, \L~ukasz Kaiser, and Illia Polosukhin.
\newblock Attention is all you need.
\newblock In I.~Guyon, U.~Von Luxburg, S.~Bengio, H.~Wallach, R.~Fergus, S.~Vishwanathan, and R.~Garnett, editors, \emph{Advances in Neural Information Processing Systems}, volume~30. Curran Associates, Inc., 2017.
\newblock URL \url{https://proceedings.neurips.cc/paper_files/paper/2017/file/3f5ee243547dee91fbd053c1c4a845aa-Paper.pdf}.

\bibitem[Vilmarest and Wintenberger(2024)]{vilmarest2024viking}
Joseph~de Vilmarest and Olivier Wintenberger.
\newblock Viking: variational bayesian variance tracking.
\newblock \emph{Statistical Inference for Stochastic Processes}, pages 1--22, 2024.

\bibitem[Waldon et~al.(2024)Waldon, Drissi, Limmer, Berdica, Foerster, and Cartea]{waldon2024dare}
Harrison Waldon, Fay{\c{c}}al Drissi, Yannick Limmer, Uljad Berdica, Jakob~Nicolaus Foerster, and Alvaro Cartea.
\newblock Dare: The deep adaptive regulator for control of uncertain continuous-time systems.
\newblock In \emph{ICML 2024 Workshop: Foundations of Reinforcement Learning and Control--Connections and Perspectives}, 2024.

\bibitem[Wang et~al.(2018)Wang, Li, Fang, and Wang]{wang2018}
Hongwei Wang, Hongbin Li, Jun Fang, and Heping Wang.
\newblock Robust {G}aussian {K}alman filter with outlier detection.
\newblock \emph{IEEE Signal Processing Letters}, 25\penalty0 (8):\penalty0 1236--1240, 2018.

\bibitem[Wells(2013)]{wells2013kalmanfinance}
Curt Wells.
\newblock \emph{The Kalman filter in finance}, volume~32.
\newblock Springer Science \& Business Media, 2013.

\bibitem[West(1981)]{west1981robustkf}
Mike West.
\newblock Robust sequential approximate bayesian estimation.
\newblock \emph{Journal of the Royal Statistical Society Series B: Statistical Methodology}, 43\penalty0 (2):\penalty0 157--166, 1981.

\bibitem[West and Harrison(1997)]{West97}
Mike West and Jeff Harrison.
\newblock \emph{Bayesian forecasting and dynamic models}.
\newblock Springer, 1997.

\bibitem[Wilson et~al.(2010)Wilson, Nassar, and Gold]{wilson2010-bocd-hazard-rate}
Robert~C Wilson, Matthew~R Nassar, and Joshua~I Gold.
\newblock Bayesian online learning of the hazard rate in change-point problems.
\newblock \emph{Neural computation}, 22\penalty0 (9):\penalty0 2452--2476, 2010.

\bibitem[Xiao et~al.(2017)Xiao, Rasul, and Vollgraf]{xiao2017fashionmnist}
Han Xiao, Kashif Rasul, and Roland Vollgraf.
\newblock Fashion-mnist: a novel image dataset for benchmarking machine learning algorithms, 2017.

\bibitem[Yang and Feng(2002)]{yang2002traceineq}
Zhong~Peng Yang and Xiao~Xia Feng.
\newblock A note on the trace inequality for products of hermitian matrix power.
\newblock \emph{Journal of Inequalities in Pure and Applied Mathematics}, 3\penalty0 (5), 2002.

\bibitem[Zhang et~al.(2022)Zhang, Wang, Liang, and Yuan]{zhang2022catastrophicrl}
Tiantian Zhang, Xueqian Wang, Bin Liang, and Bo~Yuan.
\newblock Catastrophic interference in reinforcement learning: A solution based on context division and knowledge distillation.
\newblock \emph{IEEE Transactions on Neural Networks and Learning Systems}, 34\penalty0 (12):\penalty0 9925--9939, 2022.

\bibitem[Zhang(2023)]{zhang2023ltbook}
Tong Zhang.
\newblock \emph{Mathematical Analysis of Machine Learning Algorithms}.
\newblock Cambridge University Press, 2023.
\newblock \doi{10.1017/9781009093057}.

\bibitem[Zhu et~al.(2022)Zhu, Huang, Xue, Mihaylova, and Chambers]{zhu2022slidingkf}
Fengchi Zhu, Yulong Huang, Chao Xue, Lyudmila Mihaylova, and Jonathon Chambers.
\newblock A sliding window variational outlier-robust kalman filter based on student’s t-noise modeling.
\newblock \emph{IEEE Transactions on Aerospace and Electronic Systems}, 58\penalty0 (5):\penalty0 4835--4849, 2022.

\end{thebibliography}

\end{document}